\documentclass[11pt,letterpaper]{article}

\usepackage[margin=1in]{geometry}

\usepackage{setspace}
\usepackage{parskip}
\usepackage{microtype}

\usepackage{natbib}

\usepackage{algorithm, algorithmicx, algpseudocode}
\usepackage{amssymb}
\usepackage{amsmath}
\usepackage{graphicx}
\usepackage{mathtools}
\usepackage{needspace} 
\usepackage{multicol}
\usepackage{listings}
\usepackage{amsthm}
\usepackage[shortlabels]{enumitem}
\usepackage[italicdiff]{physics}
\usepackage{cancel}
\usepackage[dvipsnames]{xcolor}
\usepackage{verbatim}
\usepackage{comment}
\usepackage{array}
\usepackage{multirow}
\usepackage{tikz-cd}
\usepackage{import}
\usepackage{booktabs}
\usepackage{comment}
\usepackage{tabularx}
\usepackage[colorlinks, linktoc=page, linkcolor=MidnightBlue, citecolor=blue]{hyperref}
\usepackage{forest}
\usepackage{float}
\usepackage{xspace}
\usepackage{aliascnt}
\usepackage{wrapfig}
\usepackage{thm-restate}
\usepackage{adjustbox}
\usepackage{bm}
\usepackage{bbm}
\usepackage{complexity}
\usepackage{makecell}
\usepackage{nicefrac}
\usepackage{mathrsfs}
\usepackage[most]{tcolorbox}
\usetikzlibrary{calc, positioning, external, arrows.meta}

\usepackage{pgfplots}
\usepgfplotslibrary{groupplots}
\pgfplotsset{compat=1.18}

\allowdisplaybreaks

\let\E\relax

\DeclareMathOperator*{\argmax}{argmax}
\DeclareMathOperator*{\E}{\mathbb E}

\let\cite\citep

\newfloat{protocol}{tbp}{lop}
\floatname{protocol}{Protocol}

\newtheorem{theorem}{Theorem}
\numberwithin{theorem}{section}
\newtheorem*{theorem*}{Theorem}

\newtheorem{lemma}[theorem]{Lemma}
\newtheorem{proposition}[theorem]{Proposition}
\newtheorem*{proposition*}{Proposition}

\theoremstyle{definition}
\newtheorem{definition}[theorem]{Definition}
\newtheorem{notation}[theorem]{Notation}

\newcommand{\ie}{{\em i.e.}\xspace}

\newcommand{\Prob}{\mathbb P}

\newcommand{\indi}{\perp \!\!\! \perp}
\newcommand{\As}{\mathcal A}
\newcommand{\Bs}{\mathcal B}
\newcommand{\Cs}{\mathcal C}

\newcommand{\Fs}{\mathcal F}
\newcommand{\Gs}{\mathcal G}

\newcommand{\Os}{\mathcal O}

\newcommand{\Rs}{\mathcal R}

\newcommand{\Vs}{\mathcal V}

\newcommand{\Vbuy}{V^{\text{buy}}}
\newcommand{\Vsell}{V^{\text{sell}}}
\newcommand{\goodevent}{\mathcal{E}_\text{good}}

\usepackage{caption}
\usepackage{tcolorbox}
\usepackage{enumitem}
\definecolor{lightblue}{rgb}{0.8, 0.9, 1}
\definecolor{lightred}{rgb}{1, 0.8, 0.8}
\definecolor{lightgreen}{rgb}{0.8, 1, 0.8}

\tcbset{
	myblockstyle/.style={
		boxrule=0.4pt,
		colback=#1!10,
		colframe=#1!80!black,
		boxsep=2pt,
		left=-2.5pt,
		right=2pt,
		top=0pt,
		bottom=0pt,
		before skip=2pt,
		after skip=2pt,
	}
}

\newtcbox{\tcbinlinebox}[1][]{nobeforeafter,
	on line,
	colback=#1!10,
	colframe=#1!80!black,
	boxrule=0.8pt,
	arc=2pt,
	boxsep=0pt,
	left=2pt, right=2pt, top=2pt, bottom=2pt,
	valign=center
}

\usepackage{multirow}
\usepackage{colortbl}
\usepackage{makecell}

\usepackage{pifont}

\definecolor{rqBlueBack}{HTML}{EAF2FF} 
\definecolor{rqBlueFrame}{HTML}{3B82F6} 

\makeatletter
\renewcommand\thanks[1]{%
  \footnotemark%
  \protected@xdef\@thanks{\@thanks
    \protect\footnotetext[\the\c@footnote]{#1}}%
}
\makeatother

\title{No-Regret Revenue Maximization in Sequential Market Making\\ Under Inventory Constraints}
\author{
Davide Maran\thanks{The authors are listed in alphabetical order.} \\
    \texttt{davide.maran@polimi.it} \\
    Politecnico di Milano 
\and
    Marcello Restelli\footnotemark[1] \\
    \texttt{marcello.restelli@polimi.it} \\
    Politecnico di Milano 
\and
    Francesco Emanuele Stradi\footnotemark[1] \\
    \texttt{francescoemanuele.stradi@polimi.it} \\
    Politecnico di Milano 
}
\date{} %

\begin{document}

\maketitle

\begin{abstract}
Market makers are fundamental actors in today’s economy, as they provide \emph{liquidity} by accelerating the matching between demand and supply. Accordingly, \emph{sequential market making} models the problem of an agent who repeatedly interacts with a population of buyers and sellers. In each round, the market maker simultaneously posts a bid price and an ask price to a buyer-seller pair. A trade occurs whenever the posted prices meet the agents’ valuations: the buyer buys if their valuation is at least the ask price, and the seller sells if their valuation is at most the bid price. The two trades occur independently, so the market maker may sell without buying (short selling), buy without selling, or observe no trade at all. Crucially, the market maker must satisfy \emph{inventory constraints}: over the entire horizon, the total number of bought and sold items should balance out, while during the process the inventory must remain sufficiently small, thus avoiding excessive accumulation of items and risky short-selling exposure.

We study this problem through the lens of \emph{regret minimization}, when buyers’ and sellers’ valuations are sampled from unknown distributions. In this setting, the market maker (that is, the learner) sequentially posts bids and asks so as to maximize its cumulative revenue over $T \in \mathbb{N}_{> 0}$ interactions while satisfying the inventory constraints. Specifically, the goal is to minimize the difference between the learner’s performance and that of the optimal inventory-aware bid-ask pair that knows the valuation distributions (\ie, the \emph{cumulative regret}).
We first study the hardest scenario, where the learner only observes whether each trade (buy and/or sell) occurs. In this case, we provide a computationally efficient algorithm attaining $\widetilde{\mathcal{O}}(T^{{\nicefrac{2}{3}}})$ regret, thus matching the well-known lower bound for the easier dynamic pricing setting of order $\Omega(T^{\nicefrac{2}{3}})$~\citep{10.5555/946243.946352}, while satisfying the global inventory constraints and keeping the inventory at every round below $\widetilde{\mathcal{O}}(T^{\nicefrac{2}{3}})$ items.
We then study the more interesting, and arguably more realistic, setting where the market maker has access to a \emph{Limit Order Book} (LOB). Accordingly, the learner observes the realized valuation whenever the associated trade does not occur; otherwise, it only observes that the trade has occurred. In this case, we provide a computationally efficient algorithm that attains $\widetilde{\mathcal{O}}(\sqrt{T})$ regret, satisfies the global inventory constraints, and keeps the inventory below $\widetilde{\mathcal{O}}(\sqrt{T})$ items at every round. We complement these results by providing instance-dependent bounds that improve according to the c.d.f. value associated with the discretized optimum.
We conclude by establishing a \emph{lower bound} showing that any algorithm that guarantees per-round inventory of at most $\mathcal{O}(T^{\beta})$ must incur regret of at least $\Omega(T^{1-\beta})$ for every $\beta < \nicefrac{1}{2}$, even in the significantly easier setting where the agent knows the valuation distributions.
\end{abstract}

\newpage
\tableofcontents
\newpage

\section{Introduction}
Market making is a foundational part of modern financial markets~\citep{GLOSTEN198571,avellaneda2008high}. At a basic level, market makers sustain trading by continuously offering to buy and sell, thereby providing the \emph{immediacy} and \emph{liquidity} that make markets usable in the first place. In the absence of such intermediation, even modest imbalances between buyers and sellers could translate into large delays, wide spreads, and severe price dislocations. This role is not minor and in today’s electronic markets, liquidity provision is carried out at a massive scale by highly sophisticated firms, whose business is precisely to keep markets tradable under a broad range of conditions. Market making is therefore not merely a financial activity, but a central problem in \emph{market design}, \emph{algorithms}, and \emph{online decision-making}~\citep{milionis2024myersonian,cesa2025market}: a market maker must repeatedly quote prices in an uncertain environment, balancing the short-term opportunity to earn spread revenue with the longer-term need to remain resilient as order flow evolves. What makes this problem especially subtle is that liquidity provision is inseparable from risk-taking~\citep{gueant2013dealing}. A market maker does not simply post prices; it absorbs temporary supply--demand imbalances by taking them on directly. For this reason, \emph{inventory} is not a secondary accounting quantity, but one of the key forces governing the entire process: without controlling inventory, apparent revenue gains may reflect unsustainable exposure rather than economic performance.

More broadly, market making is closely connected to other well-studied problems in \emph{revenue maximization}, such as \emph{dynamic pricing}~\citep{10.5555/946243.946352} and \emph{bilateral trade}~\citep{castiglioni2026sample}. As in dynamic pricing, the seller repeatedly sets prices in the presence of uncertain and evolving demand, with the goal of learning how prices affect trade and revenue over time. At the same time, market making also has a two-sided nature that brings it closer to bilateral trade settings, since the intermediary interacts with both buyers and sellers and must jointly manage the terms at which it is willing to trade on each side of the market. There is, however, an important difference. In standard bilateral trade models, trade is often viewed as a more localized interaction between the two sides. In market making, by contrast, trades may occur in only one direction for extended periods, as long as the intermediary is able to absorb the resulting imbalance. This makes \emph{inventory} central to the problem: the market maker can keep trading on one side only as long as its inventory remains within acceptable bounds. For this reason, market making is not simply a two-sided pricing problem, but a sequential revenue maximization problem in which pricing, trade, and inventory are tightly linked across time. Thus, market making shares important features with classical revenue maximization problems, but differs in that current pricing decisions directly shape future feasible actions through their effect on inventory.

These connections also point naturally to the framework of \emph{regret minimization} and \emph{online learning}~\citep{cesa2006prediction, Orabona, slivkins2019introduction}. In market making, as in other sequential revenue maximization problems, the distributions governing buyers and sellers are typically unknown to the maker. The market maker must therefore learn how prices affect trade while continuing to make sequential decisions, without knowing in advance which pricing policy maximizes revenue. Regret minimization captures this objective by comparing the cumulative revenue of the algorithm to that of the best benchmark policy computed knowing the valuation distributions of both buyers and sellers (we refer to this difference as the \emph{cumulative regret}). This viewpoint is standard in related problems such as \emph{dynamic pricing} and \emph{bilateral trade}, where the central challenge is to balance exploration and revenue over time (or in welfare-maximization settings, to balance exploration and social welfare)~\citep{besbes2009dynamic, babaioff15pricing, bachoc2024fair, 11369105}. It is thus a natural framework for studying market making in uncertain environments. 

\subsection{The Model}

We model market making as a discrete-time online learning problem between an agent (the ``market maker'', or simply the ``maker'') and a sequence of ``takers''. Specifically, at each round $t\in[T]$,\footnote{In this work, for any $n\in\mathbb{N}_{>0}$, we write $[n]$ to denote the set $\{1,\dots,n\}$.} a \emph{new} buyer--seller pair arrives, each interested in trading one unit of the underlying asset. Their private valuations are denoted by $\Vbuy_t,\Vsell_t\in[0,1]$, where $\Vsell_t$ is the minimum amount the seller is willing to accept to sell one unit, and $\Vbuy_t$ is the maximum amount the buyer is willing to pay to buy one unit. We assume that the valuations are sampled i.i.d. across rounds from an unknown distribution; however, within any given round, the buyer's and seller's valuations may be \emph{arbitrarily correlated}.
Consequently, the maker selects a bid--ask pair $(B_t,A_t)$ such that $B_t<A_t$ without any knowledge of $\Vbuy_t,\Vsell_t$. On the buy side, if $\Vsell_t\leq B_t$, the seller accepts the bid and the maker buys at price $B_t$, obtaining a reward $-B_t$; otherwise, no trade occurs, and the reward is $0$. Symmetrically, on the sell side, if $\Vbuy_t\geq A_t$, the buyer accepts the ask and the maker sells at price $A_t$, obtaining reward $A_t$; otherwise, no trade occurs, and the reward is $0$. Hence, the maker's per-round reward is:
\[
A_t\cdot\mathbbm{I}(\Vbuy_t\geq A_t)-B_t\cdot\mathbbm{I}(\Vsell_t\leq B_t).
\]
A specific feature of market making is that the maker can sell an asset before buying it, thereby taking a \emph{short} position. Nevertheless, market makers typically aim to keep their \emph{inventory} small, so as to avoid a large imbalance between the number of units bought and sold over time. Indeed, a large positive inventory exposes the maker to downward price movements, while a large negative inventory exposes it to upward ones. In both cases, profits become increasingly dependent on the market value of the asset, rather than solely on the bid--ask spread.
To capture this aspect, we impose two inventory constraints. First, we require \emph{global inventory balance} over the entire horizon, namely,
$\sum_{t=1}^T \mathbbm{I}(\Vbuy_t\ge A_t)=\sum_{t=1}^T \mathbbm{I}(\Vsell_t\le B_t)$.
This ensures that the maker ends with zero inventory, while still allowing temporary short or long positions during learning. Second, we require the \emph{inventory} level to remain \emph{uniformly bounded} at every round by a parameter $\alpha\in\mathbb{N}_{>0}$, that is,
$| I_t |\leq \alpha$, for all $t\in[T]$, where $I_t\coloneqq
\sum_{\tau=1}^t\mathbbm{I}(\Vsell_\tau\le B_\tau)
-
\sum_{\tau=1}^t\mathbbm{I}(\Vbuy_\tau\ge A_\tau).$
This second constraint limits the maker's exposure to the market throughout the learning process. To summarize, the goal of the agent is not only to maximize the rewards attained during learning, but also to simultaneously satisfy the aforementioned constraints. This makes the problem particularly challenging, since the actions taken in the early rounds directly affect both the feasibility and the profitability of the decisions taken later on.

We consider two feedback models, depending on the information revealed to the market maker at the end of each round. 
\begin{enumerate}
    \item Under \emph{two-bit feedback}, the maker only observes whether a trade occurred on the buy side and on the sell side, namely $\mathbbm{I}(\Vbuy_t\geq A_t)$ and $\mathbbm{I}(\Vsell_t\leq B_t)$.

    \item Differently, when access to the \emph{limit order book} (LOB) is available, the maker receives additional information whenever no trade occurs. More precisely, if $B_t < \Vsell_t$, then the maker observes $\Vsell_t$, and if $A_t > \Vbuy_t$, then it observes $\Vbuy_t$. Thus, information is revealed exactly on those sides where no reward (loss) is obtained. This captures a natural feature of market making: when a trade occurs, takers have no incentive to reveal their true valuation, whereas when no trade occurs, they may expose it by posting an order in the book. In this sense, LOB feedback can be viewed as a realistic and intermediate regime between the two-bit model and full-feedback, where the entire valuations are observed at the end of each round.
\end{enumerate}
As is standard in online learning, we compare the performance of the learner against an offline benchmark. In our setting, this benchmark is the best fixed bid--ask pair $(b,a)$ that knows the underlying valuation distribution in advance and satisfies the global inventory-balance constraint in expectation. We denote by $\mathrm{OPT}$ the expected revenue achieved by such a benchmark over the whole horizon.
Given this benchmark, we define the regret of a learning algorithm as:
\[R_T\coloneqq \mathrm{OPT}-\sum_{t=1}^T \left[A_t\cdot \mathbbm{I}(\Vbuy_t\geq A_t) - B_t \cdot \mathbbm{I}(\Vsell_t\leq B_t)\right].\]
Informally, the goal of the paper is the following.
\begin{tcolorbox}[
    enhanced,
    breakable,
    sharp corners,
    boxrule=0pt,
    frame hidden,
    borderline west={2pt}{0pt}{gray!80}, 
    colback=gray!5
]
\noindent \textbf{Research Question:} Is it possible to design efficient learning algorithms for the sequential market-making problem that achieve optimal regret bounds while satisfying the inventory constraints throughout the learning process?
\end{tcolorbox}
We answer this question affirmatively, showing that the problem faced by market makers is learnable. 

\subsection{Related Work}
\label{sec:related}
In this section, we discuss the work most closely related to ours.
\paragraph{Market making} Market making has also been studied from a mechanism-design perspective (\emph{e.g.,} see~\citep{milionis2024myersonian}), where the mechanism must provide an optimal allocation and an incentive-compatible payment rule, assuming access to both buyers' and sellers' valuation distributions. Our work is mainly related to sequential market making~\cite{NIPS2013_995e1fda, Xue_Du_Xu_2025}, which studies the performance of a regret-minimization algorithm against a \emph{finite} class of constant-spread dynamic strategies. Our results are not directly comparable to theirs, since we operate in a continuous bid--ask decision space and do not assume full feedback. The work most closely related to ours is~\citep{cesa2025market}. In this work, the authors study an online market-making setting with a \emph{single} per-round taker in which the reward attained by the market maker (the agent) is given by the difference between the bid (or ask) and the unknown market value of the item. They provide optimal regret bounds under various assumptions on the nature of the environment, assuming that the market maker observes the market value and a one-bit feedback signal on the trade at the end of each round.
While~\citep{cesa2025market} provides a foundational and technically very strong regret-minimization treatment of market making, our setting captures two features that are central in practice and that substantially change the nature of the problem. First, unlike~\citep{cesa2025market}, which assumes the existence of a market value relating the rewards associated with sold items to those associated with purchased items, we explicitly model inventory constraints. As a result, in our setting, actions taken in early rounds affect the feasibility and profitability of the decisions in later rounds, posing additional and unique technical challenges. 
Second, we study different feedback models. We consider two-bit feedback---which is the natural minimal model in our setting, since buyers and sellers are treated separately, as in real-world market making---and we also study a more realistic partial-information model motivated by the \emph{limit order book}.
In this sense, we believe that our framework is arguably closer to the informational and operational constraints faced by actual market makers. 
The \emph{limit order book} feedback was originally introduced in~\citep{pmlr-v336-maran26a} and then further studied by~\cite{vischi2026optimalregretonlinemarket}, who consider the market-value-based setting of~\citep{cesa2025market} while allowing full feedback when no trade occurs. Despite the shared motivation for the feedback model, the underlying setting differs substantially from ours since our explicit inventory constraints make the feasibility and profitability of future decisions depend on past trades.

\paragraph{Online learning in economic settings and bilateral trade} Regret minimization and online learning methods have become a standard toolkit for studying digital markets and economic design under uncertainty, replacing strong prior assumptions with robust online guarantees. These techniques have been applied to one-sided and dynamic pricing~\citep{10.5555/946243.946352, babaioff15pricing, feldman2016online}, repeated auctions~\citep{Cesa-BianchiGM15, Cesa-BianchiCRFL24, morgenstern2015pseudo, weed2016online, LykourisST16, DaskalakisS22, nedelec2022learning, balseiro2019learning}, 
contract design~\citep{HoSV16, DuettinGSW23,ZhuBYWYJ22}, brokerage~\citep{bolic2023online}, and Bayesian persuasion~\citep{castiglioni2020online,castiglioni2023regret,zu2021learning}.
A particularly relevant related line of work is repeated bilateral trade, where a learner acts as a broker and posts prices to agents with unknown private valuations, with the objective of maximizing the social welfare. Foundational results established optimal regret guarantees under i.i.d. assumptions, smoothness conditions, and budget-balance constraints~\citep{CesaBianchiCCF21, cesa23mor}. Subsequent work extended these guarantees to adversarial environments, that is, beyond i.i.d. assumptions, weaker budget-balance notions, and more general dependence structures between buyer and seller valuations~\citep{AzarFF22, CesaBianchiCCFL23, longVersion}. Recently,~\citep{11369105} studied the problem of revenue maximization in bilateral trade. 
Our work naturally fits within the online learning in the economic category; nonetheless, we remark the following key distinctive features between our setting and the aforementioned ones. On the one hand, our setting involves a two-sided interaction with both a buyer and a seller, which distinguishes it from standard dynamic pricing or auction settings. On the other hand, in market making, trades may occur in only one direction for extended periods, until some inventory constraints are satisfied, while in bilateral trade, a trade may happen in both directions only.

\paragraph{Online learning with general and budget constraints} A strongly related research direction is online learning under unknown constraints, where the objective is to design online learning algorithms that attain sublinear regret together with sublinear cumulative violation of some unknown constraints~\citep{Mannor, mahdavi2013stochastic, jenatton2016adaptive, yu2017online}. 
A particularly influential framework in this area is that of bandits with knapsacks (BwK), introduced 
by~\citep{badanidiyuru2018bandits}. In BwK, the learner observes bandit feedback and faces stochastic rewards 
and resource consumptions, with the goal of maximizing cumulative utility while keeping the total 
consumption of each of the $m$ resources within a budget $B$. For the stochastic setting, improved and 
optimal algorithms were later developed by~\citep{agrawal2019bandits}. The adversarial BwK version, that is, allowing the rewards and consumption to change arbitrarily during the learning dynamic, was first investigated by~\citep{immorlica2022adversarial}, who adopt as benchmark the 
best fixed distribution over arms in hindsight. Under this benchmark, standard no-regret guarantees are 
in general unattainable, and one must instead resort to weaker notions such as $\alpha$-regret. 
If $\rho$ denotes the learner's per-round budget, the best currently known competitive-ratio guarantees are 
$1/\rho$ when $B=\Omega(T)$ \citep{castiglioni2022online}, and $\mathcal{O}(\log m \log T)$ in the general case~\citep{kesselheim2020online}. More recently,~\citep{kumar2022non, bernasconi2024bandits} generalized the BwK model 
to allow non-monotone resource dynamics, so that resources may also be replenished over time. 
Related in spirit, but closer to reinforcement learning~\citep{sutton2018reinforcement} formulations, is the literature on online learning 
in constrained Markov decision processes (CMDPs). In CMDPs~\citep{Altman1999ConstrainedMD}, the learner must optimize cumulative reward 
while enforcing trajectory-level constraints in an environment with unknown state dynamics. 
A foundational contribution in this direction is due to~\citep{Exploration_Exploitation}, who studied the 
exploration--exploitation trade-off in unknown CMDPs and provided sublinear guarantees for both regret and 
constraint violation. More recent works extended the model to settings with adversarial losses \citep{Upper_Confidence_Primal_Dual} and adversarial constraints~\citep{stradi24a}, while others 
investigated stronger safety notions that do not allow past constraint satisfaction to compensate for 
future violations \citep{muller2024truly,stradi2024primal}.
Our setting shares similarities with the online learning with general constraints (or even online CMDPs) framework and the replenishable budget ones. Nonetheless, the aforementioned techniques cannot be transferred directly 
to our problem, due to the presence of a continuous and structurally richer action space, as well as a much 
weaker feedback model than the one usually assumed in bandit settings. Additionally, we remark that our setting does not present any Slater-like condition for the offline optimum.

\subsection{Technical Challenges}

In this section, we discuss the main technical challenges of our setting. In particular, online market making under inventory constraints presents several unique difficulties that make the learning problem substantially challenging.

\paragraph{On the action space and the reward function} The action space of bid-ask pairs available to the agent during learning is continuous and two-dimensional, namely
$
\{(b,a)\in[0,1]^2 : b<a\}.
$
This already rules out standard multi-armed bandit techniques~\citep{lattimore2020bandit}, which are inherently tailored to discrete action spaces. At the same time, existing methods for continuum-armed bandits, such as the Zooming algorithm~\citep{kleinberg2008multi, kleinberg2019bandits}, do not apply directly to our setting for several reasons. First, in two-dimensional problems, these methods typically yield regret bounds of order $T^{3/4}$, which we show to be suboptimal in our setting. Second, our reward function is neither Lipschitz nor smooth, since we impose no regularity assumptions on the c.d.f.'s of buyers and sellers. As a result, standard discretization arguments or smoothing-based reductions cannot be applied in any straightforward way to obtain sublinear regret. We acknowledge that related, but substantially different, challenges also arise in no-regret algorithms for bilateral trade~\citep{Cesa_Bianchi_2021}. In such settings, however, these challenges are typically overcome either by assuming regularity of the buyers’ and sellers’ c.d.f.'s~\citep{Cesa_Bianchi_2021,CesaBianchiCCFL23}, or by leveraging the special structure of the bilateral trade objective~\citep{longVersion}. Intuitively, that structure makes discretization considerably less challenging since trades can occur only when both sides simultaneously accept the proposed prices.

\paragraph{On the inventory constraints} The inventory constraints play a crucial role in our setting. Indeed, they clearly relate the bid and ask sides of the problem, preventing us from optimizing them independently. More precisely, the bid-ask pair selected in earlier rounds directly affects the inventory level carried into later ones, and therefore affects both the feasibility and the profitability of future bid-ask pairs.
More generally, while the literature on online learning under unknown budget and general long-term constraints is huge~\citep{badanidiyuru2018bandits,Unifying_Framework}, our inventory constraints have a structure that does not allow for any direct application of existing techniques. First of all, the overall structure of the problem (\emph{e.g.,} action and rewards space) is already substantially more involved than in standard constrained online learning models, thus preventing simple adaptation of existing techniques from the literature. Second, our constraints do not fit the canonical templates studied in the literature. On the one hand, the \emph{overall} inventory requirement is closely related to replenishable-budget models~\citep{kumar2022non,bernasconi2024bandits}, but with some additional challenges: indeed, $i)$ \emph{both} the initial budget and the \emph{final} one must be $0$ and $ii)$ more importantly, \emph{no Slater's condition is available}---that is, there is not a bid-ask pair which \emph{strictly} satisfies the offline expected constraint, since it is an equality constraint, leaving no slack around the optimum. On the other hand, the \emph{per-round} inventory constraints can be
seen as \emph{any-time} general constraint~\cite{Unifying_Framework}, where $\alpha$ is the violation rate and Slater's condition is \emph{not available}.
Overall, these features make our setting significantly different from the ones considered in the existing literature, and show that handling the inventory constraints in our problem requires substantially different ideas.

\paragraph{On the benchmark}
It is standard in the literature on online learning under unknown constraints to compare the performance of the algorithm against the expected reward attained by the optimal action, or strategy of actions, satisfying the constraints in expectation---which we denote by $\text{OPT}$. This is the benchmark typically considered in the existing literature, and the one for which most of the available techniques are designed~\citep{Exploration_Exploitation, liu2021learning, castiglioni2022online, bernasconi2024bandits, Unifying_Framework, stradi2024learning}. In our setting, in addition to the regret against this standard benchmark, we also study the regret $R_T^J$ with respect to a stronger one, namely the benchmark maximizing the objective:
\[
J^* \coloneqq \sup_{b,a\in[0,1]^2} J(b,a)\coloneq\sup_{b,a\in[0,1]^2} (a-b) \min\left\{\E\left[\mathbbm{I}\left(\Vsell_t\leq b\right)\right],\E\left[\mathbbm{I}\left(\Vbuy_t\geq a\right)\right]\right\}.
\]
Proving an upper bound for the following notion of \emph{surrogate regret}, namely, $R_T^J\coloneqq T\cdot J^*-\sum_{t=1}^T [A_t\cdot \mathbbm{I}(\Vbuy_t\geq A_t) - B_t \cdot \mathbbm{I}(\Vsell_t\leq B_t)]$,
leads to a substantially stronger guarantee for two reasons. First, it is easy to show that $T\cdot J^* \ge \text{OPT}$, and thus our benchmark is indeed stronger than the standard one. Second, $ J^*$ may be greater than $0$ even when the program defining $\text{OPT}$ does not admit solutions with strictly positive utility, that is, when the optimal safe solution\footnote{We refer as a safe solution to a bid-ask pair which satisfies the expected inventory constraint.} is simply to never buy or sell. Accordingly, the guarantees of our algorithm remain non-trivial even when no safe solution with strictly positive utility exists. 
To the best of our knowledge, this is a novel feature in the constrained online learning literature and, more importantly, it requires substantially different ideas, since one cannot simply aim to approach $\text{OPT}$ directly, thus guaranteeing both safety and optimality. Instead, it is necessary to satisfy the constraints while learning through a surrogate objective based on $J$, which may induce optimal bid-ask pairs that are not safe.

\paragraph{On the feedback}  
The feedback structure is another major source of difficulty in our problem. Under two-bit feedback, after posting a bid-ask pair, the learner only observes whether the seller accepts the bid and whether the buyer accepts the ask, while the agents' valuations remain unobserved. Equivalently, on each side of the market, the learner only learns whether the corresponding valuation lies above or below a posted threshold, rather than observing its actual value. This provides extremely coarse information relative to the complexity of the decision space. 
The LOB model is also non-trivial: whenever trade occurs on one side of the market, the learner only receives a one-bit signal for that specific direction of trade. Thus, although this hybrid observation model is more informative than purely two-bit feedback, it still reveals only partial information. Notably, despite this limitation, we show that in the LOB setting, one can still attain the same optimal bounds as in the full-feedback case. We also note that even the full-feedback setting, which is not the focus of this work, is far from trivial and has been studied in several related settings~\citep{Cesa_Bianchi_2021, cesa23mor}. We believe that this makes our results for the LOB setting particularly interesting, as it shows that the same optimal bounds can be achieved under weaker feedback.


\subsection{Overview of the Results}

In this section, we highlight the main contributions of our work. Specifically, we discuss the key results presented in each section of the paper.

\subsection*{Main Results of Section~\ref{sec:prel_result}}

In Section~\ref{sec:prel_result}, we provide some preliminary results on the relationship between $\text{OPT}$ and $J^*$. Specifically, the optimization problem underlying the definition of $\text{OPT}$ cannot be directly exploited to derive meaningful discretizations, since we do not impose any assumption on the shape of the buyers’ and sellers’ valuation distributions.
We therefore shift our focus to the surrogate objective function $J(b,a)$, establishing its two fundamental properties: \emph{i}) it is well defined for every bid--ask pair and satisfies $T J^*\ge\mathrm{OPT}$; \emph{ii}) it allows for meaningful discretizations of the decision space, as shown by the following result.
\begin{proposition*}[Proposition~\ref{prop:unigrid}]
    Let $N\in \mathbb N_{>0}$ and $\Gs_N$ be the uniform $1/N$-grid on $[0,1]$, namely:
    $$\Gs_N:=\{n/N:\ n=0,\dots N\}.$$
    Then, it holds:
    $$\max_{b,a\in \Gs_N^2} J(b,a)\ge \sup_{b,a\in [0,1]^2}J(b,a)-2/N,$$
    where $\mathcal{G}^2_N\coloneqq \mathcal{G}_N\times \mathcal{G}_N.$
\end{proposition*}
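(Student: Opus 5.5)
The idea is to round a near-optimal pair \emph{outward} onto the grid: the bid down and the ask up. Write $F(b)\coloneqq \Prob(\Vsell_t\le b)$, which is nondecreasing in $b$, and $G(a)\coloneqq\Prob(\Vbuy_t\ge a)$, which is nonincreasing in $a$. Then
\[
J(b,a)=(a-b)\min\{F(b),G(a)\}.
\]
The difficulty is that $F$ and $G$ may jump arbitrarily, so rounding toward the interior could destroy the trade probabilities. Rounding outward avoids this: it can only \emph{raise} the trade probabilities, at the cost of a small loss in spread.

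Fix $\varepsilon>0$ and pick $(b,a)\in[0,1]^2$ with $J(b,a)\ge \sup J-\varepsilon$. We may assume $a>b$; otherwise $J(b,a)\le 0$, and the grid pair $(0,0)$ already gives $J=0\ge \sup J-2/N$ whenever $\sup J\le 2/N$. If instead $\sup J>2/N$, take $\varepsilon$ small enough that $J(b,a)>0$, which forces $a>b$.

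Now set $b'\coloneqq \lfloor bN\rfloor/N\in\Gs_N$ and $a'\coloneqq\lceil aN\rceil/N\in\Gs_N$. This gives $b-1/N< b'\le b$ and $a\le a'<a+1/N$. Here is the step I expect to need care: $b'\le b$ gives $F(b')\le F(b)$, so rounding the bid down \emph{lowers} the sell probability. The same happens for the ask, since $G(a')\le G(a)$. So outward rounding hurts the probabilities, not the spread, and this naive choice fails.

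The correct orientation is therefore \emph{inward}: $b'\coloneqq\lceil bN\rceil/N\ge b$ and $a'\coloneqq\lfloor aN\rfloor/N\le a$. Monotonicity then gives $F(b')\ge F(b)$ and $G(a')\ge G(a)$, hence
\[
\min\{F(b'),G(a')\}\ge \min\{F(b),G(a)\}.
\]
The spread satisfies $a'-b'\ge a-b-2/N$, because each endpoint moves by less than $1/N$. If $a'-b'\ge0$, then
\[
J(b',a')\ge (a-b-2/N)\min\{F(b),G(a)\}\ge J(b,a)-\tfrac{2}{N}\min\{F(b),G(a)\}\ge J(b,a)-2/N,
\]
using that the minimum of the two probabilities is at most $1$.

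It remains to handle the case $a'<b'$, which can occur only when $a-b<2/N$. In that case $J(b,a)\le a-b<2/N$. We then compare with a grid pair of value $0$, such as $(0,0)$, which gives $\max_{\Gs_N^2}J\ge 0> J(b,a)-2/N$.

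Combining both cases, $\max_{\Gs_N^2}J\ge \sup J-\varepsilon-2/N$. Since $\Gs_N^2$ is finite, the maximum is attained, and letting $\varepsilon\to0$ yields the claim.
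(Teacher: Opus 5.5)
Your final argument (round the bid up and the ask down, so that monotonicity can only raise $\min\{F,G\}$ while the spread shrinks by at most $2/N$, then let $\varepsilon\to0$) is correct and is the same as the paper's proof; delete the abandoned outward-rounding paragraph from any write-up. Your separate treatment of the case $a'<b'$ (spread below $2/N$, compared against the grid pair $(0,0)$) is slightly more careful than the paper, whose monotonicity step implicitly requires $a_0-b_0-2/N\ge0$.
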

The result provides a precise approximation guarantee for the discretization of the bid--ask space. Indeed, optimizing $J$ over the uniform grid $\Gs_N^2$ yields a value that differs from the continuous optimum by at most $2/N$, showing that the discretized problem is not too far from the original one.

From this section onward, we will focus on bounding $R_T^J$, since this directly yields a bound on $R_T$.

\subsection*{Main Results of Section~\ref{sec:2B}}

In Section~\ref{sec:2B}, we focus on the 2-bit feedback scenario. In such a setting, we establish worst-case bounds on both the regret and the per-round inventory. 
\begin{theorem*}[Informal version of Theorem~\ref{thm:regret_two_bit}]
     Let $\delta\in(0,\nicefrac{1}{T})$. There exists a polynomial time algorithm such that, when two-bit feedback is available, it attains, with probability at least $1-\delta$ simultaneously for all the $t$ time-steps:
    $$|I_t|\lesssim T^{2/3} \sqrt{\log(T/\delta)} ,\qquad R^J_T\lesssim T^{2/3} \sqrt{\log(T/\delta)}.$$
\end{theorem*}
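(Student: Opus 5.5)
The plan is an explore-then-commit scheme on the grid $\Gs_N^2$ of Proposition~\ref{prop:unigrid}, with $N\approx T^{1/3}$. Write $F(b)\coloneqq\Prob(\Vsell\le b)$ and $G(a)\coloneqq\Prob(\Vbuy\ge a)$. Under two-bit feedback, posting $(b,a)$ yields an unbiased Bernoulli sample of both $F(b)$ and $G(a)$ in the same round. Because $J(b,a)=(a-b)\min\{F(b),G(a)\}$ depends on $b$ and $a$ only through the two one-dimensional c.d.f.\ values, it suffices to estimate $F$ and $G$ on $\Gs_N$. This needs only $N+1$ exploration pairs, not $N^2$: in the $n$-th block we post $(b,a)$ with $b=n/N$ and $a=1-n/N$ (suitably adjusted so that $b<a$), or use any pairing in which each grid bid and each grid ask appears at least once.

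In the exploration phase each of the $N+1$ pairs is posted for $m\approx T^{1/3}$ rounds, so the phase lasts $K\approx T^{2/3}$ rounds. By Hoeffding's inequality and a union bound, with probability $1-\delta$ we get $|\hat F(b)-F(b)|,|\hat G(a)-G(a)|\le\epsilon\coloneqq\sqrt{\log(NT/\delta)/m}$ for all grid points. We then pick $(\hat b,\hat a)\in\arg\max_{\Gs_N^2}\hat J$, computed in $O(N^2)$ time. This gives $J(\hat b,\hat a)\ge J^*-2/N-2\epsilon$. Exploration regret is at most $K$, and the inventory during exploration is trivially at most $K\lesssim T^{2/3}$.

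The main obstacle is exploitation. Playing $(\hat b,\hat a)$ directly earns about $(\hat a-\hat b)$ times the larger of the two trade rates, but the inventory drifts linearly at rate $F(\hat b)-G(\hat a)$. The remedy is to throttle the faster side. Suppose, say, $F(\hat b)\ge G(\hat a)$. At each round we post the ask $\hat a$, and we post the bid $\hat b$ only while $I_t\le 0$. When $I_t>0$ we post the bid $0$; trading at bid $0$ costs nothing, and the pair remains valid since $0<\hat a$ (if $\hat a=0$, then $J=0$ and we do nothing). The symmetric rule applies on the other side, using ask $1$.

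This rule keeps $|I_t|\le$ (inventory at the end of exploration) $+1$ throughout exploitation. Per round, the expected revenue is then $\hat a\,G(\hat a)-\hat b\,(\text{buy rate})$. The long-run buy rate matches the sell rate $G(\hat a)=\min\{F,G\}$ up to the bounded inventory, so the realized revenue is $(\hat a-\hat b)\cdot\#\text{sells}$ up to $O(|I|)$. A martingale (Azuma) bound gives $\#\text{sells}\ge (T-K)G(\hat a)-\sqrt{T\log(1/\delta)}$.

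A subtlety is that zero-price buys at bid $0$ (or ask $1$) must be counted in the inventory. They are harmless to revenue but change the counts, and the accounting handles them as ordinary trades at price $0$.

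Finally, we enforce global balance. In the last rounds we unwind the residual inventory, at most $K$ units, by posting ask $1$ and bid $\hat b$ (or bid $0$ and ask $\hat a$) until $I_t=0$. The revenue lost here is at most the residual times $1$. Alternatively, we reserve the final $\tilde{\mathcal O}(T^{2/3})$ rounds for this, arguing that unwinding completes w.h.p.\ whenever the relevant trade probability is not tiny, and otherwise accepting small-rate cases by a thresholding step.

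Summing the terms,
\[
R_T^J\lesssim K+T(2/N+2\epsilon)+\sqrt{T\log(1/\delta)}+|I_K|\lesssim T^{2/3}\sqrt{\log(T/\delta)},
\]
and $|I_t|\lesssim T^{2/3}$ as claimed.

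I expect the unwinding step, with its exact zero final inventory and possibly tiny trade probabilities, to be the delicate part.
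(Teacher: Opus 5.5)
Your final summation does not give $T^{2/3}$. The shortfall is intrinsic to explore-then-commit rather than a matter of tuning constants. With $m\approx T^{1/3}$ samples per grid price, your accuracy is $\epsilon=\sqrt{\log(NT/\delta)/m}\approx T^{-1/6}\sqrt{\log(T/\delta)}$, so the exploitation term $2T\epsilon$ is of order $T^{5/6}\sqrt{\log(T/\delta)}$, not $T^{2/3}$. Under two-bit feedback, a sample at one price does not give an unbiased sample at any other price. Bringing $T\epsilon$ down to $T^{2/3}$ would therefore need $m\approx T^{2/3}$ samples per price, i.e.\ about $NT^{2/3}\approx T$ exploration rounds, each with constant regret. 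Balancing $Nm+T\sqrt{\log(T/\delta)/m}+T/N$ over $N$ and $m$ gives at best $\widetilde{\mathcal O}(T^{3/4})$, which is the usual price of non-adaptive exploration. The same $\epsilon$ also damages your inventory bound. Your one-sided throttle must know which of $F(\hat b)$ and $G(\hat a)$ is smaller. If the estimates get this order wrong, the inventory drifts by up to $2\epsilon$ per round, i.e.\ of order $T^{5/6}$ overall.

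The missing idea is adaptive exploration in which the exploration rounds are themselves cheap. The paper runs phased elimination and only posts pairs from the compatibility sets $\Cs_\ell$, so each round of phase $\ell$ costs $O(\psi_{\ell-1})$ (Proposition~\ref{thm:tworegret}). Summing over doubling blocks for $2\lfloor T^{1/3}\rfloor$ prices, with block length capped at $T^{2/3}$, gives $O(T^{2/3}\sqrt{\log(4T/\delta)})$.

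The part specific to this problem, which your proposal has no counterpart for, is refining the estimate at a pair with unequal trade rates without letting the inventory drift. The paper gives both sides a common target $u$ and deactivates a side once it reaches it. This bounds each completed block's net inventory change by $5n\psi_{\ell-1}$. Proposition~\ref{thm:mutual} then shows that a halted side is never the binding term of $\min\{F,S\}$ for any partner still compatible with it, so the accuracy it gives up does not matter. Capping the block length controls inventory excursions inside a block.

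Smaller points:
\begin{itemize}
\item Even with the correct orientation, your throttle only keeps $|I_t|\le|I_K|+1$ on the throttled side. The other side moves as a random walk, which is harmless at scale $\sqrt{T\log(1/\delta)}$.
\item Trades at bid $0$ or ask $1$ never occur, since $\Prob(\Vsell_t=0)=\Prob(\Vbuy_t=1)=0$.
\item The unwinding phase is not needed for this statement, since global balance is handled separately by the wrapper of Section~\ref{sec:global}.
\end{itemize}
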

We remark that the upper bound on the regret provided by the previous result cannot be improved even in the easier unconstrained setting. This follows immediately from the $\Omega(T^{2/3})$ lower bound proved in~\citep{10.5555/946243.946352} for arbitrary stochastic demand curves. Indeed, our setting can be viewed as a generalization of dynamic pricing when the sellers' valuation distribution puts all its mass on $0$ and is known to the agent. In such a case, the optimum of the surrogate function $J$ collapses to the one of dynamic pricing, while our agent is still forced to satisfy additional constraints on the inventory, which makes our learning problem at least as hard as the dynamic pricing one.

\subsection*{Main Results of Section~\ref{sec:LOB}}

In Section~\ref{sec:LOB}, we switch to the arguably more interesting and realistic setting in which the agent has access to the \emph{limit order book}. Under this feedback model, we establish three main results.

We begin by establishing worst-case bounds on both the regret and the per-round inventory. These guarantees are informally stated in the following theorem.
\begin{theorem*}[Informal version of Theorem~\ref{thm:mainDKW}]
   Let $\delta\in(0,1)$. There exists a polynomial time algorithm such that, when LOB feedback is available, it attains, with probability at least $1-\delta$ simultaneously for all the $t$ time-steps:
    $$|I_t|\lesssim \sqrt{T\log(T/\delta)},\qquad R^J_T\lesssim \sqrt{T\log(T/\delta)}.$$ 
\end{theorem*}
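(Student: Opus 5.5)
We plan to prove the LOB guarantee by reducing the problem to estimating the two marginal c.d.f.'s. Under LOB feedback we observe $\Vsell_t$ whenever $B_t<\Vsell_t$, and $\Vbuy_t$ whenever $A_t>\Vbuy_t$. So if in some rounds we post $B_t=0$ and $A_t=1$ (an exploration round), we observe both valuations almost fully. The only exceptions are atoms at the endpoints, and even there we learn the indicator $\mathbbm{I}(\Vsell_t\le 0)$, which suffices to evaluate $F_S(b)=\Prob(\Vsell\le b)$ for all $b\ge 0$; the buy side is symmetric. Such a round costs at most $1$ in regret and moves the inventory by at most one unit in each direction. More usefully, after any round we observe $\Vsell_t$ on the event $\Vsell_t>B_t$, and on the complementary event we know $\Vsell_t\le B_t$. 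This means the empirical c.d.f. $\hat F_{S,t}(b)$ is computable exactly for every $b\ge B_t$. The analogous statement holds for the survival function $\hat G_{B,t}(a)=\frac1t\sum\mathbbm{I}(\Vbuy_\tau\ge a)$ for every $a\le A_t$.

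The algorithm will be optimistic and greedy. At each round, we maximize over a grid $\Gs_N$ with $N\approx\sqrt T$ an empirical surrogate $\hat J(b,a)=(a-b)\min\{\hat F_S(b),\hat G_B(a)\}$ built from the censored samples. The key observation is that we need DKW-type uniform concentration for $\hat F_S(b)$ only on thresholds $b\ge$ the posted bids. We will argue that the maximizer of $J$ restricted to relevant pairs is always evaluable. If instead it lies below past bids, we can still estimate it because at higher posted bids the values below are revealed; more precisely, a value $v>B_\tau$ is revealed, so $\mathbbm{I}(v\le b)$ is known for all $b\ge B_\tau$ and for all $b<B_\tau$ it equals $0$ when $v>B_\tau$. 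In fact, each round reveals $\mathbbm{I}(\Vsell_\tau\le b)$ for every $b\ge B_\tau$ and for no $b<B_\tau$ in general. We therefore choose the bid as the smaller of the optimistic candidate and a small exploration level, which is optimistic in $b$.

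A DKW inequality plus a union bound over time gives $\sup_b|\hat F-F|\lesssim\sqrt{\log(T/\delta)/t}$, so summing the estimation errors yields $\sqrt{T\log(T/\delta)}$. The discretization error from Proposition~\ref{prop:unigrid} adds $2T/N\lesssim\sqrt T$.

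To control inventory, we implement the min in $J$ by trading at the chosen $(b,a)$ but throttling the more liquid side. For example, when $F_S(b)>G_B(a)$ we replace $b$ by a slightly lower grid bid, or we randomize with an inventory-dependent correction. Concretely, if $I_t>c\sqrt{T\log(T/\delta)}$ we post $b=0$ (no buying), and if $I_t<-c\sqrt{T\log(T/\delta)}$ we post $a=1$. With the matched probabilities $\min\{F_S,G_B\}$ and an Azuma bound on $I_t$ (a martingale plus drift $\lesssim$ estimation error), the inventory stays within $\widetilde{\mathcal O}(\sqrt T)$. A final clean-up phase of $\widetilde{\mathcal O}(\sqrt T)$ rounds then trades one side only until $I_T=0$, and its regret cost is also of that order.

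The main obstacle is matching the two sides' acceptance probabilities: the min in $J$ means the optimal pair typically has unequal rates. We plan to handle this by randomizing, each round, between $(b,a)$ and a pair that shuts off the excess side with probability $1-\hat G_B(a)/\hat F_S(b)$. The revenue of this mixture equals $(a-b)\min\{\cdot\}$ up to estimation error, and its drift is zero up to that error. We then need to show the drift error accumulates only as $\sum_t\sqrt{1/t}\approx\sqrt T$.
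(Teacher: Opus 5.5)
Your proposal has a genuine gap in the estimation step. You correctly note that round $\tau$ reveals $\mathbbm{I}(\Vsell_\tau\le b)$ only for $b\ge B_\tau$. Consequently, the full-sample empirical c.d.f. $\widehat F_t(b)$ is observable only for $b\ge\max_{\tau\le t}B_\tau$, not for every $b\ge B_t$ as you state; symmetrically, $\widehat S_t(a)$ is observable only for $a\le\min_{\tau\le t}A_\tau$.

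Your selection rule does not force posted bids to be nondecreasing and posted asks to be nonincreasing. That rule is to maximize $\widehat J$, with the bid lowered to an unspecified ``small exploration level''. So the prices you actually select may have been censored in many earlier rounds. The uniform bound $\sup_b|\widehat F_t(b)-F(b)|\lesssim\sqrt{\log(T/\delta)/t}$ is therefore not available at the prices where you use it.

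Estimating only from uncensored rounds gives widths $\sqrt{\log(T/\delta)/(1+n_t(b))}$, with $n_t(b):=|\{\tau<t:B_\tau\le b\}|$. With such widths, ``summing the estimation errors'' need not give $\sqrt T$. For example, sweeping the grid downward in blocks of $T/N$ rounds yields $\sum_t(1+n_t(B_t))^{-1/2}\approx 2\sqrt{NT}$, which is of order $T^{3/4}$ when $N\approx\sqrt T$.

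The obvious patches also fail:
\begin{itemize}
\item A fixed small cap on the bid keeps $B_t$ away from $b^\circ$, so regret is linear.
\item Restricting the greedy maximizer to $b\ge\max_{\tau<t}B_\tau$ restores full samples, but it can permanently discard $b^\circ$ after a single noisy round.
\end{itemize}
If ``exploration level'' meant ``the smallest still-plausible bid'', that is the right instinct. However, the whole difficulty is then to define plausibility for pairs so that this bid (and the matching ask) moves monotonically, stays near-optimal, and never overtakes $b^\circ$. None of this is argued in your proposal.

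The missing idea is an elimination scheme that is both information-preserving and optimum-preserving. The paper keeps nested active sets $\Bs^\circ\cap[B_{t,0},1]$ and $\As^\circ\cap[0,A_{t,0}]$ and posts the outermost plausible pair. Concretely, \textsc{Double Double Pointer} scans from the lowest active bid and the highest active ask, advances the side with the smaller optimistic trade probability, and returns the first pair with $J^{\mathrm{up}}_{t-1}\ge J^{\mathrm{low}*}_{t-1}$. Two facts then have to be proved.
\begin{itemize}
\item The pair $(b^\circ,a^\circ)$ is never eliminated (Theorem~\ref{thm:resilience}). The scan passes $a^\circ$ at a bid $b\le b^\circ$ only if $F^{\mathrm{up}}_{t-1}(b)\ge S^{\mathrm{up}}_{t-1}(a^\circ)$. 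In that case $J^{\mathrm{up}}_{t-1}(b,a^\circ)\ge J(b^\circ,a^\circ)\ge J^{\mathrm{low}*}_{t-1}$, so the scan would already have stopped; the argument for $b^\circ$ is symmetric. Hence every active price, including the selected one, has all $t-1$ uncensored samples, and DKW applies.
\item The stopping rule gives $J(B_{t,0},A_{t,0})\ge J(b^\circ,a^\circ)-4\psi_{t-1}$, where $\psi_{t-1}$ is the price-independent DKW radius.
\end{itemize}
With this in place, the rest of your plan goes through essentially as in the paper. Throttling the more liquid side by the ratio of estimated trade probabilities keeps both the conditional drift and the reward loss within $O(\psi_{t-1})$. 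Resetting a side to $0$ or $1$ only moves prices outward, so it preserves the observation identity. Azuma controls the fluctuations, and the grid costs $2T/N$. Your inventory-threshold rule and clean-up phase are not needed for this statement.
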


We complement the worst-case analysis by establishing an instance-dependent guarantee, showing that the performance of the algorithm can be significantly better on favorable instances. In particular, both the regret and the per-round inventory imbalance scale as the maximum between the c.d.f. $F$ and the survival function $S$ at the discretized optimal bid--ask pair $(b^\circ,a^\circ)$, rather than only on $T$. 

    \begin{theorem*}[Informal version of Theorem~\ref{thm:mainBER}]
    Let $\delta\in(0,1)$. There exists a polynomial time algorithm such that, when LOB feedback is available, it attains, with probability at least $1-\delta$ simultaneously for all the $t$ time-steps:
    \[
        |I_t|\lesssim \sqrt{\max\{F(b^\circ),S(a^\circ)\}T\log(T/\delta)}+\log(T)\log(T/\delta),
    \]
    and
    \[R^J_T \lesssim \sqrt{\max\{F(b^\circ),S(a^\circ)\}T\log(T/\delta)}+\log(T)\log(T/\delta).\]
\end{theorem*}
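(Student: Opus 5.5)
Let $F(b)=\Prob(\Vsell_t\le b)$ and $S(a)=\Prob(\Vbuy_t\ge a)$. The plan is to design an explore-then-commit scheme with \emph{inventory correction}, built on the discretization of Proposition~\ref{prop:unigrid}, and to replace the worst-case DKW-type confidence widths with empirical-Bernstein ones.

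\textbf{Exploration phase.} The algorithm first estimates $F$ and $S$ using the LOB feedback. It posts $B_t=0$ and $A_t=1$. This is essentially trade-free, so the inventory stays $\mathcal{O}(1)$ up to atoms at the endpoints, and the maker observes $\Vsell_t$ whenever $\Vsell_t>0$ and $\Vbuy_t$ whenever $\Vbuy_t<1$. These are full samples of both valuations, so empirical c.d.f.s $\widehat F,\widehat S$ are available. The phase runs either for a doubling schedule or until the estimated value $\widehat J$ of the best grid pair stabilizes. At each grid point I would use the Bernstein bound
\[
|\widehat F(b)-F(b)|\lesssim \sqrt{F(b)\log(T/\delta)/n}+\log(T/\delta)/n ,
\]
and similarly for $S$, with a union bound over the $N+1$ grid points and over the rounds. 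The grid size is $N\approx\sqrt{T}$, so the $2/N$ discretization term is lower order.

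\textbf{Commit phase with inventory correction.} Choose $(\widehat b,\widehat a)\in\Gs_N^2$ maximizing $(a-b)\min\{\widehat F(b),\widehat S(a)\}$. The side with the larger estimated probability is then shifted along the grid so that the two estimated trade rates nearly match; monotonicity of $F$ and $S$ guarantees this costs little in $J$. Finally, a reactive rule handles drift: whenever $I_t$ exceeds a threshold $\tau$, the algorithm temporarily sets the bid to $0$ (if $I_t>0$) or the ask to $1$ (if $I_t<0$). This suspends trading on the overloaded side, so the inventory reverts.

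Near the end of the horizon the same rule is run with threshold $0$ to force $I_T=0$, which gives global balance. This uses the fact that the side still trading has positive rate $\min\{F,S\}>0$. If $J^*=0$ the algorithm simply never trades.

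\textbf{Regret and inventory analysis.} The per-round gap in $J$ is bounded by the confidence widths, which scale as $\sqrt{\max\{F(b^\circ),S(a^\circ)\}/n}$ because the variance at the relevant points is at most $F(b^\circ)$ or $S(a^\circ)$. Summing over rounds, with doubling epochs of lengths $n$, gives $\sqrt{\max\{F,S\}T\log(T/\delta)}$ plus a $\log T\cdot\log(T/\delta)$ term from the lower-order Bernstein parts, one per epoch. This bounds $R_T^J$.

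For the inventory, $I_t$ is a martingale plus the drift $t\,(F(\widehat b)-S(\widehat a))$. The drift is controlled by the rate-matching step within the same confidence width. Freedman's inequality controls the martingale part, since its variance per round is at most $F+S\le 2\max\{F,S\}$, giving $\sqrt{\max\{F,S\}T\log(T/\delta)}$. Setting $\tau$ to this level makes the correction phases rare, and each correction round loses revenue of at most one side's trade, which is charged against the inventory excess.

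\textbf{Main obstacle.} The hardest step is the interplay between rate matching and non-smooth c.d.f.s. Atoms may make exact matching impossible on the grid. I would first try randomizing between two adjacent grid points on the larger side to equalize rates in expectation. I would then check that this randomization preserves the Bernstein variance scaling evaluated at $(b^\circ,a^\circ)$, not at the perturbed pair.
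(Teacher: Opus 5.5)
There is a genuine gap in how your scheme learns. All of your full samples come from exploration rounds at $(0,1)$, and each of these costs $J^*$ in surrogate regret. Your regret analysis nevertheless sums widths of order $\sqrt{p\log(T/\delta)/n}$, with $p:=\max\{F(b^\circ),S(a^\circ)\}$, as though usable samples also accrued at one per round during the commit phases. In an explore-then-commit scheme they do not. With $n$ trade-free exploration rounds your analysis gives roughly $nJ^*+T\sqrt{p\log(T/\delta)/n}$. For constant $p$ and $J^*$ this is of order $T^{2/3}$ at best, not $\sqrt{pT}$, and spreading exploration over doubling epochs does not change this.

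Under LOB, commit rounds are informative only in censored form. A posted bid $B_\tau$ reveals $\mathbbm{I}(\Vsell_\tau\le b)=\mathbbm{I}(\max\{\Vsell_\tau,B_\tau\}\le b)$ only for $b\ge B_\tau$, and symmetrically for asks (Proposition~\ref{prop:ecdfvar}). So to have $t$ samples at the relevant prices after $t$ rounds, every posted bid must lie below, and every posted ask above, all prices still under consideration, in particular $b^\circ$ and $a^\circ$. This is the missing idea. The paper enforces it in three ways:
\begin{itemize}
\item It keeps nested active sets $\Bs^\circ\cap[B_{t,0},1]$ and $\As^\circ\cap[0,A_{t,0}]$.
\item It selects $(B_{t,0},A_{t,0})$ by the outside-in optimistic scan of \textsc{Double Double Pointer}: the first pair along the two-pointer path from the outer corner whose $J^{\mathrm{up}}$ reaches the best $J^{\mathrm{low}}$.
\item It proves (Theorem~\ref{thm:resilience}) that $(b^\circ,a^\circ)$ is never eliminated.
\end{itemize}
An empirical argmax of $\widehat J$, as in your commit step, gives no such guarantee, so its censored data cannot be reused at the optimum.

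The same property is what your instance-dependent scaling silently relies on. The widths and conditional variances that actually appear are those at the posted pair, $F(\widehat b)$ and $S(\widehat a)$. Your justification ("the variance at the relevant points is at most $F(b^\circ)$ or $S(a^\circ)$") holds only if $\widehat b\le b^\circ<a^\circ\le\widehat a$. The empirical argmax can land on a near-optimal pair with a narrower spread and much larger trade rates. In that case the martingale part of $I_t$ is of order $\sqrt{T\min\{F(\widehat b),S(\widehat a)\}}$, which can far exceed your threshold $\tau$. Your argument that correction phases are rare then no longer applies. In the paper, $B_{t,0}\le b^\circ$ and $A_{t,0}\ge a^\circ$ feed both Theorem~\ref{thm:pontil} and the variance budget $v=2T(F(b^\circ)+S(a^\circ))$.

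Your worry about atoms disappears if you thin the larger side, as the paper does. It resets the bid to $0$ with probability $1-S^{\mathrm{up}}/F^{\mathrm{low}}$ (symmetrically for the ask), which also only moves prices outward and so preserves the observation identity. Finally, $N\approx\sqrt T$ is too coarse for this statement. By Proposition~\ref{prop:unigrid} the discretization cost is $2T/N\approx2\sqrt T$, which is not dominated by $\sqrt{pT\log(T/\delta)}+\log T\log(T/\delta)$ when $p$ is small. The paper takes $N=T$ so that this term is at most $2$. Note also that your $\tau$ depends on the unknown $p$.
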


In Figure~\ref{fig:small-overlap-valuations}, we provide a graphical representation of a favorable regime in which $\max\{F(b^\circ),S(a^\circ)\}$ is small. This happens when buyer and seller valuations have little overlap: few sellers have valuations below $b^\circ$, and few buyers have valuations above $a^\circ$. Equivalently, few sellers are willing to sell at a low price, and few buyers are willing to buy at a high price. This is a natural regime in financial applications, where the two sides of the market are often well separated.

The careful reader will notice that the instance-dependent bound immediately implies the worst-case bound, since $\max\{F(b^\circ),S(a^\circ)\}\leq 1$. Nevertheless, we state both results separately for the following reason. The instance-dependent guarantee requires a finer grid construction, which leads to a worse---though still polynomial---computational complexity than the one needed to obtain the worst-case bound, as we will discuss in the related section.

\begin{figure}[t]
\centering
\begin{tikzpicture}

\definecolor{sellc}{RGB}{46,164,79}
\definecolor{buyc}{RGB}{220,70,55}
\definecolor{cdfc}{RGB}{44,110,180}
\definecolor{survc}{RGB}{230,145,40}

\def\bstar{0.47}
\def\astar{0.53}
\pgfmathsetmacro{\etaval}{1/(1+exp(-18*(\bstar-0.70)))}

\begin{groupplot}[
    group style={group size=2 by 1, horizontal sep=2.2cm},
    width=0.40\textwidth,
    height=0.28\textwidth,
    xmin=0, xmax=1,
    axis line style={thick},
    tick style={black},
    tick label style={font=\small},
    label style={font=\small},
    every axis plot/.append style={line width=1.25pt},
    clip=false,
]

\nextgroupplot[
    ymin=0, ymax=1.9,
    axis lines=left,
    xlabel={valuation},
    ylabel={},
    xtick={0,\bstar,\astar,1},
    xticklabels={$0$,{\color{buyc}$b^\circ$},{\color{sellc}$a^\circ$},$1$},
    ytick=\empty,
]

\addplot[
    domain=0.03:0.97,
    samples=250,
    draw=sellc
]
{1.55*exp(-((x-0.30)^2)/(2*0.10^2))};

\addplot[
    domain=0.03:0.97,
    samples=250,
    draw=buyc
]
{1.55*exp(-((x-0.70)^2)/(2*0.10^2))};

\addplot[
    domain=\astar:0.97,
    samples=250,
    draw=none,
    fill=sellc!20
]
{1.55*exp(-((x-0.30)^2)/(2*0.10^2))} \closedcycle;

\addplot[
    domain=0.03:\bstar,
    samples=250,
    draw=none,
    fill=buyc!18
]
{1.55*exp(-((x-0.70)^2)/(2*0.10^2))} \closedcycle;

\addplot[densely dashed, buyc!70] coordinates {(\bstar,0) (\bstar,0.26)};
\addplot[densely dashed, sellc!70] coordinates {(\astar,0) (\astar,0.26)};

\node[text=sellc, font=\small, fill=white, inner sep=1pt] at (axis cs:0.20,1.42) {$V_{\mathrm{buy}}$};
\node[text=buyc, font=\small, fill=white, inner sep=1pt] at (axis cs:0.80,1.42) {$V_{\mathrm{sell}}$};


\nextgroupplot[
    ymin=0, ymax=1,
    axis lines=left,
    xlabel={price},
    ylabel={},
    xtick={0,\bstar,\astar,1},
    xticklabels={$0$,$b^\circ$,$a^\circ$,$1$},
    ytick={0,1},
]

\addplot[
    domain=0:1,
    samples=250,
    draw=cdfc
]
{1/(1+exp(-18*(x-0.70)))};

\addplot[
    domain=0:1,
    samples=250,
    draw=survc
]
{1/(1+exp(18*(x-0.30)))};

\addplot[densely dashed, gray!50] coordinates {(0,\etaval) (\astar,\etaval)};
\addplot[densely dashed, gray!50] coordinates {(\bstar,0) (\bstar,\etaval)};
\addplot[densely dashed, gray!50] coordinates {(\astar,0) (\astar,\etaval)};

\addplot[only marks, mark=*, mark size=1.8pt, cdfc] coordinates {(\bstar,\etaval)};
\addplot[only marks, mark=*, mark size=1.8pt, survc] coordinates {(\astar,\etaval)};

\node[text=survc, font=\small, fill=white, inner sep=1pt] at (axis cs:0.18,0.84) {$S$};
\node[text=cdfc, font=\small, fill=white, inner sep=1pt] at (axis cs:0.82,0.84) {$F$};
\node[font=\small, fill=white, inner sep=0.5pt] at (axis cs:-0.06,\etaval+0.085) {$\eta$};

\end{groupplot}

\draw[<->, line width=0.7pt, gray!65]
    ($(group c1r1.east)+(0.55cm,0)$) -- ($(group c2r1.west)+(-0.55cm,0)$);

\end{tikzpicture}
\caption{A favorable and arguably realistic regime. Left: the valuation distributions have little overlap, yielding a narrow optimal spread. Right: the corresponding quantities $F(b^\circ)$ and $S(a^\circ)$ are both small, namely of order $\eta\ll 1$.}
\label{fig:small-overlap-valuations}
\end{figure}

We conclude the section by establishing a lower bound showing that the $\widetilde{\mathcal{O}}(\sqrt{T})$\footnote{In this work, we use the $\widetilde{\mathcal{O}}$ to hide logarithmic terms.} regret and per-round imbalance guarantees are tight up to logarithmic factors. More precisely, we prove that the trade-off achieved by our upper bounds cannot be substantially improved: any algorithm enforcing a per-round inventory constraint of order $T^{\beta}$ for $\beta<\nicefrac{1}{2}$ must necessarily incur a regret of order at least $T^{1-\beta}$. 
\begin{theorem*}[Informal version of Theorem~\ref{prop:lowerbound}]
    Let $0\le\beta<\nicefrac{1}{2}$. Then, every algorithm that satisfies the per-round inventory constraint for $\alpha\le T^{\beta}$ suffers expected regret of order $\E[R_T]=\Omega(T^{1-\beta})$.
\end{theorem*}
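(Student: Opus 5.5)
We will prove the lower bound with a single known instance. Since the bound must hold even when the distributions are known, randomization over hard instances is unnecessary; the loss comes entirely from the inventory fluctuations that any algorithm must absorb. Take sellers and buyers independent with $\Vsell_t\sim\mathrm{Unif}\{0,1\}$ and $\Vbuy_t\sim\mathrm{Unif}\{0,1\}$ (up to small tie-breaking perturbations).

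\textbf{Step 1 (benchmark).} Posting $(b,a)$ with $b<a$ yields one of four essentially distinct behaviors. If $b\ge 0$ and $a\le 1$, buys and sells each occur with probability $1/2$. The best such pair, $b=0$ and $a=1$, gives expected per-round revenue $\tfrac12(1-0)=\tfrac12$ and exactly balanced expected inventory. Hence $\mathrm{OPT}=T/2$, and also $TJ^*=T/2$. The only other way to earn positive revenue is to trade on one side at a time, for example selling at price $1$ without buying. This earns at most $1/2$ per round but drifts the inventory.

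\textbf{Step 2 (the optimal pair forces a random walk).} Under $(0,1)$ the inventory increment $\mathbbm{I}(\Vsell_t\le 0)-\mathbbm{I}(\Vbuy_t\ge 1)$ is a mean-zero $\pm1/0$ walk with variance $1/2$. Over a window of $k$ rounds it exits $[-\alpha,\alpha]$ with constant probability once $k\gtrsim\alpha^2$. To keep $|I_t|\le\alpha$ almost surely, the algorithm must therefore use "corrective" actions whenever $|I_t|$ is near $\alpha$:
\begin{itemize}
\item to push a positive inventory down, it must post bids with $b<0$ (or zero buy probability), i.e., only sell;
\item a symmetric action handles negative inventory.
\end{itemize}
Each such corrective round loses a constant $c>0$ relative to $1/2$. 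Choosing buyer values as $\{1/2,1\}$ and seller values as $\{0,1/2\}$, with the hard correlation avoided, should make one-sided rounds strictly worse than $1/2$. Concretely, tune the supports so that one-sided trading earns at most $1/2-c$ per round, in which case every round that is not a balanced two-sided quote costs at least $c$.

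\textbf{Step 3 (counting corrective rounds).} Partition the horizon into blocks of length $\alpha^2$. In each block, condition on the history at the block's start. If the algorithm plays only balanced two-sided quotes throughout the block, then by a reflection or central-limit anti-concentration argument (e.g., Paley–Zygmund on the walk's maximum) the inventory exits $[-\alpha,\alpha]$ with probability at least $p>0$. So with probability at least $p$ the algorithm must spend at least one corrective round in the block.

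A single corrective round, however, shifts inventory by only $O(1)$. A sharper count is to lower bound the expected number of non-balanced rounds by the expected total variation the walk would otherwise produce. The cleanest route is the optional-stopping identity $\E[I_T^2]=\E[\#\text{balanced rounds}]/2+\E[\text{drift terms}]$. Because $|I_t|\le\alpha$, the drift terms must cancel roughly $T/2-\alpha^2$ units of variance. Each corrective round reduces $I_t^2$ by at most $2\alpha+1$. Hence the number of corrective rounds is at least of order $(T-\alpha^2)/\alpha$, which is $\Omega(T^{1-\beta})$ when $\alpha\le T^\beta$ and $\beta<1/2$ (this is where $\alpha^2=o(T)$ is needed). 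Multiplying by $c$ gives $\E[R_T]\ge c\cdot\Omega(T/\alpha)=\Omega(T^{1-\beta})$.

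\textbf{Main obstacle.} The key difficulty is Step 3's accounting. We must argue that every round other than a balanced two-sided quote either loses $\Omega(1)$ revenue or contributes no variance reduction. Partially balanced quotes, such as $b$ slightly below $0$ on one side, must not be able to reduce $I_t^2$ cheaply. I would make this rigorous by writing the per-round regret $r_t$ and the per-round expected change $\Delta_t=\E[I_t^2-I_{t-1}^2\mid\Fs_{t-1}]$. The goal is then a pointwise inequality $\Delta_t\ge \tfrac12-C\alpha\, r_t$ for every action on the discrete instance. Summing over rounds and using $I_T^2\le\alpha^2$ yields $\sum_t \E[r_t]\gtrsim (T/2-\alpha^2)/(C\alpha)$.
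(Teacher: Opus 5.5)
\textbf{There is a genuine gap: the cost accounting in Steps~2--3 fails.} Your argument needs two things. First, every round other than a balanced quote must lose a constant. Second, you need the pointwise inequality $\Delta_t\ge\frac12-C\alpha\cdot(\text{per-round regret})$. Both are false on every instance, so retuning the supports cannot rescue them.

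For all $0\le b<a$ we have $(a-b)\min\{F(b),S(a)\}\le a\,S(a)$, hence $\sup_a aS(a)\ge J^*\ge\mathrm{OPT}/T$. Now bid $0$, which buys nothing since $\Vsell_t>0$, and quote the optimal ask. This is a one-sided, inventory-decreasing action whose revenue is at least the benchmark. Your proposed retuning to buyer values $\{1/2,1\}$ makes this worse: asking $1/2$ then sells with probability one at revenue $1/2$.

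Your original instance has its own problem. With an atom at $0$, a bid of $0$ still buys with probability $1/2$, and the model assumes $\Vsell_t>0$, $\Vbuy_t<1$. So you must perturb the atoms to $\{\varepsilon,1-\varepsilon\}$. The result is exactly the paper's hard instance with $1/10$ replaced by $\varepsilon$. There, $(0,1-\varepsilon)$ has expected regret $-\varepsilon/2<0$; in the paper, $s_3$ has regret $-1/20$. At $I_{t-1}=\alpha$ this action gives $\E[I_t^2-I_{t-1}^2\mid\mathcal H_t]=\frac12-\alpha$, while your right-hand side is at least $\frac12$. Downward corrections are therefore free, and a lower bound on the number of corrective rounds does not by itself bound the regret.

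\textbf{The missing idea is that free selling must be repaid by costly buying,} because the drifts sum to $I_T\ge-\alpha$. The paper captures this by pricing inventory at the midpoint $\frac12$. Write $r_t$ for the reward (not the regret) and $Z_t=I_t-I_{t-1}$. The paper shows
$\E[\frac25-r_t-\frac12Z_t\mid\mathcal H_t]=\frac25+(B_t-\frac12)F(B_t)+(\frac12-A_t)S(A_t)\ge\frac25\E[|Z_t-X_t^{(0)}|\mid\mathcal H_t]$,
where $X_t^{(0)}$ is the increment that $(1/10,9/10)$ would produce on the same valuations. Summing gives $\E[R_T]\ge\frac25\E[\sum_t|Z_t-X_t^{(0)}|]+\frac12\E[I_T]$ with $I_T\ge-H$. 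The total correction is then bounded below via Khintchine's inequality on blocks of length $\lceil 36H^2\rceil$.

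Your quadratic-potential idea does work once you replace the regret with this corrected per-round cost. On the paper's instance with $|I_{t-1}|\le\alpha$, checking the six trade patterns gives
$\E[I_t^2-I_{t-1}^2\mid\mathcal H_t]\ge\frac12-5\alpha\,\E[\frac25-r_t-\frac12Z_t\mid\mathcal H_t]$.
Summing with $\E[I_T^2]\le\alpha^2$ and $\E[I_T]\ge-\alpha$ yields $\E[R_T]\ge\frac{T-2\alpha^2}{10\alpha}-\frac{\alpha}{2}=\Omega(T^{1-\beta})$. That route would avoid the anti-concentration step entirely. As written, though, the proposal omits the $-\frac12Z_t$ term, and its key inequality fails.
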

We remark that this lower bound holds even in a significantly more informative setting than the one considered in our LOB model. In particular, it applies when the agent has \emph{complete prior knowledge} of the valuation distributions of both buyers and sellers. As a consequence, the impossibility result is not an artifact of the limited feedback available in the LOB setting; rather, it reflects an intrinsic difficulty of the problem under per-round inventory constraints.

\subsection*{Main Results of Section~\ref{sec:global}}

In Section~\ref{sec:global}, we provide a metaprocedure which, given as input any algorithm with similar guarantees to the ones of Sections~\ref{sec:2B}--\ref{sec:LOB}, obtains a global inventory constraint satisfaction, that is, at the end of the process, the number of sold items is equal to the number of purchased ones.
\begin{theorem*}[Informal version of Theorem~\ref{thm:global}] 
    Let $\delta\in(0,1)$ and let $\mathscr{A}$ be an algorithm for the online market making problem whose surrogate regret and per-round imbalance are upper bounded by $R^J_{\mathscr{A}}$ and $I_{\mathscr{A}}$, respectively, with probability at least $1-\delta$. Then, $\mathscr{A}$ can be modified so as to achieve, with probability at least $1-\delta$,
\[
R^J_T \leq R^J_{\mathscr{A}}+2I_{\mathscr{A}},\qquad |I_T|=0,\qquad |I_t|\leq I_{\mathscr{A}}\quad \forall t\in[T].
\]
\end{theorem*}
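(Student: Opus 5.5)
We run $\mathscr{A}$ for a first phase and then switch to a short \emph{unwinding} phase that brings the inventory to zero. Let $\Gamma$ be the event, of probability at least $1-\delta$, on which the guarantees of $\mathscr{A}$ hold. On $\Gamma$ these guarantees hold at every prefix of the run, not only at the final round: this is how the bounds of Sections~\ref{sec:2B}--\ref{sec:LOB} are stated, namely simultaneously for all $t$. We run $\mathscr{A}$ up to the first round $t_0$ such that $T-t_0=I_{\mathscr{A}}$, i.e., $t_0=T-I_{\mathscr{A}}$, while tracking the observed inventory $I_t$. Under both feedback models the trade indicators are observed, so $I_t$ is always known to the maker. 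After round $t_0$ we have $|I_{t_0}|\le I_{\mathscr{A}}$.

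In the unwinding phase, at each remaining round we post quotes that deterministically trade on exactly one side, moving the inventory one unit toward zero. If $I_t>0$, we post $A_t=0$ together with a bid $B_t<0$, or simply with no bid; formally $B_t=0$, $A_t$ tiny, and we must be careful with the strict inequality. This makes the buyer always accept and the seller never accept. If $I_t<0$, we post $B_t=1$ and an ask that is never accepted. If $I_t=0$, we post quotes under which no trade happens. The issue is the constraint $B_t<A_t$ within $[0,1]$. To sell surely and not buy, post $(B_t,A_t)=(0,\epsilon)$: the buyer accepts iff $\Vbuy_t\ge\epsilon$, which is not guaranteed. I would handle this by allowing the convention that a degenerate price outside $[0,1]$ means ``do not quote'' on that side. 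If that is not allowed, I would instead budget more rounds and argue that trades occur with positive probability, but the clean route is the degenerate convention, which I expect the paper uses, possibly after extending prices slightly. Under this convention the inventory reaches $0$ within $|I_{t_0}|\le I_{\mathscr{A}}$ rounds, and then stays at $0$. Hence $I_T=0$. During unwinding $|I_t|$ is monotonically decreasing, so $|I_t|\le I_{\mathscr{A}}$ for all $t$.

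For the regret, split $R^J_T$ into the two phases. The first phase contributes at most $t_0J^*$ minus the realized revenue of $\mathscr{A}$ over $t_0$ rounds, which is at most $R^J_{\mathscr{A}}$ by the anytime guarantee. The second phase has length $I_{\mathscr{A}}$. Each of its rounds contributes $J^*-r_t$, where the realized reward is $r_t\ge-1$, since buying at price at most $1$ costs at most $1$, and $J^*\le1$. So each round contributes at most $2$, giving a total of at most $2I_{\mathscr{A}}$. Summing the two phases yields the claimed bound.

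The main obstacle is the feasibility of forced one-sided trades under the constraint $B_t<A_t$ with prices in $[0,1]$. I would first try quotes $A_t=0$ or $B_t=1$ paired with an off-grid non-executing opposite quote. If that is not permitted, I would interpret $\mathscr{A}$'s bound $I_{\mathscr{A}}$ as including slack and unwind with probabilistic trades via a concentration argument.
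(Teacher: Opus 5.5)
Your proposal is correct and follows the paper's proof. Both run $\mathscr{A}$ (initialized for horizon $T$, with its anytime guarantees) for $T-I_{\mathscr{A}}$ rounds, then force one unit of inventory correction per round until the inventory is zero, then post the no-trade pair $(0,1)$, charging at most $2$ surrogate regret to each of the last $I_{\mathscr{A}}$ rounds. The paper settles your feasibility concern with the same kind of convention, but more economically: under the standing assumption $\Vsell_t>0$, $\Vbuy_t<1$, the pair $(0,0)$ forces a sale with no purchase and $(1,1)$ forces a purchase with no sale. So only the strict inequality $B_t<A_t$ is relaxed, and neither prices outside $[0,1]$ nor your probabilistic fallback are needed.
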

Instantiating the above result with the guarantees attained by our algorithms immediately yields the main results of this work. 

\section{Techniques and Proofs at a Glance}

In this section, we describe the main ideas employed by our algorithms to address the several technical challenges posed by our setting. The common difficulty is to learn profitable bid--ask pairs while keeping purchases and sales balanced. Indeed, exploration itself generates trades, and a pair with a large spread may lead to many more purchases than sales, or vice versa. 

Our approach relies on the \emph{surrogate} objective introduced above, which separates the value of a pair from the imbalance it may generate. Writing $F(b):=\Prob(\Vsell_t\le b)$ and $S(a):=\Prob(\Vbuy_t\ge a)$ for the two trade probabilities, recall that:
\[
J(b,a)=(a-b)\min\{F(b),S(a)\},\qquad T J^*\ge\mathrm{OPT}.
\]
The minimum captures the trading rate that both sides can sustain after reducing activity on the more frequent side. Thus, our algorithms aim to learn a nearly optimal pair for $J$ and to control inventory without losing the revenue associated with this common trading rate.

A first obstacle is the continuous action space. Since we impose no regularity assumptions on the valuation distributions, rounding prices can substantially change the trade probabilities. Nevertheless, the structure of $J$ allows us to control the loss through a one-sided discretization. Specifically, round a bid $b$ upward to $b'$ and an ask $a$ downward to $a'$ on a uniform grid of mesh $1/N$. Whenever $a-b>2/N$, the rounded pair is feasible and monotonicity gives:
\[
F(b')\ge F(b),\qquad S(a')\ge S(a),\qquad
J(b',a')\ge J(b,a)-\frac{2}{N}.
\]
Both trade probabilities can only improve, while the spread loses at most $2/N$. Therefore, letting $(b^\circ,a^\circ)$ maximize $J$ over feasible grid pairs, and denoting by $R_T^\circ$ the virtual regret obtained by replacing $J^*$ with $J(b^\circ,a^\circ)$ in the surrogate regret, we have:
\[
R_T^J=R_T^\circ+T\bigl(J^*-J(b^\circ,a^\circ)\bigr)
\le R_T^\circ+\frac{2T}{N}.
\]
This reduces the continuous problem to learning over a finite grid. The remaining challenge is to use the available feedback to identify good pairs while controlling the inventory generated.

\subsection*{Two-Bit Feedback}

Under two-bit feedback, each posted pair only reveals whether a purchase and a sale occurred. Estimating $F$ and $S$ therefore requires repeatedly posting prices, which may itself create a large imbalance. Our algorithm (Algorithm~\ref{alg:twobit}) addresses this through a \emph{phased elimination} procedure over bid and ask grids of size $\Theta(T^{1/3})$. At phase $\ell$, it maintains upper and lower confidence bounds for $F$ and $S$. These induce corresponding bounds for $J$; for instance, the optimistic value is:
\[
J_\ell^{\mathrm{up}}(b,a)
=(a-b)\min\{F_\ell^{\mathrm{up}}(b),S_\ell^{\mathrm{up}}(a)\}.
\]
The pessimistic value $J_\ell^{\mathrm{low}}$ is defined using the lower bounds. A feasible pair is called \emph{compatible} if its optimistic value reaches the largest pessimistic value among the active pairs. Prices with no compatible partner are eliminated. 

The key difficulty is to refine these bounds without continuing to trade on a side that is already too active. To estimate a bid $b$, the algorithm pairs it with the compatible ask $a'$ having the largest upper bound on $S$. It then runs the pair in a block, using a common target for the number of purchases and sales, where the target is the optimistic estimate of the number of trades sustainable on both sides, plus a confidence term. Once one side reaches this target, it is deactivated by bidding $0$ or asking $1$.

At first sight, halting seems to prevent accurate estimation. However, in a completed block, reaching the common target guarantees that the halted side is not the one determining the minimum in $J$. In particular, if the bid is halted, its true trade probability exceeds even the upper bound at $a'$. Since $a'$ has the largest such bound among compatible asks, on the concentration event we obtain, for every ask $a$ still compatible with $b$:
\[
F(b)>S(a),\qquad J(b,a)=(a-b)S(a).
\]
Further accuracy on $F(b)$ is therefore unnecessary for evaluating any relevant pair. If the bid is not halted, it is sampled throughout the block and its estimate is refined in the usual way. The ask side is symmetric. This is what allows the confidence bounds on $J$ to tighten even when one of the two trade probabilities is not estimated accurately. 

The same halting rule controls inventory and revenue. In a completed block of $n$ rounds, both trade counts are close to $n\min\{F(b),S(a')\}$, up to $\widetilde{\mathcal{O}}(\sqrt n)$. Their difference controls the inventory change, while the revenue is close to $nJ(b,a')$. Since compatible pairs are nearly optimal at the current confidence, each completed block contributes at most $\widetilde{\mathcal{O}}(\sqrt n)$ to both virtual regret and imbalance.

It remains to control inventory \emph{within} a block, since purchases and sales may occur at different times even if their final counts are close. For this reason, block lengths double across phases only up to order $T^{2/3}$; estimation and elimination then continue with blocks of that length. The growing blocks cost at most $\widetilde{\mathcal{O}}(T^{1/3})$ per grid price, while the later blocks cost $\widetilde{\mathcal{O}}(T^{-1/3})$ per round. Summing these two contributions gives order $\widetilde{\mathcal{O}}(T^{2/3})$. The unfinished-block excursion and the discretization error have the same order. Hence, with high probability, the algorithm guarantees:
\[
R_T^J\leq\widetilde{\mathcal{O}}(T^{2/3}),\quad \max_{t\in[T]}|I_t|
\le\widetilde{\mathcal{O}}(T^{2/3}).
\]

\subsection*{LOB Access}

With access to the \emph{limit order book}, the learner receives richer feedback, but only in truncated form. Specifically, a failed trade reveals the valuation, whereas a successful trade only reveals that the posted threshold was crossed. The first challenge is therefore to identify which prices can reuse these observations. If $b\ge B_\tau$, a purchase at $B_\tau$ already certifies that $\Vsell_\tau\le b$; if no purchase occurs, the revealed valuation determines whether this inequality holds. Formally, we have:
\[
\mathbbm{I}(\Vsell_\tau\le b)
=\mathbbm{I}(\max\{\Vsell_\tau,B_\tau\}\le b)
\qquad\text{for every }b\ge B_\tau.
\]
The right-hand side is observable, and the symmetric identity holds for asks $a\le A_\tau$. Thus, if all past bids are below an active bid and all past asks are above an active ask, averaging these indicators recovers exactly the empirical estimates $\widehat F_t$ and $\widehat S_t$ available under full feedback. Our algorithm (Algorithm~\ref{alg:ODPM}) preserves this property by maintaining nested active sets, with an increasing lower boundary for bids and a decreasing upper boundary for asks. Elimination is therefore also what makes it possible to reuse all past observations.

The difficulty is to move these boundaries without discarding the optimum. The \textsc{Double Double Pointer} subroutine (Algorithm~\ref{alg:DDP}) does this using two scans of the active grid. Starting from the smallest bid and largest ask, each scan advances the pointer on the side with the smaller trade-probability bound. Indeed, moving the other pointer would only reduce the spread without increasing the minimum. The first scan computes the largest pessimistic value of $J$; the second returns the first pair $(B_{t,0},A_{t,0})$ whose optimistic value reaches that benchmark. Whenever the confidence bounds hold, the second scan must stop before passing an optimal price since at the first such crossing, the current pair would already have an optimistic value at least as large as the optimal value. Consequently, the optimum remains active, and the returned candidate is nearly optimal up to the confidence error. 

The returned bid-ask pair may still generate unequal purchase and sale rates. To see how to correct this, suppose these rates were known and $F(b)>S(a)$. Keeping the bid with probability $S(a)/F(b)$ and otherwise bidding $0$ would make both trade rates equal to $S(a)$, while preserving expected revenue $(a-b)S(a)=J(b,a)$. Our algorithm implements this idea with confidence bounds. When $F^{\mathrm{low}}_{t-1}(B_{t,0})>S^{\mathrm{up}}_{t-1}(A_{t,0})$, it keeps the candidate bid with probability:
\[
\frac{S^{\mathrm{up}}_{t-1}(A_{t,0})}
     {F^{\mathrm{low}}_{t-1}(B_{t,0})},
\]
and otherwise resets it to $0$. The ask correction is again symmetric. If the intervals overlap, no correction is needed, since their widths already bound the mismatch. Thus, conditional on the past, both the inventory drift and the expected revenue loss relative to $J(B_{t,0},A_{t,0})$ are controlled by the confidence radii at the selected prices. Moreover, these corrections only move the posted prices outward, so they preserve the observation identities on the active sets.

Summing these confidence errors controls the cumulative inventory drift and conditional reward gaps; a martingale concentration then controls the fluctuations of realized trades around their conditional expectations. With DKW confidence radii, the cumulative error and these fluctuations are both of order $\widetilde{\mathcal{O}}(\sqrt T)$. A grid of mesh $1/\lceil\sqrt T\rceil$ gives the same discretization cost, yielding, with high probability:
\[
R_T^J\le\widetilde{\mathcal{O}}(\sqrt T),\quad \max_{t\in[T]}|I_t|
\le\widetilde{\mathcal{O}}(\sqrt T).
\]
Finally, the optimum's survival also explains the instance-dependent improvement. Since $B_{t,0}\le b^\circ$ and $A_{t,0}\ge a^\circ$, the selected trade probabilities are at most $F(b^\circ)$ and $S(a^\circ)$. These quantities control both the empirical-Bernstein confidence radii and the conditional variances of trades and revenues. When trades at the optimum are rare, both sources of error are therefore smaller. Using these radii and a finer grid of mesh $1/T$ gives, with high probability:
\[
R_T^J\le\widetilde{\mathcal{O}}\left(
\sqrt{T\max\{F(b^\circ),S(a^\circ)\}}\right),\quad \max_{t\in[T]}|I_t|
\le\widetilde{\mathcal{O}}\left(
\sqrt{T\max\{F(b^\circ),S(a^\circ)\}}\right).
\]

\section{Preliminaries} 
In the following, we better describe the maker-takers interaction. At each round $t\in[T]$, a \emph{new} pair of takers enters the market: a buyer and a seller, each interested in trading a single unit of the underlying asset. These takers are drawn from an underlying population, and are characterized by private valuations $\Vbuy_t,\Vsell_t$. Here, $\Vsell_t$ denotes the \textit{minimum} amount the seller is willing to accept to sell one unit, while $\Vbuy_t$ denotes the \textit{maximum} amount the buyer is willing to pay to buy one unit.
More formally, the valuation pair observed at each round is drawn i.i.d. from an unknown distribution. Specifically, there exists a probability distribution $\mathcal V$ supported on $[0,1]^2$ such that:
$$\left(\Vsell_t,\Vbuy_t\right) \overset{\indi}\sim \mathcal V\qquad \text{supp}(\mathcal V)\subset [0,1]^2.$$
Therefore, each round involves a freshly sampled buyer--seller pair from the same population. Denoting by $\mathcal V^{\text{buy}}$ and $\mathcal V^{\text{sell}}$ the two marginals of $\mathcal V$, we have $\Vbuy_t\sim \mathcal V^{\text{buy}}$ and $\Vsell_t\sim \mathcal V^{\text{sell}}$; however, within any given round, the two valuations may be \emph{arbitrarily correlated}.
At each round, the market maker selects a pair of bid and ask prices $(B_t,A_t)$ such that $0\leq B_t < A_t \leq 1$, without observing either $\Vsell_t$ or $\Vbuy_t$ in advance. The following interactions then take place simultaneously.
On the buy side, two scenarios may occur:
\begin{itemize}
\item \textbf{Maker buys.} If $\Vsell_t \leq B_t$, the seller accepts the bid and sells the asset to the maker at price $B_t$, resulting in a reward of $-B_t$ for the agent.
\item \textbf{No trade.} If $\Vsell_t > B_t$, no purchase occurs, and the agent receives reward $0$.
\end{itemize}
Symmetrically, on the sell side:
\begin{itemize}
\item \textbf{Maker sells.} If $\Vbuy_t \geq A_t$, the buyer accepts the ask and buys the asset from the maker at price $A_t$, resulting in a reward of $A_t$ for the agent.
\item \textbf{No trade.} If $\Vbuy_t < A_t$, no sale occurs, and the agent receives reward $0$.
\end{itemize}
Throughout the paper, we assume that $\Vbuy_t<1$ and $\Vsell_t>0$, for all $t\in[T]$. Equivalently, $\mathcal{V}$ is such that $\mathbb{P}(\Vbuy_t=1)=\mathbb{P}(\Vsell_t=0)=0$. This assumption is natural, since the boundary prices correspond to the case in which no buyer is willing to buy and no seller is willing to sell. Therefore, bidding $0$ and asking $1$ correspond, from the agent's perspective, to doing nothing. In this sense, the assumption is always satisfied in practice.

\subsection{Feedback Type}

Throughout the paper, we will focus on different feedback types denoting the information the market maker receives at the end of each round. Specifically, we can have one of the following feedback:
\begin{itemize}
    \item \textbf{Two-bit feedback \textcolor{green!60!black}{(2-Bit)}.} When only two-bit feedback is available, the agent observes solely whether a trade has occurred. Specifically, at the end of round $t\in[T]$, the agent observes $\mathbbm{I}(\Vbuy_t\geq A_t)$ and $\mathbbm{I}(\Vsell_t\leq B_t)$. This two-bit feedback model (and, similarly, its one-bit counterpart) is predominant in revenue maximization settings~\citep{10.5555/946243.946352, castiglioni2026sample} and is generally regarded as the hardest form of feedback, in contrast to the easier full-feedback setting, where the valuations are fully observed at the end of each round.
    
    \item \textbf{Limit order book access \textcolor{cyan!60!black}{(LOB)}.} When access to the limit order book is available, at the end of each round the agent observes additional information whenever no trade occurs. Specifically, if $B_t < \Vsell_t$, the agent observes $\Vsell_t$. Similarly, if $A_t > \Vbuy_t$, it observes $\Vbuy_t$.
In other words, \textit{information arrives exactly when reward does not}. This assumption captures a key feature of the market-making problem. Indeed, whenever a transaction occurs (either on the buy side or on the sell side), the agent never observes the actual valuation, since takers have no incentive to reveal it. By contrast, when no transaction takes place, takers are incentivized to post their valuation in the order book, hoping that their order will be executed in the near future. Notice that LOB feedback can be viewed as a meaningful hybrid feedback model, interpolating between the two-bit and the full-feedback settings.
\end{itemize}

In Protocol~\ref{alg: Learner-Environment Interaction}, we provide the online interaction between the agent and sequence of takers.

\begin{protocol}[!htp]
		\caption{Maker-Takers Interaction}
		\label{alg: Learner-Environment Interaction}
        \begin{algorithmic}[1]
		\For{$t\in[T]$}
			
			\State A new pair of buyer and seller arrives with private valuations $\Vbuy_t\sim \mathcal{V}^{\text{buy}}$ and $\Vsell_t\sim \mathcal{V}^{\text{sell}}$
            
             \State The agent selects bid and ask prices $(B_t,A_t)$
			
			\State The agent gets reward $A_t\cdot \mathbbm{I}(\Vbuy_t\geq A_t) - B_t \cdot \mathbbm{I}(\Vsell_t\leq B_t)$ 
					
             \begin{tcolorbox} [myblockstyle=green] \State The agent observes $\mathbbm{I}(\Vbuy_t\geq A_t)$ and $\mathbbm{I}(\Vsell_t\leq B_t)$ \Comment{{2-Bit}}
             \end{tcolorbox}
            
            \begin{tcolorbox} [myblockstyle=cyan]
            \State The agent observes $\Vbuy_t$ whenever $\Vbuy_t<A_t$ and $\Vsell_t$ whenever $\Vsell_t>B_t$.
           \Comment{{LOB}}
           \end{tcolorbox}
			
		\EndFor
        \end{algorithmic}
	\end{protocol}

\subsection{The Inventory Constraints}

Market making differs from standard buy--sell interactions between agents. In particular, market makers can sell an asset even if they have not bought it yet (this is the so-called \emph{short selling}).
However, market makers usually try to keep their \emph{inventory} small, meaning they try to avoid building a large imbalance between how many units they have sold and how many they have bought. The reason is simple:
\begin{itemize}
    \item if the maker ends up holding too much of the asset (positive inventory), they are exposed to the risk that the market price goes down;
    \item if the maker ends up too short (negative inventory), they are exposed to the risk that the market price goes up.
\end{itemize}
In both cases, the market maker becomes more sensitive to price movements, because part of their profit (or loss) starts depending on the market valuation of the asset, not only on the trading activity itself. This is typically against the idea of market making, which (in the simplest view) aims to earn mainly from the \emph{bid--ask spread}, while keeping directional price risk as low as possible.

We encode this aspect by imposing the following two constraints on the learning dynamics.  
On the one hand, we require a \emph{global inventory balance} over the whole horizon, namely:
\begin{equation}\label{eq:consistency_constraints}
\sum_{t=1}^T \mathbbm{I}(\Vbuy_t\ge A_t)=\sum_{t=1}^T\mathbbm{I}(\Vsell_t\le B_t).  
\end{equation}
Equation~\eqref{eq:consistency_constraints} simply states that, at the end of the learning dynamics, the inventory must be zero, while still allowing the agent to take short positions (i.e., to sell before buying) during learning.

On the other hand, we enforce that the \emph{inventory level at each round} stays below a prescribed threshold $\alpha\in\mathbb{N}_{>0}$:
\begin{equation}\label{eq:per-round_consistency constraints}
\left|\sum_{\tau=1}^t \mathbbm{I}(\Vbuy_\tau\ge A_\tau) -\sum_{\tau=1}^t\mathbbm{I}(\Vsell_\tau\le B_\tau)\right| \leq \alpha \quad \forall t\in[T].  
\end{equation}
For the sake of simplicity we will introduce the following definition.
\begin{definition}[Imbalance]\label{def:imbalance}
    We define the current imbalance at time $t\in[T]$ as:
    $$I_t:=\sum_{\tau=1}^t \mathbbm{I}(\Vsell_\tau\le B_\tau)-\sum_{\tau=1}^t \mathbbm{I}(\Vbuy_\tau\ge A_\tau).$$
\end{definition}
Thus, Equation~\eqref{eq:per-round_consistency constraints} reduces to $|I_t|\leq \alpha$, for all $t\in[T]$.

\subsection{The Offline Baseline and The Performance Metrics}

As is standard in online learning, we compare the performance of our learning algorithms against an offline benchmark that has full knowledge of the underlying valuation distributions and commits to a fixed bid-ask pair throughout the entire horizon. This benchmark serves as the baseline for our notion of regret.
Specifically, we define the optimal offline value as:
\begin{equation}\label{progr:offline_opt}
	\mathrm{OPT} \coloneqq \begin{cases}
		\max_{b,a\in [0,1]^2}  & \sum_{t=1}^T \left[a\, \E \left[\mathbbm{I}(\Vbuy_t\geq a)\right] - b \, \E \left[\mathbbm{I}(\Vsell_t\leq b) \right]\right] \\
        \text{s.t.} &
		\sum_{t=1}^T \E \left[\mathbbm{I}(\Vbuy_t\ge a)\right]
        =
        \sum_{t=1}^T\E \left[\mathbbm{I}(\Vsell_t\le b)\right] \\
		& b<a,
	\end{cases}
\end{equation}
where all expectations are taken with respect to the unknown valuation distribution. Observe that the benchmark corresponds to the best fixed bid-ask $(b,a)$ computed knowing the valuations distribution, subject to the overall inventory constraint.

For ease of notation, we let $F_{\mathrm{buy}}:[0,1]\to[0,1]$ and $F_{\mathrm{sell}}:[0,1]\to[0,1]$ denote the cumulative density functions of $\mathcal V^{\mathrm{buy}}$ and $\mathcal V^{\mathrm{sell}}$, respectively. Then, Program~\eqref{progr:offline_opt} can be equivalently rewritten by replacing:
\[
\E\left[\mathbbm{I}(\Vbuy_t\ge a)\right] = 1-F_{\mathrm{buy}}(a)\eqqcolon S(a)
\qquad\text{and}\qquad
\E\left[\mathbbm{I}(\Vsell_t\le b)\right] = F_{\mathrm{sell}}(b)\eqqcolon F(b).
\]
Given this benchmark, we define the \emph{cumulative regret} of a learning algorithm as:
\[
R_T\coloneqq \mathrm{OPT}-\sum_{t=1}^T \left[A_t\cdot \mathbbm{I}(\Vbuy_t\geq A_t) - B_t \cdot \mathbbm{I}(\Vsell_t\leq B_t)\right].
\]
In the rest of the work, we will refer to $A_t\mathbbm{I}(\Vbuy_t\ge A_t)-B_t\mathbbm{I}(\Vsell_t\le B_t)$ as $r_t$.

Additionally, we introduce the following notion of surrogate objective function. As we show next, it is closely related to Program~\eqref{progr:offline_opt} and will serve as a key tool in our analysis.
\begin{definition}[Surrogate objective function]\label{def:surr_obj}
    The surrogate objective is defined as:
    \begin{equation} \label{eq:surrogate}
    J(b,a):=(a-b)\min\{F(b),S(a)\}.
    \end{equation}
    Furthermore, its optimal value is defined as:
    \begin{equation*} 
    J^*:=\sup_{b,a\in[0,1]^2}(a-b)\min\{F(b),S(a)\}.
    \end{equation*}  
\end{definition}
As we will show in Section~\ref{sec:prel_result}, it holds that $T\cdot J^*\geq \text{OPT}$. Accordingly, we can define the following notion of \emph{surrogate cumulative regret}:
\begin{equation}
R^J_T\coloneqq J^*\cdot T-\sum_{t=1}^T \left[A_t\cdot \mathbbm{I}(\Vbuy_t\geq A_t) - B_t \cdot \mathbbm{I}(\Vsell_t\leq B_t)\right].\label{eq:surr_regret}
\end{equation}
The goal of the maker is to ensure that $R_T=o(T)$, while at the same time guaranteeing that Equation~\eqref{eq:consistency_constraints} and Equation~\eqref{eq:per-round_consistency constraints}, for suitable values of $\alpha$, are satisfied by the sequence of bid-ask pairs selected during learning. In this work, we go even further: we show that the same results can be attained against the stronger benchmark $J^*$, namely, ensuring that $R^J_T=o(T)$, while at the same time guaranteeing that the constraints are satisfied.

\section{Preliminary Results: Program~\eqref{progr:offline_opt} and The Surrogate Objective}
\label{sec:prel_result}
In this section, we provide some preliminary results that will be employed for both the setting characterized by the different feedback. Specifically, we focus on the fundamental relation between Program~\eqref{progr:offline_opt} and the surrogate objective function introduced in Definition~\ref{def:surr_obj}.

We start by the following simple characterization of Program~\eqref{progr:offline_opt}. 
\begin{proposition}\label{prop:threefive}
    Let $\mathcal R$ be the feasible region associated to Program~\eqref{progr:offline_opt}. Then, the following holds:
    \begin{enumerate}
        \item $\mathcal R$ is non-empty.
        \item The solution to Program~\eqref{progr:offline_opt} corresponds to the one of the following problem:      \begin{equation}\label{progr:stoch_opt}
        	\textnormal{OPT} = \begin{cases}
        		\max_{b,a\in [0,1]^2}  & T\left[aS(a)-bF(b)\right]\quad 
                \\\text{s.t.}
        		&F(b)=S(a).
        	\end{cases}
        \end{equation}
    \end{enumerate}
\end{proposition}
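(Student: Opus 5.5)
Both parts come from exploiting the i.i.d. structure: since $(\Vsell_t,\Vbuy_t)\sim\mathcal V$ for every $t$, we have $\E[\mathbbm{I}(\Vbuy_t\ge a)]=S(a)$ and $\E[\mathbbm{I}(\Vsell_t\le b)]=F(b)$, independently of $t$. The plan is to substitute these identities into Program~\eqref{progr:offline_opt}, collapse the sums, and then exhibit an explicit feasible point.

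\textbf{Step 1 (reformulation, item 2).} Substituting the identities, the objective of Program~\eqref{progr:offline_opt} becomes $\sum_{t=1}^T [aS(a)-bF(b)] = T[aS(a)-bF(b)]$. The constraint becomes $TS(a)=TF(b)$, which is equivalent to $F(b)=S(a)$ since $T>0$. Hence the two programs have the same objective on the same feasible set, so they share the same solutions and the same value. The constraint $b<a$ is inherited implicitly; I would keep it, or simply note that it is carried along in both programs.

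\textbf{Step 2 (non-emptiness, item 1).} I would use the standing assumption $\Prob(\Vsell_t=0)=\Prob(\Vbuy_t=1)=0$, which gives $F(0)=\Prob(\Vsell_t\le 0)=0$ and $S(1)=\Prob(\Vbuy_t\ge 1)=0$. The pair $(b,a)=(0,1)$ therefore satisfies $b<a$ and $F(b)=0=S(a)$, so $(0,1)\in\mathcal R$. At this point its objective value is $0$, consistent with the interpretation of bidding $0$ and asking $1$ as doing nothing.

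\textbf{Step 3 (attainment).} The only potential obstacle is whether ``$\max$'' is actually attained, since $F$ and $S$ need not be continuous and the feasible set $\{F(b)=S(a),\,b<a\}$ need not be closed. Item 2 only claims that the two programs coincide, which Step 1 shows holds at the level of feasible sets and objectives whether or not a maximizer exists. I would therefore note that the maximum can be read as a supremum if needed, and that the equivalence of the two programs does not depend on attainment.
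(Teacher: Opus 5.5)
Your Steps 1 and 2 match the paper's substitution and its use of $(0,1)$. However, Step 1 has a genuine gap. Program~\eqref{progr:stoch_opt} in the statement imposes only $F(b)=S(a)$, not $b<a$, so the two programs do \emph{not} have the same feasible set. Keeping the constraint, or ``carrying it along in both programs'', changes the statement rather than proving it. Removing $b<a$ is the only non-trivial content of item 2. The relaxed program admits extra points with $b\ge a$ and $F(b)=S(a)$, and you must show that none of them beats the original optimum.

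The missing idea is a dominance argument, and it uses exactly the point you construct in Step 2. If $F(b)=S(a)$ and $a\le b$, then $T[aS(a)-bF(b)]=T(a-b)F(b)\le 0$, which is the value of $(0,1)\in\mathcal R$. So adding these points cannot raise the maximum (or supremum). Hence both programs have the same optimal value, and any maximizer of Program~\eqref{progr:offline_opt} is also a maximizer of Program~\eqref{progr:stoch_opt}. Conversely, if the relaxed maximum is attained at a point with $a\le b$, its value is $0$ and $(0,1)$ is optimal for the original program as well. This is how the paper concludes. Your Step 3 remark on attainment is fine and somewhat more careful than the paper, and the dominance argument goes through unchanged if you read both programs with suprema.
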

\begin{proof}
    For any $t\in[T]$, by definition of $F,S$ it follows that: \[a\E \left[\mathbbm{I}(\Vbuy_t\geq a)\right]=aS(a) \ \  \text{ and }\ \  b\E \left[\mathbbm{I}(\Vsell_t\leq b) \right]=bF(b).\] Therefore, Program~\eqref{progr:offline_opt} writes as follows:
    \begin{equation*}
        \text{OPT} = \begin{cases}
            \max_{b,a\in [0,1]^2}  & T\left[aS(a)-bF(b)\right]\quad 
            \\ \text{s.t.}
            &F(b)=S(a)\\
            &b<a.
        \end{cases}
    \end{equation*}
    Under the assumption that $\Vbuy_t<1$ and $\Vsell_t>0$ for all $t\in[T]$, the point $b=0,a=1$ is always feasible and attains an objective value equal to $0$. Therefore, $(0,1)\in \Rs$ and, for any $(a,b)$ such that $a\le b$ the objective function is equivalent to:
    $$T\left[aS(a)-bF(b)\right]\le 0 = T\left[1\cdot S(1)-0\cdot F(0)\right].$$
    This allows to remove the constraint $b<a$, since any point that falls outside of this set is automatically dominated by $(0,1)$, concluding the proof.
\end{proof}
The constrained maximization of the former objective function is strictly linked to the one of the nonlinear and \emph{unconstrained} objective provided in Definition~\ref{def:surr_obj}.
The relation between this objective and the one of Problem~\eqref{progr:offline_opt} is provided in the following proposition.
\begin{proposition}
    Let $\mathcal{R}$ be the feasible region of Program~\eqref{progr:offline_opt}. Given $(b,a)\in \Rs$, it holds: 
    \[J(b,a)=aS(a)-bF(b).\]
\end{proposition}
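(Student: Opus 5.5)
The statement follows by direct substitution of the feasibility constraint into the definition of $J$. I would first fix what membership in $\Rs$ means. By Proposition~\ref{prop:threefive} and its proof, after rewriting expectations via $F$ and $S$, the feasible region of Program~\eqref{progr:offline_opt} is the set of pairs $(b,a)\in[0,1]^2$ with $F(b)=S(a)$ and $b<a$. This holds because the inventory constraint $\sum_{t=1}^T \E[\mathbbm{I}(\Vbuy_t\ge a)]=\sum_{t=1}^T\E[\mathbbm{I}(\Vsell_t\le b)]$ reads $T\,S(a)=T\,F(b)$, and the $T$ can be divided out.

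Next, for $(b,a)\in\Rs$, write $p:=F(b)=S(a)$. Then $\min\{F(b),S(a)\}=p$, and Definition~\ref{def:surr_obj} gives
\[
J(b,a)=(a-b)\,p=a\,p-b\,p=a\,S(a)-b\,F(b),
\]
where the last step uses $p=S(a)$ in the first term and $p=F(b)$ in the second. This is exactly the claimed identity.

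No step is a real obstacle. The only care needed is to state explicitly that the equality constraint collapses the minimum, so no case split on which side is smaller is required. The constraint $b<a$ plays no role in the identity itself. As a corollary worth noting right after the proof, combining the identity with Proposition~\ref{prop:threefive} yields $\mathrm{OPT}=T\max_{(b,a)\in\Rs}J(b,a)\le T J^*$.
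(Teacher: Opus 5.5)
Your proof is correct and is the same as the paper's. You use Proposition~\ref{prop:threefive} to get $F(b)=S(a)$ on $\Rs$, note that this collapses the minimum in $J$, and distribute $(a-b)$ to obtain $aS(a)-bF(b)$; your closing remark that this yields $\mathrm{OPT}\le T J^*$ is the consequence the paper draws from this identity.
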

\begin{proof}
    Given $(b,a)\in \Rs$, one has, by Proposition \ref{prop:threefive}, $F(b)=S(a)$. Therefore, it holds:
    \begin{align*}
        J(b,a)&=(a-b)\min\{F(b),S(a)\}\\
        &=a\min\{F(b),S(a)\}-b\min\{F(b),S(a)\}\\
        &=aS(a)-bF(b).
    \end{align*}
    This concludes the proof.
\end{proof}
We remark that the opposite is \emph{not} true, in general. Indeed, not only is $J(b,a)$ defined on points that are infeasible in Program~\eqref{progr:offline_opt}, but the optimum of $J$ (specifically, $J^*\cdot T$) may be \textit{strictly} larger than $\text{OPT}$. Specifically, consider the following instance, defined for some small $\varepsilon>0$:
$$F(b):=
\begin{cases}
    0 &\qquad b< \varepsilon\\
    1/3 &\qquad \varepsilon\le b<1-\varepsilon\\
    1 & \qquad 1-\varepsilon\le b
\end{cases}
\qquad
S(a):=
\begin{cases}
    1 &\qquad a\le \varepsilon\\
    2/3 &\qquad \varepsilon<a\le 1-\varepsilon\\
    0 & \qquad 1-\varepsilon<a
\end{cases}.
$$
In such a case, every feasible solution to Program~\eqref{progr:offline_opt} attains an objective value of zero. On the contrary, it holds:
$$J^* \coloneqq\sup_{b,a\in [0,1]^2}J(b,a)\ge J(2\varepsilon, 1-\varepsilon)=\frac{(1-3\varepsilon)}{3}\approx 1/3.$$
This shows that the optimal value $J^*$ can outperform the one of Program~\eqref{progr:offline_opt} (divided by $T$) of a positive constant. Therefore, if an algorithm enjoys low regret compared to $J^*$, the same regret guarantees hold \textit{a fortiori} with respect to $\text{OPT}$.

The surrogate objective $J$ offers several advantages. First, it is well defined for every pair $(b,a)$. More importantly, it satisfies a useful smoothness-like property close to the maximum, independently of any smoothness of $F$ and $S$. This is shown in the following proposition.
\begin{proposition}\label{prop:unigrid}
    Let $N\in \mathbb N_{>0}$ and $\Gs_N$ be the uniform $1/N$-grid on $[0,1]$, namely:
    $$\Gs_N:=\{n/N:\ n=0,\dots N\}.$$
    Then, it holds:
    $$\max_{b,a\in \Gs_N^2} J(b,a)\ge \sup_{b,a\in [0,1]^2}J(b,a)-2/N,$$
    where $\mathcal{G}^2_N\coloneqq \mathcal{G}_N\times \mathcal{G}_N.$
\end{proposition}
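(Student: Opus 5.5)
The plan is a one-sided rounding argument: round the bid up and the ask down to the grid, and use the monotonicity of $F$ and $S$. Recall that $F(b)=\Prob(\Vsell_t\le b)$ is non-decreasing and right-continuous, while $S(a)=\Prob(\Vbuy_t\ge a)$ is non-increasing. Fix $\varepsilon>0$ and pick $(b,a)\in[0,1]^2$ with $J(b,a)\ge J^*-\varepsilon$. Since $\max_{\Gs_N^2}J\ge J(0,1)\ge 0$, we may assume $J^*-2/N>0$. Otherwise the claim is trivial, and in particular we may assume $J(b,a)>0$, so $a>b$ and both $F(b)$ and $S(a)$ are positive.

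Split on the size of the spread. If $a-b\le 2/N$, then $J(b,a)\le (a-b)\cdot 1\le 2/N$, so $J^*-2/N\le \varepsilon$. Combined with $\max_{\Gs_N^2}J\ge 0$, this gives the bound up to $\varepsilon$. If $a-b>2/N$, set $b'=\lceil bN\rceil/N$ and $a'=\lfloor aN\rfloor/N$. Both lie in $\Gs_N$ and satisfy $b\le b'<b+1/N$ and $a-1/N<a'\le a$, so $a'-b'>a-b-2/N>0$.

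Monotonicity then gives $F(b')\ge F(b)$ and $S(a')\ge S(a)$, hence $\min\{F(b'),S(a')\}\ge \min\{F(b),S(a)\}=:m\le 1$. Therefore
\[
J(b',a')\ge (a'-b')\,m\ge (a-b-2/N)\,m\ge J(b,a)-\tfrac{2}{N}m\ge J^*-\varepsilon-\tfrac{2}{N}.
\]
Since $\Gs_N^2$ is finite, the maximum over it is attained. Letting $\varepsilon\to 0$ yields the statement.

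The only delicate point is the direction of rounding. It must be outward in the trade-probability sense: bid up, ask down, which brings the prices closer together. The spread shrinks by at most $2/N$, while neither trade probability can decrease. This step needs no continuity of $F$ or $S$, which is exactly why no regularity assumption is required. The other care point is the case where the rounded pair could cross ($a'\le b'$); the small-spread case disposes of it. I expect no real obstacle beyond stating these cases cleanly.
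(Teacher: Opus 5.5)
Your proof is correct and follows essentially the same route as the paper. Both take an $\varepsilon$-optimizer, round the bid up and the ask down to $\Gs_N$, use monotonicity of $F$ and $S$ so the minimum can only grow while the spread shrinks by at most $2/N$, and let $\varepsilon\to 0$; your explicit case split on $a-b\le 2/N$ also closes a small gap, since the paper's chain multiplies $\min\{F(b_0'),S(a_0')\}\ge\min\{F(b_0),S(a_0)\}$ by $a_0-b_0-2/N$, which is valid only when that factor is nonnegative.
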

\begin{proof}
    Fix $\varepsilon>0$. By definition of supremum, there is $(b_0,a_0)\in [0,1]^2$ such that: $$J(b_0,a_0)\ge \sup_{b,a\in [0,1]^2}J(b,a)-\varepsilon.$$
    By definition of the grid, there is $(b_0',a_0')\in \Gs_N^2$ such that $a_0-N^{-1}\le a_0'\le a_0$ and $b_0 \le b_0'\le b_0+N^{-1}$. Therefore, it holds:
    \begin{align*}
        J(b_0',a_0')&=(a_0'-b_0')\min\{F(b_0'),S(a_0')\}\\
        &\ge (a_0-b_0-2N^{-1})\min\{F(b_0'),S(a_0')\}\\
        &\ge (a_0-b_0-2N^{-1})\min\{F(b_0),S(a_0)\}\\
        &\ge (a_0-b_0)\min\{F(b_0),S(a_0)\}-2N^{-1}.
    \end{align*}
    The second inequality is the \emph{crucial} step of the former derivation. This follows from the fact that $F$ is non-decreasing, while $S$ is non-increasing.
    By definition of $b_0,a_0$, we have shown that:
    $$\max_{b,a\in \Gs_N^2} J(b,a)\ge \sup_{b,a\in [0,1]^2}J(b,a)-\varepsilon-2/N.$$
    The proof is concluded letting $\varepsilon \to 0$.
\end{proof}
\begin{figure}[t]
\centering
\begin{tikzpicture}[scale=5.2]

    \def\N{5}          
    \def\xopt{0.34}    
    \def\yopt{0.67}    
    \def\xgrid{0.4}    
    \def\ygrid{0.6}    

    \draw[->, thick] (0,0) -- (1.06,0) node[below] {$b$};
    \draw[->, thick] (0,0) -- (0,1.06) node[left] {$a$};

    \fill[gray!20, opacity=0.25] (0,0) -- (1,1) -- (0,1) -- cycle;

    \draw[thick] (0,0) rectangle (1,1);
    \draw[step=0.2, gray!45, very thin] (0,0) grid (1,1);

    \draw[dashed, gray!70] (0,0) -- (1,1);
    \node[gray!70] at (0.8,0.9) {$a>b$};

    \fill[blue!12] (0.2,0.6) rectangle (0.4,0.8);
    \draw[blue!50, thin] (0.2,0.6) rectangle (0.4,0.8);

    \foreach \i in {0,1,...,5}{
        \foreach \j in {0,1,...,5}{
            \fill[black] ({\i/5},{\j/5}) circle (0.32pt);
        }
    }

    \fill[red] (\xopt,\yopt) circle (0.8pt);
    \node[red, above left] at (\xopt,\yopt) {$(b_0,a_0)$};

    \fill[blue] (\xgrid,\ygrid) circle (0.9pt);
    \node[blue, below right] at (\xgrid,\ygrid) {$(b_0',a_0')$};

    \draw[densely dashed, red!70] (\xopt,\yopt) -- (\xopt,0);
    \draw[densely dashed, red!70] (\xopt,\yopt) -- (0,\yopt);
    \draw[densely dashed, blue!70] (\xgrid,\ygrid) -- (\xgrid,0);
    \draw[densely dashed, blue!70] (\xgrid,\ygrid) -- (0,\ygrid);

    \draw[->, thick, blue!70] (\xopt,\yopt) -- (\xgrid,\ygrid);

    \foreach \x in {0,0.2,0.4,0.6,0.8,1.0}{
        \draw (\x,0.008) -- (\x,-0.008);
    }
    \foreach \y in {0,0.2,0.4,0.6,0.8,1.0}{
        \draw (0.008,\y) -- (-0.008,\y);
    }

    \node[below] at (0,0) {$0$};
    \node[below] at (1,0) {$1$};
    \node[left] at (0,1) {$1$};

\end{tikzpicture}
\caption{Illustration of the discretization argument in Proposition~\ref{prop:unigrid}. The point $(b_0,a_0)$ is an optimizer (or an $\varepsilon$-optimizer) of $J$ over $[0,1]^2$, while $(b_0',a_0')\in\mathcal{G}_N^2$ is the grid point selected in the proof, with $b_0\le b_0'\le 1/N+b_0$ and $a_0-1/N\le a_0'\le a_0$.}
\label{fig:unigrid}
\end{figure}
In Figure~\ref{fig:unigrid}, we provide a graphical representation of the proposition.

For notational convenience, we define the (virtual) surrogate optimal solution over an arbitrary grid as follows. Notice that, at this stage, we keep the definition as general as possible. The actual grid employed by our algorithms will instead depend on the particular setting under consideration and will be specified once needed.

\begin{definition}[Virtual optimal bid-ask pair]\label{def:virtual}
    Let $\Bs^\circ \subset [0,1], \As^\circ\subset [0,1]$ be finite subsets of the bid space and the ask space, respectively. We define the virtual optimal solution on the grid as:
    \begin{equation}
    (b^\circ,a^\circ)\in \argmax_{b\in \Bs^\circ,\; a\in \As^\circ} J(b,a).
    \label{eq:abcirctwo}
    \end{equation}
\end{definition}
If $J$ admits multiple optimal solutions, the pair $(b^\circ,a^\circ)$ need not be unique. All of our results will hold regardless of how ties are broken.
Similarly, we define the following notion of virtual regret.
\begin{definition}[Virtual regret]\label{def:virtualregret}
    The virtual regret is defined as:   
    \[R_T^\circ:=T\cdot J(b^\circ,a^\circ)-\sum_{t=1}^T \left[A_t\cdot \mathbbm{I}(\Vbuy_t\geq A_t) - B_t \cdot \mathbbm{I}(\Vsell_t\leq B_t)\right].\]
\end{definition}
The virtual regret is defined analogously to the surrogate regret in Equation~\eqref{eq:surr_regret}, with only one difference: it is evaluated with respect to $b^\circ$ and $a^\circ$, which belong to $\Bs^\circ$ and $\As^\circ$, rather than with respect to $J^*$. 

To conclude, we summarize the main insights obtained so far. As already discussed, the structure of Program~\eqref{progr:offline_opt} cannot be directly exploited to derive meaningful discretizations, since we do not impose any assumption on the shape of the valuation distributions of buyers and sellers. For this reason, we shift our focus to the surrogate objective function introduced in Definition~\ref{def:surr_obj}. In this section, we have established two fundamental properties of this surrogate objective: \emph{i}) it is well-defined for every bid-ask pair and satisfies $T J^*\ge\text{OPT}$; \emph{ii}) it allows for meaningful discretizations of the decision space, as shown in Proposition~\ref{prop:unigrid}. Both properties will play a crucial role in the analysis developed in the following sections.

From this section onward, we will focus on bounding $R_T^J$, since this directly yields a bound on $R_T$. Specifically, our strategy will be to first bound the virtual regret $R_T^\circ$ of our algorithms, and then convert this into a bound on the surrogate regret $R_T^J$ using the results of this section.

\section{Two-Bit Feedback}
\label{sec:2B}

In this section, we focus on the case the market maker receives \emph{two-bit} feedback. The
results of this section are provided in the following order:
\begin{itemize}
    \item In Section~\ref{sec:alg_2bit}, we describe our algorithm for the 2-bit feedback case and we provide some related preliminary results. Then, we study the associated computational complexity.
    \item In Section~\ref{sec:theo_2b}, we provide the theoretical guarantees attained by our algorithm in terms of surrogate regret and per-round inventory constraints. We first define the good event associated to the performance of our algorithm and the associated results. Then, we provide the final regret and per-round inventory bounds.
\end{itemize}

\subsection{Our Algorithm}
\label{sec:alg_2bit}
In this section, we describe our algorithm for the two-bit feedback case. Specifically, the pseudocode of our algorithm is provided in Algorithm~\ref{alg:twobit}. 

\begin{algorithm}[t]
\caption{\textsc{Two-Bit Market Maker}}
\label{alg:twobit}
\begin{algorithmic}[1]
\Require $T\in\mathbb{N}_{>0}$, $\delta\in(0,1)$
\State $\ell\gets 0$ 
\State Set $\Bs_0\gets \Bs^\circ$
\State Set $\As_0\gets \As^\circ$
\State $F_0^{\mathrm{up}}(b)\gets 1$, $F_0^{\mathrm{low}}(b)\gets 0$, for all $b\in \Bs_0$ and $S_0^{\mathrm{up}}(a)\gets 1$, and $S_0^{\mathrm{low}}(a)\gets 0$ for all $a\in \As_0$
\State Compute $\Cs_0$ as in Equation~\eqref{eq:coset}
\While{Total number of steps $\le T$}
    \For{$b \in \Bs_\ell$}
        \State $F_{\ell+1}^\text{up}(b),F_{\ell+1}^\text{low}(b)\gets$ Algorithm~\ref{alg:SAE} $(b, \ell, F_\ell^{\mathrm{up}}, F_\ell^{\mathrm{low}}, S_\ell^{\mathrm{up}}, S_\ell^{\mathrm{low}})$     
    \EndFor
    \State $\As_{\ell+1} \gets \As_\ell$
    \For{$a \in \As_\ell$}
        \State $S_{\ell+1}^\text{up}(a),S_{\ell+1}^\text{low}(a)\gets$ Algorithm~\ref{alg:SAE_ask} $(a, \ell, F_\ell^{\mathrm{up}}, F_\ell^{\mathrm{low}}, S_\ell^{\mathrm{up}}, S_\ell^{\mathrm{low}})$
    \EndFor
    \State Compute all $\Cs_{\ell+1}$ as in Equation~\eqref{eq:coset}, using the new bounds on $\Bs_\ell\times\As_\ell$
    \State $\Bs_{\ell+1} \gets \{b\in\Bs_\ell:\Cs_{\ell+1}(b)\ne\emptyset\}$
    \State $\As_{\ell+1} \gets \{a\in\As_\ell:\Cs_{\ell+1}(a)\ne\emptyset\}$
    \State $\ell \gets \ell+1$
\EndWhile
\end{algorithmic}
\end{algorithm}

Algorithm~\ref{alg:twobit} works on discretized subsets of the bid and ask spaces, denoted by
$\Bs^\circ \subset [0,1]$ and $\As^\circ \subset[0,1]$, respectively, which are defined as:
\[
\Bs^\circ := \left\{ \frac{n}{\lfloor T^{1/3}\rfloor} : n=0,1,\ldots,\lfloor T^{1/3}\rfloor-1 \right\},
\qquad
\As^\circ := \left\{ \frac{n}{\lfloor T^{1/3}\rfloor} : n=1,2,\ldots,\lfloor T^{1/3}\rfloor \right\}.
\]
Both sets have cardinality exactly $\lfloor T^{1/3}\rfloor$. Thus, Algorithm~\ref{alg:twobit} follows a \emph{phased elimination} strategy: it maintains a set of active bid-ask pairs and, at the end of each phase, eliminates some of them to improve the performance. In phase $\ell$, each active price is assigned a block of $2^{\min\{\ell,\lfloor\log_2(T^{2/3})\rfloor\}}$ rounds, and the confidence radius is:
\[
\psi_\ell:=\sqrt{\frac{\log(4T/\delta)}{2^{\min\{\ell,\lfloor\log_2(T^{2/3})\rfloor\}}}}.
\]
The block length doubles across phases until it reaches $2^{\lfloor\log_2(T^{2/3})\rfloor}$, and then remains fixed. The phase index keeps increasing, and estimation and elimination continue in every phase. Limiting the block length controls the inventory changes within a block, as explained below.

For each phase $\ell$, let:
\[
\Bs_\ell \subseteq \Bs^\circ,
\qquad
\As_\ell \subseteq \As^\circ
\]
denote the sets of active bids and active asks after phase $\ell$, kept fixed throughout phase $\ell+1$.

Algorithm~\ref{alg:twobit} employs Algorithms~\ref{alg:SAE}--\ref{alg:SAE_ask} as subroutines to compute, for each phase $\ell$ and each $b\in\Bs_{\ell-1}$, $a\in\As_{\ell-1}$, estimators of the cumulative distribution function $F$ and the survival function $S$, denoted by $\widehat F_\ell$ and $\widehat S_\ell$, respectively. In addition, the subroutines compute confidence bounds for these quantities, namely $F_\ell^{\mathrm{up}}(b)$, $F_\ell^{\mathrm{low}}(b)$, $S_\ell^{\mathrm{up}}(a)$, and $S_\ell^{\mathrm{low}}(a)$. Given that, for every pair $b\in\Bs_\ell$ and $a\in\As_\ell$, it is possible to define pessimistic and optimistic surrogate function estimates as:
\begin{equation}
J_\ell^\text{low}(b,a)=(a-b)\min\{F_\ell^{\mathrm{low}}(b),S_\ell^{\mathrm{low}}(a)\},\qquad
J_\ell^\text{up}(b,a)=(a-b)\min\{F_\ell^{\mathrm{up}}(b),S_\ell^{\mathrm{up}}(a)\}.
\label{eq:twobitJJ}
\end{equation}
Finally, letting $J_\ell^\text{low*}:=\max_{b\in \Bs_\ell,\ a\in \As_\ell}J_\ell^\text{low}(b,a)$, we define the \emph{compatibility sets}:
\begin{equation}
\Cs_\ell(a)=\left\{b\in \Bs_\ell :J_\ell^\text{up}(b,a)\ge J_\ell^\text{low*}, a>b\right\},\quad
\Cs_\ell(b)=\left\{a\in \As_\ell :J_\ell^\text{up}(b,a)\ge J_\ell^\text{low*}, a>b\right\}.
\label{eq:coset}
\end{equation}
The sets provided in Equation~\eqref{eq:coset} are the ones that drive the \emph{elimination} procedure. Specifically, whenever a bid (resp., an ask) corresponds to an empty $\Cs_\ell(b)$ (resp., $\Cs_\ell(a)$), it is eliminated from the set of active bid-ask pairs.
At phase $\ell+1$, all bid and ask estimates are first computed using the bounds and compatibility sets from phase $\ell$. The new bounds are then used to compute $\Cs_{\ell+1}$ and eliminate prices with no compatible partner. Every compatible pair keeps both its prices, so no further update of the compatibility sets is needed.\footnote{For completeness, if a subroutine returns an empty confidence interval, the algorithm selects $(0,1)$ in all subsequent rounds. This never occurs on the good event.}

\subsubsection{Subroutines Description}

\begin{algorithm}[t]
\caption{\textsc{Single-action estimator - Bid}}
\label{alg:SAE}
\begin{algorithmic}[1]
\Require Bid $b$ to estimate, current phase $\ell+1$, all the estimators computed at phase $\ell$
\State Set $\psi_{\ell+1}\gets\sqrt{\frac{\log(4T/\delta)}{2^{\min\{\ell+1,\lfloor\log_2(T^{2/3})\rfloor\}}}}$
\State Set
        $a'\gets \argmax \{S_\ell^\text{up}(a), a\in \Cs_\ell(b)\}$\label{algline:sao}
\State Set $u\gets \min\{F_\ell^\text{up} (b),S_\ell^\text{up}(a')\}+\psi_{\ell+1}$\label{algline:minim}
\State $X_0\gets 0$
\State $Y_0\gets 0$
\State $B\gets b$
\State $A\gets a'$
\For {$t=1,\dots 2^{\min\{\ell+1,\lfloor\log_2(T^{2/3})\rfloor\}}$}
    \State Select $(B,A)$
    \State Update $X_t=X_{t-1}+\mathbbm{I}(\Vsell_t\le B)$ \label{algline:x}
    \State Update $Y_t=Y_{t-1}+\mathbbm{I}(\Vbuy_t\ge A)$ \label{algline:y}
    
    \If{$X_t\ge 2^{\min\{\ell+1,\lfloor\log_2(T^{2/3})\rfloor\}} u$}\label{algline:halting} \Comment{\textsc{Halting condition $\mathrm{I}$}}
        \State $B\gets0$
        \State $F^\text{up}(b)\gets 1$\label{algline:place2}
        \State $F^\text{low}(b)\gets \min\{F_\ell^\text{up}(b),S_\ell^\text{up}(a')\}$\label{algline:place1}
        \Else{}
        \State $\widehat F_{\ell+1}(b)\gets \frac{X_t}{t}$\label{algline:key2}
        \State $F^\text{up}(b)\gets \widehat F_{\ell+1}(b)+\psi_{\ell+1}$
                
        \State $F^\text{low}(b)\gets \widehat F_{\ell+1}(b)-\psi_{\ell+1}$
    \EndIf
    \If{$Y_t\ge 2^{\min\{\ell+1,\lfloor\log_2(T^{2/3})\rfloor\}} u$}\label{algline:halting_two} \Comment{\textsc{Halting condition $\mathrm{II}$}}
        \State $A\gets1$
    \EndIf
\EndFor
\State \textbf{Return} $\min\{F^\text{up}(b),F_{\ell}^\text{up}(b)\},\max\{F^\text{low}(b),F_{\ell}^\text{low}(b)\}$\label{algline:return}
\end{algorithmic}
\end{algorithm}

\begin{algorithm}[t]
\caption{\textsc{Single-action estimator - Ask}}
\label{alg:SAE_ask}
\begin{algorithmic}[1]
\Require Ask $a$ to estimate, current phase $\ell+1$, all the estimators computed at phase $\ell$
\State Set $\psi_{\ell+1}\gets\sqrt{\frac{\log(4T/\delta)}{2^{\min\{\ell+1,\lfloor\log_2(T^{2/3})\rfloor\}}}}$
\State Set
        $b'\gets \argmax \{F_\ell^\text{up}(b), b\in \Cs_\ell(a)\}$\label{algline:sao_ask}
\State Set $u\gets \min\{F_\ell^\text{up} (b'),S_\ell^\text{up}(a)\}+\psi_{\ell+1}$\label{algline:minim_ask}
\State $X_0\gets 0$
\State $Y_0\gets 0$
\State $B\gets b'$
\State $A\gets a$
\For {$t=1,\dots 2^{\min\{\ell+1,\lfloor\log_2(T^{2/3})\rfloor\}}$}
    \State Select $(B,A)$
    \State Update $X_t=X_{t-1}+\mathbbm{I}(\Vsell_t\le B)$ \label{algline:x_ask}
    \State Update $Y_t=Y_{t-1}+\mathbbm{I}(\Vbuy_t\ge A)$ \label{algline:y_ask}
    
    \If{$Y_t\ge 2^{\min\{\ell+1,\lfloor\log_2(T^{2/3})\rfloor\}} u$}\label{algline:halting_ask} \Comment{\textsc{Halting condition $\mathrm{I}$}}
        \State $A\gets1$
        \State $S^\text{up}(a)\gets 1$\label{algline:place2_ask}
        \State $S^\text{low}(a)\gets \min\{F_\ell^\text{up}(b'),S_\ell^\text{up}(a)\}$\label{algline:place1_ask}
        \Else{}
        \State $\widehat S_{\ell+1}(a)\gets \frac{Y_t}{t}$\label{algline:key2_ask}
        \State $S^\text{up}(a)\gets \widehat S_{\ell+1}(a)+\psi_{\ell+1}$
                
        \State $S^\text{low}(a)\gets \widehat S_{\ell+1}(a)-\psi_{\ell+1}$
    \EndIf
    \If{$X_t\ge 2^{\min\{\ell+1,\lfloor\log_2(T^{2/3})\rfloor\}} u$}\label{algline:halting_two_ask} \Comment{\textsc{Halting condition $\mathrm{II}$}}
        \State $B\gets0$
    \EndIf
\EndFor
\State \textbf{Return} $\min\{S^\text{up}(a),S_{\ell}^\text{up}(a)\},\max\{S^\text{low}(a),S_{\ell}^\text{low}(a)\}$\label{algline:return_ask}
\end{algorithmic}
\end{algorithm}

In this section, we describe the subroutines employed by Algorithm~\ref{alg:twobit}, whose pseudocode is provided in Algorithms~\ref{alg:SAE}--\ref{alg:SAE_ask}. They are used to estimate upper and lower bounds on the c.d.f. $F$ and on the survival function $S$. For simplicity, we focus on Algorithm~\ref{alg:SAE}, namely the subroutine that estimates the c.d.f. $F$. The functioning of Algorithm~\ref{alg:SAE_ask} is entirely analogous.

Specifically, Algorithm~\ref{alg:SAE} is employed at every phase $\ell+1$ to refine the confidence interval on the value of the c.d.f. at a fixed bid $b$, starting from the estimates available at phase $\ell$. First, the algorithm defines the confidence radius
$
\psi_{\ell+1}=\sqrt{\nicefrac{\log(4T/\delta)}{2^{\min\{\ell+1,\lfloor\log_2(T^{2/3})\rfloor\}}}},
$
which is the accuracy at which the algorithm aims to estimate $F(b)$ during the current phase.

The algorithm then chooses an associated ask defined as:
\[
a'\gets \argmax\{S_\ell^\text{up}(a):a\in \Cs_\ell(b)\}.
\]
The reason for this choice is that the ask is not the quantity to be estimated, but it determines how informative the pulled pair is. Among the asks compatible with $b$, the algorithm selects the one with largest upper confidence bound on the survival function, since this is the one that can still provide the largest amount of information. Once $a'$ is fixed, the algorithm sets:
\[
u\gets \min\{F_\ell^\text{up}(b),S_\ell^\text{up}(a')\}+\psi_{\ell+1},
\]
which should be interpreted as the target scale of the estimation problem in the current phase: it is an upper estimate, up to the confidence radius, of the rate at which informative observations can be collected from the pair $(b,a')$.

The algorithm is initialized with $(B,A)=(b,a')$ and with counters $X_0=Y_0=0$. It then selects $(B,A)$ for $2^{\min\{\ell+1,\lfloor\log_2(T^{2/3})\rfloor\}}$ rounds. At round $t$, the counter $X_t$ records how many times the event $\{\Vsell_t\le B\}$ has been observed, while $Y_t$ records how many times the event $\{\Vbuy_t\ge A\}$ has been observed. As long as the bid has not been halted, the empirical frequency
$
\widehat F_{\ell+1}(b)=\nicefrac{X_t}{t}
$
is used as estimate of $F(b)$, and the confidence bounds are updated as:
\[
F^\text{up}(b)=\widehat F_{\ell+1}(b)+\psi_{\ell+1},
\qquad
F^\text{low}(b)=\widehat F_{\ell+1}(b)-\psi_{\ell+1}.
\]
Thus, while the pair remains informative, the algorithm simply uses the empirical frequency together with the phase-dependent confidence radius.
The role of the two halting conditions is to detect when one of the two actions has already produced as much information as can be usefully exploited at the current scale $u$. If
$
X_t\ge 2^{\min\{\ell+1,\lfloor\log_2(T^{2/3})\rfloor\}}u,
$
then the bid is deactivated by setting $B=0$. This prevents further purchases once the counter has reached its target. In this case, the algorithm stops updating the estimate through the empirical mean, sets the upper bound to the trivial value $1$, and keeps as lower bound the quantity
$
\min\{F_\ell^\text{up}(b),S_\ell^\text{up}(a')\},
$
which is the natural conservative value available from the previous phase. Similarly, if
$
Y_t\ge 2^{\min\{\ell+1,\lfloor\log_2(T^{2/3})\rfloor\}}u,
$
the ask is deactivated by setting $A=1$. This prevents further sales once the buyer-side counter has reached the same target.
Finally, the subroutine returns:
\[
\min\{F^\text{up}(b),F_\ell^\text{up}(b)\},
\qquad
\max\{F^\text{low}(b),F_\ell^\text{low}(b)\}.
\]
That is, the newly computed bounds are intersected with those inherited from phase $\ell$. This guarantees that the confidence interval can only improve from one phase to the next, and never becomes looser.

The block length is limited to control inventory also between block endpoints. The halting rules bound the net inventory change at the end of a block, but purchases and sales may accumulate at different rates within it. Since each round changes inventory by at most one, limiting the block length to $2^{\lfloor\log_2(T^{2/3})\rfloor}\le T^{2/3}$ bounds any such excursion by $T^{2/3}$. The same subroutines continue to update the estimates in subsequent phases.

To conclude the section, we notice that, by Equation~\eqref{eq:coset} and the design of Algorithm~\ref{alg:twobit}, the following theorem follows immediately.
\begin{theorem}\label{cor:zerotwo}
    At any $t\in[T]$, Algorithm~\ref{alg:twobit} selects $(B_t,A_t)$ such that $B_t<A_t$. 
\end{theorem}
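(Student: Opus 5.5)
The plan is to enumerate every way Algorithm~\ref{alg:twobit} can set the posted pair $(B,A)$ and check strict inequality in each case. Outside the subroutines, no prices are posted, apart from the fallback $(0,1)$ mentioned in the footnote, for which $0<1$ trivially. So it suffices to treat a single block of Algorithm~\ref{alg:SAE}; Algorithm~\ref{alg:SAE_ask} is symmetric, with the roles of bid and ask exchanged.

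Inside a block of Algorithm~\ref{alg:SAE} run on bid $b\in\Bs_\ell$, the initial pair is $(b,a')$ with $a'\in\Cs_\ell(b)$. By Equation~\eqref{eq:coset}, every element of $\Cs_\ell(b)$ satisfies $a'>b$, so the initial pair is strictly ordered. Two points need checking here.
\begin{itemize}
\item The set $\Cs_\ell(b)$ is nonempty for every $b\in\Bs_\ell$. For $\ell\ge1$ this holds by the elimination rule $\Bs_{\ell}=\{b\in\Bs_{\ell-1}:\Cs_{\ell}(b)\neq\emptyset\}$. For $\ell=0$ the bounds are trivial, so $J_0^{\mathrm{low}*}=0$ and $J_0^{\mathrm{up}}(b,a)=a-b\ge0$. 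Every $b\in\Bs^\circ$ has at most $(\lfloor T^{1/3}\rfloor-1)/\lfloor T^{1/3}\rfloor<1$, so $a=1\in\As^\circ$ is compatible with it.
\item The compatibility sets used in phase $\ell+1$ are computed on $\Bs_\ell\times\As_\ell$, and by the footnote every compatible pair keeps both its prices. Hence $a'$ is still an active ask and the argmax in line~\ref{algline:sao} is well defined.
\end{itemize}

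During the block the only modifications are $B\gets0$ (halting condition~I) and $A\gets1$ (halting condition~II), and each can only move the prices outward. The resulting pairs are $(b,a')$, $(0,a')$, $(b,1)$, and $(0,1)$.
\begin{itemize}
\item $(0,a')$ is strict because $a'\ge 1/\lfloor T^{1/3}\rfloor>0$, since $\As^\circ$ excludes $0$.
\item $(b,1)$ is strict because $b<1$, since $\Bs^\circ$ excludes $1$.
\item $(0,1)$ is strict trivially.
\end{itemize}
The ask subroutine is identical, with $b'\in\Cs_\ell(a)$ giving $b'<a$.

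The only mild obstacle is the well-definedness of $a'$ and $b'$ at every phase, including $\ell=0$. The grid design already ensures this, since $0\notin\As^\circ$ and $1\notin\Bs^\circ$. Otherwise the argument is a direct case check.
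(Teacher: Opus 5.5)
Your proposal is correct and follows essentially the same route as the paper: a direct case check showing that the posted pair is either the compatible pair $(b,a')$ (or $(b',a)$), which is strictly ordered by the condition $a>b$ in Equation~\eqref{eq:coset}, or has one side deactivated to $0$ or $1$. The paper calls the deactivated cases automatic, whereas you correctly note that $(0,a')$ and $(b,1)$ are strict only because $0\notin\As^\circ$ and $1\notin\Bs^\circ$; you also check that the compatibility sets are nonempty, so that $a'$ and $b'$ are well defined.
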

\begin{proof}
    At any time-step one has either $B_t=0$, $A_t=1$ (in these cases the thesis follows automatically) or:
    $$B_t\in \Cs_\ell(A_t)\subseteq[0,A_t),$$
    which implies $B_t<A_t$. This concludes the proof.
\end{proof}

\subsubsection{Computational Complexity}

In this section, we provide the computational complexity of Algorithm~\ref{alg:twobit}. 

In the following, we refer as $\gamma_t$ to the per-round complexity of choosing $B_t, A_t$ at each time-step $t$. Thus, our goal reduces to bound:
$$\Gamma:=\max_{t=1,\dots T}\gamma_t.$$
While another common objective is the average total running time $T^{-1}\sum_{t=1}^T \gamma_t$, we prefer to quantify the performance with respect to $\Gamma$ for the following reasons:
\begin{itemize}
\item In real-life market making, the agent must react quickly to market changes at every time step, not only on average. In an environment where every microsecond matters, even a single spike in computation time may cause the agent to miss the best opportunities.
\item From a more technical point of view, the average time complexity can be artificially reduced by using the well-known doubling trick \cite{besson2018doubling}. This method consists in repeating the same actions over batches of length $2^n$. $\Gamma$ is indeed  a stronger metric since it does not allow to employ this rather artificial trick. 
\end{itemize}
This is done by means of the following theorem.
\begin{theorem}
\label{thm:per-round-complexity}
The (per-round) maximum computational complexity of
Algorithm~\ref{alg:twobit}, including the calls to
Algorithms~\ref{alg:SAE}--\ref{alg:SAE_ask}, is bounded as:
\[
\Gamma \le \Os\left(|\Bs^\circ||\As^\circ|\right).
\]
In particular, under the discretization
$
|\Bs^\circ|=|\As^\circ|=\lfloor T^{1/3}\rfloor
$, it holds
$
\Gamma \le \Os(T^{2/3}).
$
\end{theorem}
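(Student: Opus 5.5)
The plan is to account for every operation the algorithm might perform between selecting $(B_{t-1},A_{t-1})$ and $(B_t,A_t)$, and to charge the worst such round. Rounds split into two kinds. \emph{Interior rounds} lie strictly inside a block of Algorithm~\ref{alg:SAE} or Algorithm~\ref{alg:SAE_ask}. \emph{Transition rounds} are the first round of a block; there the algorithm may also have to finish a phase, meaning it recomputes $J^{\mathrm{low}*}_{\ell+1}$ and $\Cs_{\ell+1}$ and eliminates prices. We will show interior rounds cost $\Os(1)$ and transition rounds cost $\Os(|\Bs^\circ||\As^\circ|)$. The bound on $\Gamma$ is then the maximum of the two.

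\textbf{Interior rounds.} The subroutine posts the current $(B,A)$ and updates the counters $X_t,Y_t$. It then checks the two halting conditions against the precomputed threshold $2^{\min\{\ell+1,\lfloor\log_2 T^{2/3}\rfloor\}}u$. Finally, it sets $B\gets 0$ or $A\gets 1$ if needed, or recomputes $\widehat F_{\ell+1}(b)=X_t/t$ and the two bounds. Each of these is a constant number of arithmetic operations and comparisons. The returned values in line~\ref{algline:return} take one min and one max, so they are also $\Os(1)$.

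\textbf{Block start within a phase.} Starting the block for a given $b$ requires $a'=\argmax\{S^{\mathrm{up}}_\ell(a):a\in\Cs_\ell(b)\}$ and $u$. The set $\Cs_\ell(b)$ is already stored from the end of phase $\ell$, so a linear scan finds $a'$ in $\Os(|\As^\circ|)$. The ask case is symmetric, costing $\Os(|\Bs^\circ|)$.

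\textbf{Phase boundary.} This is the dominant case. For each of the at most $|\Bs^\circ||\As^\circ|$ pairs with $a>b$, we evaluate $J^{\mathrm{low}}_{\ell+1}$ and $J^{\mathrm{up}}_{\ell+1}$ from Equation~\eqref{eq:twobitJJ} in $\Os(1)$. A first pass over the pairs takes the maximum to obtain $J^{\mathrm{low}*}_{\ell+1}$. A second pass fills $\Cs_{\ell+1}(a)$ and $\Cs_{\ell+1}(b)$ per Equation~\eqref{eq:coset}, stored as boolean arrays or lists. Computing $\Bs_{\ell+1}$ and $\As_{\ell+1}$ then amounts to testing each of these sets for emptiness, which is linear in the grid sizes. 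The whole phase boundary therefore costs $\Os(|\Bs^\circ||\As^\circ|)$.

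All of this work is charged to the single round at which the next block begins. Even if a phase boundary and a block start coincide, the total is still $\Os(|\Bs^\circ||\As^\circ|+|\Bs^\circ|+|\As^\circ|)=\Os(|\Bs^\circ||\As^\circ|)$. The fallback of playing $(0,1)$ after an empty interval costs $\Os(1)$.

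Substituting $|\Bs^\circ|=|\As^\circ|=\lfloor T^{1/3}\rfloor$ gives $\Gamma\le\Os(T^{2/3})$.

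The main obstacle is not the arithmetic but the accounting. We must make sure the recomputation at a phase boundary is not repeated inside the subroutine calls, which is why $\Cs_\ell$ is stored rather than recomputed per block. We must also make sure no round absorbs more than one phase's recomputation. The latter holds because every phase contains at least one block of at least one round whenever it has active prices. If all prices are eliminated, the algorithm plays $(0,1)$ at constant cost thereafter.
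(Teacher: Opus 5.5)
Your proof is correct and follows essentially the same route as the paper's: $\Os(1)$ work in interior rounds of a subroutine call, a linear scan over the stored compatibility set to choose the partner price at a block start, and an $\Os(|\Bs^\circ||\As^\circ|)$ recomputation of $J^{\mathrm{low}*}$ and of the sets $\Cs_{\ell+1}$ at phase boundaries, with $\Gamma$ bounded by the worst of these. The one item the paper lists separately that you leave implicit is the initial computation of $\Cs_0$; it is the same $\Os(|\Bs^\circ||\As^\circ|)$ computation, charged to the first round.
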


\begin{proof}
We inspect the pseudocode of Algorithms~\ref{alg:twobit}--\ref{alg:SAE_ask}.

First, we observe that the initialization of Algorithm~\ref{alg:twobit} requires setting
$
\Bs_0=\Bs^\circ,\As_0=\As^\circ,
$
and initializing the confidence bounds
$
F_0^{\mathrm{up}}(b),F_0^{\mathrm{low}}(b),
S_0^{\mathrm{up}}(a),S_0^{\mathrm{low}}(a)
$
for all $b\in\Bs^\circ$ and $a\in\As^\circ$. This requires at most linear time in the size of the discretization, namely
$\Os(|\Bs^\circ|+|\As^\circ|).$ Computing the initial compatibility sets costs $\Os(|\Bs^\circ||\As^\circ|)$. 
Hence, the first round satisfies $\gamma_1 \leq \Os(|\Bs^\circ||\As^\circ|)$.

We now consider an arbitrary round $t\ge 2$. There are two possible cases.

\begin{itemize}
 \item \textbf{Round $t$ is an internal round of Algorithm~\ref{alg:SAE} or Algorithm~\ref{alg:SAE_ask}.}
In this case, the pair $(B,A)$ has already been selected, and all quantities needed by the subroutine have already been computed. Therefore, during round $t$, the algorithm only:
\begin{enumerate}
    \item selects the current pair $(B,A)$,
    \item updates the two counters $X_t$ and $Y_t$,
    \item checks the two halting conditions,
    \item possibly updates a constant number of variables.
\end{enumerate}
All these operations take constant time. Hence, in this case,
$\gamma_t\le \Os(1).$

\item \textbf{Round $t$ is a boundary round, i.e., a round in which the Algorithm~\ref{alg:twobit} starts or finalizes a call to one of the subroutines.}
Suppose first that Algorithm~\ref{alg:twobit} is calling Algorithm~\ref{alg:SAE} on some bid $b\in\Bs_\ell$. Before entering the loop of Algorithm~\ref{alg:SAE}, the algorithm has to compute:
\[
a'\in \argmax\{S_\ell^\textnormal{up}(a):a\in \Cs_\ell(b)\}.
\]
In the worst case, and assuming that $\Cs_\ell(b)$ has already been computed, this requires scanning all the asks in the active set $\As_\ell$, and therefore costs at most $\Os(|\As_\ell|)$. The computation of
$
u=\min\{F_\ell^\textnormal{up}(b),S_\ell^\textnormal{up}(a')\}+\psi_{\ell+1}
$
is then constant-time.
After all calls in the phase are completed, the metaprocedure computes the new compatibility sets and eliminates prices with no compatible partner. This requires a constant number of scans of the active bid-ask pairs, and therefore costs at most $\Os(|\Bs_\ell||\As_\ell|)$.
Thus, any round associated with the start or the completion of a bid subroutine call has complexity at most
$
\Os(|\Bs_\ell||\As_\ell|).
$
The same argument applies symmetrically to Algorithm~\ref{alg:SAE_ask}. Indeed, starting or finalizing a call to Algorithm~\ref{alg:SAE_ask} requires computing
$
b'\in \argmax\{F_\ell^\textnormal{up}(b):b\in \Cs_\ell(a)\}
$
or completing the phase, which in the worst case costs $\Os(|\Bs_\ell||\As_\ell|)$.
Therefore, in every boundary round,
$
\gamma_t\le \Os(|\Bs_\ell||\As_\ell|).
$
Finally, since the active sets only decrease across phases, for every $\ell$, $|\Bs_\ell|\le |\Bs^\circ|,
|\As_\ell|\le |\As^\circ|.
$
\end{itemize}
Combining the two cases above yields, for every $t\in[T]$,
$
\gamma_t\le \Os(|\Bs^\circ||\As^\circ|),
$ which in turn implies $\Gamma\leq \Os(|\Bs^\circ||\As^\circ|)$.
Under the discretization used in this section, both sets have cardinality $\lfloor T^{1/3}\rfloor$. Hence,
$
\Gamma\le \Os(T^{2/3})
$, as claimed.
\end{proof}

\subsection{Theoretical Guarantees of Algorithm~\ref{alg:twobit}}
\label{sec:theo_2b}

In this section, we provide the theoretical guarantees attained by Algorithm~\ref{alg:twobit} in terms of cumulative surrogate regret and per-round inventory.

\subsubsection{The Good Event}
\label{sec:clean_2bit}

In this section, we introduce the \emph{good event} that will be used throughout the following theorems.

\begin{definition}[\emph{Good event}]\label{def:bounds_two}
For $\ell=0,\ldots,\lfloor\log_2(T^{2/3})\rfloor$ and $t=1,\ldots,T-2^\ell+1$, let $\psi_\ell=\sqrt{\nicefrac{\log(4T/\delta)}{2^{\ell}}}$ and let:
\[\Delta_F(b,t,\ell)=\left|F(b)-2^{-\ell}\sum_{\tau=t}^{t+2^\ell-1}\mathbbm{I}(\Vsell_\tau\le b)\right|\qquad \Delta_S(a,t,\ell)=\left|S(a)-2^{-\ell}\sum_{\tau=t}^{t+2^\ell-1}\mathbbm{I}(\Vbuy_\tau\ge a)\right|.\]
The \emph{good event} is defined as:
\begin{equation}
    \goodevent :=
\bigcap_{\ell=0}^{\lfloor\log_2(T^{2/3})\rfloor}\ \bigcap_{t=1}^{T-2^\ell+1}
\left[\bigcap_{b\in\Bs^\circ}\{\Delta_F(b,t,\ell)<\psi_\ell\}
\cap\bigcap_{a\in\As^\circ}\{\Delta_S(a,t,\ell)<\psi_\ell\}\right].
\label{eq:good_two}
\end{equation}
\end{definition}
Intuitively, the good event states that, for every bid--ask pair $(b,a)$, any contiguous block of length $2^\ell$ contained in the horizon yields an empirical mean of $\mathbbm{I}(\Vsell_\tau\le b)$ and $\mathbbm{I}(\Vbuy_\tau\ge a)$ whose discrepancy from the corresponding true quantities $F(b)$ and $S(a)$ is strictly less than $\psi_\ell$. Since all possible starting times are included, the event also covers every subsequent block of the capped length. The following proposition shows that this good event holds with high probability.
\begin{restatable}{proposition}{gooddeltatwo}\label{thm:gooddeltatwo}
    Let $\delta\in(0,1)$ and $\goodevent$ be the good event defined in Equation~\eqref{eq:good_two}, it holds $\Prob(\goodevent)\ge 1-\delta$.
\end{restatable}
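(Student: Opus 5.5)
The plan is a direct application of Hoeffding's inequality to each individual deviation event, followed by a union bound over all indices in the intersection defining $\goodevent$. Fix a level $\ell\in\{0,\dots,\lfloor\log_2(T^{2/3})\rfloor\}$, a starting time $t\in\{1,\dots,T-2^\ell+1\}$, and a bid $b\in\Bs^\circ$. The random variables $\mathbbm{I}(\Vsell_\tau\le b)$, for $\tau=t,\dots,t+2^\ell-1$, are i.i.d. Bernoulli with mean $F(b)$. This holds because the valuation pairs are i.i.d. across rounds, and because $b$ and the window are deterministic rather than chosen by the algorithm. The last point is the crucial one: the good event is stated in terms of raw valuations at fixed grid points and fixed windows, so no adaptivity or optional-stopping issue arises. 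Hoeffding's inequality then gives
\[
\Prob\bigl(\Delta_F(b,t,\ell)\ge\psi_\ell\bigr)\le 2\exp\left(-2\cdot 2^\ell\psi_\ell^2\right)=2\exp\bigl(-2\log(4T/\delta)\bigr)=2\left(\frac{\delta}{4T}\right)^2 .
\]
The same bound holds for $\Delta_S(a,t,\ell)$ with $a\in\As^\circ$, since $\mathbbm{I}(\Vbuy_\tau\ge a)$ are i.i.d. Bernoulli with mean $S(a)$.

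Next, I count the events in the union bound. There are at most $\lfloor\log_2(T^{2/3})\rfloor+1\le T$ levels (for $T\ge 1$; if needed, a cruder bound suffices), at most $T$ starting times, and $|\Bs^\circ|+|\As^\circ|=2\lfloor T^{1/3}\rfloor$ prices. The failure probability is therefore at most
\[
\bigl(\log_2(T^{2/3})+1\bigr)\cdot T\cdot 2T^{1/3}\cdot 2\frac{\delta^2}{16T^2}
=\frac{\delta^2\,(\tfrac23\log_2 T+1)\,T^{1/3}}{4T}.
\]
It remains to check that this is at most $\delta$. Since $\delta<1$, it suffices that $(\tfrac23\log_2T+1)T^{1/3}\le 4T$. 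Equivalently, $\tfrac23\log_2T+1\le 4T^{2/3}$, which holds for all $T\ge1$ because $\log_2 T\le 3T^{2/3}$. Taking complements gives $\Prob(\goodevent)\ge1-\delta$.

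The only step needing care is the bookkeeping: confirming that the exponent $2\log(4T/\delta)$, which comes from $\psi_\ell^2 2^\ell=\log(4T/\delta)$, absorbs the polynomial number of events. It does so comfortably, since we get a squared factor $(\delta/4T)^2$ against roughly $T^{4/3}\log T$ events. There is no substantive obstacle beyond verifying this elementary inequality for small $T$.
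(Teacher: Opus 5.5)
Your proof is correct, but it uses a different key lemma from the paper.

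\textbf{The paper's route.} For each fixed window $(t,\ell)$, the paper applies the DKW inequality, which controls $\sup_{b}\Delta_F(b,t,\ell)$ and $\sup_a\Delta_S(a,t,\ell)$ over \emph{all} prices in $[0,1]$ at once. For $S$ it uses the reflected valuations $-\Vbuy_\tau$. This gives failure probability $4e^{-2\cdot 2^\ell\varepsilon^2}$ per window. It then union-bounds only over the at most $T^2$ pairs $(t,\ell)$, obtaining $\delta^2/4\le\delta$. Because DKW is stated with a strict inequality and $\goodevent$ also uses a strict one, the paper has to let $\varepsilon\uparrow\psi_\ell$.

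\textbf{Your route.} You use only scalar Hoeffding at each grid price. Its non-strict form $\Prob(\Delta_F\ge\psi_\ell)\le 2e^{-2\cdot 2^\ell\psi_\ell^2}$ handles the strict inequality in $\goodevent$ directly, so no limiting argument is needed. The cost is an extra union-bound factor of $2\lfloor T^{1/3}\rfloor$ over grid points. You correctly show this is absorbed by the squared factor $(\delta/4T)^2$. Your counting and the final inequality $\tfrac23\log_2T+1\le 4T^{2/3}$ are fine.

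\textbf{Trade-off.} The paper's argument is grid-independent: its concentration holds simultaneously at every price, so it would survive any refinement of the discretization. Yours relies on the grid being small; here it has $\Os(T^{1/3})$ points. With $\psi_\ell$ fixed as in the statement, your union bound would stop closing for grids with much more than order $T/\log T$ points.
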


Proposition~\ref{thm:gooddeltatwo} is a key ingredient of the analysis, since from this point on we will work under the assumption that the good event holds. Its proof is deferred to the appendix. We next show that, on this event, the upper and lower bounds returned by Algorithms~\ref{alg:SAE}--\ref{alg:SAE_ask} are always valid.

\begin{proposition}\label{thm:boundstwo}
On $\goodevent$, at every completed phase $\ell$, the active prices satisfy:
\[
F_\ell^{\mathrm{low}}(b)\le F(b)\le F_\ell^{\mathrm{up}}(b),
\qquad
S_\ell^{\mathrm{low}}(a)\le S(a)\le S_\ell^{\mathrm{up}}(a).
\]
\end{proposition}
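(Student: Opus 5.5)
The proof is by induction on the phase index $\ell$, for a fixed realization in $\goodevent$. The base case $\ell=0$ is immediate, since $F_0^{\mathrm{low}}\equiv 0$, $F_0^{\mathrm{up}}\equiv 1$, $S_0^{\mathrm{low}}\equiv 0$ and $S_0^{\mathrm{up}}\equiv 1$.

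For the inductive step, assume the bounds of phase $\ell$ are valid for all active prices, and fix a bid $b\in\Bs_\ell$ processed by Algorithm~\ref{alg:SAE} in phase $\ell+1$. The ask case with Algorithm~\ref{alg:SAE_ask} is symmetric. Write $n:=2^{\min\{\ell+1,\lfloor\log_2(T^{2/3})\rfloor\}}$, and let $s$ be the first round of the block. Since the phase is completed, the whole block $\{s,\dots,s+n-1\}$ lies in $[T]$, so $s\le T-n+1$. Moreover, the exponent $\min\{\ell+1,\lfloor\log_2(T^{2/3})\rfloor\}$ is one of the indices in Definition~\ref{def:bounds_two}. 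Hence $\goodevent$ gives
\[
\Bigl|F(b)-n^{-1}\textstyle\sum_{\tau=s}^{s+n-1}\mathbbm{I}(\Vsell_\tau\le b)\Bigr|<\psi_{\ell+1}.
\]
A key point is that this statement concerns the counterfactual indicators $\mathbbm{I}(\Vsell_\tau\le b)$. It holds regardless of whether the algorithm actually posted $b$ at round $\tau$.

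Next I split into two cases, according to whether halting condition $\mathrm{I}$ is ever triggered during the block.

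\emph{Case 1: no halting.} Then $B=b$ throughout, so $X_n=\sum_{\tau=s}^{s+n-1}\mathbbm{I}(\Vsell_\tau\le b)$. The final values are $F^{\mathrm{up}}(b)=X_n/n+\psi_{\ell+1}$ and $F^{\mathrm{low}}(b)=X_n/n-\psi_{\ell+1}$. The display above then gives $F^{\mathrm{low}}(b)<F(b)<F^{\mathrm{up}}(b)$ directly.

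\emph{Case 2: halting at some round $t\le n$.} First I check that the halting branch determines the final bounds. After halting, $B=0$ and $\mathbb{P}(\Vsell=0)=0$, so $X$ stops increasing but never decreases. The condition $X\ge nu$ therefore remains true in every later round, and the values returned are $F^{\mathrm{up}}(b)=1$ and $F^{\mathrm{low}}(b)=\min\{F_\ell^{\mathrm{up}}(b),S_\ell^{\mathrm{up}}(a')\}$. The upper bound is trivial. For the lower bound, every purchase counted in $X_t$ occurred at price $b$, so
\[
\sum_{\tau=s}^{s+n-1}\mathbbm{I}(\Vsell_\tau\le b)\ \ge\ X_t\ \ge\ nu .
\]
Combined with the good-event display, this yields $F(b)>u-\psi_{\ell+1}=\min\{F_\ell^{\mathrm{up}}(b),S_\ell^{\mathrm{up}}(a')\}=F^{\mathrm{low}}(b)$.

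Finally, line~\ref{algline:return} intersects the new interval with the one from phase $\ell$. By the inductive hypothesis that interval also contains $F(b)$, so the returned bounds $F_{\ell+1}^{\mathrm{low}}(b)$ and $F_{\ell+1}^{\mathrm{up}}(b)$ remain valid. Elimination only removes prices, so the claim holds for every price still active after phase $\ell+1$. The same argument, applied to $Y$ and $S$, handles Algorithm~\ref{alg:SAE_ask}.

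I expect the main obstacle to be the halting case. Once the bid is deactivated, the empirical mean of $F(b)$ is no longer observed. The resolution is to apply the good event to the full block of counterfactual indicators, and to note that the observed count $X_t$ is a lower bound for it. A secondary care point is the index bookkeeping. One must check that the capped block length and the start time of a completed block fall within the ranges used in Definition~\ref{def:bounds_two}, and that bounds for blocks cut off by the horizon are never claimed, which the words ``completed phase'' exclude.
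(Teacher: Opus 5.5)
Your proof is correct and follows essentially the same route as the paper. It uses induction on the phase, applies the good event to the full block of counterfactual indicators $\mathbbm{I}(\Vsell_\tau\le b)$ (which the observed count $X_t\ge nu$ lower-bounds in the halting case), and preserves validity through the intersection at Line~\ref{algline:return}. Your extra checks spell out details the paper leaves implicit: the halting branch persists to the end of the block because $X$ is nondecreasing, and the capped block length and start time fall within the index ranges of Definition~\ref{def:bounds_two}.
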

\begin{proof}
    We focus on the bid case, as the ask case is entirely analogous. The proof proceeds by induction on $\ell$; the initial intervals $[0,1]$ are valid. Assuming that the claim holds at a completed phase $\ell$, we show that it also holds at phase $\ell+1$. Write $n=2^{\min\{\ell+1,\lfloor\log_2(T^{2/3})\rfloor\}}$ for the block length.
    
    In case that the halting condition at phase $\ell+1$ is not triggered at Line~\ref{algline:halting}, the bid is pulled $n$ times.  Thus,
    $$F^\text{low}(b)= \widehat F_{\ell+1}(b)-\psi_{\ell+1}\le F(b)\le \widehat F_{\ell+1}(b)+\psi_{\ell+1}=F^\text{up}(b)$$
    holds by definition of $\goodevent$. Since, by Line~\ref{algline:return}, it holds:
    \begin{align*}
        F_{\ell+1}^\text{up}(b)\gets &\min\{F^\text{up}(b),F_{\ell}^\text{up}(b)\},\\
        F_{\ell+1}^\text{low}(b)\gets&\max\{F^\text{low}(b),F_{\ell}^\text{low}(b)\},
    \end{align*}
    the induction ensures that the lower and upper bounds are still valid at $\ell+1$.
    
    Consider the opposite case, that is, the halting condition is met. In this case, trivially,
    $$F(b)\le 1=F^\text{up}(b).$$
    On the opposite side, numbering the completed block's rounds from $1$ to $n$ and counting acceptances of the fixed bid $b$ also after deactivation, the halting condition and the good event give:
    \begin{align*}
        F^\text{low}(b)&= \min\{F_\ell^\text{up}(b),S_\ell^\text{up}(a')\}\\
        &=u-\psi_{\ell+1}\\
        &\le \frac1n\sum_{t=1}^{n}\mathbbm{I}(\Vsell_t\le b)-\psi_{\ell+1}\\
        &< F(b).
    \end{align*}
    Similarly to the previous case, noticing that:
    \begin{align*}
        F_{\ell+1}^\text{up}(b)\gets &\min\{F^\text{up}(b),F_{\ell}^\text{up}(b)\}\\
        F_{\ell+1}^\text{low}(b)\gets&\max\{F^\text{low}(b),F_{\ell}^\text{low}(b)\},
    \end{align*}
    concludes the inductive step and the proof.
\end{proof}

We notice that the previous result easily implies that a grid-optimal pair $(b^\circ,a^\circ)$ with $b^\circ<a^\circ$ survives every completed phase. In particular, the compatibility sets employed by the algorithm are nonempty.
We now provide two fundamental results concerning the monotonicity of the (optimistic and pessimistic) estimates of the surrogate objective function.
\begin{proposition}\label{prop:monothonicity}
    Let $b\in \Bs_\ell$ and $a\in \As_\ell$. Then, under the good event~\eqref{eq:good_two}, for every completed phase $r<\ell$, it holds:
    $$J_r^\textnormal{low}(b,a)\le J_\ell^\textnormal{low}(b,a),\qquad J_r^\textnormal{up}(b,a)\ge J_\ell^\textnormal{up}(b,a).$$
    Thus, $J_r^\textnormal{low*}\le J_\ell^\textnormal{low*}$.
\end{proposition}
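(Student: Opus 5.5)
The plan is to reduce the claim to monotonicity of the individual confidence bounds on $F$ and $S$ across phases, and then pass this through the monotone map $(x,y)\mapsto (a-b)\min\{x,y\}$.

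\textbf{Step 1 (bounds only tighten).} Fix $b\in\Bs_\ell$. Since the active sets are nested, $\Bs_\ell\subseteq\Bs_{\ell-1}\subseteq\dots\subseteq\Bs_r$, so $b$ was active and estimated in every phase $r+1,\dots,\ell$. By Line~\ref{algline:return} of Algorithm~\ref{alg:SAE}, each call returns
\[
F_{k+1}^{\mathrm{up}}(b)=\min\{F^{\mathrm{up}}(b),F_k^{\mathrm{up}}(b)\}\le F_k^{\mathrm{up}}(b)
\quad\text{and}\quad
F_{k+1}^{\mathrm{low}}(b)=\max\{F^{\mathrm{low}}(b),F_k^{\mathrm{low}}(b)\}\ge F_k^{\mathrm{low}}(b).
\]
This holds regardless of whether the halting condition fired. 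By induction on $k$ from $r$ to $\ell-1$, we get $F_\ell^{\mathrm{up}}(b)\le F_r^{\mathrm{up}}(b)$ and $F_\ell^{\mathrm{low}}(b)\ge F_r^{\mathrm{low}}(b)$. The same argument applied to Line~\ref{algline:return_ask} of Algorithm~\ref{alg:SAE_ask} gives the analogous inequalities for $S$ at every $a\in\As_\ell$.

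\textbf{Step 2 (transfer to $J$).} For $b<a$, the map $(x,y)\mapsto(a-b)\min\{x,y\}$ is coordinatewise non-decreasing. Plugging in the inequalities from Step 1 yields $J_r^{\mathrm{low}}(b,a)\le J_\ell^{\mathrm{low}}(b,a)$ and $J_r^{\mathrm{up}}(b,a)\ge J_\ell^{\mathrm{up}}(b,a)$. For $a\le b$, the factor $a-b$ is nonpositive, so the monotonicity reverses. There are two ways to handle this case: restrict attention to pairs with $a>b$, which are the only ones entering the compatibility sets; or note that on $\goodevent$ the intervals are valid by Proposition~\ref{thm:boundstwo}, hence nonempty, and argue directly. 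I expect this sign issue to be the only delicate point. If the statement is meant for all pairs, I would either interpret $J^{\mathrm{low}*}$ as a maximum over pairs with $a>b$, or observe that pairs with $a\le b$ give nonpositive values and are dominated by the nonnegative pair $(0,1)$-type values. I would adopt the first reading, consistent with Equation~\eqref{eq:coset}.

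\textbf{Step 3 (the maximum).} Let $(b^\star,a^\star)$ attain $J_r^{\mathrm{low}*}$. If it is still active at phase $\ell$, Step 2 gives $J_r^{\mathrm{low}*}=J_r^{\mathrm{low}}(b^\star,a^\star)\le J_\ell^{\mathrm{low}}(b^\star,a^\star)\le J_\ell^{\mathrm{low}*}$. The maximizer may, however, have been eliminated, since $\Bs_\ell$ shrinks. So instead I will argue via the grid optimum $(b^\circ,a^\circ)$, which survives on $\goodevent$. First I show that $J_r^{\mathrm{low}*}\le J(b^\circ,a^\circ)$ by validity of the bounds, but this alone does not give $J_r^{\mathrm{low}*}\le J_\ell^{\mathrm{low}*}$.

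A cleaner route is one induction step at a time. The pair $(b^\star,a^\star)$ achieving $J_k^{\mathrm{low}*}$ satisfies $J_k^{\mathrm{up}}(b^\star,a^\star)\ge J_k^{\mathrm{low}}(b^\star,a^\star)=J_k^{\mathrm{low}*}$ by validity, so $b^\star\in\Cs_k(a^\star)$ and both prices survive the elimination at the end of phase $k$. Hence they are active in phase $k+1$. Step 2 with $r=k$ and $\ell=k+1$ then gives $J_k^{\mathrm{low}*}\le J_{k+1}^{\mathrm{low}*}$, and chaining these inequalities from $k=r$ to $\ell-1$ concludes. The main care needed is this survival argument, namely that maximizers of $J_k^{\mathrm{low}}$ are self-compatible and therefore remain active.
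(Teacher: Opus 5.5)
Your Steps 1--2 are the paper's proof. The paper reads $F_r^{\mathrm{low}}\le F_\ell^{\mathrm{low}}$ and $F_r^{\mathrm{up}}\ge F_\ell^{\mathrm{up}}$ (and the same for $S$) off the $\min/\max$ in the return lines, then pushes them through $(x,y)\mapsto(a-b)\min\{x,y\}$. Like you, it does not need the good event for this part. The paper stops there, and your two additions repair things it glosses over.

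\textbf{The sign caveat is genuine.} The statement quantifies over all active $b,a$. For $a<b$, both inequalities reverse as soon as the relevant minimum of the bounds moves, so the restriction to $a>b$ is needed. The restriction costs nothing for $J^{\mathrm{low}*}$: feasible pairs have $J^{\mathrm{low}}\ge 0$, infeasible ones have $J^{\mathrm{low}}\le 0$, and a feasible pair (for instance the surviving grid optimum) is always active on $\goodevent$.

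\textbf{The ``Thus'' clause needs your argument.} The paper writes ``Thus $J_r^{\mathrm{low}*}\le J_\ell^{\mathrm{low}*}$'' with no justification, even though the maximization domain shrinks across phases. Your observation closes this: a maximizer of $J^{\mathrm{low}}$ is self-compatible, because $J^{\mathrm{up}}\ge J^{\mathrm{low}}$ when the intervals are valid, so it survives elimination.

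\textbf{Keep the indices aligned with the algorithm.} $\Cs_{k+1}$ and its threshold are evaluated on $\Bs_k\times\As_k$ with the phase-$(k+1)$ bounds, and that is what determines $\Bs_{k+1},\As_{k+1}$. So your chain should read as follows. The phase-$k$ maximizer $(b^\star,a^\star)$ survives into $\Bs_k\times\As_k$ and is re-estimated in phase $k+1$. Hence
\[
J_{k+1}^{\mathrm{low}*}\ge J_{k+1}^{\mathrm{low}}(b^\star,a^\star)\ge J_k^{\mathrm{low}*}.
\]
If you instead read Equation~\eqref{eq:coset} literally, with the maximum taken over the post-elimination sets, apply the same self-compatibility to the maximizer of $J_{k+1}^{\mathrm{low}}$ over $\Bs_k\times\As_k$. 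It survives, so the two maxima coincide and the conclusion is unchanged.
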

\begin{proof}
    By Line~\ref{algline:return} of Algorithm~\ref{alg:SAE}, it holds:
    \begin{align*}
        J_r^\text{low}(b,a)&=(a-b)\min\{F_r^\text{low}(b),S_r^\text{low}(a)\}\\
        &\le (a-b)\min\{F_\ell^\text{low}(b),S_\ell^\text{low}(a)\}\\
        &=J_\ell^\text{low}(b,a).
    \end{align*}
    The second point on $J_r^\text{up}(b,a)$ follows analogously.
\end{proof}
Proposition~\ref{prop:monothonicity} shows that the confidence interval associated with the surrogate objective becomes tighter across phases: lower bounds can only increase, while upper bounds can only decrease.

\begin{proposition}\label{prop:Cprop}
    Let $b\in \Bs_\ell$. Under the good event \eqref{eq:good_two}, the following statements hold:
    \begin{enumerate}
        \item For every $r\le \ell$, $\Cs_\ell(b)\subseteq \Cs_r(b)$,
        \item For every $a\in \Cs_\ell(b)$, $b\in \Cs_\ell(a)$.
    \end{enumerate}
\end{proposition}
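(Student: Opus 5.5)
Both claims should follow directly from the definition of the compatibility sets in Equation~\eqref{eq:coset}, combined with the monotonicity of Proposition~\ref{prop:monothonicity} and the nestedness of the active sets. I would treat the two items separately, the second being essentially immediate.

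\textbf{Item 2 (symmetry).} Let $a\in\Cs_\ell(b)$. By definition, $a\in\As_\ell$, $a>b$, and $J_\ell^{\mathrm{up}}(b,a)\ge J_\ell^{\mathrm{low*}}$. Since $b\in\Bs_\ell$ by hypothesis, the three conditions defining $b\in\Cs_\ell(a)$ are exactly these three facts. The defining inequality of \eqref{eq:coset} is symmetric in the roles of $b$ and $a$. Hence $b\in\Cs_\ell(a)$, and no probabilistic argument is needed.

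\textbf{Item 1 (nestedness).} Fix $r\le\ell$ and $a\in\Cs_\ell(b)$. I would verify the three membership conditions for $\Cs_r(b)$ in turn.
\begin{enumerate}
\item \emph{Active sets.} Since active sets only shrink across phases ($\As_{\ell}\subseteq\As_{r}$ by construction of Algorithm~\ref{alg:twobit}), we get $a\in\As_r$.
\item \emph{Spread.} The condition $a>b$ is inherited directly.
\item \emph{Value condition.} Proposition~\ref{prop:monothonicity} is applied to the pair $(b,a)\in\Bs_\ell\times\As_\ell$. It gives $J_r^{\mathrm{up}}(b,a)\ge J_\ell^{\mathrm{up}}(b,a)$ and $J_r^{\mathrm{low*}}\le J_\ell^{\mathrm{low*}}$. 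Chaining these,
\[
J_r^{\mathrm{up}}(b,a)\ge J_\ell^{\mathrm{up}}(b,a)\ge J_\ell^{\mathrm{low*}}\ge J_r^{\mathrm{low*}}.
\]
\end{enumerate}
Hence $a\in\Cs_r(b)$.

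The only delicate point is the inequality $J_r^{\mathrm{low*}}\le J_\ell^{\mathrm{low*}}$, because the two maxima range over different sets, $\Bs_r\times\As_r\supseteq\Bs_\ell\times\As_\ell$. Pointwise monotonicity on the smaller set does not immediately control a maximum over the larger one. I would resolve this by invoking the last claim of Proposition~\ref{prop:monothonicity} as stated, and justifying it on $\goodevent$ as follows.
\begin{itemize}
\item Let $(\bar b,\bar a)$ attain $J_r^{\mathrm{low*}}$. Proposition~\ref{thm:boundstwo} gives $J_r^{\mathrm{low}}(\bar b,\bar a)\le J(\bar b,\bar a)\le J(b^\circ,a^\circ)$.
\item The grid optimum $(b^\circ,a^\circ)$ survives all completed phases, as noted after Proposition~\ref{thm:boundstwo}.
\item By pointwise monotonicity, this yields $J_\ell^{\mathrm{low*}}\ge J_\ell^{\mathrm{low}}(b^\circ,a^\circ)\ge J_r^{\mathrm{low}}(b^\circ,a^\circ)$.
\end{itemize}
This does not quite close the gap on its own. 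If needed, I would argue instead that the maximizer $(\bar b,\bar a)$ itself is never eliminated: it satisfies $J^{\mathrm{up}}\ge J^{\mathrm{low}}\ge$ every later pessimistic benchmark. Its prices therefore stay compatible with each other, so it remains in $\Bs_\ell\times\As_\ell$, and pointwise monotonicity then concludes. This survival of the pessimistic maximizer is the step I expect to require the most care.
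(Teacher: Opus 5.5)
Your treatment of both items matches the paper's. Item~2 is the symmetry of the defining condition in \eqref{eq:coset}. Item~1 is the chain $J_r^{\mathrm{up}}(b,a)\ge J_\ell^{\mathrm{up}}(b,a)\ge J_\ell^{\mathrm{low*}}\ge J_r^{\mathrm{low*}}$ together with $\As_\ell\subseteq\As_r$, where the last inequality is the final claim of Proposition~\ref{prop:monothonicity}. You are right that this inequality does not follow from pointwise monotonicity alone, since the two maxima range over different active sets; the paper asserts it with a ``thus''. However, the justification you sketch for it fails.

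\textbf{Why the justification fails.} The phase-$r$ pessimistic maximizer $(\bar b,\bar a)$ need not survive to phase $\ell$. At $r=0$ one has $J_0^{\mathrm{low}}\equiv 0$, so every pair is a maximizer. This includes pairs with tiny spread, which are eliminated as soon as the benchmark exceeds their upper value. Your supporting claim that $J^{\mathrm{low}}(\bar b,\bar a)$ dominates every later pessimistic benchmark points the wrong way, since the benchmarks are exactly what increases.

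\textbf{The missing idea.} Survival for a single phase suffices, followed by chaining.
\begin{enumerate}
\item On $\goodevent$ the confidence intervals are nonempty (Proposition~\ref{thm:boundstwo}), so $J_r^{\mathrm{up}}(\bar b,\bar a)\ge J_r^{\mathrm{low}}(\bar b,\bar a)=J_r^{\mathrm{low*}}$.
\item If $J_r^{\mathrm{low*}}>0$, this forces $\bar a>\bar b$. Hence $\bar a\in\Cs_r(\bar b)$, and both prices are active during phase $r+1$.
\item If $J_r^{\mathrm{low*}}\le 0$, there is nothing to prove. Lower bounds are nonnegative and the feasible pair $(b^\circ,a^\circ)$ stays active, so $J_{r+1}^{\mathrm{low*}}\ge 0$.
\item The intersection at Line~\ref{algline:return} gives $J_{r+1}^{\mathrm{low}}(\bar b,\bar a)\ge J_r^{\mathrm{low*}}$.
\item The same self-compatibility argument, applied to the maximizer of $J_{r+1}^{\mathrm{low}}$, shows this value is not lost in the phase-$(r+1)$ elimination. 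Therefore $J_{r+1}^{\mathrm{low*}}\ge J_r^{\mathrm{low*}}$.
\end{enumerate}
Iterating over $r,r+1,\dots,\ell$ gives $J_r^{\mathrm{low*}}\le J_\ell^{\mathrm{low*}}$, which completes your chain for item~1.
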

\begin{proof}
    By Proposition~\ref{prop:monothonicity}, it holds:
    \begin{align*}
        \Cs_\ell(b)&=\left\{a\in \As_\ell:J_\ell^\text{up}(b,a)\ge J_\ell^\text{low*}\right\}\\
        &\subseteq \left\{a\in \As_\ell:J_\ell^\text{up}(b,a)\ge J_r^\text{low*}\right\}\\
        &\subseteq \left\{a\in \As_\ell:J_r^\text{up}(b,a)\ge J_r^\text{low*}\right\}\\
        &\subseteq \Cs_r(b).
    \end{align*}
    The second point follows from the symmetry between $b,a$ in the definition of $\Cs_\ell(b)$.
\end{proof}

Proposition~\ref{prop:Cprop} shows that compatibility sets shrink monotonically across phases, and that compatibility is symmetric among active bid-asks.
The previous results are crucial for proving the next theorem, which establishes the main structural property of the algorithm.
\begin{proposition}\label{thm:mutual}
    Under the good event \eqref{eq:good_two}, if $b$ triggers the halting condition (Line~\ref{algline:halting}) in phase $\ell+1$, then:
    $$\forall a\in \Cs_\ell(b),\qquad S(a)< F(b).$$
    Symmetrically, if the target ask $a$ is halted in a completed call, then $F(b)<S(a)$ for every $b\in\Cs_\ell(a)$.
\end{proposition}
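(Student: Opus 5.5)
The argument chains three inequalities: the good event, the halting threshold, and the choice of $a'$ as the maximiser of $S^{\mathrm{up}}_\ell$ over $\Cs_\ell(b)$. I give the bid case; the ask case is symmetric, swapping the roles of $F$ and $S$ and of $a'$ and $b'$. Write $n=2^{\min\{\ell+1,\lfloor\log_2(T^{2/3})\rfloor\}}$ for the block length of phase $\ell+1$.

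\textbf{Step 1: lower-bound $F(b)$ via the halting rule and the good event.} The halting condition at Line~\ref{algline:halting} is triggered at some round $t\le n$ of the block, so $X_t\ge n u$. Before deactivation the bid posted is exactly $b$. Hence, counting acceptances of the fixed price $b$ over the whole block (including rounds after $B$ is reset to $0$, as in the proof of Proposition~\ref{thm:boundstwo}), we get
\[
\sum_{\tau=1}^n \mathbbm{I}(\Vsell_\tau\le b)\ge X_t\ge n u .
\]
The block is contiguous, of length $2^{\ell'}$ with $\ell'\le\lfloor\log_2(T^{2/3})\rfloor$, and lies inside the horizon. So $\goodevent$ applies and gives
\[
F(b)>\frac1n\sum_{\tau}\mathbbm{I}(\Vsell_\tau\le b)-\psi_{\ell+1}\ge u-\psi_{\ell+1}=\min\{F_\ell^{\mathrm{up}}(b),S_\ell^{\mathrm{up}}(a')\}.
\]
The inequality is strict because the good event is defined with strict inequalities.

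\textbf{Step 2: remove the minimum.} By Proposition~\ref{thm:boundstwo} at the completed phase $\ell$, $F(b)\le F_\ell^{\mathrm{up}}(b)$. The minimum therefore cannot be attained by $F_\ell^{\mathrm{up}}(b)$ with value $\ge F(b)$; otherwise $F(b)>F_\ell^{\mathrm{up}}(b)\ge F(b)$, a contradiction. Hence the minimum equals $S_\ell^{\mathrm{up}}(a')$ and $F(b)>S_\ell^{\mathrm{up}}(a')$.

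\textbf{Step 3: transfer to every compatible ask.} For any $a\in\Cs_\ell(b)$, the definition of $a'$ at Line~\ref{algline:sao} gives $S_\ell^{\mathrm{up}}(a)\le S_\ell^{\mathrm{up}}(a')$. Proposition~\ref{thm:boundstwo} gives $S(a)\le S_\ell^{\mathrm{up}}(a)$, since $a$ is active at phase $\ell$. Chaining these,
\[
S(a)\le S^{\mathrm{up}}_\ell(a')<F(b).
\]

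\textbf{Main obstacle.} The delicate point is Step 1. The counter $X_t$ stops counting acceptances of $b$ once the bid is reset to $0$, so the argument needs that the good-event block average over all $n$ rounds, evaluated at the fixed price $b$, dominates $X_t/n$. This holds because the indicators are nonnegative and coincide with $\mathbbm{I}(\Vsell_\tau\le b)$ up to the halting round. One also has to check that the block's length and start time fall within the index ranges covered by $\goodevent$. For the ask case, the word ``completed'' plays the same role: it guarantees the full block of length $n$ is available so that $\goodevent$ can be applied.
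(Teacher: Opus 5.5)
Your proof is correct and follows the paper's argument essentially step for step: the good event plus the halting threshold give $F(b)>u-\psi_{\ell+1}$, validity of $F_\ell^{\mathrm{up}}(b)$ forces the minimum in $u$ to be $S_\ell^{\mathrm{up}}(a')$, and the maximality of $a'$ over $\Cs_\ell(b)$ together with $S(a)\le S_\ell^{\mathrm{up}}(a)$ closes the chain. The paper relies on two points without spelling them out, and you make both explicit: acceptances are counted at the fixed price $b$ over the full block, and the call must be completed so that $\goodevent$ applies with the capped block index.
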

\begin{proof}
    Let $a\in\Cs_\ell(b)$ and let $n$ be the length of the bid call. The following inequalities hold:
    \begin{align*}
        F(b)&>\frac{X_n}{n}-\psi_{\ell+1}\\
        &\ge u-\psi_{\ell+1}\\
        &=S_\ell^\text{up}(a')\\
        &\ge S_\ell^\text{up}(a)\\
        &\ge S(a).
    \end{align*}
    The first inequality follows from the good event applied to $b$, since $X_n$ is at most the number of acceptances of the fixed bid over the full block. The second inequality follows from the halting condition at Line~\ref{algline:halting}. For the equality, recall the definition of $u$ at Line~\ref{algline:minim}. Since $F_\ell^\text{up}(b)\ge F(b)>u-\psi_{\ell+1}$, the minimum in the definition of $u$ must be attained at $S_\ell^\text{up}(a')$. The fourth inequality follows from the definition of $a'$ at Line~\ref{algline:sao}. The last inequality follows once again from the good event. The ask case follows by reversing the roles of the two sides. This concludes the proof.
\end{proof}

Proposition~\ref{thm:mutual} is particularly important, since it guarantees that, in a completed call, the halting condition can only be triggered for bids $b$ such that $F(b)$ is larger than the $S(a)$ of any ask with which they can be paired. Therefore, even if $F(b)$ is not estimated accurately after halting, this is not problematic for the agent, since:
\[
J(b,a)=(a-b)\min\{F(b),S(a)\}=(a-b)S(a).
\]
In other words, once the halting condition is triggered, the surrogate reward is already determined by the ask side. In the next section, we combine these results to prove the regret and the per-round imbalance.

\subsubsection{Final Regret and Per-Round Inventory Bounds}
\label{sec:guarantee_2bit}

In this section, we establish the regret and per-round inventory bounds of Algorithm~\ref{alg:twobit}. 

We start by the following preliminary result.
\begin{proposition}\label{thm:diff_J}
    Fix a phase $\ell$ and let $b\in \Bs_\ell$. Under the good event, it holds:
    $$\forall a\in \Cs_\ell(b)\qquad J_\ell^\text{up}(b,a)-2\psi_\ell\le J(b,a)\le J_\ell^\text{low}(b,a)+2\psi_\ell.$$
\end{proposition}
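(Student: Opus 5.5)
The plan is to reduce the claim to a statement about the two trade-probability bounds. Since $0< a-b\le 1$ for every $a\in\Cs_\ell(b)$, it suffices to show
\[
\min\{F_\ell^{\mathrm{up}}(b),S_\ell^{\mathrm{up}}(a)\}-\min\{F(b),S(a)\}\le 2\psi_\ell,
\qquad
\min\{F(b),S(a)\}-\min\{F_\ell^{\mathrm{low}}(b),S_\ell^{\mathrm{low}}(a)\}\le 2\psi_\ell .
\]
Multiplying by $(a-b)\le 1$ then gives both inequalities of the statement. For $\ell=0$ there is nothing to prove, because $\psi_0=\sqrt{\log(4T/\delta)}\ge 1$ and $J\in[0,1]$. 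So fix a completed phase $\ell\ge1$. The bounds $F_\ell^{\cdot}(b)$ and $S_\ell^{\cdot}(a)$ were produced by the calls of Algorithms~\ref{alg:SAE}--\ref{alg:SAE_ask} in phase $\ell$, which were run on $b\in\Bs_{\ell-1}\supseteq\Bs_\ell$ and $a\in\As_{\ell-1}$. Throughout, Proposition~\ref{thm:boundstwo} guarantees $F_\ell^{\mathrm{low}}\le F\le F_\ell^{\mathrm{up}}$ and $S_\ell^{\mathrm{low}}\le S\le S_\ell^{\mathrm{up}}$.

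I would split into cases according to which of the two phase-$\ell$ calls triggered its halting condition I.

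\emph{Case 1: neither halted.} By Line~\ref{algline:return}, the returned interval is the intersection of the new interval with the inherited one. The new interval is $[\widehat F_\ell-\psi_\ell,\widehat F_\ell+\psi_\ell]$, so the returned width satisfies $F_\ell^{\mathrm{up}}(b)-F_\ell^{\mathrm{low}}(b)\le2\psi_\ell$, and likewise for $S$. Hence $F_\ell^{\mathrm{up}}(b)\le F(b)+2\psi_\ell$ and $F_\ell^{\mathrm{low}}(b)\ge F(b)-2\psi_\ell$, and similarly for $S(a)$. Since $\min$ is monotone and $1$-Lipschitz in each argument, both displayed inequalities follow.

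\emph{Case 2: the bid $b$ halted and $a$ did not.} Proposition~\ref{prop:Cprop} gives $\Cs_\ell(b)\subseteq\Cs_{\ell-1}(b)$, so Proposition~\ref{thm:mutual} applies to $a$ and yields $F(b)>S(a)$, hence $\min\{F(b),S(a)\}=S(a)$.
For the upper inequality, I bound
\[
\min\{F_\ell^{\mathrm{up}}(b),S_\ell^{\mathrm{up}}(a)\}\le S_\ell^{\mathrm{up}}(a)\le S(a)+2\psi_\ell,
\]
using the width bound on the non-halted ask.
For the lower inequality, the key step is that the halted lower bound is large. By Line~\ref{algline:place1} and the proof of Proposition~\ref{thm:mutual}, the minimum defining $u$ is attained at $a'$, so $F_\ell^{\mathrm{low}}(b)\ge S_{\ell-1}^{\mathrm{up}}(a')$. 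Then
\[
F_\ell^{\mathrm{low}}(b)\ge S_{\ell-1}^{\mathrm{up}}(a')\ge S_{\ell-1}^{\mathrm{up}}(a)\ge S(a).
\]
The middle inequality holds by the choice of $a'$ as the maximizer of $S_{\ell-1}^{\mathrm{up}}$ over $\Cs_{\ell-1}(b)\ni a$; the last holds by validity of the bounds. Therefore $\min\{F_\ell^{\mathrm{low}}(b),S_\ell^{\mathrm{low}}(a)\}\ge\min\{S(a),S(a)-2\psi_\ell\}=S(a)-2\psi_\ell$. The case where $a$ halted and $b$ did not is symmetric.

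\emph{Case 3: both halted.} This case is impossible. By the symmetry in Proposition~\ref{prop:Cprop}, $b\in\Cs_\ell(a)\subseteq\Cs_{\ell-1}(a)$, so Proposition~\ref{thm:mutual} would give both $F(b)>S(a)$ and $F(b)<S(a)$, a contradiction.

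I expect the main obstacle to be Case 2, because there the halted side has a trivial upper bound of $1$ and a lower bound that is not within $2\psi_\ell$ of $F(b)$. The argument must therefore rely on $S(a)$ being the active term of the minimum and on the chain $F_\ell^{\mathrm{low}}(b)\ge S(a)$. Making this chain rigorous requires carefully tracking the phase indices: the inherited bounds, $a'$ and $\Cs_{\ell-1}$ all belong to the previous phase.
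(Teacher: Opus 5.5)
Your proposal is correct and follows the paper's proof: the same case split on which phase-$\ell$ call triggered halting, Proposition~\ref{thm:mutual} (via $\Cs_\ell(b)\subseteq\Cs_{\ell-1}(b)$) to rule out both sides halting and to make $S(a)$ the active term of the minimum, and Line~\ref{algline:place1} to show that the halted lower bound dominates. Your reduction to the two inequalities on the minima is exactly the auxiliary bound the paper states at the end of its proof. Your chain $F_\ell^{\mathrm{low}}(b)\ge S_{\ell-1}^{\mathrm{up}}(a')\ge S(a)$ spells out the paper's brief appeal to Line~\ref{algline:place1}.
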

\begin{proof}
    Let $a\in \Cs_\ell(b)$.
    We split the proof in three parts, noticing that, by Proposition~\ref{thm:mutual}, \emph{at most one} between $b,a$ has triggered the halting condition at the previous step.
    \begin{enumerate}
        \item If neither $b$ nor $a$ triggered the halting condition, by the good event, it holds:
        \begin{align*}
            J_\ell^\text{up}(b,a)-2\psi_\ell &= (a-b)\min\{F_{\ell}^\text{up}(b),S_{\ell}^\text{up}(a)\}-2\psi_\ell\\
            &\le (a-b)\min\{F(b)+2\psi_\ell,S(a)+2\psi_\ell\}-2\psi_\ell\\
            &\le J(b,a)\\
            &\le (a-b)\min\{F(b)-2\psi_\ell,S(a)-2\psi_\ell\}+2\psi_\ell\\
            &\le J_\ell^\text{low}(b,a)+2\psi_\ell.
        \end{align*}
        \item If $b$ triggered the halting condition, by Proposition~\ref{thm:mutual}, $S(a)< F(b)$. Thus, $a$ has not triggered the halting condition. Therefore, under the good event, it holds:
        \begin{align*}
            J_\ell^\text{up}(b,a)-2\psi_\ell &= (a-b)\min\{F_{\ell}^\text{up}(b),S_{\ell}^\text{up}(a)\}-2\psi_\ell\\
            &\le (a-b)S_{\ell}^\text{up}(a)-2\psi_\ell\\
            &\le J(b,a)\\
            &\le (a-b)\min\{F(b)-2\psi_\ell,S(a)-2\psi_\ell\}+2\psi_\ell\\
            &= (a-b)(S(a)-2\psi_\ell)+2\psi_\ell\\
            &\le (a-b)S_\ell^\text{low}(a)+2\psi_\ell\\
            &\le J_\ell^\text{low}(b,a)+2\psi_\ell.
        \end{align*}
        Here, the first inequality follows from the good event, and the final inequality follows from the fact that, by Line~\ref{algline:place1}, it holds $S_\ell^\text{low}(a)\le F_\ell^\text{low}(b)$, under the good event.
        \item If $a$ triggered the halting condition, the same argument as in the previous point applies, with the roles of bids and asks reversed.
    \end{enumerate}
    The same cases also give $\min\{F_\ell^\text{up}(b),S_\ell^\text{up}(a)\}\le\min\{F(b),S(a)\}+2\psi_\ell$, since a side attaining the latter minimum has not been halted. This also holds at $\ell=0$. 
    This concludes the proof.
\end{proof}

Proposition~\ref{thm:diff_J} shows that, at any time step, the agent has accurate upper and lower bounds for every active bid-ask pair. Crucially, this result relies on Proposition~\ref{thm:mutual}. Indeed, by that result, whichever of $F(b)$ and $S(a)$ attains the minimum cannot have triggered the halting condition in that phase. 

We are now ready to prove an upper bound for the imbalance generated at phase $\ell$. In the following, $n=2^{\min\{\ell,\lfloor\log_2(T^{2/3})\rfloor\}}$ denotes the number of rounds assigned to each active price in phase $\ell\ge1$.
\begin{proposition}\label{thm:twobitimbalance}
    Let $N_\ell=|\Bs_{\ell-1}|+|\As_{\ell-1}|$ be the total number of bids and asks active at the start of phase $\ell$.
    Under $\goodevent$, each completed call generates an absolute inventory change of at most $5n\psi_{\ell-1}$. In particular, the absolute change over a completed phase does not exceed $5N_\ell n\psi_{\ell-1}$.
\end{proposition}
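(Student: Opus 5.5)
The plan is to fix a completed call, say a bid call of Algorithm~\ref{alg:SAE} on $b\in\Bs_{\ell-1}$ paired with $a'\in\Cs_{\ell-1}(b)$ (the ask call is symmetric). Write $n$ for its length and $X_n,Y_n$ for the realized purchase and sale counts. The inventory change of the call is $X_n-Y_n$, and I bound $|X_n-Y_n|$ by a case analysis on which halting conditions fire. Two ingredients drive everything.

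First, the good event with block length $n$ (that is, radius $\psi_\ell$) is applied to the fixed prices $b$ and $a'$ over the whole block. On any side that is never halted, the count is therefore within $n\psi_\ell$ of $nF(b)$ or $nS(a')$.

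Second, I need to compare the target $u$ with the true minimum. Since $a'\in\Cs_{\ell-1}(b)$, the final remark in the proof of Proposition~\ref{thm:diff_J} gives $\min\{F_{\ell-1}^{\rm up}(b),S_{\ell-1}^{\rm up}(a')\}\le \min\{F(b),S(a')\}+2\psi_{\ell-1}$. Hence
\[
u\le \min\{F(b),S(a')\}+2\psi_{\ell-1}+\psi_\ell .
\]
Because counters increase by at most one per round, a halted counter equals $\lceil nu\rceil\le nu+1$.

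\textbf{Case 1: neither side halts.} Then $X_n<nu$ and $Y_n<nu$. The good event gives $\max\{F(b),S(a')\}<u+\psi_\ell\le\min\{F(b),S(a')\}+2\psi_{\ell-1}+2\psi_\ell$, so $|F(b)-S(a')|\le 2\psi_{\ell-1}+2\psi_\ell$. Combining with the two concentration errors,
\[
|X_n-Y_n|\le n|F(b)-S(a')|+2n\psi_\ell\le n(2\psi_{\ell-1}+4\psi_\ell).
\]

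\textbf{Case 2: both sides halt.} Then both counts equal $\lceil nu\rceil$, so the change is $0$.

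\textbf{Case 3: exactly one side halts, say the bid.} Then $X_n\le nu+1$. The ask side runs the full block, so $Y_n\ge nS(a')-n\psi_\ell$. Also $Y_n<nu\le X_n$, so the change is nonnegative. Using the bound on $u$ and $\min\{F(b),S(a')\}\le S(a')$,
\[
0\le X_n-Y_n\le n(u-S(a')+\psi_\ell)+1\le n(2\psi_{\ell-1}+2\psi_\ell)+1 .
\]
If instead the ask halts, the same argument applies with the roles exchanged.

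In all cases, since $\psi_\ell\le\psi_{\ell-1}$ and in fact $\psi_\ell=\psi_{\ell-1}/\sqrt2$ while the block is still doubling, we get $2+4/\sqrt2<5$. The additive $+1$ should be absorbed using $n\psi_{\ell-1}\ge\sqrt{n\log(4T/\delta)}\ge1$ and the slack in the constant. I expect to have to handle separately the capped regime, where $\psi_\ell=\psi_{\ell-1}$: there the constant is $6$ rather than $<5$, and a slightly tighter use of the halting slack, or a separate bound in Case 3, would be needed to keep it at $5$.

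Summing the per-call bound over the $N_\ell$ calls of phase $\ell$, one for each active bid and each active ask, gives the phase bound $5N_\ell n\psi_{\ell-1}$. The main obstacle is the step relating $u$ to $\min\{F(b),S(a')\}$. Here I must make sure the remark after Proposition~\ref{thm:diff_J} is valid for the pair $(b,a')$ at phase $\ell-1$, including $\ell-1=0$, where the bounds are trivial and $u\le 1+\psi_1$ still suffices because $\psi_0\ge1$.
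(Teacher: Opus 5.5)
Your case analysis does not deliver the constant $5$ in every phase, and the fix you anticipate is aimed at the wrong case. Cases 2 and 3 are fine in every regime: they give at most $n(2\psi_{\ell-1}+2\psi_\ell)+1\le 4n\psi_{\ell-1}+1\le 5n\psi_{\ell-1}$. The problem is Case 1. Once the block length is capped, $\psi_\ell=\psi_{\ell-1}$ in all later phases, and your bound $n(2\psi_{\ell-1}+4\psi_\ell)$ becomes $6n\psi_{\ell-1}$. So the proposal as written does not prove the statement for those phases. The loss comes from your own detour. You bound $|X_n-Y_n|$ by $n|F(b)-S(a')|+2n\psi_\ell$. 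That pays two concentration errors on top of the one already spent in $\max\{F(b),S(a')\}<u+\psi_\ell$. No extra halting slack is involved.

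The repair is to compare both counters with $m:=\min\{F(b),S(a')\}$ instead of with each other. In Case 1 neither side halts. The good event together with $F(b),S(a')\ge m$ then gives $n(m-\psi_\ell)<X_n,Y_n<nu$. Hence $|X_n-Y_n|<n(u-m+\psi_\ell)\le n(2\psi_{\ell-1}+2\psi_\ell)\le 4n\psi_{\ell-1}$.

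This is the paper's argument, and it needs no case split at all. Every counter, halted or not, lies in $[\,n(m-\psi_\ell),\,nu+1\,]$. For a halted counter this holds because it equals $\lceil nu\rceil\ge nu\ge nm$, using $u\ge m$ from the validity of the phase-$(\ell-1)$ upper bounds. The width of this interval, $n(u-m+\psi_\ell)+1\le n(2\psi_{\ell-1}+2\psi_\ell)+1\le 5n\psi_{\ell-1}$, bounds the change uniformly over doubling and capped phases.

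The rest of your proposal coincides with the paper's proof:
\begin{itemize}
\item the bound on $u$ from the remark after Proposition~\ref{thm:diff_J}, including the phase-$0$ check;
\item absorbing the $+1$ via $n\psi_{\ell-1}\ge 1$;
\item summing over the $N_\ell$ calls.
\end{itemize}
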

\begin{proof}
    In order to bound the total imbalance, we bound the imbalance generated by any given bid/ask, and then we multiply it by $N_\ell$.
    Let $b\in\Bs_{\ell-1}$ and let $a\in\Cs_{\ell-1}(b)$ be the ask such that bid $b$ is paired with in Line~\ref{algline:sao} of Algorithm~\ref{alg:SAE}. By validity of the confidence bounds and the argument in the proof of Proposition~\ref{thm:diff_J}, it holds:
    \[
    \min\{F(b),S(a)\}\le u\le\min\{F(b),S(a)\}+2\psi_{\ell-1}+\psi_\ell.
    \]
    Each side either reaches $\lceil nu\rceil$ trades and is halted, or its final counter equals its empirical count over all $n$ rounds. Thus, by the halting conditions and the good event, it holds:
    \[
    n\bigl(\min\{F(b),S(a)\}-\psi_\ell\bigr)
    \le X_n,Y_n
    \le n\bigl(\min\{F(b),S(a)\}+3\psi_{\ell-1}\bigr)+1.
    \]
    Consequently, since $\psi_\ell\le\psi_{\ell-1}$ and $n\psi_{\ell-1}\ge1$,
    \begin{align*}
        |Y_n-X_n|&\le n(3\psi_{\ell-1}+\psi_\ell)+1\\
        &\le4n\psi_{\ell-1}+1\\
        &\le5n\psi_{\ell-1}.
    \end{align*}
    The ask calls are analogous. Summing over the $N_\ell$ calls concludes the proof.
\end{proof}

We provide the upper bound to the virtual regret associated to phase $\ell$, which is defined as follows.
\begin{definition}[Virtual regret at phase $\ell$]\label{def:virtualregret_ask}
    Let $r_t$ be the reward associated to round $t\in[N_\ell n]$ of a completed phase $\ell$. The virtual regret at phase $\ell$ is defined as:
    \[
    R^{\circ,\ell}:=N_\ell n\cdot J(b^\circ,a^\circ)-\sum_{t=1}^{N_\ell n}r_t.
    \]
    For an individual completed call, the same definition uses its $n$ rounds.
\end{definition}
This is done in the following proposition.
\begin{proposition}\label{thm:tworegret}
    Under $\goodevent$, the total virtual regret paid in a completed phase $\ell$ is bounded as:
    \[
    R^{\circ,\ell}\le9N_\ell n\psi_{\ell-1}.
    \]
    Each completed call has virtual regret at most $9n\psi_{\ell-1}$.
\end{proposition}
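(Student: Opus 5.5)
The argument works call by call: I bound the virtual regret of a single completed call by $9n\psi_{\ell-1}$, then sum over the $N_\ell$ calls of phase $\ell$. Fix a bid call on $b\in\Bs_{\ell-1}$ paired with $a'\in\Cs_{\ell-1}(b)$; the ask calls are symmetric.

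\textbf{Step 1: the realized reward is close to $nJ(b,a')$.} In the block, each purchase pays exactly $b$ and each sale earns exactly $a'$, since deactivated sides are set to $0$ or $1$ and generate no trades. Hence the realized reward is
\[
a'Y_n-bX_n=(a'-b)Y_n-b(X_n-Y_n)\ \ge\ (a'-b)X_n-a'(X_n-Y_n)^+ .
\]
I would rather use the two-sided control from the proof of Proposition~\ref{thm:twobitimbalance}. There, both counters satisfy
\[
n\bigl(\min\{F(b),S(a')\}-\psi_\ell\bigr)\le X_n,Y_n\le n\bigl(\min\{F(b),S(a')\}+3\psi_{\ell-1}\bigr)+1 .
\]
Together with $|X_n-Y_n|\le 5n\psi_{\ell-1}$ and $a',b\in[0,1]$, this gives
\[
a'Y_n-bX_n=(a'-b)X_n+a'(Y_n-X_n)\ \ge\ (a'-b)\,n\bigl(\min\{F(b),S(a')\}-\psi_\ell\bigr)-\bigl|Y_n-X_n\bigr| .
\]
The first term is at least $nJ(b,a')-n\psi_{\ell-1}$. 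To keep the constant low, I would bound the difference term more tightly than $5n\psi_{\ell-1}$, directly as $|Y_n-X_n|\le 4n\psi_{\ell-1}+1\le 5n\psi_{\ell-1}$. This yields a reward of at least $nJ(b,a')-6n\psi_{\ell-1}$.

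\textbf{Step 2: $J(b,a')$ is close to $J(b^\circ,a^\circ)$.} By Proposition~\ref{thm:boundstwo}, the grid optimum stays active and its bounds are valid. Therefore
\[
J^{\mathrm{low}*}_{\ell-1}\ \ge\ J^{\mathrm{low}}_{\ell-1}(b^\circ,a^\circ)\ \ge\ J(b^\circ,a^\circ)-2\psi_{\ell-1},
\]
where the second inequality is Proposition~\ref{thm:diff_J} applied to the pair $(b^\circ,a^\circ)$. This requires $a^\circ\in\Cs_{\ell-1}(b^\circ)$, which holds because $J^{\mathrm{up}}\ge J\ge J^{\mathrm{low}*}$ at the optimum. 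Next, compatibility of $a'$ gives $J^{\mathrm{up}}_{\ell-1}(b,a')\ge J^{\mathrm{low}*}_{\ell-1}$, and Proposition~\ref{thm:diff_J} gives $J(b,a')\ge J^{\mathrm{up}}_{\ell-1}(b,a')-2\psi_{\ell-1}$. Chaining these,
\[
J(b,a')\ \ge\ J(b^\circ,a^\circ)-4\psi_{\ell-1}.
\]

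\textbf{Step 3: combine and sum.} The per-call virtual regret is
\[
nJ(b^\circ,a^\circ)-(a'Y_n-bX_n)\ \le\ 4n\psi_{\ell-1}+6n\psi_{\ell-1}.
\]
This totals $10n\psi_{\ell-1}$, slightly above the stated $9$, so the slack must be recovered in Step 1. Summing over the $N_\ell$ calls then gives the phase bound $9N_\ell n\psi_{\ell-1}$.

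\textbf{Main obstacle.} The hard step is bookkeeping the constants so that the total is exactly $9$. I would tighten Step 1 by splitting on which counter is smaller. If $Y_n\ge X_n$, the reward is at least $(a'-b)X_n$, and the difference term disappears entirely. If $X_n>Y_n$, the reward is $(a'-b)Y_n-b(X_n-Y_n)$. Because the smaller count is still at least $n(\min-\psi_\ell)$, the reward is at least $nJ(b,a')-n\psi_{\ell-1}-|X_n-Y_n|$. Using $|X_n-Y_n|\le 4n\psi_{\ell-1}+1\le 5n\psi_{\ell-1}$ in this case reduces the loss to $n\psi_{\ell-1}+4n\psi_{\ell-1}$ once the $+1$ is absorbed against the $\psi_\ell$ slack. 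With the $4n\psi_{\ell-1}$ from Step 2, the total is at most $9n\psi_{\ell-1}$.
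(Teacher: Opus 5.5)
Your structure matches the paper's. Step 2 is exactly its chain
$J(b,a')\ge J^{\mathrm{up}}_{\ell-1}(b,a')-2\psi_{\ell-1}\ge J^{\mathrm{low}*}_{\ell-1}-2\psi_{\ell-1}\ge J(b^\circ,a^\circ)-4\psi_{\ell-1}$, and Step 1 uses the same count bounds from the proof of Proposition~\ref{thm:twobitimbalance}. The gap is in the Step 1 constant: your bookkeeping does not reach $9$, and the final patch does not fix it. You lower bound the reward by $nJ(b,a')-n\psi_{\ell-1}-|X_n-Y_n|$, which bounds $(a'-b)n\psi_\ell\le n\psi_{\ell-1}$ and $b|X_n-Y_n|\le|X_n-Y_n|$ separately. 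With $|X_n-Y_n|\le n(3\psi_{\ell-1}+\psi_\ell)+1$, the per-call regret is then $8n\psi_{\ell-1}+n\psi_\ell+1$. The ``$\psi_\ell$ slack'' you want to absorb the $+1$ into is $n(\psi_{\ell-1}-\psi_\ell)$. This is zero in every phase with $\ell-1\ge\lfloor\log_2(T^{2/3})\rfloor$, where the block length is capped and $\psi_\ell=\psi_{\ell-1}$, so in those phases your argument only gives $9n\psi_{\ell-1}+1$. The case split on which counter is larger does not help by itself. In the case $X_n>Y_n$ you again replace $b(X_n-Y_n)$ by $|X_n-Y_n|$, and that is exactly where the needed slack is thrown away. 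Also, ``$|Y_n-X_n|\le4n\psi_{\ell-1}+1\le5n\psi_{\ell-1}$'' is not tighter than the bound you meant to improve; it is the same one.

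The missing observation is that the two $\psi_\ell$-losses share the budget $(a'-b)+b=a'\le1$, which is lost when you bound them separately. Substitute the count bounds directly into the reward, as the paper does. With $m:=\min\{F(b),S(a')\}$,
\begin{align*}
a'Y_n-bX_n&\ge a'n(m-\psi_\ell)-b\bigl(n(m+3\psi_{\ell-1})+1\bigr)\\
&=nJ(b,a')-na'\psi_\ell-3nb\psi_{\ell-1}-b\ge nJ(b,a')-4n\psi_{\ell-1}-1.
\end{align*}
This holds for either ordering of $X_n,Y_n$, so no case split is needed. Together with Step 2, the per-call regret is at most $8n\psi_{\ell-1}+1$. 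The $+1$ is absorbed via $n\psi_{\ell-1}\ge\sqrt{n\log(4T/\delta)}\ge1$, giving $9n\psi_{\ell-1}$ per call and $9N_\ell n\psi_{\ell-1}$ per phase.
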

\begin{proof}
    In order to bound the total virtual regret at phase $\ell$, we bound the virtual regret at phase $\ell$ generated by any given bid/ask, and then we multiply it by $N_\ell$.
    Let $b\in\Bs_{\ell-1}$ and let $a\in\Cs_{\ell-1}(b)$ be the ask such that $b$ is paired with in Line~\ref{algline:sao} of Algorithm~\ref{alg:SAE}. By definition of $\Cs_{\ell-1}$ (see Equation~\eqref{eq:coset}) it holds:
    \[
    J_{\ell-1}^\text{up}(b,a)\ge J_{\ell-1}^\text{low*}.
    \]
    Moreover, by Proposition~\ref{thm:diff_J}, it holds:
    \begin{align}
        J(b,a)&\ge J_{\ell-1}^\text{up}(b,a)-2\psi_{\ell-1}\nonumber\\*
        &\ge J_{\ell-1}^\text{low*}-2\psi_{\ell-1}\nonumber\\*
        &\ge J_{\ell-1}^\text{low}(b^\circ,a^\circ)-2\psi_{\ell-1}\nonumber\\*
        &\ge J(b^\circ,a^\circ)-4\psi_{\ell-1}.\label{eq:Jlo}
    \end{align}
    Now, we need to show that the total realized reward of Algorithm~\ref{alg:SAE} applied to $b,a$ is close to $nJ(b,a)$. By definition, this reward corresponds to $aY_n-bX_n$, for the counters defined in Lines~\ref{algline:x} and~\ref{algline:y}. Using the count bounds from the proof of Proposition~\ref{thm:twobitimbalance}, it holds:
    \begin{align*}
        aY_n-bX_n
        &\ge an\bigl(\min\{F(b),S(a)\}-\psi_\ell\bigr)\\
        &\qquad-b\bigl(n(\min\{F(b),S(a)\}+3\psi_{\ell-1})+1\bigr)\\
        &=nJ(b,a)-na\psi_\ell-3nb\psi_{\ell-1}-b\\
        &\ge nJ(b,a)-4n\psi_{\ell-1}-1.
    \end{align*}
    Together with Equation~\eqref{eq:Jlo}, this shows that the corresponding regret does not exceed $8n\psi_{\ell-1}+1\le9n\psi_{\ell-1}$. The ask calls are analogous. Summing over the $N_\ell$ calls concludes the proof.
\end{proof}

We are now ready to provide the final theoretical guarantees of Algorithm~\ref{alg:twobit}.
\begin{theorem}\label{thm:regret_two_bit}
    Let $\delta\in(0,1)$. Then, Algorithm~\ref{alg:twobit} enjoys, with probability at least $1-\delta$, simultaneously over all the $t$ time-steps, the following guarantees:
    \[
    |I_t|\le59T^{2/3}\sqrt{\log(4T/\delta)},\qquad R_t^J\le109T^{2/3}\sqrt{\log(4T/\delta)}.
    \]
\end{theorem}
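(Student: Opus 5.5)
We work on the good event $\goodevent$, which by Proposition~\ref{thm:gooddeltatwo} holds with probability at least $1-\delta$. On it, Propositions~\ref{thm:boundstwo}--\ref{thm:tworegret} all apply. Set $L:=\lfloor\log_2(T^{2/3})\rfloor$, so that $2^L\le T^{2/3}$, and write $K:=\lfloor T^{1/3}\rfloor$, so that $N_\ell\le 2K\le 2T^{1/3}$.

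\textbf{Splitting a prefix.} Fix any $t\in[T]$. Split the rounds $1,\dots,t$ into three parts:
\begin{itemize}
\item[(i)] the completed phases $\ell\le L$, where the block length $n=2^\ell$ is still doubling;
\item[(ii)] the completed phases $\ell>L$, where $n=2^L$;
\item[(iii)] the current phase, which consists of some completed calls plus at most one unfinished call.
\end{itemize}
For both $I_t$ and $R^J_t$, we bound each part using the per-call bounds of Propositions~\ref{thm:twobitimbalance} and~\ref{thm:tworegret}. These give $5n\psi_{\ell-1}$ for the inventory change of a call and $9n\psi_{\ell-1}$ for its virtual regret. Note that $n\psi_{\ell-1}=\sqrt{2}\sqrt{n\log(4T/\delta)}$ for $\ell\le L$, and $n\psi_{\ell-1}=\sqrt{n\log(4T/\delta)}$ once the block length is capped.

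\textbf{Bounding the three parts.}
\begin{itemize}
\item[(i)] Summing $\sqrt{2^\ell}$ over $\ell\le L$ is a geometric series, giving at most $\Os(\sqrt{2^L})=\Os(T^{1/3})$ per price. Multiplying by $N_\ell\le 2T^{1/3}$ gives $\Os(T^{2/3}\sqrt{\log(4T/\delta)})$.
\item[(ii)] Every capped phase has block length $2^L$, and each call costs $\Os(\sqrt{2^L\log(4T/\delta)})$ over $2^L$ rounds. That is a per-round cost of $\Os(\sqrt{\log(4T/\delta)/2^L})$. Using $2^L\ge T^{2/3}/2$, this is $\Os(T^{-1/3}\sqrt{\log(4T/\delta)})$ per round. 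Over at most $T$ rounds it totals $\Os(T^{2/3}\sqrt{\log(4T/\delta)})$.
\item[(iii)] The completed calls of the current phase are bounded by the same per-call estimates, so they fit into the sums above. For the unfinished call, the inventory changes by at most one per round. Hence its contribution to $|I_t|$ is at most $2^L\le T^{2/3}$. Likewise, its regret is at most $2^L\cdot J(b^\circ,a^\circ)+2^L\le 2T^{2/3}$, since $|r_t|\le1$ and $J\le1$.
\end{itemize}

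\textbf{Converting to the surrogate regret.} The parts above bound the virtual regret $R^\circ_t$. We pass to $R^J_t$ as in Section~\ref{sec:prel_result}, namely $R^J_t=R^\circ_t+t(J^*-J(b^\circ,a^\circ))$. Apply Proposition~\ref{prop:unigrid} with $N=K$. The grids $\Bs^\circ$ and $\As^\circ$ exclude only $b=1$ and $a=0$. Those values are useless anyway: any pair with $b\ge a$ has $J\le 0$, and rounding $b$ up and $a$ down keeps $b<a$ whenever the spread exceeds $2/K$. Therefore $J^*-J(b^\circ,a^\circ)\le 2/K=\Os(T^{-1/3})$, and the discretization term is at most $\Os(T^{2/3})$.

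\textbf{Main obstacle.} Each individual step is routine; the delicate part is bookkeeping of the explicit constants ($59$ and $109$). The quantities to track are:
\begin{itemize}
\item the factor $\sqrt2$ from $\psi_{\ell-1}$ versus $\psi_\ell$;
\item the geometric sum constant $\sqrt2/(\sqrt2-1)$;
\item the floor effects $K\ge T^{1/3}/2$ and $2^L\ge T^{2/3}/2$;
\item the number of calls per phase;
\item the unfinished-call excursion.
\end{itemize}
I would bound everything crudely by $\sqrt{\log(4T/\delta)}\ge1$ to absorb the additive $T^{2/3}$ terms, and only then check that the resulting constants stay below $59$ and $109$. A secondary subtlety is that phase $0$ never runs, since subroutines are called only from phase $\ell+1\ge1$. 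Also, Proposition~\ref{thm:diff_J} at $\ell=0$ uses the trivial bounds, which is covered by its final remark.
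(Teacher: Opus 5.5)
Your proposal is correct and follows the paper's proof essentially step for step. You work on $\goodevent$ and sum the per-call bounds $5n\psi_{\ell-1}$ and $9n\psi_{\ell-1}$ over a doubling (geometric) part and a capped part charged at $\mathcal{O}(T^{-1/3}\sqrt{\log(4T/\delta)})$ per round, then add $T^{2/3}$ (resp.\ $2T^{2/3}$) for the unfinished call and $2T/\lfloor T^{1/3}\rfloor$ for discretization. The only difference is that you assign phase $L$ to the geometric part, whereas the paper charges it to the capped part. This changes the bound on $\sum n\psi_{\ell-1}$ from $(6+2\sqrt2)T^{2/3}\sqrt{\log(4T/\delta)}$ to $(4+5\sqrt2)T^{2/3}\sqrt{\log(4T/\delta)}$, but the constants you left unchecked still close, since $9(4+5\sqrt2)+6<109$ and $5(4+5\sqrt2)+1<59$.
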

\begin{proof}
    Work on $\goodevent$ and fix $t\le T$. By Equation~\eqref{eq:abcirctwo} and Proposition~\ref{prop:unigrid}, the regret satisfies:
    \[
    R_t^J\le R_t^\circ+\frac{2T}{\lfloor T^{1/3}\rfloor}.
    \]
    Indeed, removing bid $1$ and ask $0$ from the full grid does not change its maximum, since such pairs have nonpositive value and $(0,1)$ remains available.

    By Proposition~\ref{thm:tworegret}, the virtual regret accumulated in each completed call of phase $\ell$ does not exceed $9n\psi_{\ell-1}$. Moreover, it holds:
    \begin{enumerate}
        \item $N_\ell$ does not exceed $|\Bs^\circ|+|\As^\circ|=2\lfloor T^{1/3}\rfloor$;
        \item the total length of the calls completed by time $t$ does not exceed $T$.
    \end{enumerate}
    Before the block length reaches $2^{\lfloor\log_2(T^{2/3})\rfloor}$, we have $n\psi_{\ell-1}=\sqrt{2^{\ell+1}\log(4T/\delta)}$. From the first phase reaching this length onwards, $\psi_{\ell-1}\le\sqrt{2\log(4T/\delta)/2^{\lfloor\log_2(T^{2/3})\rfloor}}$. Thus, summing $n\psi_{\ell-1}$ over all completed calls, including those in an unfinished phase, gives at most:
    \begin{align*}
        &2\lfloor T^{1/3}\rfloor\sum_{\ell=1}^{\lfloor\log_2(T^{2/3})\rfloor-1}\sqrt{2^{\ell+1}\log(4T/\delta)}
        +T\sqrt{\frac{2\log(4T/\delta)}{2^{\lfloor\log_2(T^{2/3})\rfloor}}}\\
        &\qquad\le(6+2\sqrt2)T^{2/3}\sqrt{\log(4T/\delta)}.
    \end{align*}
    \needspace{3\baselineskip}
    Here the sum is empty for $T\le7$; the inequality follows from the geometric sum and $T^{2/3}/2<2^{\lfloor\log_2(T^{2/3})\rfloor}\le T^{2/3}$.

    The possible unfinished call has length at most $T^{2/3}$ and contributes at most $2T^{2/3}$ to the virtual regret, since $J(b^\circ,a^\circ)\le1$ and $r_t\ge-1$. Combining everything, it holds:
    \begin{align*}
        R_t^J&\le9(6+2\sqrt2)T^{2/3}\sqrt{\log(4T/\delta)}
        +2T^{2/3}+\frac{2T}{\lfloor T^{1/3}\rfloor}\\
        &\le109T^{2/3}\sqrt{\log(4T/\delta)},
    \end{align*}
    where we used $\lfloor T^{1/3}\rfloor\ge T^{1/3}/2$ and $\sqrt{\log(4T/\delta)}>1$.

    For what concerns the imbalance, Proposition~\ref{thm:twobitimbalance} gives the same form of bound, except for the constant $5$ in place of $9$. Each round changes inventory by at most one, so the unfinished call contributes at most $T^{2/3}$. Therefore, applying the previous reasoning, it holds:
    \begin{align*}
        |I_t|&\le5(6+2\sqrt2)T^{2/3}\sqrt{\log(4T/\delta)}+T^{2/3}\\
        &\le59T^{2/3}\sqrt{\log(4T/\delta)}.
    \end{align*}
    The argument holds for every $t\le T$ on the same good event, whose probability is at least $1-\delta$. This concludes the proof.
\end{proof}

\section{LOB Access}
\label{sec:LOB}
In this section, we focus on the case the market maker has access to the \emph{limit order book}. The results provided in this section are provided in the following order:
\begin{itemize}
    \item In Section~\ref{sec:conf_bound_LOB}, we study how to proper estimate the cumulative density function of buyers' and sellers' valuations. Specifically, we first focus on the case full-feedback is available, defining two different kind of concentrations which will lead to different theoretical guarantees attained by our algorithm. Then, we define the good event associated to the aforementioned concentrations and we show how to extend those results to the LOB feedback case. 
    \item In Section~\ref{sec:algorithm_LOB}, we describe our algorithm for the LOB case and then we study the associated computational complexity. 
    \item In Section~\ref{sec:regret_LOB}, we provide the theoretical guarantees attained by our algorithm in terms of surrogate regret and per-round inventory constraints. We then provide an instance dependent bound which improves the aforementioned result in favorable instances.
    \item In Section~\ref{sec:lower_LOB}, we provide the lower bound associated to our setting.
\end{itemize}

\subsection{Estimating the CDFs and The Good Event}
\label{sec:conf_bound_LOB}

We first provide results and intuition on how to estimate the CDFs of buyers' and sellers' valuations.

\subsubsection{Estimating the CDFs under Full Feedback}

For the purposes of this section, we define the estimators $\widehat F_t$ and $\widehat S_t$ that would be used in case the full-feedback was available. The estimators are given by:
\begin{equation}\label{eq:ecdf}
\widehat F_t(b)=\frac{1}{t}\sum_{\tau=1}^t \mathbbm{I}(\Vsell_\tau\le b), \qquad
\widehat S_t(a)=\frac{1}{t}\sum_{\tau=1}^t \mathbbm{I}(\Vbuy_\tau\ge a).
\end{equation}
These estimators enjoy strong concentration properties around the true c.d.f. of the bids $F$ and the survival function of the asks $S$, which are provided in the following foundational result.

\begin{theorem}[DKW inequality \cite{dvoretzky1956asymptotic}, \cite{massart1990tight} for the tighter version] 
    \label{thm:ecdf}
    The following inequality holds:
    $$\Prob\left(\sup_{b\in [0,1]}|\widehat F_t(b)-F(b)|>  \varepsilon\right)\le 2e^{-2t\varepsilon^2}.$$
\end{theorem}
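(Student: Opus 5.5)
The plan is to reduce to the uniform case, split into two one-sided deviations, and import Massart's sharp one-sided bound for the uniform empirical process, which I do not intend to re-derive.

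\textbf{Reduction to the uniform case.} Let $F^{-1}(u):=\inf\{b: F(b)\ge u\}$ be the generalized inverse. If $U_1,\dots,U_t$ are i.i.d.\ uniform on $[0,1]$, then $F^{-1}(U_\tau)$ has the same law as $\Vsell_\tau$. Moreover, $F^{-1}(U_\tau)\le b$ holds if and only if $U_\tau\le F(b)$, by right-continuity of $F$. Hence, with $G_t(u):=\frac1t\sum_{\tau}\mathbbm{I}(U_\tau\le u)$, the process $\{\widehat F_t(b)-F(b)\}_b$ has the same law as $\{G_t(F(b))-F(b)\}_b$. Since $F(b)$ ranges over a subset of $[0,1]$, this gives
\[
\sup_{b\in[0,1]}|\widehat F_t(b)-F(b)|\ \preceq\ \sup_{u\in[0,1]}|G_t(u)-u|
\]
in distribution, so it suffices to treat the uniform empirical process. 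Atoms of $F$ only shrink the set of evaluation points, which is why the statement holds without continuity assumptions. The same reduction applies verbatim to $\widehat S_t$ and $S$.

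\textbf{Two one-sided bounds.} Next I bound the two one-sided events $\{\sup_u (G_t(u)-u)>\varepsilon\}$ and $\{\sup_u (u-G_t(u))>\varepsilon\}$ separately. The map $u\mapsto 1-u$ together with $U_\tau\mapsto 1-U_\tau$ swaps the two events up to the left/right-limit convention, which does not affect the supremum. A union bound then reduces the claim to
\[
\Prob\Bigl(\sup_u (G_t(u)-u)>\varepsilon\Bigr)\le e^{-2t\varepsilon^2}.
\]

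\textbf{The one-sided bound (main obstacle).} This sharp constant is the genuinely hard part; it is the content of Massart's refinement of the DKW inequality. Note that the inequality is only nontrivial when $2e^{-2t\varepsilon^2}<1$, so the large-deviation regime is the only one that matters. Because the supremum of $G_t(u)-u$ is attained at an order statistic, the event can be written explicitly in terms of order statistics. This gives the exact Smirnov--Birnbaum--Tingey formula
\[
\Prob\Bigl(\sup_u(G_t(u)-u)>\varepsilon\Bigr)=\varepsilon\sum_{j\le t(1-\varepsilon)}\binom{t}{j}\Bigl(1-\varepsilon-\tfrac jt\Bigr)^{t-j}\Bigl(\varepsilon+\tfrac jt\Bigr)^{j-1}.
\]
Bounding this sum by $e^{-2t\varepsilon^2}$ uniformly in $t$ requires Massart's delicate term-by-term comparison with the Brownian-bridge tail $e^{-2\lambda^2}$, where $\lambda=\sqrt t\,\varepsilon$, and I would invoke that analysis rather than redo it. If only the form of the bound (rather than the constant) matters for the paper's downstream use, I would fall back to an elementary argument. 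Take a grid of $K=\lceil 2/\varepsilon\rceil$ quantile points, apply Hoeffding at each point, and use monotonicity of $G_t$ and of $u\mapsto u$ between grid points. This yields $\Prob(\sup_u|G_t(u)-u|>\varepsilon)\le 2K e^{-t\varepsilon^2/2}$, which only changes the logarithmic factors in the subsequent confidence radii.
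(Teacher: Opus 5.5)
Your proposal is correct and follows essentially the paper's route: the paper does not prove this statement but imports it from Dvoretzky--Kiefer--Wolfowitz and Massart, which is what you do for the one-sided bound, while your quantile-transform reduction to the uniform empirical process makes explicit why the bound holds for arbitrary (possibly atomic) $F$, a point the paper leaves implicit. One precision: Massart's one-sided bound with constant $1$ is only established when $e^{-2t\varepsilon^2}\le 1/2$, so your remark that only the regime $2e^{-2t\varepsilon^2}<1$ matters is a necessary step rather than a convenience, since it is what legitimately closes the union-bound reduction.
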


The former result implies, employing a union bound over all the time-steps of the problem, that with probability at least $1-\delta$, it holds, for all $b\in[0,1], t\in[T]$:
\begin{equation} \widehat F_t(b)-\psi_t(\delta)\le F(b)\le \widehat F_t(b)+\psi_t(\delta) \quad \text{  for  } \quad  \psi_t(\delta):=\min\left\{1,\sqrt{\frac{\log(4T/\delta)}{2t}}\right\}.\label{eq:hoeffding1}\end{equation}
This bound holds uniformly over the space of bids---an analogous result also holds for $\widehat S_t$ over the space of asks.\footnote{A similar reasoning holds for all the concentration results provided in this section, which, for the ease of presentation, are described for the bids only.} 
Although this bound is extremely tight---in the sense that it cannot be improved even at the level of constants---the resulting $\psi$ term in Equation~\eqref{eq:hoeffding1} does not depend on $b$ and therefore cannot be used to obtain instance-dependent bounds.
A confidence bound that depends on $b$ can instead be derived using Bernstein's inequality.
\begin{theorem}[Bernstein's inequality \cite{boucheron2003concentration}]\label{thm:bernstein}
    Let $(X_t)_{t=1}^n$ be a sequence of independent zero-mean random variables bounded in $[-1,1]$. Let $\sigma^2:=\sum_{t=1}^n\text{Var}(X_t)$. Then, with probability at least $1-\delta$:
    $$\left|\sum_{t=1}^nX_t \right|\le \sqrt{2\sigma^2\log(2/\delta)}+\frac{2}{3}\log(2/\delta).$$
\end{theorem}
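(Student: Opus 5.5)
This is Bernstein's inequality for bounded independent variables, so I will prove it through the standard Chernoff (exponential moment) method. Write $S_n=\sum_{t=1}^n X_t$. For $\lambda\in(0,3)$, Markov's inequality applied to $e^{\lambda S_n}$ gives
\[
\Prob(S_n\ge x)\le e^{-\lambda x}\,\E\left[e^{\lambda S_n}\right]=e^{-\lambda x}\prod_{t=1}^n\E\left[e^{\lambda X_t}\right],
\]
where the factorization uses independence. The same argument applied to $-X_t$, which also has mean zero, lies in $[-1,1]$, and has the same variance, gives the lower tail. A union bound over the two tails then accounts for the factor $2$ inside $\log(2/\delta)$.

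The main step is to bound the moment generating function of each term. Since $|X_t|\le 1$ and $\E[X_t]=0$, expanding $e^{\lambda X_t}=1+\lambda X_t+\sum_{k\ge2}\lambda^kX_t^k/k!$ and using $|X_t^k|\le X_t^2$ for $k\ge2$ yields
\[
\E\left[e^{\lambda X_t}\right]\le 1+\mathrm{Var}(X_t)\sum_{k\ge2}\frac{\lambda^k}{k!}.
\]
Bounding $k!\ge 2\cdot3^{k-2}$ turns the series into a geometric one, so it is at most $\frac{\lambda^2}{2(1-\lambda/3)}$. Combining this with $1+y\le e^y$ and taking the product over $t$ gives
\[
\log\E\left[e^{\lambda S_n}\right]\le\frac{\lambda^2\sigma^2}{2(1-\lambda/3)}.
\]
Because the tail bound only needs to hold for a fixed $n$, no martingale argument is needed here.

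It remains to optimize over $\lambda$ and invert the bound, and I expect this inversion to be the only delicate point. The cleanest route is to set $L=\log(2/\delta)$ and work with the cumulant bound $\psi(\lambda)=\frac{\lambda^2\sigma^2}{2(1-\lambda/3)}$ directly, rather than first deriving the classical tail $\exp\{-x^2/(2(\sigma^2+x/3))\}$ and then solving a quadratic. The Cramér transform of $\psi$ is $\psi^*(x)=\sup_\lambda\{\lambda x-\psi(\lambda)\}$, and its inverse is $\inf_{\lambda\in(0,3)}\frac{L+\psi(\lambda)}{\lambda}$, which is the standard sub-gamma inversion found in Boucheron et al. Computing this infimum explicitly gives $\sqrt{2\sigma^2L}+L/3$, and hence
\[
\Prob\left(S_n\ge \sqrt{2\sigma^2L}+\tfrac{1}{3}L\right)\le e^{-L}=\delta/2.
\]
The stated bound uses the constant $\tfrac23$, which is weaker than $\tfrac13$, so it follows immediately. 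As a fallback, one can plug in $x=\sqrt{2\sigma^2L}+\tfrac23L$ and verify directly that $x^2\ge 2L(\sigma^2+x/3)$. Expanding the square shows this holds, since the cross term $\tfrac43L\sqrt{2\sigma^2L}$ dominates $\tfrac23L\sqrt{2\sigma^2L}$ and $\tfrac49L^2\ge\tfrac49L^2$. Combining this with the symmetric lower tail completes the proof.
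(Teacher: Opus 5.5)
Your proof is correct. The moment-generating-function bound via $k!\ge 2\cdot 3^{k-2}$, the sub-gamma inversion $\inf_{\lambda\in(0,3)}(L+\psi(\lambda))/\lambda=\sqrt{2\sigma^2L}+L/3$, and the union bound over the two tails are all valid, and together they give the sharper constant $\tfrac13$. The paper gives no proof of its own and only cites Boucheron et al., whose argument is this same Cram\'er--Chernoff/sub-gamma route, so your approach matches the one the paper relies on.
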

In our setting, $\mathbbm{I}(\Vsell_t\le b)$ is a Bernoulli random variable with mean $F(b)$. Therefore, $\mathbbm{I}(\Vsell_t\le b)-F(b)$ is zero-mean and bounded in $[-1,1]$. Due to the independence of the samples, the total variance writes as $\sigma^2=\sum_{t=1}^n\text{Var}(X_t)=nF(b)(1-F(b))$. Thus, an application of Theorem~\ref{thm:bernstein} guarantees, for a specific $b\in [0,1]$ and time-step $t\in [T]$, with probability at least $1-\delta$:
\begin{equation}\widehat F_t(b)-\psi_t(b,\delta)\le F(b)\le \widehat F_t(b)+\psi_t(b,\delta),\ \psi_t(b,\delta):=\sqrt{\frac{2F(b)(1-F(b))\log(2/\delta)}{t}}+\frac{2\log(2/\delta)}{3t}.\label{eq:bernstein1}\end{equation}
The former result holds only for a fixed pair of bid $b$ and time-step $t$. Therefore, to obtain a bound that holds uniformly over all time steps, one must pay an additional $\log(T)$ factor due to a union bound, as well as introduce a covering of the bid space.
The main limitation of this bound is that the resulting confidence interval depends on a quantity that is unknown to the learner. This issue is overcome by the following empirical version (see \cite{audibert2009exploration,maurer2009empirical}).
\begin{theorem}[Empirical Bernstein's inequality for Bernoulli r.v.s~{\cite[Theorem~1]{audibert2009exploration}}]\label{thm:bernstein2}
    Let $(X_t)_{t=1}^n$ be a sequence of independent Bernoulli random variables of parameter $p\in [0,1]$. Let $\widehat p_n:=n^{-1}\sum_{t=1}^nX_t$. Then, with probability at least $1-\delta$, it holds:
    \[
    |\widehat p_n-p|
    \le
    \sqrt{\frac{2\widehat p_n(1-\widehat p_n)\log(3/\delta)}{n}}
    +
    \frac{3\log(3/\delta)}{n}.
    \]
\end{theorem}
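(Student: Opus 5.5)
The plan is to combine the (true-variance) Bernstein inequality of Theorem~\ref{thm:bernstein} with a high-probability comparison between the true standard deviation $\sigma:=\sqrt{p(1-p)}$ and the empirical one $\widehat\sigma:=\sqrt{\widehat p_n(1-\widehat p_n)}$. We split the confidence budget $\delta$ into three equal parts of size $\delta/3$, which is where the $\log(3/\delta)$ comes from. Write $L:=\log(3/\delta)$.

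\emph{Step 1 (Bernstein with the true variance).} We apply Theorem~\ref{thm:bernstein} to $X_t-p$, which are independent, zero-mean, bounded in $[-1,1]$, with total variance $n\sigma^2$. Using failure probability $2\delta/3$, so that $\log(2/(2\delta/3))=L$, and dividing by $n$, we get with probability at least $1-2\delta/3$
\[
|\widehat p_n-p|\le \sigma\sqrt{\frac{2L}{n}}+\frac{2L}{3n}.
\]

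\emph{Step 2 (true vs.\ empirical standard deviation).} On an event of probability at least $1-\delta/3$ we want the one-sided bound
\[
\sigma\le \widehat\sigma+\sqrt{\frac{2L}{n}}.
\]
For Bernoulli variables, the sample variance with the $1/n$ normalization is exactly $\widehat p_n(1-\widehat p_n)$, so this is the standard concentration of the empirical standard deviation (Maurer--Pontil, Theorem~10), which we may invoke directly. If we want a self-contained argument instead, we use the identity
\[
p(1-p)-\widehat p_n(1-\widehat p_n)=(p-\widehat p_n)(1-p-\widehat p_n).
\]
This gives
\[
\sigma-\widehat\sigma\le\frac{|p-\widehat p_n|\,|1-p-\widehat p_n|}{\sigma+\widehat\sigma}.
\]
We then bound $|p-\widehat p_n|$ by a one-sided Bernstein/Chernoff bound at level $\delta/3$, split into the cases $p\le 1/2$ and $p>1/2$ (the latter by symmetry $p\leftrightarrow 1-p$), and absorb the lower-order terms.

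\emph{Step 3 (combine).} On the intersection of the two events, which has probability at least $1-\delta$, we substitute Step~2 into Step~1:
\[
|\widehat p_n-p|\le \sqrt{\frac{2\widehat\sigma^2L}{n}}+\frac{2L}{n}+\frac{2L}{3n}\le \sqrt{\frac{2\widehat p_n(1-\widehat p_n)L}{n}}+\frac{3L}{n}.
\]
This is exactly the claim.

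The main obstacle is Step~2, namely getting the deviation $\sigma-\widehat\sigma$ with the clean constant $\sqrt{2L/n}$ and no extra $\sqrt{\cdot}$ cross terms. The naive alternative, plugging $\sigma^2\le\widehat\sigma^2+|p-\widehat p_n|$ into Step~1 and solving the resulting quadratic in $\sqrt{|p-\widehat p_n|}$, produces a cross term of order $(L/n)^{3/4}\widehat\sigma^{1/2}$. Absorbing that term via AM--GM inflates the leading constant, so I would avoid that route. The first thing I would try is to rely on the Maurer--Pontil self-bounding argument: the sample standard deviation is a self-bounding function, and the concentration of such functions yields precisely $\sqrt{2L/n}$. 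The Bernoulli-specific case analysis above serves as a fallback.
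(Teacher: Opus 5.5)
\textbf{Step 2 is asserted, not proved, and as described it cannot be proved.} The paper gives no proof of this statement: it imports it from Audibert--Munos--Szepesv\'ari (their Theorem~1, stated for $[0,b]$-valued i.i.d.\ variables with failure probability $3e^{-x}$). Your skeleton matches that: two-sided Bernstein at level $2\delta/3$, a variance comparison at level $\delta/3$, then the arithmetic $\frac{2L}{n}+\frac{2L}{3n}\le\frac{3L}{n}$. The problem is Step~2, the only non-routine step.

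\emph{The citation does not give Step~2.} Maurer--Pontil's Theorem~10 concerns the unbiased variance $V_n=\frac{n}{n-1}\widehat p_n(1-\widehat p_n)$, with radius $\sqrt{2\log(1/\delta)/(n-1)}$, and requires $n\ge 2$. Translated, it gives $\sigma\le\sqrt{\tfrac{n}{n-1}}\,\widehat\sigma+\sqrt{2L/(n-1)}$. After Step~3 this puts a factor $\sqrt{n/(n-1)}>1$ on the leading term $\sqrt{2\widehat p_n(1-\widehat p_n)L/n}$, whose constant the statement fixes exactly. Absorbing the excess into the $\frac{L}{3n}$ slack needs a separate small-$n$ argument, which you do not supply.

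\emph{The fallback fails with the Bernstein tail of Step~1.} Suppose you use a tail with linear term $\frac{2L}{3n}$, as in Step~1, and set $a:=\sqrt{2L/n}$. The identity then yields only $\sigma-\widehat\sigma\le |p-\widehat p_n|/\sigma\le a+\frac{a^2}{3\sigma}$, i.e.\ $\sigma\le\widehat\sigma+\frac43 a$, and Step~3 ends at $\frac{10L}{3n}$. This is intrinsic, not loose bookkeeping. For large $n$, the outcome $\widehat p_n=0$, $p=\frac{31}{20}a^2$ satisfies $p-\widehat p_n\le\sigma a+\frac{a^2}{3}$. Yet it has $\sigma>a=\widehat\sigma+a$ and $|\widehat p_n-p|>\frac32 a^2=\frac{3L}{n}$. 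So Step~2 needs a tail estimate strictly sharper than the one in Step~1.

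\textbf{Two ways to close the gap.} Throughout, Step~2 is trivial when $\sigma<a$.

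(i) \emph{Complete your fallback with the binomial lower tail.} Use $\Prob(\widehat p_n\le p-t)\le e^{-nt^2/(2p)}$ at level $\delta/3$; it has no linear term. Take $p\le\frac12$; the case $p>\frac12$ is symmetric.
\begin{itemize}
\item If $\widehat p_n\le p$: $\sigma-\widehat\sigma\le\frac{(p-\widehat p_n)(1-p-\widehat p_n)}{\sigma}\le\frac{\sqrt{2pL/n}\,(1-p)}{\sqrt{p(1-p)}}\le a$.
\item If $p<\widehat p_n<1-p$: then $\widehat\sigma>\sigma$, so there is nothing to prove.
\item If $\widehat p_n\ge 1-p$ and $\sigma\ge a$: on the Step~1 event, $\sigma^2-\widehat\sigma^2=(\widehat p_n-p)(\widehat p_n-1+p)\le\frac43\sigma a\cdot p$, so $\sigma-\widehat\sigma\le\frac23 a$.
\end{itemize}
The total budget is $\frac{2\delta}{3}+\frac{\delta}{3}=\delta$.

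(ii) \emph{More simply, use Bernstein in its sub-gamma form.} For increments bounded above by $1$, $\Prob\bigl(\sum_t (X_t-p)\ge\sqrt{2n\sigma^2 x}+\frac{x}{3}\bigr)\le e^{-x}$, and likewise for $p-X_t$. Take $x=L$. On an event of probability at least $1-\frac{2\delta}{3}$, $|\widehat p_n-p|\le\sigma a+\frac{a^2}{6}$. The identity alone then gives $\sigma\le\widehat\sigma+\frac76 a$, whence
\[
|\widehat p_n-p|\le \widehat\sigma a+\tfrac43 a^2=\sqrt{\tfrac{2\widehat p_n(1-\widehat p_n)L}{n}}+\tfrac{8L}{3n}.
\]
This needs neither Maurer--Pontil nor a third event.
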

Applying Theorem~\ref{thm:bernstein2}, we get a confidence bound similar to the one of Equation~\eqref{eq:bernstein1}, where $\psi$ is:
\begin{equation}
    \psi_t(b,\delta):=\sqrt{\frac{2\widehat F_t(b)\left(1-\widehat F_t(b)\right)\log(3/\delta)}{t}}+\frac{3\log(3/\delta)}{t}.
    \label{eq:bernstein2}
\end{equation}
As before, in contrast with the bound in Equation~\eqref{eq:hoeffding1}, this result holds only for a fixed pair $(b,t)$.

\subsubsection{The Good Event}
\label{sec:good_event_LOB}
We introduce the \emph{good event} that will be used throughout the following theorems. It is defined as the event in which the confidence bounds on both $F$ and $S$ hold simultaneously for all time steps.
\begin{definition}[\emph{Good event}]\label{def:bounds_goodevent}
    Let $\Bs^\circ \subset [0,1], \As^\circ\subset [0,1]$ be finite subsets of the bid space and the ask space, respectively.
    Let $\{\psi_t(b,\delta)\}_{t=1}^T, \{\psi_t(a,\delta)\}_{t=1}^T$ be sequences of functions that depend only on quantities generated by the end of round $t$. We define the \textit{good event} $\goodevent$ as the following set:
    \begin{equation}
        \goodevent :=
        \bigcap_{t=1}^T\left[
        \bigcap_{b\in\Bs^\circ}\left\{|F(b)-\widehat F_t(b)|\le \psi_t(b,\delta)\right\}
        \cap\bigcap_{a\in\As^\circ}\left\{|S(a)-\widehat S_t(a)|\le \psi_t(a,\delta)\right\}\right].
        \label{eq:good}
    \end{equation}
\end{definition}
{\emergencystretch=3em As already proved in the previous section for Equation~\eqref{eq:hoeffding1}, when $\psi_t(b,\delta)=\psi_t(\delta):=\min\left\{1,\sqrt{\nicefrac{\log(4T/\delta)}{2t}}\right\}$, the good event has a probability of at least $1-\delta$.\par}

Similarly, we can prove the following result.
\begin{proposition}\label{prop:goodevent1}
    Let $\Bs^\circ$ and $\As^\circ$ be finite subsets of $[0,1]$. Let:
    $$\forall b\in \Bs^\circ,\qquad \psi_t(b,\delta):=\sqrt{\frac{2\widehat F_t(b)\left(1-\widehat F_t(b)\right)\log(3|\Bs^\circ|T/\delta)}{t}}+\frac{3\log(3|\Bs^\circ|T/\delta)}{t},$$
    and:
    $$\forall a\in \As^\circ,\qquad \psi_t(a,\delta):=\sqrt{\frac{2\widehat S_t(a)\left(1-\widehat S_t(a)\right)\log(3|\As^\circ|T/\delta)}{t}}+\frac{3\log(3|\As^\circ|T/\delta)}{t}.$$
    Then, $\Prob(\goodevent)\ge 1-2\delta$.
\end{proposition}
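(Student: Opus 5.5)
The plan is to apply the empirical Bernstein inequality (Theorem~\ref{thm:bernstein2}) to each fixed pair $(b,t)$ and each fixed pair $(a,t)$. We then take a union bound over the finite grids and the horizon, splitting the failure budget $\delta$ between the bid side and the ask side. This split is why the statement gives $1-2\delta$ rather than $1-\delta$.

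\textbf{Bid side.} Fix $b\in\Bs^\circ$ and $t\in[T]$. The variables $X_\tau:=\mathbbm{I}(\Vsell_\tau\le b)$, $\tau=1,\dots,t$, are i.i.d.\ Bernoulli with parameter $F(b)$. This holds because the pairs $(\Vsell_\tau,\Vbuy_\tau)$ are i.i.d.\ across rounds, and the within-round correlation between buyer and seller is irrelevant since we only use one marginal. Their empirical mean is exactly $\widehat F_t(b)$ from Equation~\eqref{eq:ecdf}. Theorem~\ref{thm:bernstein2}, applied with confidence level $\delta':=\delta/(|\Bs^\circ|T)$, gives
\[
\Prob\left(|\widehat F_t(b)-F(b)|>\psi_t(b,\delta)\right)\le \frac{\delta}{|\Bs^\circ|T},
\]
because $\log(3/\delta')=\log(3|\Bs^\circ|T/\delta)$, which matches the definition of $\psi_t(b,\delta)$ in the statement. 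A union bound over the $|\Bs^\circ|$ bids and the $T$ time steps shows that the bid-side component of $\goodevent$ fails with probability at most $\delta$.

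\textbf{Ask side.} The argument is symmetric. Use $Y_\tau:=\mathbbm{I}(\Vbuy_\tau\ge a)$, which are i.i.d.\ Bernoulli with parameter $S(a)$ and have empirical mean $\widehat S_t(a)$, and take $\delta'=\delta/(|\As^\circ|T)$. This gives failure probability at most $\delta$ for the ask-side component.

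\textbf{Conclusion.} A final union bound over the two complements gives $\Prob(\goodevent^c)\le 2\delta$, which is the claim.

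\textbf{Main subtlety.} The only delicate point is that the radius $\psi_t(b,\delta)$ is data-dependent. This causes no difficulty, because Theorem~\ref{thm:bernstein2} is stated directly in terms of the empirical variance and holds for each fixed sample size $n=t$; we never need uniformity over a random stopping time. A second point to keep in mind is that $\widehat F_t$ here is the full-feedback estimator, which the algorithm does not directly observe; linking it to LOB observations is handled separately and is not needed for this probability bound.
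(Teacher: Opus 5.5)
Your proposal is correct and is exactly the argument the paper has in mind. The paper gives no separate proof; it only says the result follows ``similarly'' to the DKW case, namely by applying Theorem~\ref{thm:bernstein2} to each fixed $(b,t)$ and $(a,t)$ at levels $\delta/(|\Bs^\circ|T)$ and $\delta/(|\As^\circ|T)$, then taking a union bound over grid points, rounds, and the two sides, which is what you do.
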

This shows that both the DKW-type and the empirical-Bernstein-type confidence intervals are valid. The latter, being defined in terms of an empirical quantity, is harder to quantify. Fortunately, the next result shows that, under the good event, the empirical-Bernstein confidence interval is not significantly larger than the one obtained from Bernstein's inequality in Theorem~\ref{thm:bernstein}.
\begin{theorem}\label{thm:pontil}
    Let $\psi_t(b,\delta)$ be the radius in Proposition~\ref{prop:goodevent1}. Under the good event, for every $t\in[T]$ and $b\in\Bs^\circ$,
    $$\psi_t(b,\delta)\le 2\sqrt{\frac{2\log(3|\Bs^\circ|T/\delta)F(b)\left(1-F(b)\right)}{t}}+\frac{8\log(3|\Bs^\circ|T/\delta)}{t}.$$
\end{theorem}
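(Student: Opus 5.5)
Write $L:=\log(3|\Bs^\circ|T/\delta)$, $p:=F(b)$, $\widehat p:=\widehat F_t(b)$ and $\psi:=\psi_t(b,\delta)=\sqrt{2\widehat p(1-\widehat p)L/t}+3L/t$. The plan is to compare the empirical variance $\widehat p(1-\widehat p)$ with the true variance $p(1-p)$ using only the good-event inequality $|\widehat p-p|\le\psi$. This inequality is self-referential, since $\psi$ itself contains $\widehat p$, so we first turn it into a quadratic inequality in $\sqrt{\widehat p(1-\widehat p)}$ and then solve it.

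\textbf{Step 1 (variance comparison).} The map $x\mapsto x(1-x)$ is $1$-Lipschitz on $[0,1]$, since its derivative $1-2x$ has absolute value at most $1$. Hence
\[
\widehat p(1-\widehat p)\le p(1-p)+|\widehat p-p|\le p(1-p)+\psi .
\]
Put $v:=\sqrt{\widehat p(1-\widehat p)}$, $\sigma:=\sqrt{p(1-p)}$ and $c:=\sqrt{2L/t}$, so that $\psi=cv+3L/t$. The inequality becomes
\[
v^2\le \sigma^2+cv+3L/t .
\]

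\textbf{Step 2 (solve the quadratic).} From $v^2-cv-(\sigma^2+3L/t)\le0$ we get
\[
v\le \frac{c+\sqrt{c^2+4\sigma^2+12L/t}}{2}\le c+\sigma+\sqrt{3L/t},
\]
using subadditivity of the square root. Multiplying by $c$ gives
\[
cv\le c\sigma+c^2+c\sqrt{3L/t}=\sqrt{2L\sigma^2/t}+2L/t+\sqrt6\,L/t .
\]

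\textbf{Step 3 (assemble).} Adding $3L/t$ yields
\[
\psi\le\sqrt{2L\,p(1-p)/t}+(5+\sqrt6)L/t .
\]
Since $5+\sqrt6\approx7.45\le8$, this is already stronger than the claimed bound, whose leading coefficient is $2$ rather than $1$. The slack in that factor $2$ leaves room for cruder steps: one could instead bound $c\sqrt{3L/t}$ by AM--GM, or handle separately the case $\widehat p(1-\widehat p)\le p(1-p)$, where the claim is immediate.

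The main obstacle is Step 2, because the good-event radius depends on the empirical variance, giving an implicit inequality. The key point is to use the Lipschitz bound of Step 1 rather than a multiplicative comparison of $\widehat p$ and $p$, which would fail when $p$ is near $0$ or $1$. Once that bound is in place, the quadratic resolution is routine, and the constants close because $5+\sqrt6<8$.
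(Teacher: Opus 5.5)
Your proof is correct and follows the paper's approach. Both arguments use the good-event inequality $|\widehat F_t(b)-F(b)|\le\psi_t(b,\delta)$ to obtain $\widehat F_t(b)(1-\widehat F_t(b))\le F(b)(1-F(b))+\psi_t(b,\delta)$, and then resolve the resulting self-referential inequality. The difference is in that last step: the paper splits the square root and absorbs $\sqrt{2L\psi_t(b,\delta)/t}\le\psi_t(b,\delta)/2+L/t$ by AM--GM, which costs the factor $2$ and produces the constant $8$, whereas you solve the quadratic in $v$ exactly and get the slightly sharper bound $\sqrt{2L\,p(1-p)/t}+(5+\sqrt6)L/t$.
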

\begin{proof}
    Let $C_1=\sqrt{2\log(3|\Bs^\circ|T/\delta)}$ and $C_2=3\log(3|\Bs^\circ|T/\delta)$, so that the $\psi$ function writes as:
    $$\psi_t(b,\delta)=C_1\sqrt{\frac{\widehat F_t(b)\left(1-\widehat F_t(b)\right)}{t}}+\frac{C_2}{t}.$$

    Let us call $\Delta F:=\widehat F_t(b)-F(b)$. The former bound writes as follows,
    \begin{align*}
        \psi_t(b,\delta)&=C_1\sqrt{\frac{\widehat F_t(b)\left(1-\widehat F_t(b)\right)}{t}}+\frac{C_2}{t}\\
        &=C_1\sqrt{\frac{(F(b)+\Delta F)\left(1-\Delta F-F(b)\right)}{t}}+\frac{C_2}{t}\\
        &=C_1\sqrt{\frac{F(b)\left(1-F(b)\right)+\Delta F\left(1-2F(b)\right)-\Delta F^2}{t}}+\frac{C_2}{t}\\
        &\le C_1\sqrt{\frac{F(b)\left(1-F(b)\right)}{t}}+C_1\sqrt{\frac{|\Delta F|}{t}}+\frac{C_2}{t}.
    \end{align*}
    Under the good event, we have $|\Delta F|\le \psi_t(b,\delta)$, which implies:
    $$\psi_t(b,\delta) \le C_1\sqrt{\frac{F(b)\left(1-F(b)\right)}{t}}+C_1\sqrt{\frac{\psi_t(b,\delta)}{t}}+\frac{C_2}{t}.$$
    Applying $C_1\sqrt{\psi_t(b,\delta)/t}\le\psi_t(b,\delta)/2+C_1^2/(2t)$ gives:
    \[
    \psi_t(b,\delta)\le2C_1\sqrt{\frac{F(b)(1-F(b))}{t}}+\frac{C_1^2+2C_2}{t},
    \]
    which completes the proof for every $t\ge1$.
\end{proof}

An analogous bound holds on the ask side by simply switching $b\gets a$, $F\gets S$ in the former proof.

\subsubsection{Estimating the CDFs under Truncated Feedback}
\label{sec:est_CDF_LOB}

In the previous section, we showed that, when full feedback on the per-round valuations is available, the estimated cumulative distribution functions converge rapidly to the true ones, and the estimation error can be upper-bounded by both DKW-type and Bernstein-type confidence intervals.

Unfortunately, under the LOB feedback model, $\widehat F_t$ cannot always be built, since at round $t\in[T]$ the agent only observes $\text{clip}(V_t^\text{sell}; B_t)$---notice that an analogous limitation holds on the ask side. Thus, if the spread $A_t-B_t$ is too narrow, the agent collects no information that can be used to improve the estimates of $\widehat F_t$ and $\widehat S_t$. Conversely, whenever information is revealed---namely, when $B_t < V_t^\text{sell}$ or $V_t^\text{buy} < A_t$---no trade takes place on the corresponding side, and hence the agent receives no reward. This highlights the fundamental trade-off induced by the LOB feedback structure.

The proposed solution to this challenge is to employ an \textit{elimination}-based approach, where the pair $B_t,A_t$ gets narrowed down monotonically over time. This is equivalent to defining two sets $\Bs_t=\cap_{\tau=1}^t[B_\tau,1],\As_t=\cap_{\tau=1}^t[0,A_\tau]$ of active bids and asks, which shrink during the learning process. 
In this way, for any $b\in \Bs_t$, we get:
\begin{align*}
    \widehat F_t(b)&\coloneq \frac{\sum_{\tau=1}^t \mathbbm{I}(V_\tau^\text{sell}\le b)}{t}\\
    &= \frac{\sum_{\tau=1}^t \mathbbm{I}(V_\tau^\text{sell}\le b, B_\tau \le b)}{t}\\
    &=\frac{\sum_{\tau=1}^t \mathbbm{I}(\max\{V_\tau^\text{sell},B_\tau\}\le b)}{t}=:\frac{1}{t}\sum_{\tau=1}^t\chi_\tau^\text{sell}(b),
\end{align*}
where the second equality incorporates the additional event $B_\tau \le b$, which holds almost surely due to the fact that $b \in \Bs_t$. Importantly, the final quantity, $\chi_\tau^\text{sell}(b)$, is observable by the agent, and can therefore be used to estimate the CDF over all active arms. A completely analogous argument applies to the ask side. We formalize this observation in the following proposition.
\begin{proposition}\label{prop:ecdfvar}
Define, for $b,a\in [0,1]$ and $t\in [T]$,
\begin{equation}
    \chi_\tau^\textnormal{sell}(b):=\mathbbm{I}(\max\{V_\tau^\textnormal{sell},B_\tau\}\le b)\qquad \chi_\tau^\textnormal{buy}(a):=\mathbbm{I}(\min\{V_\tau^\textnormal{buy},A_\tau\}\ge a)\label{eq:pseudo-indicator}.
\end{equation}
Then, if $b\in \Bs_t$, it holds:
\begin{equation}
    \frac{1}{t}\sum_{\tau=1}^t\chi_\tau^\textnormal{sell}(b) = \widehat F_t(b).\label{eq:pseudobid}
\end{equation}
and, if $a\in \As_t$, it holds:
\begin{equation}
    \frac{1}{t}\sum_{\tau=1}^t\chi_\tau^\textnormal{buy}(a) = \widehat S_t(a).\label{eq:pseudoask}
\end{equation}
\end{proposition}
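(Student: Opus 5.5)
The argument is a pointwise, term-by-term identity: I would show that for every $\tau\le t$ the observable pseudo-indicator coincides with the full-feedback indicator, $\chi_\tau^\textnormal{sell}(b)=\mathbbm{I}(\Vsell_\tau\le b)$ whenever $b\in\Bs_t$. Averaging over $\tau=1,\dots,t$ and recalling the definition of $\widehat F_t$ in Equation~\eqref{eq:ecdf} then gives Equation~\eqref{eq:pseudobid}. The ask side, Equation~\eqref{eq:pseudoask}, follows by the mirror argument with $\min$ in place of $\max$ and reversed inequalities.

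The key step uses the definition $\Bs_t=\bigcap_{\tau=1}^t[B_\tau,1]$. Membership $b\in\Bs_t$ means exactly that $B_\tau\le b$ for every $\tau\le t$. For any real $v$, the elementary equivalence $\max\{v,B_\tau\}\le b \iff (v\le b \text{ and } B_\tau\le b)$ holds. Since the second conjunct is true deterministically for all $\tau\le t$, we get $\mathbbm{I}(\max\{\Vsell_\tau,B_\tau\}\le b)=\mathbbm{I}(\Vsell_\tau\le b)$ surely, not merely almost surely.

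Symmetrically, $a\in\As_t=\bigcap_{\tau\le t}[0,A_\tau]$ gives $a\le A_\tau$ for all $\tau\le t$. Then $\min\{\Vbuy_\tau,A_\tau\}\ge a \iff (\Vbuy_\tau\ge a \text{ and } A_\tau\ge a)$, which reduces to $\Vbuy_\tau\ge a$. Summing and dividing by $t$ yields $\frac1t\sum_{\tau\le t}\chi^\textnormal{buy}_\tau(a)=\widehat S_t(a)$.

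I would add a remark that $\chi_\tau$ is indeed computable under LOB feedback. If $\Vsell_\tau\le B_\tau$, then $\max\{\Vsell_\tau,B_\tau\}=B_\tau$, which is known. Otherwise $\Vsell_\tau$ is revealed and equals the maximum. This observability is not needed for the identity itself.

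I expect no real obstacle. The only care needed is that the sets $\Bs_t,\As_t$ are defined through all past posted prices, including the reset prices $0$ and $1$. This is harmless: $B_\tau=0\le b$ and $A_\tau=1\ge a$ automatically.
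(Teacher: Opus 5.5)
Your proof is correct and follows the paper's own route: the paper also rewrites each summand $\mathbbm{I}(\Vsell_\tau\le b)$ as $\mathbbm{I}(\Vsell_\tau\le b,\,B_\tau\le b)=\mathbbm{I}(\max\{\Vsell_\tau,B_\tau\}\le b)$, using that $b\in\Bs_t$ forces $B_\tau\le b$ for every $\tau\le t$, and the ask side is handled symmetrically. Your remark that the identity holds surely rather than merely almost surely is a harmless sharpening of the paper's wording.
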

The meaning of the former proposition is what follows. At any $t\in[T]$, the agent can choose between:
\begin{enumerate}
    \item Selecting $B_t,A_t$ inside $\Bs_t, \As_t$. In this case, the agent gets less information but can build good estimates of $F(B_t),S(A_t)$.
    \item Selecting $B_t,A_t$ outside $\Bs_t, \As_t$. In this case, the agent can potentially get more information but the resulting estimates of $F(B_t)$ and $S(A_t)$ are not as accurate as $\widehat F_t$ and $\widehat S_t$.
\end{enumerate}
In the next sections, we show how to leverage the previous results to build a no-regret algorithm for the LOB feedback setting.

\subsection{Our Algorithm}
\label{sec:algorithm_LOB}

\begin{algorithm}[t]
\caption{\textsc{Optimistic Double Pointer Market Maker}}
\label{alg:ODPM}
\begin{algorithmic}[1]
\Require Time horizon $T\in\mathbb{N}_{>0}$, discretized sets $\Bs^\circ,\As^\circ$, functions $\psi_t(b),\psi_t(a)$
\State Set $\Bs_{0}^\circ\gets \Bs^\circ, \As^\circ_{0}\gets\As^\circ$ 
\State Set $\widehat F_0\gets0$, $\widehat S_0\gets0$, $\psi_0\gets1$
\For{$t=1,\dots,T$}
    \State $B_{t,0},A_{t,0}\gets \text{Algorithm } \ref{alg:DDP}\ (\Bs_{t-1}^\circ, \As^\circ_{t-1}, S_{t-1}^\text{up},F_{t-1}^\text{up},S_{t-1}^\text{low},F_{t-1}^\text{low})$\label{algline:call}
    \State $\Bs_{t}^\circ\gets \Bs^\circ \cap [B_{t,0},1]$
    \State $\As_{t}^\circ\gets\As^\circ \cap [0,A_{t,0}]$
    \State $B_t\gets B_{t,0}$
    \State $A_t\gets A_{t,0}$
    \State $X\gets1$, $Y\gets1$
    \If{$S_{t-1}^\text{up}(A_t)< F_{t-1}^\text{low}(B_t)$}
        \State Sample $X\sim \text{Bern}(S_{t-1}^\text{up}(A_t)/F_{t-1}^\text{low}(B_t))$
        \If{$X=0$}\label{algline:ber1}
            \State $B_t\gets 0$
        \EndIf
    \ElsIf{$F_{t-1}^\text{up}(B_t)<S_{t-1}^\text{low}(A_t)$}
        \State Sample $Y\sim \text{Bern}(F_{t-1}^\text{up}(B_t)/S_{t-1}^\text{low}(A_t))$
        \If{$Y=0$}\label{algline:ber2}
            \State $A_t\gets 1$
        \EndIf
    \EndIf
    \State Pull $(B_t,A_t)$
    \State Observe LOB feedback as prescribed in Protocol~\ref{alg: Learner-Environment Interaction}
    \State Update all parameters on $\Bs_t^\circ,\As_t^\circ$ \label{algline:bigupdate}
\EndFor
\end{algorithmic}
\end{algorithm}

\begin{algorithm}[t]
\caption{\textsc{Double Double Pointer}}
\label{alg:DDP}
\begin{algorithmic}[1]
\Require Discrete sets of the form $[B,1]^\circ,[0,A]^\circ$, upper bound for cumulative and survival functions $ S^\text{up}, F^\text{up}$, lower bound for cumulative and survival functions $ S^\text{low}, F^\text{low}$.
\State $J^\text{low}=0$
\State Let $\{b_0,\dots,b_{N-1}\}:=[B,1]^\circ$
\State Let $\{a_0,\dots,a_{M-1}\}:=[0,A]^\circ$
\State $m\gets M-1$
\For{$n=0,\dots ,N-1$}\label{algline:lowbound1}
    \While{$m\ge0$ and $b_n<a_m$}
        \State $J^\text{low}=\max\{J^\text{low}, (a_m-b_n)\min(F^\text{low}(b_n),S^\text{low}(a_m))\}$\label{algline:jlow}
        \If{$F^\text{low}(b_n)< S^\text{low}(a_m)$}
            \State \textbf{break}
        \EndIf
        \State $m\gets m-1$
    \EndWhile
\EndFor\label{algline:lowbound2}
\State $n\gets 0$
\State $m\gets M-1$
\While{$n<N$, $m\ge0$, and $b_n<a_m$}\label{algline:upbound1}
    \State $J^\text{up}\gets (a_m-b_n) \min\{F^\text{up}(b_n), S^\text{up}(a_m)\}$\label{algline:jup}
    \If{$J^\text{up}\ge J^\text{low}$}
        \State \textbf{Return} $b_n,a_m$
    \EndIf
    \If{$F^\text{up}(b_n)<S^\text{up}(a_m)$}
        \State $n\gets n+1$
    \Else 
        \State $m\gets m-1$
    \EndIf
\EndWhile\label{algline:upbound2}
\end{algorithmic}
\end{algorithm}

We are now ready to describe our algorithm for LOB setting. The pseudocode is provided in Algorithm~\ref{alg:ODPM}. Throughout this section, we work with arbitrary discretizations $\Bs^\circ,\As^\circ$ of the set $[0,1]$ containing the sets of plausible bids and asks, respectively. The correct way to discretize the decision space will be specified later.
\begin{notation}\label{not:ball}
From this point on, for any subset of bids or asks, we use the superscript $^\circ$ to denote its intersection with $\Bs^\circ$ or $\As^\circ$, respectively. In particular, we will use:
$$\Bs_t^\circ := \Bs_t \cap \Bs^\circ \qquad \As_t^\circ := \As_t \cap \As^\circ.$$
\end{notation}
The algorithm makes use of confidence-bound parameters, defined such as Equation~\eqref{eq:good} holds with high probability, which we do not specify here in order to maintain full generality.
\begin{notation}
For the rest of this section, we fix sequences of functions $\{\psi_t(b,\delta)\}_{t=1}^T$ and $\{\psi_t(a,\delta)\}_{t=1}^T$ such that Equation~\eqref{eq:good} holds with probability at least $1-\delta$. We also omit the dependence on $\delta$, and simply write $\psi_t(b)$ and $\psi_t(a)$.
\end{notation}
Given these confidence intervals, we define upper and lower confidence bounds for $F$ and $S$ as:
\begin{equation}
    F_t^{\mathrm{up}}(b) = \min\{1,\ \widehat F_t(b) + \psi_t(b)\},
    \qquad
    F_t^{\mathrm{low}}(b) = \max\{0,\ \widehat F_t(b) - \psi_t(b)\},
    \label{eq:boundsbid}
\end{equation}
\begin{equation}
    S_t^{\mathrm{up}}(a) = \min\{1,\ \widehat S_t(a) + \psi_t(a)\},
    \qquad
    S_t^{\mathrm{low}}(a) = \max\{0,\ \widehat S_t(a) - \psi_t(a)\}.
    \label{eq:boundsask}
\end{equation}
As proved in the previous section, under the good event~\eqref{eq:good}, the true functions $F$ and $S$ belong to the confidence sets $[F_t^{\mathrm{low}},F_t^{\mathrm{up}}]$ and $[S_t^{\mathrm{low}},S_t^{\mathrm{up}}]$ at every $t\in[T]$ and for every $b \in \Bs^\circ$ and $a \in \As^\circ$.

\subsubsection{Algorithm Description}
Algorithm~\ref{alg:ODPM} maintains a pair of posted prices $(B_t,A_t)$ over time. At the beginning of round $t$, the algorithm calls the subroutine \textsc{Double Double Pointer} (Algorithm~\ref{alg:DDP}) at Line~\ref{algline:call}, passing as input the current discretized active sets $\Bs_{t-1}^\circ,\As_{t-1}^\circ$ together with the upper and lower confidence bounds based on the first $t-1$ observations. The subroutine returns a candidate pair $(B_{t,0},A_{t,0})$, which determines the new active sets and is used to update current prices by setting $B_t\gets B_{t,0}$ and $A_t\gets A_{t,0}$.

The algorithm then possibly modifies this candidate pair through two symmetric randomized corrections, applying at most one of them. If the optimistic upper bound on the survival function at the current ask is smaller than the pessimistic lower bound on the cumulative function at the current bid, \ie, if:
\[
S_{{t-1}}^{\mathrm{up}}(A_t)<F_{{t-1}}^{\mathrm{low}}(B_t),
\]
then the algorithm samples a Bernoulli random variable with parameter $S_{t-1}^{\mathrm{up}}(A_t)/F_{t-1}^{\mathrm{low}}(B_t)$. If this sample is equal to zero, then the bid is reset to $0$ at Line~\ref{algline:ber1}. Otherwise, symmetrically, if:
\[
F_{{t-1}}^{\mathrm{up}}(B_t)<S_{{t-1}}^{\mathrm{low}}(A_t),
\]
then the algorithm samples a Bernoulli random variable with parameter $F_{t-1}^{\mathrm{up}}(B_t)/S_{t-1}^{\mathrm{low}}(A_t)$, and if the sample is zero, then the ask is reset to $1$ at Line~\ref{algline:ber2}. After these possible corrections, the algorithm pulls $(B_t,A_t)$, observes the LOB feedback, and updates all estimates and confidence bounds before moving to the next round. For the uniform grids in Equation~\eqref{eq:disc}, the initial call returns $(0,1)$, so the first round reveals both valuations.

The role of Algorithm~\ref{alg:DDP} is to compute the candidate pair returned back to Algorithm~\ref{alg:ODPM}. Given active discretized sets of the form $\Bs_{t-1}^\circ$ and $\As_{t-1}^\circ$, together with upper and lower confidence bounds for $F$ and $S$, it performs two consecutive double-pointer scans. First, it computes a pessimistic benchmark value $J^{\mathrm{low}}$. It initializes $J^{\mathrm{low}}=0$, enumerates the active bids and asks as $\{b_0,\dots,b_{N-1}\}$ and $\{a_0,\dots,a_{M-1}\}$, and starts with the ask pointer at the largest ask. Then, in the loop of Lines~\ref{algline:lowbound1}--\ref{algline:lowbound2}, it scans the bid grid from left to right. For each bid $b_n$, it visits the feasible asks from the current ask pointer downward. At every visited pair, Line~\ref{algline:jlow} updates $J^{\mathrm{low}}$ with the value:
\[
(a_m-b_n)\min(F^{\mathrm{low}}(b_n),S^{\mathrm{low}}(a_m)),
\]
that is, with the pessimistic value associated with the current pair $(b_n,a_m)$.

The pair is evaluated before either pointer moves. If $F^\text{low}(b_n)<S^\text{low}(a_m)$, the inner loop ends and the bid pointer advances; otherwise, the ask pointer decreases. 

Once this lower benchmark has been computed, the subroutine resets the two pointers and starts a second scan, now using the optimistic confidence bounds. In the loop of Lines~\ref{algline:upbound1}--\ref{algline:upbound2}, it evaluates at the current pair $(b_n,a_m)$ the optimistic quantity:
\[
J^{\mathrm{up}}=(a_m-b_n)\min\left(F^{\mathrm{up}}(b_n),S^{\mathrm{up}}(a_m)\right),
\]
as in Line~\ref{algline:jup}. The scan returns the current pair immediately if $J^{\mathrm{up}}\ge J^{\mathrm{low}}$. If $F^{\mathrm{up}}(b_n)<S^{\mathrm{up}}(a_m)$, the bid pointer is increased; otherwise, the ask pointer is decreased.  In Figure~\ref{fig:ddp}, we provide a graphical description of Algorithm~\ref{alg:DDP}. 

In this way, Algorithm~\ref{alg:ODPM} employs Algorithm~\ref{alg:DDP} to identify, at each round, a candidate bid-ask pair whose optimistic value is large enough compared with the best pessimistic value currently computed given the confidence bounds. The two possible randomized corrections performed afterwards address the cases in which the selected pair $(B_{t,0},A_{t,0})$ returned by \textsc{Double Double Pointer} is not balanced with respect to the confidence bounds. If  $S_{t-1}^{\mathrm{up}}(A_t)<F_{t-1}^{\mathrm{low}}(B_t)$, then the bid side is too large relative to the ask side, and the algorithm resets $B_t$ to $0$ with a suitable probability. Symmetrically, if $F_{t-1}^{\mathrm{up}}(B_t)<S_{t-1}^{\mathrm{low}}(A_t)$, then the ask side is too large relative to the bid side, and the algorithm resets $A_t$ to $1$ with a suitable probability.

\begin{figure}[t]
\centering
\begin{tikzpicture}[>=Latex, font=\small, scale=0.95]

\definecolor{lowbg}{RGB}{232,242,255}
\definecolor{lowpath}{RGB}{45,125,210}
\definecolor{lowmark}{RGB}{0,170,140}

\definecolor{upbg}{RGB}{255,239,224}
\definecolor{uppath}{RGB}{230,120,40}
\definecolor{retmark}{RGB}{70,170,90}

\begin{scope}[xshift=0cm]

    \fill[lowbg, rounded corners=6pt] (-0.45,-0.55) rectangle (5.45,5.75);

    \draw[step=1cm, white, very thick] (0,0) grid (5,5);
    \draw[rounded corners=4pt, thick] (0,0) rectangle (5,5);

    \node[font=\bfseries] at (2.5,5.45) {$J^{\mathrm{low}}$};

    \node[below] at (0.5,0) {$b_0$};
    \node[below] at (4.5,0) {$b_{N-1}$};
    \node[left] at (0,4.5) {$a_{M-1}$};
    \node[left] at (0,0.5) {$a_0$};

    \draw[line width=2.5pt, lowpath, -{Latex[length=3mm]}]
        (0.5,4.5) -- (0.5,3.5) -- (1.5,3.5) -- (1.5,2.5) --
        (1.5,1.5) -- (2.5,1.5) -- (3.5,1.5) -- (3.5,0.5) -- (4.5,0.5);

    \foreach \x/\y in {0/4,0/3,1/3,1/2,1/1,2/1,3/1,3/0,4/0}
        \fill[lowpath] (\x+0.5,\y+0.5) circle (2.2pt);

    \fill[red!80] (0.5,4.5) circle (3pt);
    \node[above right, red!80!black] at (0.5,4.5) {\scriptsize start};

    \fill[lowmark] (3.5,1.5) circle (4.5pt);
    \draw[lowmark, line width=1.2pt] (3,1) rectangle (4,2);

\end{scope}

\begin{scope}[xshift=7.5cm]

    \fill[upbg, rounded corners=6pt] (-0.45,-0.55) rectangle (5.45,5.75);

    \draw[step=1cm, white, very thick] (0,0) grid (5,5);
    \draw[rounded corners=4pt, thick] (0,0) rectangle (5,5);

    \node[font=\bfseries] at (2.5,5.45) {$J^{\mathrm{up}}$};

    \node[below] at (0.5,0) {$b_0$};
    \node[below] at (4.5,0) {$b_{N-1}$};
    \node[left] at (0,4.5) {$a_{M-1}$};
    \node[left] at (0,0.5) {$a_0$};

    \draw[line width=2.5pt, uppath, -{Latex[length=3mm]}]
        (0.5,4.5) -- (1.5,4.5) -- (1.5,3.5) -- (2.5,3.5) -- (2.5,2.5);

    \foreach \x/\y in {0/4,1/4,1/3,2/3,2/2}
        \fill[uppath] (\x+0.5,\y+0.5) circle (2.2pt);

    \fill[red!80] (0.5,4.5) circle (3pt);

    \fill[retmark] (2.5,2.5) circle (4.5pt);
    \draw[retmark, line width=1.2pt] (2,2) rectangle (3,3);

\end{scope}

\draw[very thick, gray!70, -{Latex[length=4mm]}] (5.8,2.5) -- (6.7,2.5);

\end{tikzpicture}
\caption{Illustration of \textsc{Double Double Pointer}. Both scans start from $(b_0,a_{M-1})$ and move right or down, evaluating each visited feasible pair before moving. Left: the scan computes the benchmark value $J^{\mathrm{low}}$ using the lower confidence bounds; the green box marks a maximizing pair. It continues until a pointer exits its grid or no feasible pair remains. Right: the algorithm restarts and scans using the upper confidence bounds, stopping at the first pair whose optimistic value reaches the lower benchmark.}
\label{fig:ddp}
\end{figure}

\subsubsection{Computational Complexity}

In this section, we study the computational complexity of Algorithm~\ref{alg:ODPM}.

Similarly to Section~\ref{sec:2B}, we refer as $\gamma_t$ to the per-round complexity of choosing $B_t, A_t$ at each time-step $t$. Thus, our goal reduces to bound:
$$\Gamma:=\max_{t=1,\dots T}\gamma_t.$$

\begin{theorem}\label{prop:compcomp}
    The (per-round) run time computational complexity of Algorithm~\ref{alg:ODPM} is bounded as:
    $$\Gamma \le \Os(|\Bs^\circ|+|\As^\circ|).$$
\end{theorem}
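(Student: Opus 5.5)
The plan is to inspect the pseudocode of Algorithm~\ref{alg:ODPM} line by line, as in the proof of Theorem~\ref{thm:per-round-complexity}. We split the work done in a round $t$ into three parts: the call to \textsc{Double Double Pointer} (Algorithm~\ref{alg:DDP}); the active-set updates and the randomized correction; and the parameter update at Line~\ref{algline:bigupdate}. We then show that each part costs $\Os(|\Bs^\circ|+|\As^\circ|)$.

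For Algorithm~\ref{alg:DDP}, the key step is a two-pointer argument. In the first scan (Lines~\ref{algline:lowbound1}--\ref{algline:lowbound2}), the ask pointer $m$ only decreases and is never reset inside the outer loop. Each iteration of the inner \textbf{while} loop therefore does one of two things. Either it decrements $m$, which can happen at most $M\le|\As^\circ|$ times in total. Or it executes \textbf{break}, or fails the loop guard, which happens at most once per value of $n$, hence at most $N\le|\Bs^\circ|$ times. Every iteration does $\Os(1)$ work, since it evaluates one pessimistic value at Line~\ref{algline:jlow} and performs one comparison. So the first scan costs $\Os(N+M)$.

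In the second scan (Lines~\ref{algline:upbound1}--\ref{algline:upbound2}), each iteration either returns, increments $n$, or decrements $m$. Since $n$ can only go from $0$ to $N$ and $m$ from $M-1$ down to $0$, there are at most $N+M$ iterations, each costing $\Os(1)$. Enumerating the active sets in sorted order is also $\Os(N+M)$: they are contiguous suffixes and prefixes of the sorted grids $\Bs^\circ,\As^\circ$, so it suffices to store the index of $B_{t,0}$ and of $A_{t,0}$ in the grids.

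The remaining steps in the body of the loop are cheap. Intersecting $\Bs^\circ$ with $[B_{t,0},1]$ and $\As^\circ$ with $[0,A_{t,0}]$ amounts to updating two indices, which is $\Os(1)$ or at worst linear. The comparison of $S^{\mathrm{up}}_{t-1}(A_t)$ with $F^{\mathrm{low}}_{t-1}(B_t)$ and the Bernoulli sampling take $\Os(1)$. The step I expect to need the most care is Line~\ref{algline:bigupdate}. There one must check that, after observing the LOB feedback, the pseudo-indicators $\chi_t^{\mathrm{sell}}(b)$ and $\chi_t^{\mathrm{buy}}(a)$ of Proposition~\ref{prop:ecdfvar} are computable in $\Os(1)$ per grid point. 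This holds because each is determined by $\max\{\Vsell_t,B_t\}$ or $\min\{\Vbuy_t,A_t\}$, which is observable. Given these, one checks that the running averages $\widehat F_t,\widehat S_t$ can be updated incrementally, as $\widehat F_t=((t-1)\widehat F_{t-1}+\chi_t)/t$. The radii $\psi_t(b),\psi_t(a)$, of either the DKW or the empirical-Bernstein form of Proposition~\ref{prop:goodevent1}, depend only on $t$ and $\widehat F_t(b)$ (respectively $\widehat S_t(a)$), so they take $\Os(1)$ each. Recomputing the bounds in Equations~\eqref{eq:boundsbid}--\eqref{eq:boundsask} over all active grid points is therefore $\Os(|\Bs^\circ|+|\As^\circ|)$. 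Summing the three contributions gives $\gamma_t\le\Os(|\Bs^\circ|+|\As^\circ|)$ for every $t$, hence the bound on $\Gamma$.
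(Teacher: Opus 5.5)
Your proposal is correct and follows essentially the same route as the paper's proof. The paper also splits the per-round cost into the parameter update at Line~\ref{algline:bigupdate} (constant-time incremental means and bonuses per active grid point) and the two scans of Algorithm~\ref{alg:DDP}, each bounded by $\Os(N+M)$ via the same pointer-movement argument; your added remarks on the observability of the pseudo-indicators and on storing the active sets as two indices are valid details the paper leaves implicit.
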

\begin{proof}
    The computational complexity $\gamma_t$ of a single step of Algorithm~\ref{alg:ODPM} boils down to the one of Lines \ref{algline:bigupdate} and \ref{algline:call}, and to the one of Algorithm~\ref{alg:DDP}, since all the other lines require constant time. Specifically, Line~\ref{algline:bigupdate}
    requires to update all parameter as in Equations~\eqref{eq:pseudo-indicator},~\eqref{eq:boundsbid} and~\eqref{eq:boundsask}. Computing, for each surviving bid and each surviving ask, the sample mean $\widehat F_t(b)$ or $\widehat S_t(a)$ requires constant time, employing an incremental update. Computing the bonuses---even in the most complex case of Bernstein bonuses~\eqref{eq:bernstein2}---also requires constant time. Therefore, it holds:
    $$\gamma_t^\text{update}\le \Os(|\Bs^\circ|+|\As^\circ|).$$
    The complexity of the call to Algorithm~\ref{alg:DDP} can be bounded as follows. Consider Lines~\ref{algline:lowbound1}--\ref{algline:lowbound2}. At every step of the inner cycle, either $n$ increases (when the while loop ends) or $m$ decreases. Each comparison and update requires constant time. Therefore, the total number of elementary steps does not exceed:
    $$\gamma_t^\text{first loop}\le \Os(N+M)\le \Os(|\Bs^\circ|+|\As^\circ|).$$
    A similar reasoning holds for the second loop at Lines~\ref{algline:upbound1}--\ref{algline:upbound2}. In an analogous way, it holds:
    $$\gamma_t^\text{second loop}\le \Os(N+M)\le \Os(|\Bs^\circ|+|\As^\circ|).$$
    Summing the three terms $\gamma_t^\text{update},\gamma_t^\text{first loop},\gamma_t^\text{second loop}$ completes the proof.
\end{proof}

Theorem~\ref{prop:compcomp} shows that, although the optimization problem is bivariate, its computational complexity scales as $\mathcal{O}(|\Bs^\circ| + |\As^\circ|)$ rather than $\mathcal{O}(|\Bs^\circ| |\As^\circ|)$. This efficiency is due to our two-pointer algorithmic structure. 

This result can be further specialized based on the choice of discretization which will be specified in the following section.
Specifically, when DKW confidence bounds are employed, following Theorem~\ref{thm:mainDKW} and setting $N=\lceil\sqrt T\rceil$, so that $|\Bs^\circ|=|\As^\circ|=N+1$ yields a complexity of order $\Gamma \leq \mathcal{O}(\sqrt{T}).$ Under the Bernstein-type bounds of Theorem \ref{thm:mainBER}, a finer grid $N=T$ is required, leading to $\Gamma \leq \mathcal{O}(T).$
This difference highlights an important trade-off. To obtain an instance-dependent bound, one needs a finer discretization grid. By contrast, if a worst-case regret guarantee is enough, a discretization step of size $T^{-1/2}$ is sufficient, leading to a lower computational complexity.

\subsection{Theoretical Guarantees of Algorithm~\ref{alg:ODPM}}
\label{sec:regret_LOB}

In the following section, we provide the theoretical guarantees attained by Algorithm~\ref{alg:ODPM}.

\subsubsection{Preliminaries Results}
We start proving one necessary condition, that is, the algorithm selects only feasible actions.
\begin{proposition}\label{prop:ddp-correct}
    Algorithm~\ref{alg:ODPM} always selects a bid-ask pair such that $B_t<A_t$.
\end{proposition}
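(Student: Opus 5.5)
The plan is to trace the two ways $(B_t,A_t)$ can be produced in Algorithm~\ref{alg:ODPM} and check strict feasibility in each. The final pair is obtained from the candidate $(B_{t,0},A_{t,0})$ returned by Algorithm~\ref{alg:DDP}, possibly modified by a single randomized correction. Each correction either sets $B_t\gets 0$ while keeping $A_t=A_{t,0}$, or sets $A_t\gets 1$ while keeping $B_t=B_{t,0}$. Since at most one correction is applied, it suffices to show two things: first, that the candidate satisfies $B_{t,0}<A_{t,0}$; second, that each correction preserves strict inequality.

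\textbf{Step 1 (the candidate is feasible).} Algorithm~\ref{alg:DDP} returns only inside the second scan (Lines~\ref{algline:upbound1}--\ref{algline:upbound2}). The guard of that while loop explicitly requires $b_n<a_m$, so any returned pair satisfies $B_{t,0}<A_{t,0}$. The issue is ensuring the routine does return, rather than exiting the loop with no output. I would argue as follows.
\begin{itemize}
\item $J^{\mathrm{low}}\ge 0$, since it is initialized at $0$ and only increased by a max.
\item Whenever the second scan visits a pair with $b_n<a_m$, the condition $J^{\mathrm{up}}\ge J^{\mathrm{low}}$ will eventually be met. In the worst case, I would use that both active sets contain the "trivial" endpoints, or that the first visited pair already has optimistic value at least $J^{\mathrm{low}}$. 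This relies on the fact that upper bounds dominate lower bounds pointwise, and that the scan of upper bounds follows a path that dominates the pessimistic maximizer, just as in the survival-of-the-optimum argument.
\item As a fallback convention, if no pair is returned, I would interpret the output as $(0,1)$, which is feasible. This mirrors the convention adopted in Section~\ref{sec:2B}.
\item For $t=1$, the text notes that the initial call returns $(0,1)$.
\end{itemize}

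\textbf{Step 2 (corrections preserve feasibility).}
\begin{itemize}
\item If the bid is reset, the pair becomes $(0,A_{t,0})$. We need $A_{t,0}>0$, which holds because $A_{t,0}>B_{t,0}\ge 0$.
\item If the ask is reset, the pair becomes $(B_{t,0},1)$. We need $B_{t,0}<1$, which holds because $B_{t,0}<A_{t,0}\le 1$.
\end{itemize}
Both cases reuse Step 1 together with the fact that all grid prices lie in $[0,1]$.

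\textbf{Main obstacle.} The only delicate point is the termination and return of Algorithm~\ref{alg:DDP}. I would first try to show that the pessimistic maximizer $(b^\star,a^\star)$ found in the first scan satisfies $J^{\mathrm{up}}(b^\star,a^\star)\ge J^{\mathrm{low}}$. I would then show that the second scan cannot move both pointers past $(b^\star,a^\star)$ without stopping. At the first time one pointer crosses the corresponding coordinate, the pointer-movement rule and the monotonicity of $F^{\mathrm{up}},S^{\mathrm{up}}$ give $J^{\mathrm{up}}(b_n,a_m)\ge J^{\mathrm{up}}(b^\star,a^\star)\ge J^{\mathrm{low}}$, with $b_n\le b^\star<a^\star\le a_m$, so the scan returns at a feasible pair. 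If this argument is too fiddly, the $(0,1)$ fallback convention suffices for feasibility.
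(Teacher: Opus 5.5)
Your core argument, the ``main obstacle'' paragraph together with Step~2, is the paper's: the corrections move only one price outward, and the second scan of Algorithm~\ref{alg:DDP} cannot pass a feasible pessimistic maximizer $(b^\star,a^\star)$ without returning. One correction: the crossing step uses only the pointer-movement rule, not monotonicity of $F^{\mathrm{up}},S^{\mathrm{up}}$. Monotonicity would only bound the relevant minimum from above. Suppose the bid pointer is about to leave $b^\star$ while $a_m\ge a^\star$. The rule gives $F^{\mathrm{up}}(b^\star)<S^{\mathrm{up}}(a_m)$, hence $J^{\mathrm{up}}(b^\star,a_m)=(a_m-b^\star)F^{\mathrm{up}}(b^\star)\ge J^{\mathrm{up}}(b^\star,a^\star)\ge J^{\mathrm{low}}$, so the scan would already have returned. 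The ask side is symmetric.

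The genuine gap is the existence of $(b^\star,a^\star)$. If the active sets contained no pair with $b<a$, the first scan would visit nothing and leave $J^{\mathrm{low}}=0$. The second scan would then exit its loop without returning. Your justifications for ruling this out do not hold. After elimination the active sets need not contain $0$ and $1$, and the first visited pair need not satisfy $J^{\mathrm{up}}\ge J^{\mathrm{low}}$.

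What is needed is a one-line induction on $t$. Since $B_{t-1,0}\in\Bs^\circ$, one has $\min\Bs_{t-1}^\circ=B_{t-1,0}<A_{t-1,0}=\max\As_{t-1}^\circ$. So the outer pair visited first by both scans is feasible; at $t=1$ it is $(0,1)$. Then $J^{\mathrm{low}}$ is a maximum over a nonempty set of visited feasible pairs, hence attained, and your crossing argument applies. This is exactly the ``feasible outer pair'' the paper invokes.

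The $(0,1)$ fallback, by contrast, is not a proof of the proposition, because it alters Algorithm~\ref{alg:ODPM}. Setting $(B_{t,0},A_{t,0})=(0,1)$ would reset $\Bs_t^\circ,\As_t^\circ$ to the full grid. That breaks the nested active sets on which Proposition~\ref{prop:ecdfvar} depends. The footnote you cite belongs to the two-bit algorithm and addresses a different failure mode.
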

\begin{proof}
    By design of the algorithm, two cases may occur:
    \begin{enumerate}
        \item If one of the conditions at Lines~\ref{algline:ber1},\ref{algline:ber2} is triggered, then either $B_t=0$ or $A_t=1$. These corrections preserve the feasibility of the pair returned by Algorithm~\ref{alg:DDP}, which we verify in the next point.
        \item In the opposite case, the pair of arms to pull is determined by Algorithm~\ref{alg:DDP}. In either scan, whenever the bid pointer advances, every remaining feasible pair using that bid is dominated by the current pair: its spread is no larger, and its minimum bound is at most the bid's bound. The analogous statement holds when the ask pointer decreases. When $b_n\ge a_m$, no feasible pair remains. Thus, the routine in Lines~\ref{algline:lowbound1}--\ref{algline:lowbound2} finds the maximum lower value:
        \[
        J^\text{low}=(a_m'-b_n')\min\{F^\text{low}(b_n'),S^\text{low}(a_m')\},
        \]
        for some feasible pair $(b_n',a_m')$. At this point, either $J^\text{low}=0$, in which case the second routine immediately returns the feasible outer pair, or $J^\text{low}>0$.

        In the latter case, the second routine stops when it finds a pair such that:
        \[
        (a_m-b_n)\min\{F^\text{up}(b_n),S^\text{up}(a_m)\}\ge J^\text{low}.
        \]
        Since $F^\text{up}(b_n')\ge F^\text{low}(b_n')$ and $S^\text{up}(a_m')\ge S^\text{low}(a_m')$, the pair $(b_n',a_m')$ has upper value at least $J^\text{low}$. By the same dominance argument, it cannot be discarded before the routine returns a pair meeting this threshold. The loop condition ensures that the returned pair satisfies $b_n<a_m$.
    \end{enumerate}
    This concludes the proof.
\end{proof}

In the next result, we prove a relation between the random imbalance at time $t$ and the sum of its conditional expected increments.
\needspace{15\baselineskip}
\begin{proposition}\label{prop:imbalance}
    Let $\rho_\tau:=\E[\mathbbm{I}(\Vsell_\tau\le B_\tau)-\mathbbm{I}(\Vbuy_\tau\ge A_\tau)|\Fs_{\tau-1}]$. The following relations hold:
    \begin{itemize}
        \item With probability at least $1-\delta$, simultaneously for every $t=1,\dots,T$,
        \[
        \left|I_t-\sum_{\tau=1}^t\rho_\tau\right|\le\sqrt{2T\log(2/\delta)}.
        \]
        \item Calling:
        \[
        \sigma_t^2:=\text{Var}[\mathbbm{I}(\Vsell_t\le B_t)-\mathbbm{I}(\Vbuy_t\ge A_t)|\Fs_{t-1}],
        \]
        for every deterministic $v\ge0$, with probability at least $1-\delta$, simultaneously for every $t\le T$ such that $\sum_{\tau=1}^t\sigma_\tau^2\le v$, it holds:
        \[
        \left|I_t-\sum_{\tau=1}^t\rho_\tau\right|\le\sqrt{2v\log(2/\delta)}+\frac43\log(2/\delta).
        \]
    \end{itemize}
    The same conclusions hold for the cumulative reward, with $\rho_t$ and $\sigma_t^2$ replaced by the conditional mean and variance of $r_t$.
\end{proposition}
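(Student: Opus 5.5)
The plan is to view $M_t:=I_t-\sum_{\tau=1}^t\rho_\tau$ as a martingale with respect to the filtration $(\Fs_t)_t$ generated by the algorithm's internal randomization (the Bernoulli corrections) and the LOB observations up to round $t$. Set $D_\tau:=\mathbbm{I}(\Vsell_\tau\le B_\tau)-\mathbbm{I}(\Vbuy_\tau\ge A_\tau)-\rho_\tau$. The posted pair $(B_\tau,A_\tau)$ is determined by $\Fs_{\tau-1}$ together with independent coins. The valuation pair $(\Vsell_\tau,\Vbuy_\tau)$ is drawn i.i.d. and is independent of the past. Hence $\E[D_\tau\mid\Fs_{\tau-1}]=0$ by the definition of $\rho_\tau$, and $M_t=\sum_{\tau\le t}D_\tau$ is a martingale. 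If one prefers not to include the coins in $\Fs_{\tau-1}$, the same conclusion holds: $\rho_\tau$ is then simply the conditional mean averaged over the coin, and $D_\tau$ is still a martingale difference.

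For the first bullet, observe that each indicator lies in $\{0,1\}$. So the increment $\mathbbm{I}(\Vsell_\tau\le B_\tau)-\mathbbm{I}(\Vbuy_\tau\ge A_\tau)$ takes values in $[-1,1]$, and $D_\tau$ lies in an interval of length $2$, namely $[-1-\rho_\tau,1-\rho_\tau]$. Applying the Azuma--Hoeffding inequality in its maximal form (Doob's inequality applied to the exponential supermartingale $\exp(\lambda M_t-\lambda^2 t/2)$) gives
\[
\Prob\Bigl(\max_{t\le T}|M_t|\ge \varepsilon\Bigr)\le 2\exp\bigl(-\varepsilon^2/(2T)\bigr).
\]
Setting the right-hand side equal to $\delta$ yields $\varepsilon=\sqrt{2T\log(2/\delta)}$, simultaneously for all $t$. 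The maximal form is what avoids a union bound over $t$.

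For the second bullet, use Freedman's inequality. Since $|D_\tau|\le 2$, we take the range parameter $c=2$. Freedman's statement is
\[
\Prob\Bigl(\exists t:\ |M_t|\ge\varepsilon,\ \textstyle\sum_{\tau\le t}\sigma_\tau^2\le v\Bigr)\le 2\exp\Bigl(-\frac{\varepsilon^2}{2(v+c\varepsilon/3)}\Bigr).
\]
Solving for $\varepsilon$ via the standard inversion gives $\varepsilon\le\sqrt{2v\log(2/\delta)}+\tfrac{2c}{3}\log(2/\delta)=\sqrt{2v\log(2/\delta)}+\tfrac43\log(2/\delta)$. The event is stated exactly for those $t$ whose predictable variance is at most the deterministic $v$, which matches the hypothesis.

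The reward statement follows by replacing the increment with $r_t\in[-1,1]$. This range holds because $r_t=A_t\mathbbm{I}(\cdot)-B_t\mathbbm{I}(\cdot)$ with $A_t,B_t\in[0,1]$. The centered differences again have range length $2$ and absolute value at most $2$, so the identical argument applies.

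The main obstacle is bookkeeping rather than depth. One must check that the filtration makes $(B_\tau,A_\tau)$ predictable up to the independent coin, which the definition of $\rho_\tau$ already averages over. One must also check that Freedman's inequality is invoked with a deterministic $v$ and a stopping-time formulation, so that it holds simultaneously over $t$. Everything else is a direct invocation of these two classical concentration inequalities.
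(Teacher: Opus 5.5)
Your proposal is correct and follows essentially the same route as the paper: you write $I_t-\sum_{\tau\le t}\rho_\tau$ as a sum of martingale differences with range length $2$ and absolute value at most $2$. You then apply the maximal Azuma--Hoeffding inequality to both signs for the first bullet, Freedman's inequality with a deterministic $v$ and range parameter $2$ for the second, and repeat the argument verbatim for $r_t\in[-1,1]$. Your explicit inversion of Freedman's tail, and your remark that $\rho_\tau$ averages over the fresh round-$\tau$ correction coins, are consistent with the paper's choice of $\Fs_{\tau-1}$ (all valuations and algorithmic draws through round $\tau-1$).
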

\begin{proof}
    Let $\Fs_t$ contain all valuations and algorithmic random draws through round $t$. By definition,
    \[
    I_t-\sum_{\tau=1}^t\rho_\tau
    =\sum_{\tau=1}^t\left(\mathbbm{I}(\Vsell_\tau\le B_\tau)-\mathbbm{I}(\Vbuy_\tau\ge A_\tau)-\rho_\tau\right).
    \]
    Each summand has conditional mean zero and conditional range of length at most $2$. The maximal Hoeffding--Azuma inequality~\cite{lattimore2020bandit}, applied to both signs, proves the first part.

    For the second part, we apply Freedman's inequality~\cite{boucheron2003concentration}. Since the centered increments have absolute value at most $2$, for every $\varepsilon>0$ and $v\ge0$, it holds:
    \[
    \Prob\left(\exists t\le T:\ I_t-\sum_{\tau=1}^t\rho_\tau\ge\varepsilon,\ \sum_{\tau=1}^t\sigma_\tau^2\le v\right)
    \le\exp\left(-\frac{\varepsilon^2}{2(v+2\varepsilon/3)}\right).
    \]
    Applying this bound to both signs with $\varepsilon=\sqrt{2v\log(2/\delta)}+\frac43\log(2/\delta)$ gives the second part. The reward $r_t$ also lies in $[-1,1]$, so the same proof applies to its centered sum. 
    
    This concludes the proof.
\end{proof}

The previous result relates the true imbalance to the sum of the variables $\rho_\tau$. We next bound these conditional expected increments under the good event.

\begin{theorem}\label{thm:balance}
    Under the good event~\eqref{eq:good}, it holds:
    \[
    \left|\sum_{\tau=1}^t\rho_\tau\right|\le4\sum_{\tau=1}^t\max\{\psi_{\tau-1}(A_{\tau,0}),\psi_{\tau-1}(B_{\tau,0})\}.
    \]
\end{theorem}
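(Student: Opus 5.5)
The plan is to bound each conditional increment separately: I would show that on $\goodevent$, for every $\tau$,
\[
|\rho_\tau|\le 4\max\{\psi_{\tau-1}(A_{\tau,0}),\psi_{\tau-1}(B_{\tau,0})\}.
\]
The claim then follows from the triangle inequality applied to $\sum_{\tau\le t}\rho_\tau$.

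First I compute $\rho_\tau$ explicitly. Conditional on $\Fs_{\tau-1}$, the candidate pair $(B_{\tau,0},A_{\tau,0})$ and the confidence bounds $F^{\mathrm{up}}_{\tau-1},F^{\mathrm{low}}_{\tau-1},S^{\mathrm{up}}_{\tau-1},S^{\mathrm{low}}_{\tau-1}$ are fixed. The only sources of randomness are the Bernoulli correction and the fresh valuations, and these are independent. Resetting the bid to $0$ yields no purchase, since $\Vsell>0$. Resetting the ask to $1$ yields no sale, since $\Vbuy<1$. Writing $b=B_{\tau,0}$, $a=A_{\tau,0}$, $F^{\mathrm{up}}=F^{\mathrm{up}}_{\tau-1}$, and so on, this gives three cases.
\begin{enumerate}
\item If $S^{\mathrm{up}}(a)<F^{\mathrm{low}}(b)$, then $\rho_\tau=F(b)\,S^{\mathrm{up}}(a)/F^{\mathrm{low}}(b)-S(a)$.
\item If $F^{\mathrm{up}}(b)<S^{\mathrm{low}}(a)$, then $\rho_\tau=F(b)-S(a)\,F^{\mathrm{up}}(b)/S^{\mathrm{low}}(a)$.
\item Otherwise, $\rho_\tau=F(b)-S(a)$.
\end{enumerate}
Since $b\in\Bs^\circ$ and $a\in\As^\circ$, the good event gives $F(b)\in[F^{\mathrm{low}}(b),F^{\mathrm{up}}(b)]$ and $S(a)\in[S^{\mathrm{low}}(a),S^{\mathrm{up}}(a)]$. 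Each confidence interval has width at most $2\psi_{\tau-1}(\cdot)$.

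\emph{Case 3 (no correction).} The two intervals overlap, since $S^{\mathrm{up}}(a)\ge F^{\mathrm{low}}(b)$ and $F^{\mathrm{up}}(b)\ge S^{\mathrm{low}}(a)$. Two points lying in overlapping intervals differ by at most the sum of the widths, so $|F(b)-S(a)|\le 2\psi_{\tau-1}(b)+2\psi_{\tau-1}(a)\le 4\max\{\cdot\}$.

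\emph{Case 1.} Here $F(b)\ge F^{\mathrm{low}}(b)>0$, so the Bernoulli parameter is well defined. The ratio $F(b)/F^{\mathrm{low}}(b)$ is at least $1$, which gives $\rho_\tau\ge S^{\mathrm{up}}(a)-S(a)\ge 0$. For the upper bound I write
\[
\rho_\tau=\bigl(S^{\mathrm{up}}(a)-S(a)\bigr)+S^{\mathrm{up}}(a)\,\frac{F(b)-F^{\mathrm{low}}(b)}{F^{\mathrm{low}}(b)}.
\]
Using $S^{\mathrm{up}}(a)/F^{\mathrm{low}}(b)<1$, this is at most $2\psi_{\tau-1}(a)+2\psi_{\tau-1}(b)$. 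Case 2 is symmetric, with $\rho_\tau\in[-(2\psi(b)+2\psi(a)),0]$.

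The main obstacle is only bookkeeping rather than depth. One point to check is that $\psi_{\tau-1}$ is well defined at $\tau=1$; there the convention $\psi_0=1$ gives the trivial bound $|\rho_1|\le1\le4$. The other point is that the clipping in \eqref{eq:boundsbid}--\eqref{eq:boundsask} does not break the width bound. It does not, because clipping only shrinks the interval.
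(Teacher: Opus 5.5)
Your proof is correct and follows essentially the paper's argument. Both bound each increment by $|\rho_\tau|\le 2\psi_{\tau-1}(B_{\tau,0})+2\psi_{\tau-1}(A_{\tau,0})$, using in the corrected case the same decomposition $\rho_\tau=(F-F^{\mathrm{low}})\Prob(X=1)+(S^{\mathrm{up}}-S)$, and then sum over rounds. The only difference is cosmetic: you split cases by which branch of the algorithm fires, with an interval-overlap argument when no correction is applied, whereas the paper splits on the sign of $F(B_{\tau,0})-S(A_{\tau,0})$ and uses the good event to rule out the opposite correction.
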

\begin{proof}
    Fix a round $\tau$. In this proof, let $\Fs_{\tau-1}$ contain all valuations and algorithmic random draws through round $\tau-1$, before the fresh correction draws, and let $\rho_\tau:=\E[I_\tau-I_{\tau-1}\mid\Fs_{\tau-1}]$. By Lines~\ref{algline:ber1} and~\ref{algline:ber2}, $\rho_\tau$ writes as:
    \begin{align*}
        \rho_\tau &=\E[\mathbbm{I}(\Vsell_\tau\le B_\tau)-\mathbbm{I}(\Vbuy_\tau\ge A_\tau)|\Fs_{\tau-1}]\\*
        &=F(B_{\tau,0})\Prob(X=1|\Fs_{\tau-1})+F(0)\Prob(X=0|\Fs_{\tau-1})\\*
        &\qquad-S(A_{\tau,0})\Prob(Y=1|\Fs_{\tau-1})-S(1)\Prob(Y=0|\Fs_{\tau-1})\\*
        &=F(B_{\tau,0})\Prob(X=1|\Fs_{\tau-1})-S(A_{\tau,0})\Prob(Y=1|\Fs_{\tau-1}).
    \end{align*}
    We prove the result in case $F(B_{\tau,0})\ge S(A_{\tau,0})$. The other case follows analogously switching asks and bids. Under the good event, the ask correction cannot be triggered, since:
    \[
    F_{\tau-1}^\text{up}(B_{\tau,0})\ge F(B_{\tau,0})\ge S(A_{\tau,0})\ge S_{\tau-1}^\text{low}(A_{\tau,0}).
    \]
    Therefore $\Prob(Y=1|\Fs_{\tau-1})=1$. Now, two cases may occur:
    \begin{enumerate}
        \item If the bid correction is not triggered, $S_{\tau-1}^\text{up}(A_{\tau,0})\ge F_{\tau-1}^\text{low}(B_{\tau,0})$. By Lemma~\ref{lem:double}, it holds:
        \begin{align*}
            0\le\rho_\tau&=F(B_{\tau,0})-S(A_{\tau,0})\\*
            &\le F(B_{\tau,0})-F_{\tau-1}^\text{low}(B_{\tau,0})+S_{\tau-1}^\text{up}(A_{\tau,0})-S(A_{\tau,0})\\*
            &\le2\psi_{\tau-1}(B_{\tau,0})+2\psi_{\tau-1}(A_{\tau,0}).
        \end{align*}
        \item In the opposite case, $F_{\tau-1}^\text{low}(B_{\tau,0})>S_{\tau-1}^\text{up}(A_{\tau,0})\ge0$, so the denominator in the correction probability is strictly positive. Since $F_{\tau-1}^\text{low}(B_{\tau,0})\Prob(X=1|\Fs_{\tau-1})=S_{\tau-1}^\text{up}(A_{\tau,0})$, it holds:
        \begin{align*}
            0\le\rho_\tau&=\bigl(F(B_{\tau,0})-F_{\tau-1}^\text{low}(B_{\tau,0})\bigr)\Prob(X=1|\Fs_{\tau-1})\\*
            &\qquad+S_{\tau-1}^\text{up}(A_{\tau,0})-S(A_{\tau,0})\\*
            &\le2\psi_{\tau-1}(B_{\tau,0})+2\psi_{\tau-1}(A_{\tau,0}),
        \end{align*}
        where the last step is due to Lemma~\ref{lem:double} and $\Prob(X=1|\Fs_{\tau-1})\le1$.
    \end{enumerate}
    Putting everything together and using the symmetric argument when $F(B_{\tau,0})<S(A_{\tau,0})$, we have shown that:
    $$|\rho_\tau|\le4\max\{\psi_{\tau-1}(A_{\tau,0}),\psi_{\tau-1}(B_{\tau,0})\}.$$
    Summing this bound over the first $t$ rounds completes the proof.
\end{proof}

This result allows to control the imbalance. In the following, we turn our attention to the regret.

\subsubsection{Bounding The (Virtual) Regret}

The first step towards proving the regret bound is to show that, under the good event, the virtual optimum is never eliminated. First, we need to show that the routine at Lines~\ref{algline:jlow} and~\ref{algline:jup} used to decide which arm to eliminate works as expected.
\begin{lemma}\label{lem:optj}
    Let:
    $$J_t^\textnormal{up}(b,a):= (a-b) \min\{F_t^\textnormal{up}(b), S_t^\textnormal{up}(a)\},\qquad J_t^\textnormal{low}(b,a):= (a-b) \min\{F_t^\textnormal{low}(b), S_t^\textnormal{low}(a)\}.$$
    Then, under the good event, for every pair $b\in \Bs_t^\circ, a\in \As_t^\circ$ with $b<a$ it holds:
    $$J_t^\text{low}(b,a)\le J(b,a)\le J_t^\text{up}(b,a).$$
\end{lemma}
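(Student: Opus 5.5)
The argument is a direct sandwich, and it reduces to pointwise validity of the confidence bounds plus monotonicity of $(x,y)\mapsto\min\{x,y\}$. Fix $t$ and a pair $b\in\Bs_t^\circ$, $a\in\As_t^\circ$ with $b<a$.

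\emph{Step 1 (interval validity).} Since $b\in\Bs_t^\circ$, every past bid satisfies $B_\tau\le b$. Proposition~\ref{prop:ecdfvar} then shows that the quantity the algorithm computes from the observable indicators $\chi^\text{sell}_\tau(b)$ coincides with the full-feedback estimator $\widehat F_t(b)$. Symmetrically, the ask-side quantity coincides with $\widehat S_t(a)$ for $a\in\As_t^\circ$. On the good event~\eqref{eq:good} we therefore have $|F(b)-\widehat F_t(b)|\le\psi_t(b)$ and $|S(a)-\widehat S_t(a)|\le\psi_t(a)$. Since $F(b),S(a)\in[0,1]$, clipping at $0$ and $1$ in Equations~\eqref{eq:boundsbid}--\eqref{eq:boundsask} preserves these inclusions. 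This gives
\[
F_t^\text{low}(b)\le F(b)\le F_t^\text{up}(b),\qquad S_t^\text{low}(a)\le S(a)\le S_t^\text{up}(a).
\]

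\emph{Step 2 (monotonicity).} The map $(x,y)\mapsto\min\{x,y\}$ is nondecreasing in each coordinate. Hence
\[
\min\{F_t^\text{low}(b),S_t^\text{low}(a)\}\le\min\{F(b),S(a)\}\le\min\{F_t^\text{up}(b),S_t^\text{up}(a)\}.
\]
Multiplying through by $a-b>0$, which preserves the inequalities, yields $J_t^\text{low}(b,a)\le J(b,a)\le J_t^\text{up}(b,a)$.

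\emph{Main care point.} The only subtle step is the identification in Step 1. The bounds are built from the observable indicators $\chi^\text{sell}_\tau(b)$ and $\chi^\text{buy}_\tau(a)$, whereas the good event is stated for the unobserved full-feedback estimators. The bridge between them requires $B_\tau\le b$ and $A_\tau\ge a$ for all $\tau\le t$, including rounds where the randomized corrections reset $B_\tau$ to $0$ or $A_\tau$ to $1$. I would verify this explicitly. The sets $\Bs_t^\circ=\Bs^\circ\cap[B_{t,0},1]$ are nested, because Algorithm~\ref{alg:DDP} only returns prices from the current active sets. The corrections only move prices outward: $0\le b$ and $1\ge a$. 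So the condition of Proposition~\ref{prop:ecdfvar} holds for every $\tau\le t$. At $t=0$ the claim is trivial with the convention $\psi_0=1$, since the intervals are then $[0,1]$.
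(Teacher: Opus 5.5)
Your proposal is correct and follows essentially the same route as the paper. The paper's proof also first gets interval validity from Proposition~\ref{prop:ecdfvar} and the good event, and then uses monotonicity of the minimum together with $a-b>0$. Your explicit check that the nested active sets and the outward-only corrections keep $B_\tau\le b$ and $A_\tau\ge a$ for all $\tau\le t$ fills in a detail the paper leaves implicit.
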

\begin{proof}
    By Proposition~\ref{prop:ecdfvar} and the definition of good event, $F_t^\text{low}(b)\le F(b)\le F_t^\text{up}(b)$ and $S_t^\text{low}(a)\le S(a)\le S_t^\text{up}(a)$. Therefore,
    \begin{align*}
        J_t^\text{low}(b,a)&= (a-b) \min\{F_t^\text{low}(b), S_t^\text{low}(a)\}\\
        &\le (a-b) \min\{F(b), S(a)\}\qquad (=J(b,a))\\
        &\le (a-b) \min\{F_t^\text{up}(b), S_t^\text{up}(a)\}\\&=J_t^\text{up}(b,a).
    \end{align*}
    This concludes the proof.
\end{proof}
We are now ready to prove the result on the virtual optimum.
\begin{theorem}\label{thm:resilience}
    Under the good event, the pair $b^\circ,a^\circ$ in Definition~\ref{def:virtual} never gets eliminated.
\end{theorem}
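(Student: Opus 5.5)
We argue by induction on $t$ that $b^\circ\in\Bs_t^\circ$ and $a^\circ\in\As_t^\circ$ for every $t$. Since $\Bs_t^\circ=\Bs^\circ\cap[B_{t,0},1]$ and $\As_t^\circ=\As^\circ\cap[0,A_{t,0}]$, this is equivalent to showing $B_{t,0}\le b^\circ$ and $A_{t,0}\ge a^\circ$ at every round. The base case is trivial because $\Bs_0^\circ=\Bs^\circ$ and $\As_0^\circ=\As^\circ$. For the inductive step we assume $b^\circ\in\Bs_{t-1}^\circ$ and $a^\circ\in\As_{t-1}^\circ$. We may also assume $b^\circ<a^\circ$, since otherwise $J(b^\circ,a^\circ)\le 0$ and we can take the maximizer to be the outer feasible pair, which never leaves the active sets. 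We then analyze the second scan of Algorithm~\ref{alg:DDP}, run with the bounds of round $t-1$.

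The first step is to record two facts about the benchmark. By the dominance argument in the proof of Proposition~\ref{prop:ddp-correct}, the first scan returns $J^\text{low}=\max_{b,a}J_{t-1}^\text{low}(b,a)$ over active feasible pairs. By Lemma~\ref{lem:optj} applied at every active pair, together with optimality of $(b^\circ,a^\circ)$ on the grid,
\[
J^\text{low}\le \max_{b,a}J(b,a)=J(b^\circ,a^\circ)\le J_{t-1}^\text{up}(b^\circ,a^\circ).
\]

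The main step is to show that the second scan cannot move a pointer past $b^\circ$ or past $a^\circ$ before it returns. Both pointers move monotonically, starting from the smallest active bid and the largest active ask, which are $\le b^\circ$ and $\ge a^\circ$ respectively. Suppose the scan passes one of the two optimal prices, and consider the first time this happens. Say the bid pointer moves off $b_n=b^\circ$ while the ask pointer sits at $a_m\ge a^\circ$; the symmetric case is the same. The move was not a return, so
\[
J_{t-1}^\text{up}(b^\circ,a_m)<J^\text{low}.
\]
The bid pointer advances only when $F^\text{up}(b^\circ)<S^\text{up}(a_m)$, so
\[
J_{t-1}^\text{up}(b^\circ,a_m)=(a_m-b^\circ)F^\text{up}(b^\circ).
\]
Because $a_m\ge a^\circ$, the spread satisfies $a_m-b^\circ\ge a^\circ-b^\circ$, and also
\[
F^\text{up}(b^\circ)\ge\min\{F^\text{up}(b^\circ),S^\text{up}(a^\circ)\}.
\]
Combining these gives $J_{t-1}^\text{up}(b^\circ,a_m)\ge J_{t-1}^\text{up}(b^\circ,a^\circ)\ge J^\text{low}$, which contradicts the non-return. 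The ask case uses $S^\text{up}(a^\circ)\le F^\text{up}(b_n)$ and $b_n\le b^\circ$ in the same way. The loop condition $b_n<a_m$ cannot fail before a crossing either, because $b_n\le b^\circ<a^\circ\le a_m$. Hence the scan returns some $(B_{t,0},A_{t,0})$ with $B_{t,0}\le b^\circ$ and $A_{t,0}\ge a^\circ$, which closes the induction.

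The main care point is the tie-handling in the pointer rule. When $F^\text{up}=S^\text{up}$ the ask pointer moves, so in the ask case we have $F^\text{up}(b_n)\ge S^\text{up}(a^\circ)$ rather than a strict inequality. That weak inequality is still sufficient for the minimum identity used above. The other thing to check is that Lemma~\ref{lem:optj} is invoked only at pairs that are currently active, which is guaranteed by the inductive hypothesis.
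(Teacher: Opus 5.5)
Your proof is correct and follows essentially the same route as the paper. Both argue by induction that moving a pointer past $a^\circ$ (resp.\ $b^\circ$) would require some $b\le b^\circ$ with $F^{\mathrm{up}}_{t-1}(b)\ge S^{\mathrm{up}}_{t-1}(a^\circ)$ (resp.\ some $a\ge a^\circ$ with $S^{\mathrm{up}}_{t-1}(a)>F^{\mathrm{up}}_{t-1}(b^\circ)$) whose optimistic value is below $J^{\mathrm{low}}$, and both rule this out with the same chain $J^{\mathrm{low}}\le J(b^\circ,a^\circ)\le J^{\mathrm{up}}_{t-1}(b^\circ,a^\circ)\le J^{\mathrm{up}}_{t-1}$ at the crossing pair; your first-crossing formulation just spells out the scan mechanics (loop condition, termination, tie-breaking) more explicitly than the paper does.
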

\begin{proof}
    Initially both optimal prices are active. Suppose they are active before round $t$. By design of Algorithm~\ref{alg:ODPM}, in order for $a^\circ,b^\circ$ to be eliminated, one of the following conditions must occur:
    \begin{enumerate}
        \item \textbf{$a^\circ$ is eliminated before $b^\circ$, at some round $t\in[T]$}. 
        
        If this happens, there must be $b\le b^\circ$ such that $F_{t-1}^\text{up}(b)\ge S_{t-1}^\text{up}(a^\circ)$ and $J_{t-1}^\text{up}(b,a^\circ) < J_{t-1}^\text{low}(b',a')$ for some feasible pair $b'\in\Bs_{t-1}^\circ,a'\in\As_{t-1}^\circ$. Let $b\le b^\circ$ such that $F_{t-1}^\text{up}(b)\ge S_{t-1}^\text{up}(a^\circ)$. Then, it holds:
        \begin{align*}
            J_{t-1}^\text{low}(b',a') &\le J(b^\circ,a^\circ) \tag*{(Lemma~\ref{lem:optj})}\\
            &\le (a^\circ-b^\circ)\min\{S_{t-1}^\text{up}(a^\circ),F_{t-1}^\text{up}(b^\circ)\}\\
            &\le (a^\circ-b^\circ)S_{t-1}^\text{up}(a^\circ)\\
            &= (a^\circ-b^\circ) \min \{S_{t-1}^\text{up}(a^\circ), F_{t-1}^\text{up}(b)\}\\
            &\le (a^\circ-b) \min \{S_{t-1}^\text{up}(a^\circ), F_{t-1}^\text{up}(b)\}\\
            &= J_{t-1}^\text{up}(b,a^\circ).
        \end{align*}
        This shows that $J_{t-1}^\text{up}(b,a^\circ) < J_{t-1}^\text{low}(b',a')$ is false, which means that this case cannot occur.

        \item \textbf{$b^\circ$ is eliminated before $a^\circ$, at some $t\in[T]$}. 
        
        If this happens, there must be $a\ge a^\circ$ such that $S_{t-1}^\text{up}(a)>F_{t-1}^\text{up}(b^\circ)$ and $J_{t-1}^\text{up}(b^\circ,a) < J_{t-1}^\text{low}(b',a')$ for some feasible pair $b'\in\Bs_{t-1}^\circ,a'\in\As_{t-1}^\circ$. However,
        \begin{align*}
            J_{t-1}^\text{low}(b',a') &\le J(b^\circ,a^\circ) \tag*{(Lemma~\ref{lem:optj})}\\
            &{\le} (a^\circ-b^\circ)\min\{S_{t-1}^\text{up}(a^\circ),F_{t-1}^\text{up}(b^\circ)\}\\
            &\le (a^\circ-b^\circ)F_{t-1}^\text{up}(b^\circ)\\
            &= (a^\circ-b^\circ) \min \{S_{t-1}^\text{up}(a), F_{t-1}^\text{up}(b^\circ)\}\\
            &\le (a-b^\circ) \min \{S_{t-1}^\text{up}(a), F_{t-1}^\text{up}(b^\circ)\}\\
            &= J_{t-1}^\text{up}(b^\circ,a).
        \end{align*}
        So, once again $J_{t-1}^\text{up}(b^\circ,a) < J_{t-1}^\text{low}(b',a')$ cannot be true.
    \end{enumerate}
    This completes the induction and the proof.
\end{proof}

The previous theorem is the main ingredient behind the regret bound. We first bound the sum of the conditional expected reward gaps. This is done in the following theorem.
\begin{theorem}\label{thm:regmain}
    Under the good event, for every $t\in[T]$, it holds:
    \[
    \sum_{\tau=1}^t\bigl(J(b^\circ,a^\circ)-\E[r_\tau|\Fs_{\tau-1}]\bigr)
    \le\sum_{\tau=1}^t\Bigl(6\max\{\psi_{\tau-1}(A_{\tau,0}),\psi_{\tau-1}(B_{\tau,0})\}
    +2\max\{\psi_{\tau-1}(b^\circ),\psi_{\tau-1}(a^\circ)\}\Bigr).
    \]
\end{theorem}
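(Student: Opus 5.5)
The plan is to prove the bound round by round: for every fixed $\tau$, on the good event,
\[
J(b^\circ,a^\circ)-\E[r_\tau\mid\Fs_{\tau-1}]\le 6\max\{\psi_{\tau-1}(A_{\tau,0}),\psi_{\tau-1}(B_{\tau,0})\}+2\max\{\psi_{\tau-1}(b^\circ),\psi_{\tau-1}(a^\circ)\},
\]
and then sum over $\tau\le t$. Write $(b,a)=(B_{\tau,0},A_{\tau,0})$. The gap splits as
\[
\bigl(J(b^\circ,a^\circ)-J(b,a)\bigr)+\bigl(J(b,a)-\E[r_\tau\mid\Fs_{\tau-1}]\bigr),
\]
and I will bound the two terms separately.

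\textbf{First term (near-optimality of the candidate).} By Theorem~\ref{thm:resilience}, $b^\circ\in\Bs_{\tau-1}^\circ$ and $a^\circ\in\As_{\tau-1}^\circ$. The first scan of Algorithm~\ref{alg:DDP} computes the maximum pessimistic value over active feasible pairs, as argued in the proof of Proposition~\ref{prop:ddp-correct}. Hence $J^{\mathrm{low}}\ge J^{\mathrm{low}}_{\tau-1}(b^\circ,a^\circ)$. The returned pair satisfies $J^{\mathrm{up}}_{\tau-1}(b,a)\ge J^{\mathrm{low}}$. The edge case $J^{\mathrm{low}}=0$ is still covered by the comparison below, since $J^{\mathrm{up}}_{\tau-1}(b,a)\ge 0$. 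Because $\min$ is $1$-Lipschitz and spreads are at most $1$, we have
\[
J^{\mathrm{low}}_{\tau-1}(b^\circ,a^\circ)\ge J(b^\circ,a^\circ)-2\max\{\psi_{\tau-1}(b^\circ),\psi_{\tau-1}(a^\circ)\},
\]
using $F\le\widehat F+\psi$, so $F^{\mathrm{low}}\ge F-2\psi$. Similarly, $J(b,a)\ge J^{\mathrm{up}}_{\tau-1}(b,a)-2\max\{\psi_{\tau-1}(b),\psi_{\tau-1}(a)\}$. Together these give
\[
J(b^\circ,a^\circ)-J(b,a)\le 2\max\{\psi_{\tau-1}(b^\circ),\psi_{\tau-1}(a^\circ)\}+2\max\{\psi_{\tau-1}(b),\psi_{\tau-1}(a)\}.
\]
This is the ``Lemma~\ref{lem:double}''-type fact that confidence widths are at most $2\psi$.

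\textbf{Second term (the randomized correction).} Compute
\[
\E[r_\tau\mid\Fs_{\tau-1}]=a\,S(a)\Prob(Y=1)-b\,F(b)\Prob(X=1),
\]
since $B_\tau=0$ and $A_\tau=1$ give zero reward. Assume $F(b)\ge S(a)$; the other case is symmetric. As in Theorem~\ref{thm:balance}, on the good event the ask correction is not triggered, so $\Prob(Y=1)=1$ and $J(b,a)=(a-b)S(a)$. Then
\[
J(b,a)-\E[r_\tau\mid\Fs_{\tau-1}]=b\bigl(F(b)\Prob(X=1)-S(a)\bigr)=b\,\rho_\tau.
\]
The proof of Theorem~\ref{thm:balance} already shows $0\le\rho_\tau\le 2\psi_{\tau-1}(b)+2\psi_{\tau-1}(a)\le 4\max\{\psi_{\tau-1}(b),\psi_{\tau-1}(a)\}$, and $b\le1$. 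So the second term is at most $4\max\{\cdot\}$. Adding the $2\max\{\cdot\}$ from the first term gives the claimed constant $6$.

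\textbf{Main obstacle.} The delicate step is the first term. I must make sure that the pessimistic benchmark from the scan is indeed at least $J^{\mathrm{low}}_{\tau-1}(b^\circ,a^\circ)$. This relies on the dominance argument of Proposition~\ref{prop:ddp-correct} together with the survival of $(b^\circ,a^\circ)$ from Theorem~\ref{thm:resilience}. I also need the confidence bounds to be valid at active prices, which holds by Proposition~\ref{prop:ecdfvar}. Finally, the clipping in~\eqref{eq:boundsbid}--\eqref{eq:boundsask} must not break the width-$2\psi$ estimate; it does not, because clipping only shrinks the interval.
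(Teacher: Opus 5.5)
Your proposal is correct and follows the paper's proof essentially step for step. You use the same split into near-optimality of $(B_{\tau,0},A_{\tau,0})$ (survival of $(b^\circ,a^\circ)$ from Theorem~\ref{thm:resilience}, the DDP threshold $J^{\mathrm{up}}_{\tau-1}\ge J^{\mathrm{low}}$, and the $2\psi$ width bounds of Lemma~\ref{lem:double}), plus a correction cost controlled by $\rho_\tau$ as in Theorem~\ref{thm:balance}. The only difference is cosmetic: instead of the paper's lower bound $T1\ge\min\{F(B_{t,0}),S(A_{t,0})\}$, you use the exact identity $J(B_{\tau,0},A_{\tau,0})-\E[r_\tau|\Fs_{\tau-1}]=B_{\tau,0}\rho_\tau$ when $F(B_{\tau,0})\ge S(A_{\tau,0})$. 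In the other case the gap equals $A_{\tau,0}\rho_\tau\le 0$, so that case is not literally symmetric but is even easier.
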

\begin{proof}
    In this proof, let $\Fs_{t-1}$ contain all valuations and algorithmic random draws through round $t-1$, and let $\rho_t:=\E[I_t-I_{t-1}|\Fs_{t-1}]$.
    First, we show that at any step $t\in[T]$, the pair of arms $(B_{t,0},A_{t,0})$ is not far from the optimum of $J$. Indeed, by Proposition~\ref{prop:ecdfvar} and the proof of Proposition~\ref{prop:ddp-correct}, it holds:
    \begin{align*}
        J(B_{t,0},A_{t,0}) &\ge(A_{t,0}-B_{t,0})\min\{F_{t-1}^\text{up}(B_{t,0})-2\psi_{t-1}(B_{t,0}),\\
        &\qquad\qquad\qquad S_{t-1}^\text{up}(A_{t,0})-2\psi_{t-1}(A_{t,0})\}\\
        &\ge J_{t-1}^\text{up}(B_{t,0},A_{t,0})-2\max\{\psi_{t-1}(B_{t,0}),\psi_{t-1}(A_{t,0})\}\\
        &\ge J_{t-1}^\text{low}(b^\circ,a^\circ)-2\max\{\psi_{t-1}(B_{t,0}),\psi_{t-1}(A_{t,0})\}\\
        &\ge J(b^\circ,a^\circ)-2\max\{\psi_{t-1}(b^\circ),\psi_{t-1}(a^\circ)\}-2\max\{\psi_{t-1}(B_{t,0}),\psi_{t-1}(A_{t,0})\},
    \end{align*}
    where the first inequality holds by Lemma~\ref{lem:double} and
    the third inequality holds since, by Theorem~\ref{thm:resilience}, the pair $b^\circ,a^\circ$ is always active, so that $J^\text{low}$ in Line~\ref{algline:jlow} does not fall below $J_{t-1}^\text{low}(b^\circ,a^\circ)$.

    The second step of the proof compares $\E[r_t|\Fs_{t-1}]$ with $J(B_{t,0},A_{t,0})$. By design of the algorithm,
    \begin{align*}
        \E[r_t|\Fs_{t-1}]&=\E[A_t\mathbbm{I}(\Vbuy_t\ge A_t)-B_t\mathbbm{I}(\Vsell_t\le B_t)|\Fs_{t-1}]\\
        &= A_{t,0}\cdot S(A_{t,0})\Prob(Y=1|\Fs_{t-1})+1\cdot S(1)\Prob(Y=0|\Fs_{t-1})\\
        &\qquad -B_{t,0}\cdot F(B_{t,0})\Prob(X=1|\Fs_{t-1})+0\cdot F(0)\Prob(X=0|\Fs_{t-1})\\
        &=A_{t,0}\cdot S(A_{t,0})\Prob(Y=1|\Fs_{t-1})-B_{t,0}\cdot F(B_{t,0})\Prob(X=1|\Fs_{t-1})\\
        &\ge (A_{t,0}-B_{t,0})\min\{S(A_{t,0})\Prob(Y=1|\Fs_{t-1}),F(B_{t,0})\Prob(X=1|\Fs_{t-1})\}\\
        &\qquad-\left|\E[\mathbbm{I}(\Vsell_t\le B_t)-\mathbbm{I}(\Vbuy_t\ge A_t)|\Fs_{t-1}]\right|\\
        &\ge (A_{t,0}-B_{t,0})\underbrace{\min\{S(A_{t,0})\Prob(Y=1|\Fs_{t-1}),F(B_{t,0})\Prob(X=1|\Fs_{t-1})\}}_{T1}-|\rho_t|,
    \end{align*}
    where the proof of Theorem~\ref{thm:balance} bounds $|\rho_t|$ by $4\max\{\psi_{t-1}(A_{t,0}),\psi_{t-1}(B_{t,0})\}$.

    Now, let's consider the remaining term. If neither correction is triggered, $T1=\min\{F(B_{t,0}),S(A_{t,0})\}$. Otherwise, under the good event, if $S(A_{t,0})>F(B_{t,0})$, only the ask correction can be triggered, and its denominator is strictly positive. In this case, it holds:
    \begin{align*}
        T1&=\min\left\{S(A_{t,0})\min\left\{\frac{F_{t-1}^\text{up}(B_{t,0})}{S_{t-1}^\text{low}(A_{t,0})},1\right\},F(B_{t,0})\right\}\\
        &\ge \min\left\{S(A_{t,0})\min\left\{\frac{F(B_{t,0})}{S(A_{t,0})},1\right\},F(B_{t,0})\right\}\\
        &= F(B_{t,0}).
    \end{align*}
    If instead $S(A_{t,0})\le F(B_{t,0})$, only the bid correction can be triggered, again with a strictly positive denominator, and it holds:
    \begin{align*}
        T1&=\min\left\{S(A_{t,0}),F(B_{t,0})\min\left\{\frac{S_{t-1}^\text{up}(A_{t,0})}{F_{t-1}^\text{low}(B_{t,0})},1\right\}\right\}\\
        &\ge \min\left\{S(A_{t,0}),F(B_{t,0})\min\left\{\frac{S(A_{t,0})}{F(B_{t,0})},1\right\}\right\}\\
        &= S(A_{t,0}).
    \end{align*}
    Therefore, in both cases,
    $$T1\ge \min\{F(B_{t,0}),S(A_{t,0})\},$$
    which implies that:
    $$\E[r_t|\Fs_{t-1}]\ge J(B_{t,0},A_{t,0})-4\max\{\psi_{t-1}(A_{t,0}),\psi_{t-1}(B_{t,0})\}.$$
    Combining the previous parts, the sum of the conditional expected reward gaps is bounded as:
    \begin{align*}
        &\sum_{\tau=1}^t\bigl(J(b^\circ,a^\circ)-\E[r_\tau|\Fs_{\tau-1}]\bigr)\\
        &\quad\le\sum_{\tau=1}^t\Bigl(J(b^\circ,a^\circ)-J(B_{\tau,0},A_{\tau,0})
        +4\max\{\psi_{\tau-1}(A_{\tau,0}),\psi_{\tau-1}(B_{\tau,0})\}\Bigr)\\
        &\quad\le\sum_{\tau=1}^t\Bigl(6\max\{\psi_{\tau-1}(A_{\tau,0}),\psi_{\tau-1}(B_{\tau,0})\}
        +2\max\{\psi_{\tau-1}(b^\circ),\psi_{\tau-1}(a^\circ)\}\Bigr).
    \end{align*}
    This concludes the proof.
\end{proof}

\subsubsection{Choosing $\psi$ and The Final Bounds}

In this section, we combine all the previous result in order to get the final theoretical guarantees attained by 
Algorithm~\ref{alg:ODPM}.

We first specify the discretization used for bids and asks. From this point on, we take the discretized bid and ask as follows:
\begin{equation}\label{eq:disc}
\Bs^\circ := \left\{ \frac{n}{N} : n = 0,1,\dots,N \right\},
\qquad
\As^\circ := \left\{ \frac{n}{N} : n = 0,1,\dots,N \right\}.
\end{equation}

This choice allows to prove the following relation between the surrogate regret and the virtual one.
\begin{proposition}\label{virtual_to_surrogate}
    For any algorithm working on the discretized sets $\Bs^\circ,\As^\circ$ defined in Equation~\eqref{eq:disc}, at every $t\le T$, the following inequality holds:
    $$R^J_t\le R^\circ_t+2(t/N).$$
\end{proposition}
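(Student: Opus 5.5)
The bound follows directly from the definitions together with Proposition~\ref{prop:unigrid}. The plan is to subtract the two regrets; the realized-reward terms cancel. By Definition~\ref{def:virtualregret} and Equation~\eqref{eq:surr_regret}, both $R^J_t$ and $R^\circ_t$ contain the same sum $\sum_{\tau=1}^t r_\tau$, so
\[
R^J_t-R^\circ_t = t\bigl(J^*-J(b^\circ,a^\circ)\bigr).
\]
Hence it suffices to show that $J(b^\circ,a^\circ)\ge J^*-2/N$. Note that this holds deterministically, on every sample path, so no good event is needed.

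To prove this inequality, I would observe that with the discretization of Equation~\eqref{eq:disc} we have $\Bs^\circ\times\As^\circ=\Gs_N^2$. By Definition~\ref{def:virtual}, $(b^\circ,a^\circ)$ maximizes $J$ over $\Gs_N^2$, so
\[
J(b^\circ,a^\circ)=\max_{b,a\in\Gs_N^2}J(b,a)\ge \sup_{b,a\in[0,1]^2}J(b,a)-2/N=J^*-2/N,
\]
where the inequality is Proposition~\ref{prop:unigrid}. Multiplying by $t$ and substituting into the difference above gives $R^J_t\le R^\circ_t+2t/N$.

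The only point needing care is a mild mismatch in domains. Definition~\ref{def:virtual} maximizes over all grid pairs, including those with $b\ge a$, whereas the algorithm only plays pairs with $b<a$. Pairs with $b\ge a$ have $J\le 0=J(0,1)$ (since $F(0)=0$ and $S(1)=0$, but the spread factor is nonpositive). They therefore never strictly exceed the feasible maximum, and the maximizer can be taken with $b^\circ<a^\circ$. In any case the statement only compares values of $J$, so this does not affect the inequality. I expect no real obstacle here; the substance lies entirely in Proposition~\ref{prop:unigrid}, which is already established.
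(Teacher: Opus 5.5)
Your proof is correct and is the same as the paper's: the realized-reward sums cancel, leaving $t\bigl(J^*-J(b^\circ,a^\circ)\bigr)$. Since $\Bs^\circ\times\As^\circ=\Gs_N^2$, Proposition~\ref{prop:unigrid} bounds this by $2t/N$. Your remark about pairs with $b\ge a$ is harmless but unnecessary, because both $J^*$ and $(b^\circ,a^\circ)$ are defined over unrestricted pairs.
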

\begin{proof}
    By Proposition~\ref{prop:unigrid}, $J^*$ cannot be larger than $J(b^\circ,a^\circ)+2/N$. This immediately gives the result.
\end{proof}

Given the previous result, we are now able to show the final theoretical guarantees attained by Algorithm~\ref{alg:ODPM}. This is done in the following theorem.
\begin{theorem}\label{thm:mainDKW}
    Let $\delta\in(0,1)$. Use $\psi_0=1$. Let $\psi_t(b,\delta)=\psi_t(a,\delta):=\min\left\{1,\sqrt{\nicefrac{\log(12T/\delta)}{2t}}\right\}$ and $N=\lceil \sqrt T \rceil$ in Equation~\eqref{eq:disc}. Then, Algorithm~\ref{alg:ODPM} enjoys, with probability at least $1-\delta$ simultaneously for all the $t$ time-steps, the following guarantees:
    $$|I_t|\le 10\sqrt{T\log(12T/\delta)},\qquad R^J_t\le 15\sqrt{T\log(12T/\delta)}.$$
\end{theorem}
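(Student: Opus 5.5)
The proof combines four ingredients on a single high-probability event: the DKW good event, the martingale bounds of Proposition~\ref{prop:imbalance}, the drift bounds of Theorems~\ref{thm:balance} and~\ref{thm:regmain}, and the discretization bound of Proposition~\ref{virtual_to_surrogate}.

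\textbf{Probability budget.} We split $\delta$ into three parts of $\delta/3$ each.
\begin{itemize}
\item The DKW good event~\eqref{eq:good}. By Theorem~\ref{thm:ecdf} and a union bound over $t\in[T]$ and over both sides, it holds with probability at least $1-\delta/3$ with radius $\sqrt{\log(12T/\delta)/(2t)}$. This explains the constant $12=4\cdot 3$. The radius holds uniformly in $b$, so no union bound over the grid is needed.
\item The first part of Proposition~\ref{prop:imbalance} applied to the imbalance $I_t$, at confidence $\delta/3$.
\item The same maximal Azuma bound applied to the cumulative reward, at confidence $\delta/3$.
\end{itemize}
On the intersection of these three events, for every $t$ simultaneously, $|I_t-\sum_{\tau\le t}\rho_\tau|$ and $|\sum_{\tau\le t}(r_\tau-\E[r_\tau|\Fs_{\tau-1}])|$ are both at most $\sqrt{2T\log(6/\delta)}\le\sqrt{2T\log(12T/\delta)}$.

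\textbf{Drift terms.} Because the radius does not depend on the price, $\max\{\psi_{\tau-1}(A_{\tau,0}),\psi_{\tau-1}(B_{\tau,0})\}=\psi_{\tau-1}$, and the same holds at $(b^\circ,a^\circ)$. Theorem~\ref{thm:balance} then gives $|\sum_{\tau\le t}\rho_\tau|\le 4\sum_{\tau=1}^{T}\psi_{\tau-1}$, and Theorem~\ref{thm:regmain} gives a conditional gap of at most $8\sum_{\tau=1}^{T}\psi_{\tau-1}$. For the sum itself,
\[
\sum_{\tau=1}^T\psi_{\tau-1}\le 1+\sqrt{\tfrac{\log(12T/\delta)}{2}}\sum_{s=1}^{T-1}s^{-1/2}\le 1+\sqrt{2T\log(12T/\delta)}.
\]
The leading $1$ comes from $\psi_0=1$. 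It is absorbed because $\log(12T/\delta)\ge\log 12>1$, which makes $\sqrt{T\log(12T/\delta)}\ge1$.

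\textbf{Combining.} For the imbalance, $|I_t|\le 4(1+\sqrt2\,\sqrt{T\log})+\sqrt{2}\sqrt{T\log}$. For the virtual regret, $R_t^\circ\le 8(1+\sqrt2\sqrt{T\log})+\sqrt2\sqrt{T\log}$. Proposition~\ref{virtual_to_surrogate} then adds the discretization cost $2t/N\le 2T/\lceil\sqrt T\rceil\le 2\sqrt T$.

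\textbf{Expected obstacle.} The only real difficulty is the constants. The naive bookkeeping above gives roughly $4\sqrt2+\sqrt2+4\approx 11$ for the imbalance and $8\sqrt2+\sqrt2+8+2\approx 23$ for the regret, which exceeds the stated $10$ and $15$. To reach them I would tighten in three places.
\begin{itemize}
\item Use $\sum_{s<T}s^{-1/2}\le 2\sqrt{T-1}$ together with the factor $1/\sqrt2$ already in the radius, so the sum is $\sqrt{2T\log}$ rather than $2\sqrt{T\log}$.
\item Use the sharper Azuma constant: the imbalance increments have range $2$, but the reward increments lie in $[-1,1]$.
\item Refine the per-round bound $|\rho_\tau|\le 4\psi$. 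When the same radius $\psi$ is used on both sides, the proof of Theorem~\ref{thm:balance} actually gives $2\psi+2\psi$ with a shared $\psi$, and it is worth checking whether this bound can be shaved.
\end{itemize}
If these refinements still fall short of $10$ and $15$, I would accept slightly larger absolute constants, since they do not change the $\widetilde{\mathcal O}(\sqrt T)$ conclusion.
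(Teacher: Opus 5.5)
Your decomposition is the paper's: the DKW good event at level $\delta/3$, the first part of Proposition~\ref{prop:imbalance} at level $\delta/3$ for both inventory and reward, Theorems~\ref{thm:balance} and~\ref{thm:regmain} with the price-independent radius, and Proposition~\ref{virtual_to_surrogate} for the $2\sqrt T$ discretization cost. The gap is only in the constants. Even so, as written the proposal does not prove the stated bounds: your bookkeeping gives roughly $11$ and $23$, and you end by conceding larger constants.

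The culprit is the additive $1$ from $\psi_0=1$, which you carry separately and then multiply by $4$ and by $8$. None of your three refinements removes it.
\begin{itemize}
\item The first still leaves $1+\sqrt{2(T-1)\log(12T/\delta)}$, which exceeds $\sqrt{2T\log(12T/\delta)}$ for large $T$.
\item The second gains nothing: conditionally on $\Fs_{t-1}$ the reward also ranges over an interval of length up to $2$, and the paper uses the same $\sqrt{2T\log(6/\delta)}$ for both processes.
\item The third is unnecessary.
\end{itemize}

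The missing observation is that $\log(12T/\delta)\ge\log 12\ge 2$, so $\psi_0=1\le\sqrt{\log(12T/\delta)/2}$. The initial radius can therefore be absorbed into the common prefactor:
\[
\sum_{\tau=1}^t\psi_{\tau-1}\le\sqrt{\frac{\log(12T/\delta)}{2}}\Bigl(1+\sum_{s=1}^{T-1}s^{-1/2}\Bigr)\le\sqrt{2T\log(12T/\delta)},
\]
using $1+\sum_{s=1}^{T-1}s^{-1/2}\le 2\sqrt T$, which also holds for $T=1$. With no additive constant left, the imbalance bound is
\[
|I_t|\le 4\sqrt{2T\log(12T/\delta)}+\sqrt{2T\log(6/\delta)}\le 5\sqrt2\,\sqrt{T\log(12T/\delta)}\le 10\sqrt{T\log(12T/\delta)}.
\]
The regret bound is
\[
R_t^J\le 8\sqrt{2T\log(12T/\delta)}+\sqrt{2T\log(6/\delta)}+2\sqrt T\le (9\sqrt2+2)\sqrt{T\log(12T/\delta)}\le 15\sqrt{T\log(12T/\delta)}.
\]
This is exactly the paper's computation.
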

\begin{proof}
    Let $\Fs_t$ contain all valuations and algorithmic random draws through round $t$, and let $\rho_t:=\E[I_t-I_{t-1}|\Fs_{t-1}]$. By Equation~\eqref{eq:hoeffding1} with failure parameter $\delta/3$, the good event has probability at least $1-\delta/3$. Applying the first part of Proposition~\ref{prop:imbalance} with parameter $\delta/3$ to inventory and cumulative reward, and taking a union bound, gives an event of probability at least $1-\delta$ on which the good event and both concentration conclusions hold at every time-step. On this event, the following statements hold:
    \begin{enumerate}
        \item By the first part of Proposition~\ref{prop:imbalance} and Theorem~\ref{thm:balance}, it holds:
        \begin{align*}
            |I_t|&\le\left|\sum_{\tau=1}^t\rho_\tau\right|+\sqrt{2T\log(6/\delta)}\\
            &\le4\sum_{\tau=1}^t\max\{\psi_{\tau-1}(A_{\tau,0}),\psi_{\tau-1}(B_{\tau,0})\}\\
            &\qquad+\sqrt{2T\log(6/\delta)}.
        \end{align*}
        Additionally, by definition of $\psi_t$, it holds:
        \begin{align*}
            &\sum_{\tau=1}^t\max\{\psi_{\tau-1}(A_{\tau,0}),\psi_{\tau-1}(B_{\tau,0})\}\\
            &\qquad\le\sqrt{\frac{\log(12T/\delta)}{2}}
            \left(1+\sum_{s=1}^{T-1}\frac{1}{\sqrt{s}}\right)\\
            &\qquad\le\sqrt{2T\log(12T/\delta)}.
        \end{align*}
        Here we used $\psi_0=1$, $\log(12T/\delta)\ge2$, and $1+\sum_{s=1}^{T-1}s^{-1/2}\le2\sqrt T$, which also holds for $T=1$. This shows that:
        \[
        |I_t|\le4\sqrt{2T\log(12T/\delta)}+\sqrt{2T\log(6/\delta)}
        \le10\sqrt{T\log(12T/\delta)}.
        \]
        \item By Proposition~\ref{virtual_to_surrogate} and Theorem~\ref{thm:regmain}, adding the reward concentration bound gives:
        \begin{align*}
            R_t^J&\le R_t^\circ+2\sqrt T\\
            &\le\sum_{\tau=1}^t\bigl(J(b^\circ,a^\circ)-\E[r_\tau|\Fs_{\tau-1}]\bigr)\\
            &\qquad+\sqrt{2T\log(6/\delta)}+2\sqrt T\\
            &\le8\sqrt{2T\log(12T/\delta)}+\sqrt{2T\log(6/\delta)}+2\sqrt T\\
            &\le15\sqrt{T\log(12T/\delta)},
        \end{align*}
        where we have employed the bonus sum bound from the previous point, since the bonus does not depend on the price. The last step uses $\log(6/\delta)\le\log(12T/\delta)$ and $\log(12T/\delta)\ge1$.
    \end{enumerate}
    This concludes the proof, simultaneously for every $t\le T$.
\end{proof}

\Needspace{14\baselineskip}
We conclude this part by further improving the previous bound, providing an instance dependent result which, while technically more involved, captures the performance of the optimal bid-ask pair.
\begin{theorem}\label{thm:mainBER}
    Let $\delta\in(0,1)$. Use $\psi_0=1$, the two radii in Proposition~\ref{prop:goodevent1} with failure parameter $\delta/6$, and $N=T$ in Equation~\eqref{eq:disc}. Then, Algorithm~\ref{alg:ODPM} enjoys, with probability at least $1-\delta$ simultaneously over all the $t$ time-steps, the following guarantees:
    \[
        |I_t|\le 256\sqrt{\max\{F(b^\circ),S(a^\circ)\}T\log(12T/\delta)}+1728(1+\log T)\log(12T/\delta),
    \]
    and:
    \[R^J_t \le 320\sqrt{\max\{F(b^\circ),S(a^\circ)\}T\log(12T/\delta)}+2160(1+\log T)\log(12T/\delta).\]
\end{theorem}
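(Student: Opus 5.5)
The proof follows the template of Theorem~\ref{thm:mainDKW}, now using the variance-sensitive tools. First we fix a single high-probability event. Proposition~\ref{prop:goodevent1}, with parameter $\delta/6$, gives the good event with probability at least $1-\delta/3$. We then apply the second (Freedman) part of Proposition~\ref{prop:imbalance} to the inventory martingale and to the reward martingale, each with parameter $\delta/6$. To handle the random variance budget, we apply it on a geometric grid of deterministic levels $v\in\{2^k\}_{k\le\log_2 T}$ and take a union bound. This costs an extra factor $\log T$ inside the logarithm, which is absorbed into $\log(12T/\delta)$ up to constants. Write $L:=\log(3(T+1)T\cdot 6/\delta)\lesssim\log(12T/\delta)$ (recall $|\Bs^\circ|=T+1$ since $N=T$) and $p^\circ:=\max\{F(b^\circ),S(a^\circ)\}$.

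\textbf{Step 1: bonus sums in terms of $p^\circ$.} By Theorem~\ref{thm:resilience}, $b^\circ\in\Bs_{t-1}^\circ$ and $a^\circ\in\As_{t-1}^\circ$ at every round. Since the active sets are $\Bs^\circ\cap[B_{t,0},1]$ and $\As^\circ\cap[0,A_{t,0}]$, this gives $B_{\tau,0}\le b^\circ$ and $A_{\tau,0}\ge a^\circ$. By monotonicity, $F(B_{\tau,0})\le F(b^\circ)\le p^\circ$ and $S(A_{\tau,0})\le p^\circ$. Theorem~\ref{thm:pontil} then bounds every radius appearing in Theorems~\ref{thm:balance} and~\ref{thm:regmain} by $2\sqrt{2Lp^\circ/(\tau-1)}+8L/(\tau-1)$ for $\tau\ge2$, and by $1$ at $\tau=1$. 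Summing over $\tau\le T$ with $\sum s^{-1/2}\le 2\sqrt T$ and $\sum s^{-1}\le 1+\log T$ gives
\[
\sum_{\tau\le t}\max\{\psi_{\tau-1}(A_{\tau,0}),\psi_{\tau-1}(B_{\tau,0})\}\lesssim \sqrt{p^\circ TL}+(1+\log T)L .
\]
The same bound holds for the $(b^\circ,a^\circ)$ bonus in Theorem~\ref{thm:regmain}.

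\textbf{Step 2: variance bounds.} The conditional variance of the inventory increment is at most $\E[(\mathbbm{I}(\Vsell_\tau\le B_\tau)-\mathbbm{I}(\Vbuy_\tau\ge A_\tau))^2\mid\Fs_{\tau-1}]\le F(B_{\tau,0})+S(A_{\tau,0})\le 2p^\circ$, because the corrections only lower the trade probabilities. Likewise $r_\tau^2\le 2(A_\tau^2\mathbbm{I}(\cdot)+B_\tau^2\mathbbm{I}(\cdot))$, so its conditional variance is at most $2\cdot 2p^\circ$. Hence $\sum_{\tau\le t}\sigma_\tau^2\le 4p^\circ T$ deterministically. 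Choosing the grid level $v$ just above $4p^\circ T$ (or, more simply, the single deterministic level $v$ if $p^\circ$ is treated as a fixed instance constant, which removes the need for the grid), Freedman's inequality gives fluctuations of order $\sqrt{p^\circ TL}+L$ for both the inventory and the reward.

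\textbf{Step 3: assembling.} For the inventory, $|I_t|\le|\sum\rho_\tau|+\text{fluctuation}$, and Theorem~\ref{thm:balance} together with Step 1 gives the first bound. For the regret, Proposition~\ref{virtual_to_surrogate} with $N=T$ contributes only $2t/N\le 2$. Then $R_t^\circ$ is at most the sum of conditional gaps, which Theorem~\ref{thm:regmain} and Step 1 bound, plus the reward fluctuation from Step 2. Tracking constants, with factor $4$ (resp.\ $8$) times the Step 1 constants plus the Freedman terms, should produce the stated $256,1728$ and $320,2160$.

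\textbf{Main obstacle.} The delicate point is Step 1's reliance on Theorem~\ref{thm:pontil} uniformly over the random prices $B_{\tau,0},A_{\tau,0}$. This is legitimate because the good event covers all grid points and all times. The other thing to check is the $\tau=1$ and small-$\tau$ terms, where the $L/\tau$ part dominates; these are what generate the $(1+\log T)L$ term.
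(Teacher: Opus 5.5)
Your proposal is correct and follows essentially the paper's route. Theorem~\ref{thm:resilience} gives $F(B_{\tau,0})\le F(b^\circ)$ and $S(A_{\tau,0})\le S(a^\circ)$, which feeds both Theorem~\ref{thm:pontil} for the bonus sums and a variance budget for Freedman's inequality, and Theorems~\ref{thm:balance} and~\ref{thm:regmain} together with Proposition~\ref{virtual_to_surrogate} then finish the argument.

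The geometric grid over variance levels is unnecessary. The quantity $v=2T(F(b^\circ)+S(a^\circ))$ is a deterministic instance constant that never needs to be known to the learner, so the paper applies the second part of Proposition~\ref{prop:imbalance} once at this single level and checks $\sum_\tau\sigma_\tau^2\le v$ on the good event. The stated constants are loose enough that your tracking fits well inside them.
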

\begin{proof}
    We start the proof by bounding the conditional variances. Let $\Fs_t$ contain all valuations and algorithmic random draws through round $t$, and let $\rho_t:=\E[I_t-I_{t-1}|\Fs_{t-1}]$. Specifically, the following bound holds:
    \begin{align*}
        \sigma_t^2&:=\text{Var}[\mathbbm{I}(\Vsell_t\le B_t)-\mathbbm{I}(\Vbuy_t\ge A_t)|\Fs_{t-1}]\\
        &\le2\text{Var}[\mathbbm{I}(\Vsell_t\le B_t)|\Fs_{t-1}]
        +2\text{Var}[\mathbbm{I}(\Vbuy_t\ge A_t)|\Fs_{t-1}].
    \end{align*}
    By the law of total variance, it holds:
    \begin{align*}
        \text{Var}[\mathbbm{I}(\Vsell_t\le B_t)|\Fs_{t-1}]
        &=\text{Var}[\E[\mathbbm{I}(\Vsell_t\le B_t)|\Fs_{t-1},X]|\Fs_{t-1}]\\
        &\qquad+\E[\text{Var}[\mathbbm{I}(\Vsell_t\le B_t)|\Fs_{t-1},X]|\Fs_{t-1}]\\
        &=F(B_{t,0})^2\text{Var}[X|\Fs_{t-1}]\\
        &\qquad+F(B_{t,0})(1-F(B_{t,0}))\E[X|\Fs_{t-1}]\\
        &\le F(B_{t,0})\E[X|\Fs_{t-1}]\le F(B_{t,0}).
    \end{align*}
    Indeed, by design of the algorithm, conditional on $\Fs_{t-1}$ and $X$, the trade indicator is Bernoulli with parameter $F(B_{t,0})$ if $X=1$ and zero otherwise. An analogous result holds for the ask side, so that $\sigma_t^2\le2F(B_{t,0})+2S(A_{t,0})$.
    The same bound holds for the conditional variance of $r_t$. Indeed, deactivated sides make no trades, so $r_t=A_{t,0}\mathbbm{I}(\Vbuy_t\ge A_t)-B_{t,0}\mathbbm{I}(\Vsell_t\le B_t)$, and the candidate prices are $\Fs_{t-1}$-measurable and belong to $[0,1]$.

    This bound can be refined using the structure of the algorithm. Specifically, under the good event, Theorem~\ref{thm:resilience} gives $B_{t,0}\le b^\circ<a^\circ\le A_{t,0}$. Therefore,
    \begin{equation}\label{eq:fsaa}
    \begin{aligned}
        F(B_{t,0})&\le F(b^\circ),\qquad S(A_{t,0})\le S(a^\circ),\\
        \sum_{\tau=1}^t\sigma_\tau^2&\le2T(F(b^\circ)+S(a^\circ))=:v.
    \end{aligned}
    \end{equation}
    The sum of the conditional reward variances is bounded by the same deterministic $v$ on the good event. Proposition~\ref{prop:goodevent1} with parameter $\delta/6$ gives $\Prob(\goodevent)\ge1-\delta/3$. Applying the second part of Proposition~\ref{prop:imbalance} with parameter $\delta/3$ to inventory and cumulative reward, and using the variance bounds on the good event, gives a common event of probability at least $1-\delta$ on which the good event and both concentration conclusions hold at every time-step.

    Now, we can proceed as follows on this common event:
    \begin{enumerate}
        \item By the second part of Proposition~\ref{prop:imbalance} and Theorem~\ref{thm:balance}, it holds:
        \begin{align*}
            |I_t|&\le\left|\sum_{\tau=1}^t\rho_\tau\right|
            +\sqrt{2v\log(6/\delta)}+\frac43\log(6/\delta)\\*
            &\le4\sum_{\tau=1}^t\max\{\psi_{\tau-1}(A_{\tau,0}),\psi_{\tau-1}(B_{\tau,0})\}\\*
            &\qquad+\sqrt{2v\log(6/\delta)}+\frac43\log(6/\delta).
        \end{align*}
        The grids have $T+1$ points each, so $\log(18T(T+1)/\delta)\le2\log(12T/\delta)$. By definition of $\psi_t$, Theorem~\ref{thm:pontil}, and Equation~\eqref{eq:fsaa}, we have:
        \begin{align*}
            &\sum_{\tau=1}^t\max\{\psi_{\tau-1}(A_{\tau,0}),\psi_{\tau-1}(B_{\tau,0})\}\\
            &\quad\le1+\sum_{s=1}^{T-1}\left(
            4\sqrt{\frac{\max\{F(b^\circ),S(a^\circ)\}\log(12T/\delta)}{s}}
            +\frac{16\log(12T/\delta)}{s}\right)\\
            &\quad\le1+8\sqrt{T\log(12T/\delta)\max\{F(b^\circ),S(a^\circ)\}}\\
            &\qquad+16(1+\log T)\log(12T/\delta).
        \end{align*}
        The initial radius contributes $1$. The last step uses $\sum_{s=1}^{T-1}s^{-1/2}\le2\sqrt T$ and $\sum_{s=1}^{T-1}s^{-1}\le1+\log T$; for $T=1$, the sums are empty. This finally proves that:
        \[
        |I_t|\le256\sqrt{T\log(12T/\delta)\max\{F(b^\circ),S(a^\circ)\}}
        +1728(1+\log T)\log(12T/\delta).
        \]
        Here we also used $v\le4T\max\{F(b^\circ),S(a^\circ)\}$, $\log(6/\delta)\le\log(12T/\delta)$, and $\log(12T/\delta)\ge2$.
        \item By Proposition~\ref{virtual_to_surrogate} and Theorem~\ref{thm:regmain}, adding the reward concentration bound gives:
        \begin{align*}
            R_t^J&\le R_t^\circ+2t/T\\
            &\le6\sum_{\tau=1}^t\max\{\psi_{\tau-1}(A_{\tau,0}),\psi_{\tau-1}(B_{\tau,0})\}\\
            &\qquad+2\sum_{\tau=1}^t\max\{\psi_{\tau-1}(b^\circ),\psi_{\tau-1}(a^\circ)\}\\
            &\qquad+\sqrt{2v\log(6/\delta)}+\frac43\log(6/\delta)+2t/T.
        \end{align*}
        Finally, employing the bound on the bonus sum provided in the previous point, which also holds with $B_{\tau,0},A_{\tau,0}$ replaced by $b^\circ,a^\circ$, we obtain:
        \[
        R_t^J\le320\sqrt{T\log(12T/\delta)\max\{F(b^\circ),S(a^\circ)\}}
        +2160(1+\log T)\log(12T/\delta).
        \]
    \end{enumerate}
    This concludes the proof, simultaneously for every $t\le T$.
\end{proof}
Theorem~\ref{thm:mainBER} provides an instance-dependent bound for the online market making problem under LOB feedback. In particular, when the discretization parameter in Equation~\eqref{eq:disc} is set to $N=T$ and the confidence radius is chosen according to Proposition~\ref{prop:goodevent1} with parameter $\delta/6$, Algorithm~\ref{alg:ODPM} guarantees, with high probability and uniformly over all time steps, a per-round inventory imbalance $|I_t|$ and a surrogate regret $R_t^J$ of order $\widetilde{\mathcal O}(\sqrt{\max\{F(b^\circ),S(a^\circ)\}T})$.
The benchmark appearing in the bound is the pair $(b^\circ,a^\circ)$ maximizing the surrogate objective over the grid with $\nicefrac{1}{T}$-discretization. In particular, since $\max\{F(b^\circ),S(a^\circ)\}\le 1$, Theorem~\ref{thm:mainBER} immediately implies worst-case bounds of order $\widetilde{\mathcal{O}}(\sqrt{T})$ for both regret and per-round inventory.

\subsection{Lower bound}
\label{sec:lower_LOB}

In this section, we provide a lower bound showing that the previously derived worst-case regret upper bound is optimal, up to logarithmic factors. We remark that the following lower bound is not restricted to the LOB setting studied here. Indeed, we prove it under the much stronger assumption that the agent knows the functions $S$ and $F$ \emph{exactly}. 
This assumption is stronger even than full feedback, and highlights that the impossibility of simultaneously achieving small regret and small imbalance is driven by the intrinsic nature of the problem, rather than by the feedback structure.

In the rest of this section, we are going to refer as \textit{hard instance} to the one characterized by this choice of the c.d.f. and survival function for the bids and asks, respectively:
\begin{equation}
F(b)=
\begin{cases}
    0&\quad b< 1/10\\
    1/2&\quad 1/10\le b<9/10\\
    1&\quad b\geq9/10 
\end{cases}\qquad 
S(a)=
\begin{cases}
    1&\quad a\le 1/10\\
    1/2&\quad 1/10<a\le 9/10\\
    0&\quad a>9/10
\end{cases}\label{eq:hard}
\end{equation}
In such a case, Program~\eqref{progr:offline_opt} admits a unique solution characterized by:
$$b=\frac{1}{10},\qquad a=\frac{9}{10},\qquad \frac{\text{OPT}}{T}=\frac{2}{5}.$$
Due to the structure of the problem, at each time step every action generates the same trades as one of the following representatives, whose reward is at least as large:
\begin{enumerate}
    \item $s_0:=(b,a)=(1/10,9/10)$, which incurs zero expected regret. The imbalance generated follows the following distribution:
    $$X^{(0)}\sim \begin{cases}
        1&\qquad p=1/4\\
        0&\qquad p=1/2\\
        -1&\qquad p=1/4
    \end{cases}.$$
    \item $s_1:=(b,a)=(0,1/10)$, which guarantees a sale without a purchase. Therefore, the associated imbalance is:
    $$X^{(1)}=-1.$$
    Its expected reward is $\frac{1}{10}$; thus, the associated expected regret is $\boxed{\nicefrac{3}{10}}$.
    \item $s_2:=(b,a)=(9/10,1)$, which guarantees a purchase without a sale. For this reason, its imbalance is:
    $$X^{(2)}=1.$$
    Its expected reward is $-\frac{9}{10}$; thus, the associated expected regret is $\boxed{\nicefrac{13}{10}}$.
    \item $s_3:=(b,a)=(0,9/10)$. The associated imbalance is the following:
    $$X^{(3)}\sim\begin{cases}
        0&\qquad p=1/2\\
        -1&\qquad p=1/2
    \end{cases}.$$
    Its expected reward is $\frac{9}{20}$; thus, the associated expected regret corresponds to $\boxed{-\nicefrac{1}{20}}$.
    \item $s_4:=(b,a)=(1/10,1)$. The associated imbalance is the following:
    $$X^{(4)}\sim\begin{cases}
        1&\qquad p=1/2\\
        0&\qquad p=1/2
    \end{cases}.$$
    Its expected reward is $-\frac{1}{20}$; thus, the associated expected regret corresponds to $\boxed{\nicefrac{9}{20}}$.
    \item $s_5:=(b,a)=(0,1)$, which prevents both trades. Its imbalance is:
    $$X^{(5)}=0.$$
    Its expected reward is zero; thus, the associated expected regret is $\boxed{\nicefrac{2}{5}}$.
\end{enumerate}
These six trade patterns exhaust all the possibilities under $b<a$. We refer as $T_i$ to the number of rounds in which the selected action has the same trade pattern as $s_i$, for $i=1,\ldots,5$.
Combining everything we have that the expected regret can be bounded as:
\begin{equation}
\E[R_T]\ge\frac{3}{10}\mathbb{E}[T_1]+\frac{13}{10}\mathbb{E}[T_2]
-\frac{1}{20}\mathbb{E}[T_3]+\frac{9}{20}\mathbb{E}[T_4]+\frac25\mathbb{E}[T_5].\label{eq:lower:countregret}
\end{equation}
Equality holds when the algorithm selects only the representatives. Similarly, by the definition of the inventory, any algorithm must satisfy:
\begin{equation}
    -\mathbb{E}[T_1]+\mathbb{E}[T_2]-\frac{\E[T_3]}{2}+\frac{\E[T_4]}{2}=\E[I_T].\label{eq:lower:const}
\end{equation}
\needspace{12\baselineskip}
Crucially, Equations~\eqref{eq:lower:const} and \eqref{eq:lower:countregret} lead to the following minimization problem that links the number of pulls performed by any algorithm to the associated regret.
The imbalance identity allows us to lower bound expected regret by the value of the following linear program, parametrized over $\E[T_\text{sub}]$, where $T_\text{sub}:=T_1+T_2+T_3+T_4+T_5$, and over $\E[I_T]$:
$$\begin{cases} \min_{\bold v \in \mathbb{R}^5} \quad &\frac{3}{10}v_1 + \frac{13}{10}v_2 - \frac{1}{20}v_3 + \frac{9}{20}v_4+\frac25v_5\\
\text{s.t}.\qquad &- v_1 + v_2 - \frac{1}{2}v_3 + \frac{1}{2}v_4 = \E[I_T] \\ 
&v_1 + v_2 + v_3 + v_4+v_5 = \E[T_\text{sub}]\\ 
&v_1, v_2, v_3, v_4,v_5 \ge 0 
\end{cases},$$
whose value is at least $\E[T_\text{sub}]/5+\E[I_T]/2$. Indeed, subtracting $1/5$ times the second constraint and $1/2$ times the first from the objective leaves the nonnegative quantity $3(v_1+v_2)/5+v_5/5$. Therefore,
\begin{equation}
    \E[R_T]\ge\frac{\E[T_\text{sub}]}{5}+\frac{\E[I_T]}{2}.\label{eq:lower:countbound}
\end{equation}
The terminal inventory term accounts for the negative regret of $s_3$; the proof below refines this comparison by measuring the total correction relative to $s_0$.

Given the aforementioned reasoning, we are now able to provide our lower bound.

\begin{figure}[!t]
    \centering
    \includegraphics[width=1.0\linewidth]{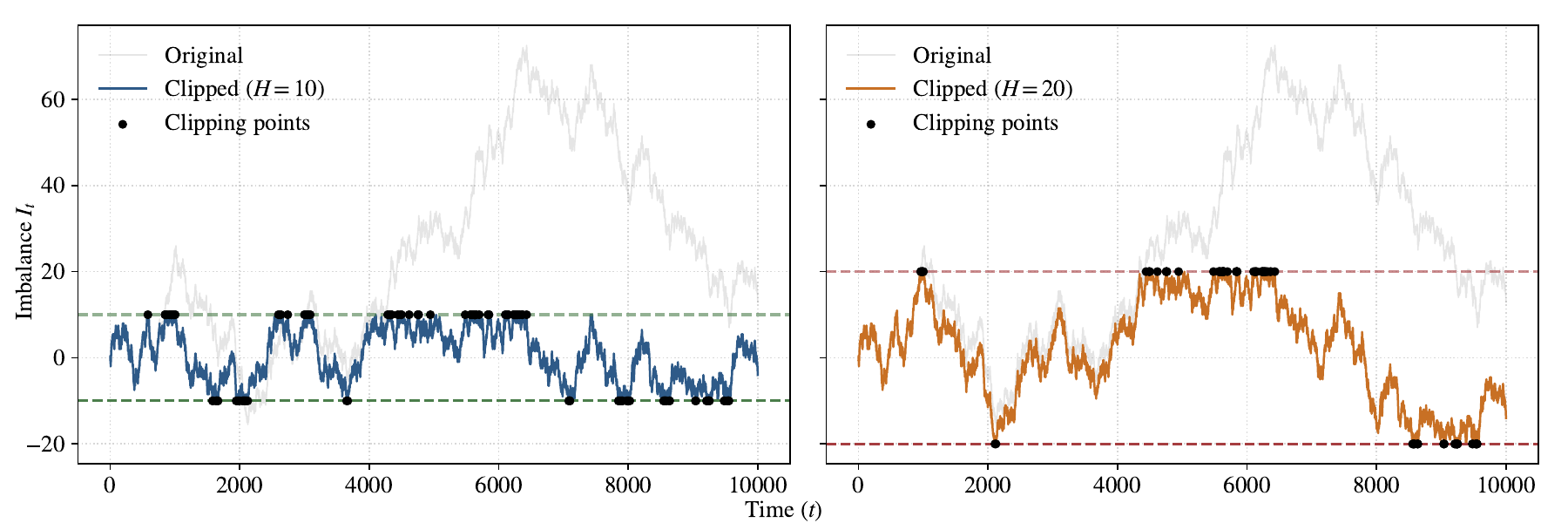}
    \caption{   
    This figure provides heuristic intuition for the lower-bound construction developed in Section~\ref{sec:lower_LOB}. Suppose that the agent aims to guarantee an imbalance bound of the form $|I_t|\le H=\mathcal O(T^\beta)$ for some $\beta<1/2$. If the agent always selects the $J$-maximizing actions $b^\circ$ and $a^\circ$, then, since $F(b^\circ)=S(a^\circ)=\nicefrac{1}{2}$, the resulting imbalance process $I_t$ evolves as a driftless random walk; this behavior is illustrated by the shaded curve. To prevent the walk from leaving the interval $[-H,H]$ (delimited by the dashed lines), the agent must occasionally intervene by selecting suboptimal actions, represented by the black points. As the figure suggests, the smaller $H$ is, the more frequently such corrective interventions are needed, leading to a fundamental trade-off between the strength of the imbalance constraint and the resulting cumulative regret. The core of the proof is to show that expected regret grows at least linearly with the expected total amount of correction, up to an additive $\mathcal O(H)$ term. This correction is of order $\Omega(\nicefrac{T}{H})=\Omega(T^{1-\beta})$.
}
    \label{fig:lowerbound}
\end{figure}

\needspace{8\baselineskip}
\begin{theorem}\label{prop:lowerbound}
    Let $T\ge144H^2$.
    Then, every algorithm that satisfies the per-round inventory constraint for $\alpha\le H$ suffers expected regret $\E[R_T]\ge \frac{T}{720H}$.
\end{theorem}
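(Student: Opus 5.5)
The plan is to combine two facts. First, the bound on $|I_t|$ forces a large expected \emph{total drift correction}, because the driftless part of the walk would otherwise escape $[-H,H]$. Second, regret grows linearly in that correction, up to an additive $\mathcal O(H)$. As in the discussion preceding the statement, I would work with the six representatives $s_0,\dots,s_5$ and assume the algorithm only plays them. Any other action has the same trade pattern as some $s_i$ and weakly smaller reward, so this only lowers the regret. For action $s_i$ let $d_i:=\E[X^{(i)}]$ be the drift, $m_i:=\E[(X^{(i)})^2]$ the second moment, and $c_i$ the expected regret. From the list:
$(d_i)=(0,-1,1,-\tfrac12,\tfrac12,0)$, $(m_i)=(\tfrac12,1,1,\tfrac12,\tfrac12,0)$, $(c_i)=(0,\tfrac3{10},\tfrac{13}{10},-\tfrac1{20},\tfrac9{20},\tfrac25)$.
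Let $d_t,m_t,c_t$ denote the quantities of the action chosen at round $t$; they are $\Fs_{t-1}$-measurable. Write $D:=\E\bigl[\sum_t|d_t|\bigr]$.

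\textbf{Step 1 (quadratic potential).} Expand $I_t^2=I_{t-1}^2+2I_{t-1}X_t+X_t^2$ and take conditional expectations. Summing over rounds gives
\[
\E[I_T^2]=\E\Bigl[\sum_{t}m_t\Bigr]+2\,\E\Bigl[\sum_t I_{t-1}d_t\Bigr].
\]
The constraint gives $|I_{t-1}|\le H$, hence $2\,|\E[\sum_t I_{t-1}d_t]|\le 2HD$. It also gives $\E[I_T^2]\le H^2$. Every action except $s_5$ has $m_i\ge \tfrac12$, so $\E[\sum_t m_t]\ge \tfrac12(T-\E[T_5])$. Combining these,
\[
\tfrac12\,\E[T_5]+2HD\ \ge\ \tfrac T2-H^2\ \ge\ \tfrac T2-\tfrac{T}{144},
\]
where the last step uses $T\ge144H^2$.

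\textbf{Step 2 (regret dominates correction).} The obstacle is $s_3$, which has negative regret but nonzero drift. I would absorb it with a Lagrangian shift using the terminal inventory, which the constraint bounds. Since $\E[\sum_t d_t]=\E[I_T]\ge -H$, for $\lambda=-\tfrac15$ we get
\[
\E[R_T]=\E\Bigl[\sum_t (c_t+\lambda d_t)\Bigr]-\lambda\E[I_T]\ \ge\ \E\Bigl[\sum_t (c_t-\tfrac15 d_t)\Bigr]-\tfrac H5 .
\]
Then I would check entrywise that $c_i-\tfrac15 d_i\ge \tfrac1{10}|d_i|$ for $i\ne5$, and that $c_5-\tfrac15 d_5=\tfrac25$. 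Concretely, $s_3$ gives $\tfrac1{20}=\tfrac1{10}\cdot\tfrac12$, $s_1$ gives $\tfrac12$, $s_2$ gives $\tfrac{11}{10}$, and $s_4$ gives $\tfrac{7}{20}$. This yields
\[
\E[R_T]\ge \tfrac1{10}D+\tfrac25\E[T_5]-\tfrac H5 .
\]
I expect this step to be the main obstacle, namely choosing the multiplier so that every action, including $s_3$, pays at least a constant times its drift. It is the refinement, alluded to before the statement, of the cruder linear-program bound~\eqref{eq:lower:countbound}.

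\textbf{Step 3 (combine).} Step 1 gives $D\ge \bigl(\tfrac{71T}{144}-\tfrac12\E[T_5]\bigr)/(2H)$. Substituting into Step 2, the coefficient of $\E[T_5]$ becomes $\tfrac25-\tfrac1{40H}\ge0$, so that term can be dropped. What remains is $\E[R_T]\ge \tfrac{71T}{2880H}-\tfrac H5$. Finally, $T\ge144H^2$ gives $H\le T/(144H)$, hence $\tfrac H5\le \tfrac{T}{720H}$. Comparing the constants shows the result is at least $\tfrac{T}{720H}$.
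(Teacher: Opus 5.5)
Your proof is correct, and it takes a different route from the paper's at the key step.

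\textbf{How the paper argues.} The paper couples the realized increments $Z_t$ with the increments $X_t^{(0)}$ that $s_0$ would have produced on the same valuations. It then shows $\E[R_T]\ge\frac25\E[\sum_t|Z_t-X_t^{(0)}|]+\frac12\E[I_T]$, which is a Lagrangian shift with multiplier $\frac12$ on the inventory increment rather than your $\frac15$. Correction is then forced by anti-concentration. On deterministic blocks of length $q=\lceil 36H^2\rceil$, Khintchine's inequality applied to the Rademacher representation of $X_\tau^{(0)}$ gives $\E|\sum_\tau X_\tau^{(0)}|\ge\sqrt q/2\ge 3H$. Since $|I_k-I_j|\le 2H$, each block must carry expected correction at least $H$, and summing over roughly $T/(37H^2)$ blocks gives the bound.

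\textbf{How you argue.} You replace the pathwise coupling with the conditional drift $d_t$. You replace the blocking-plus-Khintchine step with the exact second-moment identity $\E[I_T^2]=\E[\sum_t m_t]+2\E[\sum_t I_{t-1}d_t]$. This trades the predictable quadratic variation, which is at least $\frac12$ on every round not spent on $s_5$, against the total drift in a single global step. The price is that the no-trade pattern, with zero drift and zero variance, must be tracked separately through $\E[T_5]$, which you do correctly.

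\textbf{What each route buys.} Your route is more elementary: no anti-concentration constant, no block bookkeeping. It also uses only that each trading pattern has conditional second moment at least $\frac12$, not the exact Rademacher structure, so it transfers more easily to other hard instances. It gives the sharper constant $\frac{67}{2880}$ in place of $\frac1{720}$. The paper's version, in exchange, makes explicit the picture of a driftless walk that must be pushed back in every window of length of order $H^2$.

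\textbf{Small points to tighten.}
\begin{itemize}
\item For randomized algorithms, condition on the information available after the round-$t$ action is drawn (the paper's $\mathcal H_t$) rather than on $\Fs_{t-1}$. Otherwise $d_t,m_t,c_t$ are not measurable.
\item Dropping the $\E[T_5]$ term needs $\frac25-\frac1{40H}\ge0$. This holds because $H\ge\alpha\ge1$, which you should say.
\item The reduction to representatives is unnecessary. $d_t$ and $m_t$ depend only on the trade pattern, and $c_t$ is at least the regret of the corresponding representative, so you can run the argument directly on the original algorithm.
\end{itemize}
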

\begin{proof}
Let $I_0=0$ and $Z_t:=I_t-I_{t-1}$. Define the imbalance generated by $s_0$ on the valuations at round $t$ as:
\[
X_t^{(0)}:=\mathbbm{I}(\Vsell_t\le1/10)-\mathbbm{I}(\Vbuy_t\ge9/10).
\]
Let $\mathcal H_t$ denote the information available after selecting $(B_t,A_t)$ and before observing the valuations at round $t$. Conditional on this information, $\E[Z_t\mid\mathcal H_t]=F(B_t)-S(A_t)$ and $\E[r_t\mid\mathcal H_t]=A_tS(A_t)-B_tF(B_t)$. Therefore,
\begin{equation}
\begin{aligned}
\E\left[\frac25-r_t-\frac12 Z_t\,\middle|\,\mathcal H_t\right]
&=\frac25+\left(B_t-\frac12\right)F(B_t)+\left(\frac12-A_t\right)S(A_t)\\
&\ge \frac25\E\left[\left|Z_t-X_t^{(0)}\right|\,\middle|\,\mathcal H_t\right].
\end{aligned}\label{eq:lower:regret}
\end{equation}
Indeed, when $(F(B_t),S(A_t))=(1/2,1/2)$, the expression equals $2/5-(A_t-B_t)/2\ge0$ and $Z_t=X_t^{(0)}$. The other possible pairs are $(0,1)$, $(0,1/2)$, $(0,0)$, $(1/2,0)$, and $(1,0)$, for which the expression is at least $4/5$, $1/5$, $2/5$, $1/5$, and $4/5$, respectively, while the conditional expectations of $|Z_t-X_t^{(0)}|$ are $1$, $1/2$, $1/2$, $1/2$, and $1$. Combining everything, the regret can be bounded as:
\begin{equation}
\E[R_T]\ge \frac25\E\left[\sum_{t=1}^T\left|Z_t-X_t^{(0)}\right|\right]+\frac12\E[I_T].\label{eq:lower:regret2}
\end{equation}

For some deterministic $0\le j<k\le T$ to be defined, consider the imbalances $I_j$ and $I_k$, the total imbalances at time $j$ and $k$, respectively. Indeed, it holds:
\begin{align*}
    I_k-I_j&=\sum_{\tau=j+1}^{k} \left[\mathbbm{I}(\Vsell_\tau\le B_\tau)-\mathbbm{I}(\Vbuy_\tau\ge A_\tau)\right]\\
    &=\sum_{\tau=j+1}^{k}X_\tau^{(0)}+\Xi_j^k.
\end{align*}
Using this decomposition, $\Xi_j^k:=\sum_{\tau=j+1}^k(Z_\tau-X_\tau^{(0)})$ corresponds to the correction relative to always playing $s_0$.
Now, by definition of $s_0$ and independence of the valuations, it holds:
\[
X_\tau^{(0)}\sim\frac{\xi_\tau^{(1)}+\xi_\tau^{(2)}}{2},
\qquad \xi_\tau^{(1)},\xi_\tau^{(2)}\overset{\indi}{\sim}\text{Rademacher},
\]
with independence also across rounds. Thus, by the Khintchine's anti-concentration inequality \cite{young1976best}, it holds:
\begin{align*}
    \E\left[\left|\sum_{\tau=j+1}^{k}X_\tau^{(0)}\right|\right]
    &=\frac12\E\left[\left|\sum_{\tau=j+1}^{k}\left(\xi_\tau^{(1)}+\xi_\tau^{(2)}\right)\right|\right]\\*
    &\ge\frac{\sqrt{2(k-j)}}{2\sqrt2}\\*
    &=\frac{\sqrt{k-j}}{2}.
\end{align*}
In order for the assumption of the theorem to be satisfied, we have $\E[|I_k-I_j|]\le2H$. Therefore, it follows from the previous equations that:
\[
\E[|\Xi_j^k|]\ge\frac{\sqrt{k-j}}{2}-2H.
\]
Notice that $|\Xi_j^k|\le\sum_{\tau=j+1}^{k}|Z_\tau-X_\tau^{(0)}|$.

Now, we can rewrite the regret lower bound as follows by defining a bucket size $\text{q}:=\lceil36H^2\rceil$. We use the complete deterministic blocks and discard the nonnegative contribution of the remaining rounds. By Equation~\eqref{eq:lower:regret2} and $I_T\ge-H$, it holds:
\begin{align*}
    \E[R_T]&\ge\frac25\sum_{\ell=0}^{\lfloor T/\text{q}\rfloor-1}\E\left[\left|\Xi_{\text{q}\ell}^{\text{q}\ell+\text{q}}\right|\right]-\frac H2\\*
    &\ge\frac25\sum_{\ell=0}^{\lfloor T/\text{q}\rfloor-1}\left(\frac{\sqrt{\text{q}}}{2}-2H\right)-\frac H2\\*
    &\ge\frac25\left\lfloor\frac{T}{\text{q}}\right\rfloor H-\frac H2\\*
    &\ge\frac25\left(\frac{T}{37H^2}-1\right)H-\frac H2\\*
    &\ge\frac{T}{720H},
\end{align*}
where the penultimate step uses $\text{q}\le37H^2$ for $H\ge1$, and the last step comes from the assumption $T\ge144H^2$. This concludes the proof.
\end{proof}

The previous theorem implies, in particular, that any algorithm attaining $o(\sqrt{T})$ per-round inventory imbalance must also incur
$\omega(\sqrt{T}) $ expected regret on the hard instance. 

In Figure~\ref{fig:lowerbound}, we provide a graphical intuition behind the lower bound of our setting.

\section{Attaining Global Inventory Balance}
\label{sec:global}
In this section, we provide a wrapper procedure which, given as input any algorithm with similar guarantees to the ones of Section~\ref{sec:2B}--\ref{sec:LOB}, obtains a global inventory constraint satisfaction, that is, at the end of the process, the number of sold items is equal to the number of purchased ones.

\subsection{Algorithm}

In Algorithm~\ref{alg: global_imbalance}, we provide a black-box metaprocedure that converts any algorithm with a sublinear per-round imbalance guarantee into one that additionally satisfies the global inventory constraint. Specifically, given any algorithm $\mathscr{A}$ whose per-round inventory imbalance is bounded by $I_{\mathscr{A}}$,\footnote{For simplicity, we assume $I_{\mathscr{A}}$ to be an integer. Otherwise it sufficient to use $\lfloor I_{\mathscr{A}}\rfloor $ in place of $I_{\mathscr{A}}$.} we execute $\mathscr{A}$ for the first $T-I_{\mathscr{A}}$ rounds and reserve the final $I_{\mathscr{A}}$ rounds for a balancing phase. By the guarantee on $\mathscr{A}$, the inventory at time $T-I_{\mathscr{A}}$ has magnitude at most $I_{\mathscr{A}}$, and can therefore be fully balanced over the remaining horizon. During this phase, the algorithm repeatedly selects the most inventory-increasing bid-ask pair whenever the agent is short, and the most inventory-decreasing bid-ask pair whenever the agent is long, until the total imbalance reaches zero. After that, it selects the no-trade bid-ask pair $(0,1)$ for all remaining rounds. In this way, the resulting algorithm satisfies the global inventory constraint.

The careful reader will notice that, in this section, we allow our algorithm to play the extreme bid-ask pairs $(1,1)$ and $(0,0)$, which do not satisfy the strict inequality $B_t < A_t$. Nonetheless, this is only a notational convenience. In standard market-making models, one typically has access to prices that effectively implement the decision of not selling or not buying. Thus, these extreme actions should be interpreted as shorthand for reserve prices that deterministically prevent trade on one side while forcing it on the other.


\subsection{Theoretical Guarantees}

\begin{algorithm}[!t]
		\caption{\textsc{Global Inventory Balance Metaprocedure}}
		\label{alg: global_imbalance}
        \begin{algorithmic}[1]
        \Require $T\in\mathbb{N}_{>0}$, Algorithm $\mathscr{A}$ with per-round imbalance upper bound of $I_\mathscr{A}$
        \State Initialize $\mathscr{A}$ for horizon $T$ and set $I_0\gets0$
		\For{$t\in[T-I_{\mathscr{A}}]$}			
			\State Select $(B_t,A_t)\gets\mathscr{A}$
            \State Update $I_t$ from the observed trades
         \EndFor
        \While{$|I_{t-1}|>0$}
        \If{$\sum_{\tau=1}^{t-1} \mathbbm{I}(\Vbuy_\tau\ge A_\tau) > \sum_{\tau=1}^{t-1}\mathbbm{I}(\Vsell_\tau\le B_\tau)$}
        \State Select $(B_t,A_t)\gets(1,1)$
        \Else
        \State Select $(B_t,A_t)\gets(0,0)$
        \EndIf
        \State Update $I_t$ from the observed trades
		\EndWhile
        \State Select $(B_t,A_t)\gets(0,1)$ until the end of the learning dynamic        
        \end{algorithmic}
\end{algorithm}

We can now prove the final result of the paper.

\begin{theorem}
\label{thm:global}
    Let $\delta\in(0,1)$ and let $\mathscr{A}$ be an algorithm for the online market making problem whose (surrogate) regret and per-round imbalance are upper bounded by $R^J_{\mathscr{A}},I_{\mathscr{A}}$ with probability at least $1-\delta$, respectively. Then, Algorithm~\ref{alg: global_imbalance} instantiated with $\mathscr{A}$ attains with probability at least $1-\delta$:
    \[R^J_T\leq R^J_{\mathscr{A}} + 2I_\mathscr{A} , \quad |I_T|=0,\quad |I_t |\leq I_\mathscr{A} \quad \forall t\in[T].\]
\end{theorem}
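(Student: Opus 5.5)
We work on the event of probability at least $1-\delta$ on which the guarantees of $\mathscr{A}$ hold, and we split the horizon into the first phase $t\le T-I_{\mathscr{A}}$ (running $\mathscr{A}$) and the final phase of length $I_{\mathscr{A}}$ (balancing). The argument proceeds in three steps, in the order of the three claims.

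\textbf{Inventory claims.} During the first phase the per-round bound $|I_t|\le I_{\mathscr{A}}$ is inherited directly from $\mathscr{A}$, since the metaprocedure plays exactly what $\mathscr{A}$ prescribes. In the balancing phase, the action $(1,1)$ forces a purchase (as $\Vsell_t<1$ surely would not matter: $\Vsell_t\le 1$ always holds) and prevents a sale (as $\Vbuy_t<1$). Hence it changes $I_t$ by exactly $+1$. Symmetrically, $(0,0)$ changes it by exactly $-1$, using $\Vsell_t>0$ and $\Vbuy_t\ge 0$, and $(0,1)$ changes it by $0$. The while loop plays $(1,1)$ exactly when the agent is short ($I_{t-1}<0$) and $(0,0)$ when long. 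Therefore $|I_t|$ decreases by one deterministically at each balancing round until it hits zero, and then stays at zero. Since $|I_{T-I_{\mathscr{A}}}|\le I_{\mathscr{A}}$ and $I_{\mathscr{A}}$ rounds remain, $I_T=0$, and $|I_t|\le I_{\mathscr{A}}$ holds throughout because $|I_t|$ is monotonically non-increasing in this phase.

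\textbf{Regret claim.} Decompose
\[
R_T^J=\Bigl(J^*(T-I_{\mathscr{A}})-\textstyle\sum_{t\le T-I_{\mathscr{A}}}r_t\Bigr)+\Bigl(J^*I_{\mathscr{A}}-\textstyle\sum_{t>T-I_{\mathscr{A}}}r_t\Bigr).
\]
The first bracket is the surrogate regret of $\mathscr{A}$ at time $T-I_{\mathscr{A}}$. The guarantee of $\mathscr{A}$ is assumed to hold at every $t$ (as for the algorithms of Sections~\ref{sec:2B}--\ref{sec:LOB}, where the bounds are uniform in $t$), so this bracket is at most $R^J_{\mathscr{A}}$.

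For the second bracket, note that $J^*\le 1$ and that every balancing round has $r_t\ge -1$: playing $(1,1)$ yields $-1$, while $(0,0)$ and $(0,1)$ yield $0$. Each of the $I_{\mathscr{A}}$ rounds thus contributes at most $2$, giving the total $R^J_{\mathscr{A}}+2I_{\mathscr{A}}$.

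\textbf{Main obstacle.} The only delicate point is that $\mathscr{A}$ is run for horizon $T$ but stopped early. Its bounds must therefore be valid at the intermediate time $T-I_{\mathscr{A}}$ rather than only at $T$. We invoke the anytime form of the guarantees, which is exactly what Theorems~\ref{thm:regret_two_bit}, \ref{thm:mainDKW} and~\ref{thm:mainBER} provide: they hold simultaneously for all $t$ on a single event. This also makes the probability $1-\delta$ carry over without any further union bound, since the balancing phase is deterministic.
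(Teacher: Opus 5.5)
Your proposal is correct and follows the same route as the paper. You split the horizon at $T-I_{\mathscr{A}}$, inherit $\mathscr{A}$'s bounds on the first part, charge at most $2$ per balancing round to the regret, and use that $|I_t|$ drops by exactly one in each balancing round until it hits zero. You are slightly more careful than the paper's proof in two places: you state explicitly that $\mathscr{A}$'s regret bound must hold at the intermediate time $T-I_{\mathscr{A}}$ (anytime, not only at $T$), and you check that $(1,1)$ and $(0,0)$ move the inventory deterministically by $\pm 1$, using $\Vsell_t>0$ and $\Vbuy_t<1$.
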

\begin{proof}
The final surrogate regret bound follows from the fact that, up to round $T-I_{\mathscr{A}}$, the surrogate regret is bounded by $R^J_{\mathscr{A}}$, while over the remaining rounds it is at most $2I_{\mathscr{A}}$.

The per-round imbalance guarantee follows from the fact that, in the second phase of the algorithm, namely after round $T-I_{\mathscr{A}}$, the quantity $|I_t|$ strictly decreases at each round.

The satisfaction of the global inventory constraint follows by observing that, at round $T-I_{\mathscr{A}}$, the imbalance is at most $I_{\mathscr{A}}$, and then decreases strictly over the next $I_{\mathscr{A}}$ rounds until it reaches $0$, by construction of the algorithm.

Noticing that the above argument holds with probability at least $1-\delta$ concludes the proof.
\end{proof}

As a final remark, the assumption that the per-round imbalance upper bound $I_{\mathscr{A}}$ achieved by algorithm $\mathscr{A}$ is known in advance---notice that, this assumption does not hold for our instance-dependent bound---can be \emph{removed}. It is sufficient to start balancing the inventory as soon as $|I_t| = T-t$, and the same guarantee continues to hold.

\section{Final Discussion}

In this paper, we study the sequential market making problem through the lens of \emph{regret minimization}. Specifically, we formulate sequential market making as an online learning problem under peculiar inventory constraints. In this setting, the goal of the agent (the market maker) is two-fold. On the one hand, the agent iteratively posts bid and ask prices so as to \emph{maximize the revenue} collected during the learning process. On the other hand, the agent must handle the inventory by $i)$ guaranteeing that the number of sold items matches the number of purchased ones at the end of the learning process, and $ii)$ keeping the inventory as small as possible throughout learning.

We study the setting in which buyers' and sellers' valuations are sampled at each round from fixed distributions, without making any additional assumption on the shape of the corresponding c.d.f.'s. Moreover, we consider two feedback models, which determine what the market maker observes at the end of each round. Our results can be \emph{shortly} summarized as follows:
\begin{itemize}
\item We first study the well-known two-bit feedback model, in which the market maker only observes whether a trade with the seller and/or with the buyer occurred. In this setting, we develop an algorithm attaining $\widetilde{\mathcal{O}}(T^{\nicefrac{2}{3}})$ regret and $\widetilde{\mathcal{O}}(T^{\nicefrac{2}{3}})$ per-round inventory bounds, while satisfying the overall inventory constraint. 

\item We then turn to the LOB feedback model, where the agent has access to the \emph{limit order book}, and therefore observes the realized valuation of the buyer or seller whenever the corresponding trade does not occur. This feedback model is particularly relevant for several reasons. First, it is realistic in practice, since larger market players often have access to limit order book information. Second, as we show in this work, it allows one to achieve $\widetilde{\mathcal{O}}(\sqrt{T})$ regret and per-round inventory bounds---which we prove to be tight up to logarithmic factors---thus yielding significantly improved guarantees while still relying on a realistic feedback structure. Finally, our results show that in sequential market making, observing the complete per-round valuations at the end of the round is not necessary to get $\widetilde{\mathcal{O}}(\sqrt{T})$ guarantees.
\end{itemize}

Overall, our results show that sequential market making is a technically challenging online learning problem, where limited feedback and inventory constraints interact in a non-trivial way and require substantially new ideas.

\subsection{Open Problems}

An intriguing direction for future work is to study sequential market making under adversarial buyer and seller valuations, namely, no statistical assumption is made on the per-round valuations of buyers and sellers. To fix ideas, let us consider the natural offline benchmark in hindsight:
\begin{equation}\label{progr:offline_opt_adv}
	\text{OPT}_{\mathrm{Adv}} \coloneqq \begin{cases}
		\max_{b,a\in [0,1]^2}  & \sum_{t=1}^T\left[a\cdot \mathbbm{I}(\Vbuy_t\geq a) - b \cdot \mathbbm{I}(\Vsell_t\leq b) \right] \\[0.2cm]
        \text{s.t.}
		&\sum_{t=1}^T \mathbbm{I}(\Vbuy_t\ge a)=\sum_{t=1}^T\mathbbm{I}(\Vsell_t\le b)\\
		& b<a.
	\end{cases}
\end{equation}
This is the standard type of comparator considered in adversarial constrained online learning~\citep{Unifying_Framework}. At the same time, it is well known that, in bandit settings with adversarial rewards and adversarial constraints, one cannot in general hope for sublinear regret and sublinear violation with respect to a benchmark satisfying the constraints on average (namely, the constraints in-hindsight, as specified in $\text{OPT}_{\text{Adv}}$). The associated impossibility result is provided in Proposition 4 of~\citep{Mannor} for the simple case of two player zero-sum games with constraints. 

However, it is far from clear whether the existing impossibility results fully capture the adversarial sequential market making problem. Indeed, our setting has a very specific structure: the action is a bid-ask pair, the reward and the feasibility conditions are tightly coupled through the inventory, and the benchmark in Program~\eqref{progr:offline_opt_adv} is computed in terms of a single bid-ask pair rather than a distribution over actions (bid-ask), which instead is the case of~\citep{Mannor}. For this reason, a direct reduction from the classical impossibility instance is not immediate, and understanding the adversarial regime appears particularly interesting in our setting.

This leads to several natural open questions:
\begin{itemize}
    \item Is it possible to attain sublinear regret with respect to $\text{OPT}_{\mathrm{Adv}}$ while satisfying the inventory constraints?
    \item If the answer is negative, what is the right benchmark to study? A very conservative alternative is the optimum satisfying the inventory constraint at every round, namely
    \begin{equation*}\label{progr:offline_opt_adv_weak}
	\text{OPT}^{\mathrm{Weak}}_{\mathrm{Adv}} \coloneqq \begin{cases}
		\max_{b,a\in [0,1]^2}  & \sum_{t=1}^T\left[a\cdot \mathbbm{I}(\Vbuy_t\geq a) - b \cdot \mathbbm{I}(\Vsell_t\leq b) \right] \\[0.2cm]
        \text{s.t.}
		& \mathbbm{I}(\Vbuy_t\ge a)=\mathbbm{I}(\Vsell_t\le b)\qquad \forall t\in[T]\\
		& b<a,
	\end{cases}
\end{equation*}
    but this benchmark appears overly restrictive, and arguably not very natural, in market making settings.
    \item More broadly, are there other meaningful notions of regret or competitive guarantees that still retain $\text{OPT}_{\mathrm{Adv}}$ as the reference benchmark and are learnable under inventory constraints?
\end{itemize}
These questions naturally suggest several directions for future work.

\newpage
\appendix

\section*{AI Use Statement}
Generative AI tools (OpenAI Codex) were used exclusively to $(i)$ refine the writing, $(ii)$ generate the figures and $(iii)$ identify potential typos in the mathematical proofs.
\section{Notation Summary}

\begin{table}[H]
\centering
\label{tab:notation}
\small
\renewcommand{\arraystretch}{1.08}
\setlength{\tabcolsep}{5pt}
\begin{tabularx}{\textwidth}{@{}p{1.3in}X@{}}
\toprule
\textbf{Symbol} & \textbf{Meaning} \\
\midrule
$T$ & Time horizon. \\
$\Vbuy_t,\Vsell_t$ & Buyer’s and seller’s private valuations at time $t$. \\
$B_t,A_t$ & Bid--ask pair selected at time $t$. \\
$I_t$ & Inventory at time $t$. \\
$\alpha$ & Per-round inventory threshold. \\
$\mathrm{OPT}$ & Optimal reward value. \\
$R_T$ & Regret incurred by the algorithm. \\
$F(b)$ & Seller c.d.f. at $b$: $F(b)\coloneqq \E[\mathbbm{I}(\Vsell_t\le b)]$. \\
$S(a)$ & Buyer survival function at $a$: $S(a)\coloneqq \E[\mathbbm{I}(\Vbuy_t\ge a)]$. \\
$J(b,a)$ & Surrogate objective: $J(b,a)\coloneqq (a-b)\min\{F(b),S(a)\}$. \\
$J^*$ & Supremum of $J$. \\
$R_T^J$ & Surrogate regret, defined in \eqref{eq:surr_regret}. \\
$\mathscr{A}$ & An algorithm. \\
$\indi$ & Independent random variables. \\
$\Vs$ & Joint distribution of seller and buyer valuations. \\
$\Bs^\circ,\As^\circ$ & Discretized sets of bids and asks. \\
$\goodevent$ & Good event; see \eqref{eq:good_two} (2-bit case) and \eqref{eq:good} (LOB case). \\
$\ell$ & Phase index in Algorithm~\ref{alg:twobit}. \\
$\psi_\ell$ & Confidence bonus from Definition~\ref{def:bounds_two}. \\
$F_\ell^\mathrm{low}(b),F_\ell^\mathrm{up}(b)$ & Lower and upper confidence bounds for $F(b)$ at phase $\ell$ in Algorithm~\ref{alg:twobit}. \\
$S_\ell^\mathrm{low}(a),S_\ell^\mathrm{up}(a)$ & Lower and upper confidence bounds for $S(a)$ at phase $\ell$ in Algorithm~\ref{alg:twobit}. \\
$J_\ell^\mathrm{low}(b,a)$ & $(a-b)\min\{F_\ell^\mathrm{low}(b),S_\ell^\mathrm{low}(a)\}$ in Algorithm~\ref{alg:twobit}. \\
$J_\ell^\mathrm{up}(b,a)$ & $(a-b)\min\{F_\ell^\mathrm{up}(b),S_\ell^\mathrm{up}(a)\}$ in Algorithm~\ref{alg:twobit}. \\
$\Bs_\ell,\As_\ell$ & Sets of bids and asks active at phase $\ell$ in Algorithm~\ref{alg:twobit}. \\
$\Cs_\ell(b),\Cs_\ell(a)$ & Sets of asks and bids compatible with $b$ and $a$ at phase $\ell$ in Algorithm~\ref{alg:twobit}; see \eqref{eq:coset}. \\
$\gamma_t$ & Time complexity to perform one step of the algorithm.\\
$\Gamma$ & $\coloneqq\max_{t\in[T]}\gamma_t$.\\
$\Fs_t$ & Natural filtration of the process.\\
$\psi_t(b)$ & Confidence bonus of Equation~\eqref{eq:good}. \\
$F_t^\mathrm{low}(b),F_t^\mathrm{up}(b)$ & Lower and upper confidence bounds for $F(b)$ at step $t$ for the LOB case (Eq. \eqref{eq:boundsbid}). \\
$S_t^\mathrm{low}(a),S_t^\mathrm{up}(a)$ & Lower and upper confidence bounds for $S(a)$ at step $t$  for the LOB case (Eq. \eqref{eq:boundsbid}). \\
$J_t^\mathrm{low}(b,a)$ & $(a-b)\min\{F_t^\mathrm{low}(b),S_t^\mathrm{low}(a)\}$ in Algorithm~\ref{alg:ODPM}. \\
$J_t^\mathrm{up}(b,a)$ & $(a-b)\min\{F_t^\mathrm{up}(b),S_t^\mathrm{up}(a)\}$ in Algorithm~\ref{alg:ODPM}. \\
$\Bs_t,\As_t$ & Sets of bids and asks active at step $t$ in Algorithm~\ref{alg:ODPM}. \\
\bottomrule
\end{tabularx}
\end{table}

\section{Technical Lemmas}
\gooddeltatwo*
\begin{proof}
    By Theorem~\ref{thm:ecdf}, for fixed $t,\ell$ as in Definition~\ref{def:bounds_two} and $\varepsilon>0$, it holds:
    \[
    \Prob\left(\max\Big\{\sup_b\Delta_F(b,t,\ell),\sup_a\Delta_S(a,t,\ell)\Big\}>\varepsilon\right)
    \le 4e^{-2(2^\ell)\varepsilon^2}.
    \]
    For $S$, the theorem is applied to the reflected buyers' valuations, since $S(a)=\Prob(-\Vbuy_t\le-a)$. Taking $\varepsilon\uparrow\psi_\ell$ and employing a union bound on all the possible values of $t$ and $\ell$ in Definition~\ref{def:bounds_two}, we get:
    \[
    \Prob(\goodevent^c)
    \le\sum_{\ell=0}^{\lfloor\log_2(T^{2/3})\rfloor}\sum_{t=1}^{T-2^\ell+1}4e^{-2(2^\ell)\psi_\ell^2}
    \le T^2\frac{\delta^2}{4T^2}=\frac{\delta^2}{4}\le\delta,
    \]
    where the last inequalities hold as $\lfloor\log_2(T^{2/3})\rfloor+1\le T$ and $0<\delta<1$.
\end{proof}


\begin{lemma}\label{lem:double}
    Under the good event~\eqref{eq:good}, for every active $(b,a)$ pair, it holds at any time $t\in[T]:$
     \[
    F(b)-2\psi_t(b)\le F_t^\text{low}(b)\le F(b)
    \le F_t^\text{up}(b)\le F(b)+2\psi_t(b),
    \]
    and:
    \[
    S(a)-2\psi_t(a)\le S_t^\text{low}(a)\le S(a)
    \le S_t^\text{up}(a)\le S(a)+2\psi_t(a).
    \]
\end{lemma}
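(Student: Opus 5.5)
The plan is to combine three ingredients: the identity of Proposition~\ref{prop:ecdfvar}, the definition of the good event~\eqref{eq:good}, and the clipped definitions~\eqref{eq:boundsbid}--\eqref{eq:boundsask}. Throughout, I treat the bid side; the ask side follows verbatim by replacing $F,\widehat F_t,\psi_t(b)$ with $S,\widehat S_t,\psi_t(a)$ and using~\eqref{eq:pseudoask} instead of~\eqref{eq:pseudobid}.

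The first step is to fix an active bid $b\in\Bs_t^\circ$. Because it is active, $b\ge B_\tau$ for every $\tau\le t$. Proposition~\ref{prop:ecdfvar} then shows that the observable average $\frac1t\sum_{\tau\le t}\chi^{\mathrm{sell}}_\tau(b)$ coincides with the full-feedback estimator $\widehat F_t(b)$. Consequently, the quantity used by the algorithm is exactly the one controlled by the good event, which gives $|F(b)-\widehat F_t(b)|\le\psi_t(b)$. For $t=0$ the convention $\psi_0=1$ makes every claim trivial, since all quantities lie in $[0,1]$.

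The second step is to obtain the two inner inequalities. From the good event we have $\widehat F_t(b)-\psi_t(b)\le F(b)$. Since also $0\le F(b)$, taking the maximum yields $F_t^{\mathrm{low}}(b)=\max\{0,\widehat F_t(b)-\psi_t(b)\}\le F(b)$. By the symmetric argument, using $F(b)\le 1$, we get $F(b)\le F_t^{\mathrm{up}}(b)$.

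The third step is to obtain the two outer inequalities. For the lower one, the definition gives $F_t^{\mathrm{low}}(b)\ge\widehat F_t(b)-\psi_t(b)$, and the good event gives $\widehat F_t(b)\ge F(b)-\psi_t(b)$. Combining them yields $F_t^{\mathrm{low}}(b)\ge F(b)-2\psi_t(b)$. For the upper one, $F_t^{\mathrm{up}}(b)\le\widehat F_t(b)+\psi_t(b)\le F(b)+2\psi_t(b)$. The clipping can only move the bounds toward $F(b)$, so it never breaks these inequalities.

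There is no genuine obstacle here. The only subtle point is the first step: one must ensure that, for active prices, the estimator the algorithm actually builds from truncated LOB feedback equals the full-feedback empirical c.d.f. to which the good event applies. This is precisely what nestedness of the active sets and Proposition~\ref{prop:ecdfvar} guarantee. Note also that the good event is stated for the full-feedback quantities at every grid point, so no issue arises with price-dependent radii such as the Bernstein ones.
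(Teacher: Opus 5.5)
Your proof is correct and follows the same route as the paper, which also bounds $F_t^{\mathrm{up}}(b)\le \widehat F_t(b)+\psi_t(b)\le F(b)+2\psi_t(b)$ directly from the good event and declares the other inequalities analogous; you additionally spell out how the clipping in~\eqref{eq:boundsbid}--\eqref{eq:boundsask} yields the inner bounds. Your explicit appeal to Proposition~\ref{prop:ecdfvar}, which identifies the algorithm's truncated-feedback estimator with $\widehat F_t$ on active prices, is a detail the paper leaves implicit here and invokes instead in Lemma~\ref{lem:optj}.
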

\begin{proof}
    By definition of the good event, it holds:
    \begin{align*}
        F_t^\text{up}(b)&\leq \widehat F_t(b)+\psi_t(b,\delta)\\
        &\le F(b)+|F(b)-\widehat F_t(b)|+\psi_t(b,\delta)\\
        &\le F(b)+2\psi_t(b,\delta).
    \end{align*}
    The other points follow in an analogous manner.
\end{proof}

\bibliographystyle{alpha}
\bibliography{example_paper}

\end{document}